\def\rd{\mathrm{d}}
\def\dive{\mathrm{div}}
\def\sC{\mathscr{C}}
\def\ow{\omega}
\begin{document}	

\title{A Mean-Field Analysis of Neural Stochastic Gradient Descent-Ascent for Functional Minimax Optimization}
\author{Yuchen Zhu\thanks{Georgia Institute of Technology. Email: \texttt{yzhu738@gatech.edu}.} \and Yufeng Zhang \thanks{Northwestern University. Email: \texttt{yufengzhang2023@u.northwestern.edu}.} \and Zhaoran Wang \thanks{Northwestern University. Email: \texttt{zhaoranwang@gmail.com}.}\and Zhuoran Yang\thanks{Yale University. Email: \texttt{zhuoran.yang@yale.edu}. } \and Xiaohong Chen\thanks{Yale University. Email: \texttt{xiaohong.chen@yale.edu}.}}
\date{}

\maketitle

\begin{abstract}

This paper studies minimax optimization problems defined over infinite-dimensional function classes of overparameterized two-layer neural networks.
In particular, we consider the minimax optimization problem stemming from estimating linear functional equations defined by conditional expectations, where the objective functions are quadratic in the functional spaces. 
We address (i) the convergence of the stochastic gradient descent-ascent algorithm and (ii) the representation learning of the neural networks. 
We establish convergence under the mean-field regime by considering the continuous-time and infinite-width limit of the optimization dynamics.
Under this regime, the stochastic gradient descent-ascent corresponds to a Wasserstein gradient flow over the space of probability measures defined over the space of neural network parameters. 
We prove that the Wasserstein gradient flow converges globally to a stationary point of the minimax objective at a $\cO(T^{-1} + \alpha^{-1} ) $ sublinear rate, and additionally finds the solution to the functional equation when the regularizer of the minimax objective is strongly convex.
Here $T$ denotes the time and $\alpha$ is a scaling parameter of the neural networks.
In terms of representation learning, our results show that the feature representation induced by the neural networks is allowed to deviate from the initial one by the magnitude of $\cO(\alpha^{-1})$, measured in terms of the Wasserstein distance.
Finally, we apply our general results to concrete examples including policy evaluation, nonparametric instrumental variable regression, asset pricing, and adversarial Riesz representer estimation.

\end{abstract}


\section{Introduction}

Minimax optimization problems are ubiquitous in machine learning, statistics, economics, and other fields. Examples include generative adversarial networks (GANs) \citep{goodfellow2020generative, salimans2016improved}, adversarial training \citep{ganin2016domain, madry2017towards},  robust optimization \citep{ben2009robust, levy2020large}, and zero-sum games \citep{xie2020learning, zhao2022provably}.  The goal in minimax optimization is to find a solution $(f^*, g^*)$ to the problem 
$ \min_{f \in \cF} \max_{g \in \cG} \cL(f, g) $, where $\cL$ is a bivariate  objective function, and 
$\cF$ and $\cG$ are the feasible sets of the decision variables $f$ and $g$.
In modern machine learning applications,  $\cF$ and $\cG$ are often function classes flexibly parameterized by neural networks, and the objective $\cL(f,g)$ can be approximated using data. The minimax optimization problem is often solved using first-order optimization algorithms. Despite hugely successful in diverse applications, there is no global convergence theory for various popular first-order algorithms solving general minimax optimization using neural networks yet. \vspace{5pt}

\noindent In this work, we study the convergence of first-order algorithms for solving minimax optimization problems where $\cF$ and $\cG$ are both flexibly parameterized by two-layer neural networks, and the objective functional is quadratic in $f$ and $g$ up to regularization: 
\begin{align}\label{eq:minimax_obj1}
\min_{f\in\cF}\max_{g\in\cG} 
 \cL(f, g),~~ \cL(f, g)= \EE \bigl[ g(Z)\cdot \Phi(X, Z; f) - 1/2\cdot g(Z)^2 + \mathtt{Reg}(f) \bigr],
\end{align}
where $\mathtt{Reg}(f)$ is a convex regularizer that penalizes the complexity of $f\in \cF$.
Here the expectation is taken with respect to the joint distribution of random variables $(X, Z)$,   $g$ is a function of $Z$, and  $\Phi$ takes $(X, Z)$ and a function $f$ as its input and is linear in $f$. 
The objective function \eqref{eq:minimax_obj1} arises from solving a linear functional conditional moment equation of the form $\EE[\Phi(X,Z;  f)|Z= \cdot ] = 0$ if and only if $f=f^*\in  \cF$.
Here $X$ is a vector containing all the endogenous variables and $Z$ contains all the exogenous/pre-determent variables. 
This problem has ample applications, including policy evaluation \citep{cai2019neural, duan2020minimax, jin2021pessimism, chen2022well, ramprasad2022online}, nonparametric instrumental variable regression \citep{blundell2007semi, chen2012estimation, chen2018optimal, xu2020learning}, and asset pricing \citep{chen2009land, chen2014local, chen2024deep}.
The minimax objective in \eqref{eq:minimax_obj1} arises when we solve the conditional moment equation via adversarial estimation \citep{uehara2020minimax, duan2021risk, chernozhukov2020adversarial, liao2020provably, wai2020provably, bennett2019deep}, 
which introduces a dual function and transforms equation solving into a minimax optimization. \vspace{5pt}

\noindent We study the infinite-dimensional minimax optimization in \eqref{eq:minimax_obj1} over the space of overparameterized two-layer neural networks.  
Specifically, a neural network is represented by $f_{\mathtt{NN}} (\cdot; \btheta) = \alpha / N  \sum_{i=1}^N \phi(\cdot; \theta^i)$, where $N$ is the number of neurons, 
 $\phi(\cdot; \theta^i)$ denotes the $i$-th neuron,  $\{ \theta^i \}_{{i\in [N]}}$ are the network parameters, and $\alpha$ is a scaling parameter. 
 We aim to solve the minimax optimization in \eqref{eq:minimax_obj1} with both $f$ and $g$ are represented by overparameterized two-layer neural networks, which is favorable especially when $Z$ is a high-dimensional vector. 
 To solve this problem, we consider the arguably simplest first-order algorithm, stochastic gradient descent-ascent (SGDA), where the parameters of $f$ and $g$ are simultaneously updated using stochastic gradients of the objective functional. Specifically, we aim to address the following two questions:

\begin{itemize}
  \item \quad Does SGDA  with overparameterized neural networks converge to some solution? 
 \item  \quad Does SGDA  learn data-dependent features that yield a statistically accurate solution?
\end{itemize}

\noindent Answering these questions involves two intricate challenges in terms of optimization and representation learning using neural networks.  
First, the minimax objective is nonconvex-nonconcave with respect to the neural network parameters of $f$ and $g$, it is unclear whether first-order algorithms converge. 
Second, the representation of the neural network evolves during the course of optimization, and it is unclear how to track and assess the data-dependent features learned by the neural networks.
While there are some existing works on neural network optimization using the technique of neural tangent kernel (NTK) \citep{ jacot2018neural, du2018gradient, cai2019neural, xu2020finite, wang2022and}, such an approach suggests that the feature representation of the neural networks is fixed throughout training and is only determined by the initialization of the network parameters. 
Despite being an elegant theoretical framework, the NTK approach is limited in its ability to capture the representation learning aspect of neural network optimization. 
To show that the neural network optimization algorithms learn useful data-dependent features, in addition to establishing convergence, more importantly, we need to show that (i) the algorithm approximately finds a proper solution concept, e.g.,    a stationary point or a local or global optimizer of the minimax objective function, and  (ii) the representation of the neural networks moves from the initialization by a considerable amount. \vspace{5pt} 

\noindent In this paper, we tackle both challenges by leveraging the framework of mean-field analysis of overparameterized neural networks \citep{chizat2018global, mei2018mean, mei2019mean, zhang2020can, lu2020meanfield, zhang2021wasserstein, sirignano2020mean, sirignano2020mean1, sirignano2022mean, chen2020mean, fang2021modeling}. 
In particular,  we focus on the continuous-time and infinite width limit of the SGDA algorithm, where the stepsize goes to zero and the width $N$ goes to infinity. 
From the mean-field lens, a neural network  
$f(\cdot; \btheta)$ can be identified with a probability measure $\mu$ by writing $f(\cdot; \btheta) = \alpha \cdot \int _{\theta} \phi(\cdot; \theta) ~\mu(\ud \theta)$, where $\mu$ is the empirical distribution of $\{ \theta^i \}_{{i\in [N]}}$ and $\alpha $ is the scaling parameter of the neural network. 
Thus,   parameter updates of SGDA can be regarded as updates of the probability measure $\mu$. 
From this perspective, we  prove that in the  continuous-time and infinite width limit, SGDA corresponds to 
a gradient flow of the minimax objective $\cL$ in the Wasserstein space, i.e., the space of probability measures over the parameter space equipped with the Wasserstein-2 distance. 
Besides, by defining a proper potential function that characterizes the stationary point of the minimax objective, we prove that the Wasserstein gradient flow converges to a stationary point at a sublinear rate of $\cO(1/T + 1/\alpha )$, where $T$ is the time horizon and $\alpha$ is a scaling parameter of the neural network.
Moreover, we prove that the Wasserstein distance between the parameter distribution found by SGDA  and its initialization is $\cO(\alpha^{-1})$, which shows that the representation of the neural networks is allowed to move from the initialization by a considerable amount. Such a behavior is not captured by the NTK analysis, in which the representation is shown to be fixed at the initialization. 
Furthermore, when the regularization on $f$ satisfies a version of strong convexity, we prove that the Wasserstein gradient flow converges to the global optimizer $f^*$  at a sublinear  $\cO(1/T + 1/\alpha )$ rate. 

\vspace{5pt}

\noindent To the best of our knowledge, our work provides the first theoretical analysis of an optimization algorithm solving functional conditional moment equations using neural networks with representation learning. 
We apply our general theory to three important examples: policy evaluation, instrumental variables regression, and asset pricing.
and adversarial Riesz representer estimation. 
In these examples, we prove that the SGDA algorithm finds the global solution with overparameterized neural networks. Moreover, SGDA learns data-dependent features that enable these statistically accurate estimators.

\subsection{Related Works}

\noindent \textbf{Minimax Optimization.} 
Our work is closely related to the literature on first-order methods for solving minimax optimization problems. 
These works establish the convergence rate or iteration complexity of first-order methods under various assumptions on the objective function. 
In particular, most of the existing works focus on finite-dimensional parameter spaces and one of the following objective functions: 
(i) convex-concave \citep{lin2020near, ibrahim2019lower, ouyang2021lower, alkousa2019accelerated, luo2021near, xie2020lower, han2024lower, li2023nesterov, jin2022sharper}, (ii) nonconvex-concave \citep{jin2019minmax, lin2020gradient, lu2020hybrid, ostrovskii2021efficient, zhao2023primal, huang2022efficiently,luo2020stochastic, zhang2021complexity, nouiehed2019solving, thekumparampil2019efficient}, and (iii) nonconvex-nonconcave \citep{li2022nonsmooth, diakonikolas2021efficient, ostrovskii2021nonconvex, yang2022faster, grimmer2022limiting, hajizadeh2024linear, grimmer2023landscape, yang2020global}. \vspace{5pt}

\noindent Our work can be viewed as an extension of convex-concave minimax optimization to the infinite-dimensional functional space. 
In particular, our objective is a regularized quadratic functional with respect to the input functions, which is then restricted to the class of overparameterized neural networks. Note that the objective of interest is in fact nonconvex-nonconcave in the neural network parameter space. 
Compared with the work on general nonconvex-nonconcave minimax optimization problems, our setting has a better underlying structure in the functional space in terms of convexity. This structure enables us to lift the network parameter updates to the Wasserstein space and analyze the gradient flow in the space of distributions. Our approach leverages the hidden convexity-concavity behind the seemingly nonconvex-nonconcave objective function and thus achieves better results in terms of algorithm convergence and complexity. \vspace{5pt}

\noindent \textbf{Mean-field Analysis in Deep Learning.} 
Our work is closely related to the recent study of neural network training via gradient-based methods. One line of research establishes the convergence of gradient-based algorithms for training overparameterized neural networks under the ``lazy training'' regime, where the neural networks behave similarly to random kernel functions. Such a regime is also known as the as the neural tangent kernel regime \citep{jacot2018neural, allen2019learning, allen2019convergence, chen2020generalized, frei2021proxy, zou2019improved, du2018gradient, du2019gradient, arora2019exact, arora2019fine, huang2020dynamics}. Our work is more related to another line of research based on the perspective of mean-field approximation \citep{mei2018mean, mei2019mean, chizat2018global,sirignano2020mean, sirignano2020mean1, sirignano2022mean, chen2020mean, fang2021modeling, chen2019much}. 
Under the mean-field view, the neural network parameters are identified as a distribution over the parameter space. 
As a result, the evolution of parameters by gradient-based updates is captured by a differential equation that governs the evolution of the corresponding distribution. By elevating the training dynamics to an infinite Wasserstein space, 
the optimization objective often enjoys a benign landscape, which yields admits a more tractable analysis and global convergence. See, e.g, \cite{zhang2020can, zhang2021wasserstein, fang2021modeling, lu2020meanfield, fang2019over, chizat2022mean, hu2021mean, nitanda2022convex} and the references therein. Also, see \cite{fang2021mathematical} for a recent survey. \vspace{5pt}

\noindent Our work is especially related to the mean-field analysis of the Neural Temporal Difference (TD) \citep{, zhang2020can} and the Neural Actor-Critic (AC) \citep{zhang2021wasserstein} in reinforcement learning. These previous works have provided an analysis of the global convergence of the TD and AC algorithm with two-layered overparameterized neural networks. The optimization problem in these two tasks is the minimization of an objective where only one neural network is involved. Rather different from these works, we focus on minimax optimization, which requires neural network parameterization of both the primal function and the dual function. This brings new challenges to the analysis as the gradient dynamics of the primal and dual neural networks give birth to a coupled system of PDEs. To the best of our knowledge, our paper is the first to apply the mean-field limit to study the convergence of algorithms in solving the general form of functional conditional moment equations using neural networks.
 
\vspace{5pt}
\noindent \textbf{Adversarial Estimation.}
Our work is also related to the literature on adversarial estimation, 
a method that solves a functional conditional moment equation by introducing a dual function and reformulating the original problem into a minimax optimization. 
Our work studies this type of minimax optimization with overparameterized neural networks. Thus,  our work is more related to the study of adversarial estimation within neural network function classes \citep{dikkala2020minimax, chernozhukov2020adversarial, bennett2019deep, xu2021deep}. Compared with our work, these studies focus on statistical errors pertinent to neural networks, assuming the optimization problem is solved perfectly. We instead study the optimization algorithm and establish the convergence of stochastic gradient-descent-ascent with neural networks. \vspace{5pt}

\noindent Several previous works have also explored the convergence of optimization dynamics in adversarial estimation with neural networks. In particular, Neural GTD \citep{wai2020provably} and Neural SEM \citep{liao2020provably} analyze respectively the convergence for off-policy evaluation and structural equation models estimation with overparameterized two-layered neural network. 
However, their analyses are based on the idea of neural tangent kernel (NTK), 
where the employed neural network has a fixed representation during training, and the representation is completely determined by the initialization. In contrast, our work adopts the mean-field approach, which enables learning a data-dependent representation.


\section{Preliminaries}
\label{sec:pre}

The functional conditional moment equations cover many important examples in statistics, machine learning, economics, and causal inference. 
In this section,  we first introduce the general formulation of the functional conditional moment equations and then reformulate them into a minimax optimization problem. 
Then, we present a few concrete examples of function conditional moment equations such as policy evaluation, nonparametric instrumental variables regression, asset pricing, and Riesz representers estimation. Finally, we introduce the background of mean-field neural networks and Wasserstein space, which are essential for the convergence analysis of the SGDA algorithm.

\subsection{Functional Conditional Moment Equations} \label{sec:conditional moment model}
In this section, we introduce the general formulation of functional conditional moment equations.
Let $X \in \mathcal{X}$ be a vector that includes all the endogenous variables,  let $Z \in \mathcal{Z}$ denote all the exogenous variables, and  
let  $\cD \in \sP(\cX \times \cZ)$ denote the joint distribution of $(X, Z)$. We let $\EE_{\cD}[\cdot]$  denote the expectation taken with respect to 
the joint distribution of $(X, Z)$ and $\EE_{X|Z}[\cdot]$ denote the conditional expectation using the conditional distribution of $X $ given $Z$. 
Let $W \in \mathcal{W} \subseteq \mathcal{X} \times \mathcal{Z}$ be a subset of variables that may contain both the endogenous and exogenous variables, and let $L^2 (\cW)$ denote a Hilbert space of measurable functions of $W$ with finite second moment. Let $\cF :=\{f : \mathcal{W} \rightarrow \mathbb{R} \}\subset L^2 (\cW)$ denote a class of functions defined on $\cW$. 
In a \emph{functional conditional moment equation} problem, we 
aim to find a function $f_0\in \cF$ that solves the following 
functional equation involving the conditional distribution of $X$ given $Z$ over $\cF$: 
\begin{align}\label{eq:cmr}
    \EE_{X|Z}\bigl[\Phi(X, Z; f_0 )\Biggiven Z = z \Bigr] = 0, \qquad \forall z \in \cZ, 
\end{align}
where $\Phi\colon \cX \times \cZ \times \cF \rightarrow \RR$ is a known functional.\vspace{5pt}

\noindent For any function $ f\in \cF $ and any   \(z\in \cZ\), we define a functional $\bar \delta \colon \cZ \times \cF \rightarrow \RR  $ as 
\begin{align}
	\label{eq:cmrerr}
    \bar \delta (z; f) := \EE_{X|Z} \bigl[\Phi(X, Z; f) \biggiven Z = z \bigr], \qquad \forall f\in \cF , z \in \cZ .
\end{align}
In other words, 
the conditional moment equation problem in  \eqref{eq:cmr}
boils down to finding a function $f_0 \in \cF $ such that 
$\bar \delta (\cdot ; f_0) $ is a zero function on $\cZ$. Therefore an equivalent way to solve $f_0\in \cF$ in \eqref{eq:cmr} is by solving 
$
\inf_{f\in\cF} \EE[(\bar \delta (Z; f))^2]
$
\citep{ai2003efficient, chen2012estimation}.
To control the complexity of the function class $\cF$, \cite{ai2003efficient} propose  to use flexible sieve spaces $\cF_{k(n)}$ that becomes dense in $\cF$ as the sieve dimension $k(n)$ grows to infinity with data sample size $n$, and proposed the so-called sieve minimum distance criterion
$
 \min_{f\in \cF_k(n) } \EE \bigl[\bar\delta(Z; f)^2\bigr]/2.
$
In particular, \cite{ai2003efficient} allow for two-layer NNs, splines, wavelets, Fourier series, and all kinds of polynomial sieves $\cF_{k(n)}$ to approximate functions in $\cF \subseteq L^2(\cW)$. Alternatively \cite{chen2012estimation} propose  the following penalized (or regularized) minimum distance criterion:
\begin{align}\label{eq:mse}
    \min_{f\in \cF } \EE \bigl[\bar\delta(Z; f)^2\bigr]/2 + \lambda \cdot \cR(f),
\end{align}
where $\lambda \geq 0$ is a regularization parameter, $\cR(f)$ is a regularizer on function $f\in \cF$. They allow that $\cR(f)$ to be any convex or lower-semicompact regularizer. In the minimum distance approach, for any fixed $f$, the authors first estimate $\bar \delta (z; f)$ by the following least squares criterion:
\begin{align*}
\argmin_{\delta\in L^2(\cZ)}\EE \Big[1/2 \cdot \big(\Phi(X, Z; f)-\delta(Z)\big)^2\Big]
=\argmax_{\delta\in L^2(\cZ)}\EE \Big[\Phi(X, Z; f)\delta(Z) - 1/2 \cdot \delta(Z)^2 \Big]  
\end{align*}
Furthermore,   we assume that the functional $\Phi$ is affine in $f$, which captures several important applications in machine learning and causal inference listed in Section \ref{sec:examples}. 
Specifically, we define 
$\tilde \Phi (x,z,f) = \Phi(x,z, f) - \Phi(x,z, 0)$, where $0$ stands for the zero function on $\cW$. 
Then for any two functions 
$f_1, f_2 \in \mathcal{F}$ and any  $a, b \in \mathbb{R}$, we have 
\begin{align}\label{eq:fme-form}
& \tilde    \Phi (x, z; af_1 + bf_2) = a \cdot \tilde \Phi (x, z; f_1) + b \cdot \tilde \Phi (x, z; f_2) ,\qquad  \forall (x,z) \in \cX \times \cZ. 
\end{align}

{\noindent \bf  Solving \eqref{eq:cmr} with Overparameterized Neural Networks.}
In the sequel, we aim to solve the problem in \eqref{eq:cmr} based on i.i.d. data points sampled from $\cD$,   with $\cF$ being a class of overparameterized neural networks.
In this case, it is possible that \eqref{eq:cmr} does not have a solution within~$\cF$.
Furthermore, for the choice of regularizer, we consider the following specific form of $\cR(f)$: 
\begin{align}\label{eq:reg-form}
\cR(f) = \EE_{\cD}[\Psi(X,Z;f)]
\end{align}
where for any given $(x,z) \in \cX \times \cZ $, $\Psi(x, z;f): \cF \rightarrow {\RR}_{+}$ is a convex functional of $f$ that maps each function $f$ to a scalar. Moreover, $\Psi$ satisfies
\begin{align} 
 \Psi(x,z; 0) & = 0, \qquad \Psi(x,z;f) \geq 0 , \qquad  \forall f \in \cF,  \label{eq:reg-property1} \\
\frac{\delta \Psi(x, z; af_{1} + bf_{2})}{\delta f} & = a \cdot  \frac{\delta \Psi(x, z; f_{1})}{\delta f} + b \cdot  \frac{\delta \Psi(x, z; f_{2})}{\delta f} , \qquad  \forall f_1, f_2 \in \cF, \; a,b \in \RR . \label{eq:reg-property2}
\end{align}

\noindent Equation \eqref{eq:reg-property1} requires that $\Psi(X, Z;f)$ is a non-negative functional of $f$ that is equal to $0$ if and only $f = 0$. 
Equation \eqref{eq:reg-property2} requires that the functional derivative of $\Psi(X, Z;f)$ with respect to $f$, is linear in $f$. One example of $\Psi$ is the  $L_{2}$-regularizer of the following type, $\Psi(x,z;f) = f(w)^2$. Here $w \in \cW$ is a subset of variables that contain values from both the endogenous variables $x$ and exogenous variables $z$.

\vspace{5pt}
{\noindent \bf Minimax Estimation.} 
To solve the optimization problem in \eqref{eq:mse}, we first transform it into a \emph{unconditional} moment formulation by introducing a dual function.
By Fenchel duality, we can rewrite the objective function $J(f)$ as follows,
\begin{align}
\label{eq:obj cmr1}
J(f) & = \EE_{\cD} \Bigl[ 1/2 \cdot \bar \delta(z; f)^2 + \lambda \Psi(X,Z;f) \Bigr]\\
&  = \EE_{\cD} \Bigl[ \max_{g:\cZ\rightarrow \RR} \left( g(z) \cdot \EE \bigl[ \Phi(X,z;f) \biggiven z \bigr] - 1/2\cdot g(z)^2\right) + \lambda \Psi(X,Z;f) \Bigr] \nonumber\\
&  =\max_{g:\cZ\rightarrow \RR} \EE_{\cD} \Bigl[ g(Z)\cdot \Phi(X, Z; f) - 1/2\cdot g(Z)^2 + \lambda \Psi(X, Z;f) \Bigr]. \nonumber
\end{align}
The formulation in \eqref{eq:obj cmr1} leads to  the following minimax optimization problem:
\begin{align}
	\label{eq:minimax obj}
	\min_{f}\max_g\cL(f, g) = \EE_{\cD} \Bigl[ g(Z)\cdot \Phi(X, Z; f) - 1/2\cdot g(Z)^2 + \lambda \Psi(X, Z;f) \Bigr].
\end{align}
We note that $\cL$ is a convex-concave functional with respect to function $f$ and $g$. We denote by $(f^{*}, g^{*})$ the unique saddle point of \eqref{eq:minimax obj}. Here the uniqueness of $f^*$ comes from the convexity of regularization $\Phi(X,Z;f)$, and $g^*(z) = \mathbb{E}[\Phi(X,Z;f^*) | Z = z]$ implies the uniqueness of $g^*$. Without the regularization, i.e., $\lambda = 0$, the saddle point of \eqref{eq:minimax obj} is $f^* = f_{0}$ and $g^* = 0$.

\subsection{Examples of Functional Conditional Moment Equation} \label{sec:examples}
In this section, we discuss several important applications of the functional conditional moment equation, which serve as running examples of this paper. 

\vspace{5pt}
\noindent \textbf{Policy Evaluation. } We consider a Markov decision process given by $ (\cS, \cA, \cP, r, \gamma)$, where $ \cS \subseteq \RR^{d} $ is the state space, $ \cA  $ is the action space, $ \cP:\cS\times\cA\rightarrow \sP(\cS) $ is the transition kernel, $r : \cS\times\cA \rightarrow [0, 1] $ is the reward function, $ \gamma \in (0, 1) $ is the discount factor. Given a policy $ \pi:\cS\rightarrow \sP(\cA) $, an agent interacts with the environment in the following manner. At a state $ s_t $, the agent takes an action $ a_t\sim \pi(\cdot \given s_t) $ and receives a reward $ r_t = r(s_t, a_t) $. Then, the agent transits to the next state $ s_{t+1} \sim \cP(\cdot \given s_t, a_t) $. We denote the transition kernel induced by policy $ \pi $ by $ \cP^\pi(s'\given s) = \int_\cA \cP(s'\given s, a) \pi(a\given s) \rd a $ for any $s, s' \in \cS$.
In policy evaluation, we aim to estimate the value function $ V^\pi:\cS\rightarrow \RR $ defined as follows,
\begin{align*} 
	V^\pi(s) = \EE_\pi\Bigl[ \sum_{i=0}^{\infty}\gamma^i r(s_i, a_i) \Biggiven s_0 = s \Bigr],
\end{align*}
where the expectation $ \EE_\pi $ is taken with respect to $ a_t\sim \pi(\cdot \given s_t) $ and $ s_{t+1} \sim \cP(\cdot \given s_t, a_t) $ for~$ t \ge 0 $. By the Bellman equation \citep{sutton2018reinforcement}, it holds for any $ s \in \cS $ that
\begin{align}
\label{eq:bell}
V^\pi(s) - \cT^\pi V^\pi(s) = 0, \quad \cT^\pi f(s) = \EE_{a\sim \pi(\cdot \given s)} \bigl[r(s, a)\bigr] + \gamma \EE_{s'\sim  \cP^\pi(\cdot \given s) } \bigl[f(s')\bigr].
\end{align}
Corresponding to the Bellman equation in \eqref{eq:bell}, let $\cD$ denotes the joint distribution of the state-action tuple $(s,a,s')$ under policy $\pi$, the value function $V^{\pi}$ satisfies the following functional conditional moment equation, 
\begin{align}
\label{eq:fme-policy}
\EE_{s'|s} \Bigl[r(s,a) - V^{\pi}(s) + \gamma \cdot V^{\pi}(s') \Biggiven s \Bigr] = 0.
\end{align}
We notice that \eqref{eq:fme-policy} is a special case of the functional conditional moment equation in \eqref{eq:cmr} by setting the exogenous variable $Z$ to be the current state $s$, the endogenous variable $X$ to be the next state $s'$ and the function to be estimated $f: \cS \rightarrow \mathbb{R}$ to be defined on the state space $\cS$. In this case, the functional is $\Phi(X, Z; f) = r + \gamma \cdot f(X) - f(Z)$, where $r$ is the reward function. We remark that the reason function $f$ can be evaluated simultaneously on $X$ and $Z$ is that both $X$ and $Z$ are variables defined on $\cS$. Following the same derivation of \eqref{eq:obj cmr1}, policy evaluation can be formulated as the following minimax optimization problem,
\begin{align*}
\min_{f}\max_g\cL(f, g) = \EE_{\cD} \Bigl[ g(Z) \cdot \bigl(r + \gamma \cdot f(X) - f(Z)\bigr) - 1/2\cdot g(Z)^2 + \lambda \Psi(X, Z;f) \Bigr].
\end{align*}

\vspace{5pt}
\noindent \textbf{Nonparametric Instrumental Variables Regression. } The nonparametric instrumental variables model is common and useful in statistics and economics. The model can be described simply by a line of equation
\begin{align*} 
Y = f_{0}(X) + \varepsilon, \quad \EE \bigl[ \varepsilon \biggiven Z \bigr] = 0.
\end{align*}
where $Y$ in an observed outcome, $X$ is the endogenous variable, $Z$ is the exogenous variable, $f_{0}$ is the true model that characterize the relationship between $Y$ and $X$ and is also the function we want to estimate. In this model, $\varepsilon$ is a noise possibly correlated with the endogenous $X$ but uncorrelated with the exogenous $Z$. It's straightforward to see that NPIV model fits into the framework of the functional conditional moment equation by plugging the model equation into the equation about $\varepsilon$,
\begin{align}
\label{eq:fme-cme}
    \EE_{\cD} \Bigl[ Y - f_{0}(X) \biggiven Z \Bigr] = 0.
\end{align}
We notice that \eqref{eq:fme-cme} is a special case of functional conditional moment equation in \eqref{eq:fme-form} by identifying $X$, $Z$ with the endogenous and exogenous variable respectively and setting the functional as $\Phi(X, Z;f) = Y - f(X)$. Following the same derivation of \eqref{eq:obj cmr1}, the problem of NPIV is equivalent to the following minimax optimization problem,
\begin{align*}
\min_{f}\max_g\cL(f, g) = \EE_{\cD} \Bigl[ g(Z)\cdot \bigl(Y - f(X) \bigr) - 1/2\cdot g(Z)^2 + \lambda \Psi(X, Z;f) \Bigr].
\end{align*}

\vspace{5pt}
\noindent \textbf{Asset Pricing. } Asset pricing refers to the process of determining the fair value of financial assets. This field is fundamental in finance and underpins much of the work in investment, portfolio management, and risk assessment. 
Semiparametric Consumption Captial Asset Pricing Model (CCAPM) is a foundational model in asset pricing that describes the relationship between systematic risk and expected asset returns, which also incorporates the influence of the consumption preference of investors over time. Moreover, CCAPM can be characterized through a functional conditional moment equation \citep{chen2014local, chen2009land}. To describe the model, let $C_t$ denote the consumption level at time $t$, $c_t \equiv C_{t} / C_{t-1}$ the consumption growth. The marginal utility of consumption at time $t$ is given by $\text{MU}_{t} = C_{t}^{-\gamma_0} f_{0}(c_t)$,  where $\gamma_0 > 0$ is the discount factor, $f_0: \cC \to \mathbb{R}$ is the nonparametric structural demand function, which is an unknown positive function of our interest and is defined on $\cC$, the space of consumption growth. The unknown function $f_0$ can be understood as a taste shifter that describes how
the marginal utility of consumption changes with the state of the economy in terms of consumption growth. \vspace{5pt} 

\noindent Now, consider the growth-return tuple $(c_t, \tilde r_{t+1}, c_{t+1})$ for $t \in \mathbb{N}^{+}$ with joint distribution $\cD$, where $c_t$ is the consumption growth at the current time $t$, and $c_{t+1}$ is the consumption growth at the next time $t+1$. $\tilde r_{t+1}$ is a modified return observed in this period, which is a known function of the actual return $r_{t+1}$ and the consumption growth $c_{t+1}$ at time $t + 1$. We consider the scenario where the time series of consumption growth $\{c_t\}_{t \geq 0}$ follows a time-homogenous Markov chain with a smooth transition kernel. That being said, both conditional transition probabilities $c_{t+1} | c_{t}$ and $c_t | c_{t+1}$ admit a smooth density function. The CCAPM model captures the behavior of $f_0$ through the following equation:

\begin{align}
\label{eq:fme-ccapm}
    \EE_{c_{t+1} | c_t}\big[ \tilde r_{t+1} \cdot f_0(c_{t+1}) - f_0(c_t) \biggiven c_t \big] = 0,
\end{align}
where the modified return can be further expressed as $\tilde r_{t+1} = \delta_0 \cdot r_{t+1} \cdot c_{t+1}^{-\gamma_0}$, $\delta_0 \in (0,1]$ is the rate of time preference.
We focus on a setting where $\cC \subseteq \mathbb{R}$ is a compact set, and the modified return $\tilde r_{t+1}$ is bounded for all $t \geq 0$. We notice that \eqref{eq:fme-ccapm} is a special case of the functional conditional moment equation in \eqref{eq:fme-form}. We can identify the exogenous variable $Z$ with $c_t$, the consumption growth at the current time $t$, and the endogenous variable $X$ with $c_{t+1}$, the consumption growth at the next time $t+1$. In this scenario, we identify the space $\cW$ with $\cC$, the space of consumption growth and the function to be estimated $f: \cC \to \mathbb{R}$ is defined on $\cC$. The functional is $\Phi(X,Z;f) = \tilde r_{t+1} \cdot f(X) - f(Z)$, where $\tilde r_{t+1}$ again denotes the modified return. Similar to the scenario of policy evaluation, the reason function $f$ can be evaluated simultaneously on $X$ and $Z$ is that both $X$ and $Z$ are variables defined on $\cC$. Following the same derivation of \eqref{eq:obj cmr1}, the problem of asset pricing through CCAPM is equivalent to the following minimax optimization problem,
\begin{align*}
\min_{f} \max_{g} \cL(f,g) = \EE_{\cD} \Big[g(Z) \cdot (\tilde r_{t+1} \cdot f(X) - f(Z))- 1/2 \cdot g(Z)^2 + \lambda \Psi(X,Z;f) \Big]
\end{align*}

\vspace{5pt}
\noindent \textbf{Adversarial Riesz representer Estimation. } Many problems in statistics, causal inference, and finance involve  the task of learning a continuous linear functional in the following form,
\begin{align}
\label{eq:def-funct}
\cV(g) = \EE \bigr[m(V; g)\bigl].
\end{align}
where function $g \in \cG: \cX \rightarrow \RR$, $\cF$ is defined on a function space $\cG$, and $V$ is a random vector of which we have access to observations and represents the source of randomness in the functional. Moreover, suppose such continuous linear functional $\cF(\cdot)$ is also mean-square continuous with respect to $L^{2}$ norm. In that case, it can be written in a more benign and useful manner, which is also often the case. Formally speaking, for such linear functional $\cV$, there exists function $f_{0}$ such that for any $g \in \cG$,
\begin{align}
\label{eq:def-riesz}
\cV(g) = \EE \bigl[f_{0}(X) g(X)\bigr].
\end{align}
The function $f_{0}$ here is called the Riesz representer of the linear functional $\cV$, and the equation \eqref{eq:def-riesz} is known as the Riesz representation theorem. Information about the Riesz representation of such linear functional is crucial to numerous applications and learning tasks. Therefore, we aim to estimate $f_{0}$ by exploiting the relationship characterized by the equation. We have the following trivial observation that the true Riesz representer $f_{0}$ can be recovered by solving the following equation,
\begin{align}
\label{eq:fme-riesz}
\EE \Bigl[f_{0}(X) - f(X) \biggiven X \Bigr] = 0.
\end{align}
Of course, $f = f_{0}$ will solve the equation above, and therefore the true Riesz representer is achieved. We remark that this is indeed a special case since the expectation taken in \eqref{eq:fme-riesz} is unconditioned. In the equation, we only involve the endogenous variable $X$, which also indicates that the exogenous variable $Z$ coincides with $X$. While special, the problem still fits in the framework discussed here. By setting $\Phi(X, Z;f) = f(X) - f_{0}(X)$, we recovered the intractable formulation of Riesz representer estimation.

\vspace{5pt}
\noindent However, unlike the previous examples where have access to observations of each term in the equation, here we have no direct access to values of $f_{0}$, making the problem seemingly intractable. Fortunately, the alternative formulation of the original problem as a minimax optimization problem solves this difficulty. When written in the minimax formulation, we will again see the linear functional $\cV$ show up in the equation in the form of \eqref{eq:def-funct}, which can be approximated using empirical values calculated from accessible observations of the random vector $V$. Following the same derivation of \eqref{eq:obj cmr1} and the definition of Riesz representer in \eqref{eq:def-riesz}, the problem of adversarial Riesz representer estimation is equivalent to the following minimax optimization problem,
\begin{align}
\label{eq:minimax-riesz}
\min_{f} \max_{g} \cL(f,g) = \EE_{\cD} \Bigl[m(V; g) - f(X) \cdot g(X) - 1/2\cdot g(X)^2 + \lambda \Psi(X, X;f) \Bigr].
\end{align}
Again, we stress that in \eqref{eq:minimax-riesz}, the absence of $Z$ is due to the fact both the endogenous and exogenous variables are described by $X$ and the objective is computationally tractable since we have access to both observations of $X$ and $V$. 

\subsection{Mean-Field Neural Network and Wasserstein Space}\label{sec:w-space}
In the sequel, we will consider functions in the neural network function class. Consider a neural function defined on a given state space $\Omega$, $\sigma: \Omega \times \RR^{D} \rightarrow \RR$ that takes an input $x \in \Omega$ and parameter $\theta \in \RR^D$ and outputs a value in $\RR$. For $\boldsymbol{\theta} = (\theta_1, \dots, \theta_{N})$ where $\theta_i \in \RR^{D}$, we can define an overparameterized two-layered neural network function $h$ using neuron function $\sigma$,
\begin{align*}
    h(x, \boldsymbol{\theta}) = \frac{1}{N} \sum_{i = 1}^{N} \sigma(x; \theta_i), \quad \forall x \in \Omega.
\end{align*}
For such a form, we can further consider the infinite width limit when $N \rightarrow \infty$. When taking such a limit, the neural network function $h$ becomes a mean-field neural network and can be parameterized with probability measure over the parameter space, $\mu \in \sP(\RR^{D})$.
\begin{align*}
    h(x;\mu) = \int_{\RR^{D}} \sigma(x; \theta) \rd \mu(\theta), \quad \forall x \in \Omega.
\end{align*}
When considering such a limit, the optimization problem over the neural network function class is turned from a finite-dimensional problem over the parameter space into an infinite-dimensional problem over the space of probability measures. Therefore, we will need to track the convergence of probability measures over the Wasserstein space when analyzing the convergence of algorithms. \vspace{5pt}

\noindent We now introduce the background knowledge of the Wasserstein space for the reader's information. Let  \(\sP_p(\RR^D)\) be the space of all the probability measures over the $D$-dimensional Euclidean space \(\RR^D\) with finite \(p\)-th order moments. The Wasserstein-\(p\) distance between two probability measures \(\mu, \nu \in \sP_p(\RR^D)\) is defined as follows,
\begin{align}
	\label{eq:w-p}
    \cW_p(\mu, \nu) = \inf \biggl\{ \Bigl(\int \norm{x - y}^p \rd \gamma(x, y)\Bigr)^{1/p} \Biggiven \gamma \in \sP_p(\RR^D \times\RR^D), x_\sharp \gamma = \mu, y_\sharp \gamma = \nu \biggr\},
\end{align}
where the infimum is taken over all the coupling of \(\mu\) and \(\nu\). Here we denote by \(x_\sharp \gamma\) and \(y_\sharp \gamma\) the marginal distributions of \(\gamma\) with respect to \(x\) and \(y\), respectively. We call \(\cM_{p} = (\sP_p ( \RR^D), \cW_p)\) the Wasserstein-\(p\) space. For any $1 \le p\le q$, due to the relation that \(\EE[|X|^p]^{1/p} \le \EE[|X|^q]^{1/q}\), we have that \(W_p(\mu, \nu)\le W_q(\mu, \nu)\) for two measures \(\mu, \nu\). In this paper, we focus on the cases when \(p = 1, 2\). Without further clarification, we refer to the distance with $p = 2$ as the Wasserstein distance in the sequel. \vspace{5pt}

\noindent The Wasserstein-2 space \(\cM_2 = (\sP_2(\RR^D), \cW_2)\) can be viewed as an infinite-dimensional Riemannian manifold \citep{villani2008optimal}.  Formally,  the tangent space  at  point $\rho \in \sP_2(\RR^D)$ is defined as 
\begin{align*}
    \mathrm{Tan}_\rho\bigl(\sP_2(\RR^D)\bigr) = \Bigl\{v\in L^2(\rho)\Biggiven \int \inp{v}{u}d\rho = 0, \forall u \in L^2(\rho) \text{ s.t. } \dive(u\rho) = 0\Bigr\}. 
\end{align*}
Then, for any absolutely continuous curve \(\rho: [0, 1] \rightarrow \sP_2(\RR^D)\) on the Wasserstein-2 space, there exists a  family of vector fields  $v_t \in \mathrm{Tan}_{\rho_t}(\sP_2(\RR^D)) $  such that the continuity equation 
\begin{align}
    \partial_t \rho_t + \dive (v_t\rho_t) = 0
    \label{eq:continuity}
\end{align}
holds in the sense of distributions. 
For any two absolutely continuous curves $\rho, \tilde\rho: [0, 1] \rightarrow \sP_2(\RR^D)$, we define the inner product between $\partial_t \rho_t, \partial_t\tilde\rho_t$ for any $t\in [0, 1]$ as follows,
\begin{align}
	\label{eq:inner}
    \inp[]{\partial_t\rho_t}{\partial_t\tilde \rho_t}_{\rho_t} = \int \inp[]{v_t}{\tilde v_t} \rd \rho_t,
\end{align}
where \(\inp[]{v_{t}}{\tilde v_t}\) is the inner product over \(\RR^D\), $(\rho_t, v_t)$ and $(\tilde \rho_t, \tilde v_t)$ satisfy the continuity equation in \eqref{eq:continuity}.
Note that \eqref{eq:inner} yields a Riemannian metric over \(\cM_2\). Furthermore, the Riemannian metric induces a norm \(\norm[]{\partial_t \rho_t}_{\rho_t} = \inp[]{\partial\rho_t}{\partial_t\rho_t}_{\rho_t}^{1/2}\).


\section{Algorithms}\label{sec:w-grad}
In this section, we introduce the stochastic gradient descent-ascent algorithm (SGDA) and its mean-field limit, which is characterized by the continuity equation.  

\vspace{5pt}
\noindent \textbf{Stochastic Gradient Descent-Ascent Algorithm. } We solve the minimax optimization problem in~\eqref{eq:minimax obj} via SGDA. Recall that in the minimax objective, we have two functions simultaneously involved, where the primal function $f$ represents the true model of interest and the dual function $g$ represents an adversarial player. Specifically, we parameterize both $f$ and $g$ with neural networks with width $N$ and parameters $\btheta = (\theta^{1}, \theta^{2}, \dots, \theta^{N}) \in \RR^{D \times N}$ and $\bow = (\ow^{1}, \ow^{2}, \dots, \ow^{N}) \in \RR^{D \times N}$
\begin{align}
	\label{eq:nn-fin}
	f(\cdot; \btheta) = \frac{\alpha}{N}\sum_{i=1}^{N} \phi(\cdot; \theta^i), \quad g(\cdot; \bow) = \frac{\alpha}{N}\sum_{i=1}^{N} \psi(\cdot; \ow^i).
\end{align}
where we use bold symbols $\btheta$ and $\bow$ to denote the whole parameter used by each neural net and unbold symbols $\theta$ and $\ow$ to denote the parameter used by each neuron. Here, $\phi(\cdot; \theta): \cW \times \RR^{D} \rightarrow \RR$, $\psi(\cdot; \ow): \cZ \times \RR^{D} \rightarrow \RR$ are the functions for neurons. In particular, we can recover the general setting of two-layer neural networks parameterization for $f$ and $g$ when we choose $\phi, \psi$ to be the following specific form,
\begin{align*}
    \phi(w;\beta, W) = \beta \cdot \sigma_f(w; W), \quad \psi(z; \beta, W) = \beta \cdot \sigma_g(z; W),
\end{align*}
where $\sigma_f: \cW \times \RR^{D} \rightarrow \RR$, $\sigma_g: \cZ \times \RR^{D} \rightarrow \RR$ are activation functions with input $w$ and $z$ respectively and parameters $W$. We note that it's not necessary to choose the same width $N$ for $f$ and $g$, and activation functions $\sigma_f, \sigma_g$ need not have the same parameter dimension $D$. Here we use the same width $N$ and parameter dimension $D$ to keep notations simple as these won't affect the validity of the results presented in this paper. \vspace{5pt}

\noindent Besides, we have also introduced a scaling factor $\alpha > 0$ in \eqref{eq:nn-fin}. Setting the scaling parameter $\alpha = \sqrt{N}$ in \eqref{eq:nn-fin} recovers the neural tangent kernel regime \citep{jacot2018neural}. Setting the parameter $\alpha = 1$ recovers the mean-field regime \citep{mei2018mean,mei2019mean}. In a discrete-time finite-width (DF) scenario, at the $k$th iteration, the primal function $f$ and adversarial player $g$ are updated as follows,
\begin{align}
\label{eq:gd-fin}
	\text{DF-GD: }\quad & \btheta_{k+1} = \btheta_k - \eta \cdot g(z_k; \ow_k) \cdot \nabla_{\btheta} \Phi(x_k, z_k; f(\cdot; \btheta_k)) - \eta \lambda \cdot \nabla_{\btheta} \Psi(x_k, z_k; f(\cdot; \btheta_k)), \nonumber\\
    \text{DF-GA: } \quad & \bow_{k+1} = \bow_k + \eta \cdot \Phi(x_k, z_k; f(\cdot; \btheta_k)) \cdot \nabla_{\bow} g(z_k; \bow_k)) - \eta \cdot g(z_k; \bow_k) \cdot \nabla_{\bow} g(z_k; \bow_k),
\end{align}
where $\btheta_k, \bow_k$ denotes the state of the parameters at iteration $k$, $\eta > 0$ is the step-size, and the data samples $ \{(x_k, z_k)\}_{k=0}^\infty $ are collected by independently sampling from the data distribution $\cD$. When $f,g$ are two-layered neural networks with width $N$, we can plug in the form for $f,g$ as is described in \eqref{eq:nn-fin}. The update for the parameter of $i$-th neuron at $k$-th iteration can be further specified to the following,
\begin{align}
\label{eq:gdnn-fin}
\theta_{k+1}^{i} &= \theta_k^{i} - \eta \alpha \epsilon \cdot g(z_k; \bow_k) \cdot \nabla_{\theta} \Phi(x_k, z_k; \phi(\cdot, \theta_{k}^i)) - \eta \lambda \epsilon \cdot \frac{\delta \Psi(x_k, z_k; f(\cdot, \btheta_k))}{\delta f} \cdot \nabla_\theta \phi(x_k; \theta_k^{i}), \nonumber\\
\ow_{k+1}^{i} &= \ow_k^{i} + \eta  \alpha \epsilon \cdot  \Phi(x_k, z_k; f(\cdot, \btheta_k)) \cdot \nabla_\ow \psi(z_k; \ow_k^{i}) - \eta \alpha \epsilon \cdot g(z_k; \bow_k) \cdot \nabla_\ow \psi(z_k; \ow_k^{i}),
\end{align}
where $\btheta_k = (\theta^{1}_k, \theta^{2}_k, \dots, \theta^{N}_k)$ and $\bow_k = (\ow^{1}_k, \ow^{2}_k, \dots, \ow^{N}_k)$, $\delta \Psi/\delta f$ denotes the variation of $\Psi$ with respect to $f$. Here, $\alpha$ is the neural network scaling parameter and $\epsilon = 1/N$ is the stepsize scale. Both $\alpha$ and $\epsilon$ show up in \eqref{eq:gdnn-fin} due to the finite width parameterization of two-layered neural networks described in \eqref{eq:nn-fin}. \vspace{5pt}

\noindent For a given space $\cS$, let $\cH$ define a set of functions defined on $\cS \rightarrow \RR$. For a functional defined over the function class $\cH$,  $F: \cH \rightarrow \RR$, its variation at $f \in \cH$ is a function $\frac{\delta F}{\delta f}: \cS \rightarrow \RR$, such that for any test function $h \in \cH$,
\begin{align}
\label{eq:func-deriv}
\Big[ \frac{\rd}{ \rd \varepsilon} F(f + \varepsilon h)\Big]_{\varepsilon = 0} = \int_{\cS} \frac{\delta F}{\delta f}(s) \cdot h(s) ~\rd s.
\end{align}
We initialize the parameters with $\theta_0^i \sim \mu_0$ and $w_0^i \sim \nu_0$, with $\mu_{0}, \nu_{0} = \cN(0, I_{D})$ be standard Gaussian distribution in $\RR^{D}$. In addition, to keep track of the evolution of the parameter distribution, we  denote the empirical  distribution of $\btheta$ and $\bow$ at the $k$th iteration by,
\begin{align*}
\hat \mu_k(\theta) = \frac{1}{N} \sum_{i=1}^{N} \delta_{\theta^i_k}(\theta), \quad \hat \nu_k(\ow) = \frac{1}{N} \sum_{i=1}^{N} \delta_{\bow^{i}_{k}}(\ow),
\end{align*}
where $\delta$ is the Dirac mass function.

\vspace{4pt}

\noindent{\bf Mean-Field (MF) Limit.} To analyze the convergence of the Stochastic Gradient Descent-Ascent Algorithm for solving functional conditional moment equations with neural networks, we employ an analysis that studies the mean-field limit regime \citep{mei2018mean,mei2019mean} of the discrete-time dynamics described in \eqref{eq:gd-fin}. Here, by the mean-field limit, we are referring to an infinite-width limit, i.e., when $N \rightarrow \infty$ for the neural network width and a continuous time, i.e., $t = k\epsilon$ where the step scale $\epsilon \rightarrow 0$ in \eqref{eq:gdnn-fin}. 
In what follows, we introduce the mean-field limit of the SGDA dynamics, which refers to the infinite-width and continuous limit of \eqref{eq:gdnn-fin}. For $\btheta = \{\theta^i\}_{i = 1}^{N}$ and $\bow = \{\ow^i\}_{i=1}^{N}$ independently sampled respectively from $\mu, \nu \in \sP(\RR^{D})$, we can write the infinite width limit of neural networks used in \eqref{eq:nn-fin} as
\begin{align}
\label{eq:nn} 
f(\cdot; \mu) = \alpha \int \phi(\cdot; \theta) \mu(\rd \theta), \quad g(\cdot; \nu) = \alpha \int \psi(\cdot; \ow) \nu(\rd \ow).
\end{align}
From now on, we denote by $\mu_{t}$ the distribution of $\theta_{t}^{i}$ and $\nu_{t}$ the distribution of $\ow_{t}^{i}$ for the infinite-width and continuous limit of the neural networks at time $t$. For notational simplicity, we overload the notation of the objective function in \eqref{eq:minimax obj} via $\cL(\mu, \nu) = \cL(f(\cdot; \mu), g(\cdot; \nu))$. This is to further emphasize the dependence of objective $\cL$ on $(\mu, \nu)$ when we parameterize the function pair $(f,g)$ using distributions on the parameter space. By Otto's calculus \citep{villani2008optimal}, the mean-field limit of the update direction takes the following form,
\begin{align}
\label{eq:dir}
v^f(\theta; \mu, \nu) &=-\nabla_\theta \frac{\delta \cL(\mu, \nu)}{\delta \mu}(\theta) \nonumber\\
&= \alpha\EE_{\cD} \Bigl[ -g(Z; \nu) \cdot \big\langle \frac{\delta \Phi(X,Z; f(\cdot; \mu))}{\delta f}, \nabla_{\theta} \phi(\cdot; \theta) \big\rangle_{L^{2}} - \lambda \cdot \Big\langle  
\frac{\delta \Psi(X,Z; f(\cdot; \mu))}{\delta f}, \nabla_{\theta} \phi(\cdot; \theta) \Big\rangle_{L^{2}} \biggr], \nonumber \\
v^g(\ow; \mu, \nu)  & = \nabla_\ow \frac{\delta \cL(\mu, \nu)}{\delta \nu}(\ow)\nonumber \\
& = \alpha \EE_{\cD} \Bigl[ \Phi(X,Z;f(\cdot, \mu)) \cdot \nabla_{\ow} \psi(Z; \ow)  - g(Z; \nu) \cdot \nabla_{\ow} \psi(Z;\ow) \Bigr].
\end{align}
Here $\langle \cdot, \cdot \rangle_{L_{2}}$ is the inner product on $L^{2}(\cX \times \cZ)$ with respect to the Lebesgue measure. Recall that $\cD$ is the data distribution of random variables $(X, Z) \in \cX \times \cZ$, we denote by $\rho_{\cX, \cZ}$ the density of $\cD$ with respect to the Lebesgue measure on $\cX \times \cZ$ and we use $\langle \cdot, \cdot \rangle_{\cD}$ to represent the inner product on $L^{2}(\cX \times \cZ)$ with respect to the probability distribution $\cD$. That is to say, for any two function $h_1, h_2 \in L^{2}(\cX \times \cZ)$, $\langle h_1, h_2\rangle_{\cD} = \int_{\cX \times \cZ} h_1 h_{2} ~\mathrm{d}\rho_{\cX, \cZ}$. \vspace{5pt}

\noindent In the sequel, we will also slightly abuse this notation and use $\langle \cdot, \cdot \rangle_{\cD}$ to denote the inner product on sub-spaces of $L^{2}(\cX \times \cZ)$, with the measure being the marginals of $\cD$ on these sub-spaces. In \eqref{eq:dir}, 
$\delta \Phi / \delta f$ and $\delta \Psi / \delta f$ is the variation of $\Phi$ and $\Psi$ over $f$ under $\langle \cdot, \cdot \rangle_{L^2}$, where the test functions are chosen over the function class $\cF$. In the same way, $\delta \cL / \delta\mu$ and $\delta \cL/ \delta\nu$ respectively denote the variation of the objective $\cL$ with respect to distributions $\mu$ and $\nu$ under $\langle \cdot, \cdot \rangle_{L^2}$, following definition in \eqref{eq:func-deriv} with the test function chosen over $\sP(\cX \times \cZ)$. We also remark that we can also define the variation under $\langle \cdot, \cdot \rangle_{\cD}$, which will only differ from the variation under $\langle \cdot, \cdot \rangle_{L^{2}}$ by a constant function factor that corresponds to the density of the marginals of $\cD$. Then, the mean-field limit of the SGDA update in \eqref{eq:gd-fin} is characterized by the continuity equation, which is a system of PDEs given by,
\begin{align}
\label{eq:pde}
\partial_t \mu_t(\theta) = - \eta \cdot \dive_\theta \bigl(\mu_t(\theta) v^f(\theta; \mu_t, \nu_t)\bigr) , \;\partial_t \nu_t(\ow) &=- \eta \cdot \dive_\ow \bigl(\nu_t(\ow) v^g(\ow;\mu_t, \nu_t)\bigr), 
\end{align}
where $\dive_\theta$, $\dive_\ow$ denotes the divergence with respect to $\theta$, $\ow$ respectively. Note that the initialization $\mu_0$ and $\nu_0$ are the same as the initialization of the discrete-time dynamics in \eqref{eq:gdnn-fin}, i.e. $\mu_{0} = \cN(0, I_{D})$, $\nu_{0} = \cN(0, I_{D})$ are taken to be the distribution of standard Gaussian random variables in $\RR^{D}$.


\section{Main Results}\label{sec:main}

In this section, we introduce the main theoretical results of the stochastic gradient descent-ascent dynamics. 
We first present the assumptions in \S\ref{sec:assumptions}. 
Then in \S\ref{sec:discrete} we show that the SGDA dynamics converge to a mean-field limit when the network with $N$ goes to infinity and the stepsize scale $\epsilon$ goes to zero. Finally, in \S\ref{sec:conv} we prove that the mean-field limiting dynamics converge to a globally optimal solution of the primal objective $J$ under proper assumptions.  
Moreover, we will show that the mean-field dynamics learns a data-dependent representation that is $\mathcal{O}(\alpha^{-1})$ away from the initial representation.   
 
\subsection{Assumptions}\label{sec:assumptions}
We consider two types of assumptions in this work. 
The first type of assumption is about the function class in which we search for solutions to the minimax optimization problem. In this category, Assumption \ref{asp:reg} and Assumption \ref{asp:reliability} discuss the richness and regularity of the two-layered neural network function class. The second type of assumption is about the feasible class of problems to apply our framework. In this category, Assumption \ref{asp:compact} discusses several technical assumptions on the data space and the regularity/smoothness of the functionals.  \vspace{5pt}

\noindent We start with the discussion of the two-layered neural network function class. Consider the neuron function $\phi$ and $\psi$ with the following form,
\begin{align}
\label{eq:neuro-def}
\phi(w; \theta) = b(\beta) \cdot \sigma({\tilde \theta}^{\top}(w, 1)), \quad \psi(z; \ow) = b(\beta) \cdot \sigma(\tilde \ow^{\top}(z, 1)),
\end{align}
 where $\theta = (\beta, \tilde \theta) \in \RR \times \RR^{1 + \operatorname{dim}(\cW)}$, $\ow = (\beta, \tilde \ow) \in \RR \times \RR^{1 + \operatorname{dim}(\cZ)}$ contains the parameters in the output layer and the hidden layer, $b: \RR \rightarrow \RR$ is an odd re-scaling function and $\sigma: \RR \rightarrow \RR$ is the activation function. Note that such a form of activation function satisfies the condition of universal function approximation theorem (Theorem~3.1 in \cite{pinkus1999approximation}) if $\sigma$ is not a polynomial. For notational simplicity, we write $\sigma(w; \tilde \theta) = \sigma({\tilde \theta}^{\top}(w, 1))$.
The re-scaling function $b: \RR\rightarrow \RR$ is introduced to ensure that the value of the neural network is upper bounded. When $b(\RR) =
(-B_{0}, B_{0})$, the function class induced by the neural network in \eqref{eq:nn} is equivalent to the following class,
\begin{align}
    \label{eq:func-class}
    \cF = \Bigl\{ f:\cW \rightarrow \RR \Biggiven  f(w) = \int \beta' \cdot \sigma(w; \tilde \theta) \; \rd \mu(\beta', \tilde \theta), \mu \in \sP_2\bigl((-B_{0}, B_{0}) \times \RR^{d+ 1}\bigr)\Bigr\},
\end{align}
where $d = \operatorname{dim}(\cW)$. This captures a rich function class due to the universal function approximation theorem \citep{barron1993universal,pinkus1999approximation}. We remark that we introduce the re-scaling function $b(\beta)$ in \eqref{eq:neuro-def} to
avoid the study of the space of probability measures over $(-B_0, B_0) \times \RR^{d+1}$, which has a boundary and thus lacks regularity in the study of optimal transport. Moreover, note that a scaling hyperparameter $\alpha > 0$ is introduced in the definition of the mean-field neural nets in \eqref{eq:nn}. When $\alpha > 1$, this causes an effect of overparameterization. In brief, $\alpha$ controls the error between the $(f(\cdot; \mu_t), g(\cdot; \mu_t))$ and optimizer $(f^*, g^*)$ according to Theorem \ref{th:main}. Furthermore, the overparameterization scale $\alpha$ has an influence through Lemma \ref{lem:stat}, which shows that the Wasserstein distance between the Gaussian initialization $(\mu_0, \nu_0)$ and the optimal distribution $(\mu^*, \nu^*)$ is upper-bounded by $\cO(\alpha^{-1})$. 
Next, we impose the following regularity assumptions on the neural network functions $\phi$ and $\psi$.
\begin{assumption}[Regularity of Neural Networks]
\label{asp:reg}
We assume that there exist absolute constants $B_{0} >0$, $B_{1} > 0 $ and $ B_{2} > 0$ such that
\begin{align*}
    & |\phi(w; \theta)| \le B_{0}, \quad \norm[\big]{\nabla_\theta \phi(w; \theta)} \le B_{1} , \quad \norm[\big]{\nabla^2_{\theta\theta} \phi(w; \theta)}_{F} \le B_{2}, \qquad \forall w \in \cW, \; \theta \in \RR^{D}, \\
    & |\psi(z; \ow)| \le B_{0}, \quad \norm[\big]{\nabla_\ow \psi(z; \ow)} \le B_{1} , \quad \norm[\big]{\nabla^2_{\ow\ow} \phi(z; \ow)}_{F} \le B_{2}, \qquad \forall z \in \cZ, \; \ow \in \RR^{D},
\end{align*}
where $\nabla^{2}_{\theta\theta}, \nabla^{2}_{\ow\ow}$ denotes the hessian with respect to $\theta$ and $\ow$ respectively, $\|\cdot\|$ denotes the vector $2-$norm, and $\|\cdot\|_{F}$ denotes the matrix Frobenius norm. Moreover, we assume that the rescaling function $b: \RR\rightarrow \RR$ is odd and its range satisfies that $b(\RR) = (-B_{0}, B_{0})$.
\end{assumption}
\noindent Assumption~\ref{asp:reg} is satisfied by a broad class of neuron functions. For example, it is satisfied when we set the activation function $\sigma(x) = \operatorname{sigmoid}(x)$ and rescaling function $b(\beta) = \tanh(\beta)$.

\noindent We also make the following assumption regarding the realizability of the saddle point solution $(f^*, g^*)$ to \eqref{eq:minimax obj}.

\begin{assumption}[Realizability] \label{asp:reliability}
    We assume that the saddle point solution $(f^*, g^*)$ of \eqref{eq:minimax obj} belongs to the function class defined in \eqref{eq:func-class}, i.e., $ f^*, g^* \in \cF $.
\end{assumption}
\noindent In general, problem \eqref{eq:minimax obj} may not admit a saddle point within the given neural network function class. Therefore, Assumption \ref{asp:reliability} is introduced to guarantee that the discussion in this paper is meaningful. By universal function approximation theorem \citep{barron1993universal,pinkus1999approximation}, the function class defined in \eqref{eq:func-class} captures a rich class of functions. Therefore, such an assumption is quite general and does not restrict the influence of the applications of our results. \vspace{5pt}

\noindent We impose the following assumptions on the integrability of the functional $\Phi$ and $\Psi$ and their variations, as well as the compactness of the data space $\cX$ and $\cZ$.

\begin{assumption}[Data regularity and Functional Integrability] \label{asp:compact}
\item \textbf{(i)} For the data space $\cX$, $\cZ$, we assume that $\cX \times \cZ$ is compact, in the sense that there exists a positive constant $C_{1} > 0$ such that for any data tuple $(x, z) \in \cX \times \cZ$, it satisfies that $\|(x, z)\| \leq C_1$. Moreover, the data distribution $\cD$ admits a positive, smooth density $\rho_{\cD}$ with respect to the Lebesgue measure on $\cX \times \cZ$. 
\item \textbf{(ii)} For the functionals $\Phi(x,z;f)$ and $\Psi(x,z;f)$, there exists a positive constant $C_{2} > 0$ such that
\begin{align*}
\int_{\cW} \Big |\frac{\delta \Phi(x,z;f)}{\delta f}(w') \Big| \rd w' \leq C_2, \quad \int_{\cW} \Big| \frac{\delta \Psi(x,z;f)}{\delta f}(w') \Big| \rd w' \leq C_2, \quad \forall (x,z) \in \cX \times \cZ.
\end{align*}
\item \textbf{(iii)}
We assume that $\int_{\cW} \frac{\delta \Psi(x,z;f)}{\delta f}(w') \rd w'$  as a linear functional of $f$ is upper-bounded by constant times of values of $f$. That is to say, there exists $w \in \cW$ as a part of the data tuple $(x,z)$ and a positive constant $C_{\Psi} > 0 $ such that
\begin{align*}
    \int_{\cW} \Big| \frac{\delta \Psi(x,z;f)}{\delta f}(w')\Big| \rd w' \leq C_{\Psi} \cdot \big|f(w)\big|.
\end{align*}

\item \textbf{(iv)}
We assume that the variation of minimax objective $\cL(f,g)$ with respective to $f$ and $g$ are continuous functions defined on $\cW$ and $\cZ$. That is to say,
\begin{align*}
\frac{\delta \cL(f,g)}{\delta f}\in \sC(\cW), \quad \frac{\delta \cL(f,g)}{\delta g} \in \sC(\cZ).
\end{align*}

\end{assumption}

\noindent Item (i) of Assumption \ref{asp:compact} restricts our scenarios to data spaces with bounded values and smooth densities for technical reasons. Item (ii) and (iii) of Assumption \ref{asp:compact} is an integrability condition that we additionally require to avoid discussion of improper functionals that potentially have singularities with exploding values. Item (iv) is a smoothness condition that requires the variation of the minimax objective averaged over data to be continuous on respective space. We also remark that a sufficient condition for item (iv) to hold is the variation of $\Phi$ and $\Psi$ with respect to $f$ averaged under the marginal of $\cD$ on $\cW$ is continuous. We will also use this condition to verify item (iv) in practice. These are general and reasonable assumptions widely satisfied by various applications in machine learning, causal inference, and statistics.

\subsection{Convergence of SGDA dynamics to the Mean-Field Limit}\label{sec:discrete}
In the following proposition, we show that the empirical distribution of the parameters $\hat \mu_k$ and $\hat \nu_k$ weakly converges to the mean-field limit in \eqref{eq:pde} as the width $N$  goes to infinity and the stepsize scale $\epsilon$ goes to zero. Let $\rho_{t}(\theta, \ow) = \mu_t(\theta) \otimes \nu_{t}(\ow)$, where $(\mu_{t}, \nu_{t})$ is the PDE solution to the continuous deterministic dynamics in \eqref{eq:pde} and $\hat \rho_{k} = N^{-1} \cdot \sum_{i = 1}^{N} \delta_{\theta_{k}^{i}} \cdot \delta_{\ow_{k}^{i}}$ corresponds to the empirical distribution of $(\boldsymbol{\theta}_k, \boldsymbol{\ow}_{k})$, which is $k$-th iterate of the discrete time stochastic dynamics in \eqref{eq:gdnn-fin} with stepsize scale $\epsilon$. The following proposition proves that the PDE solution $\rho_t$ in \eqref{eq:pde} well approximates the discrete time stochastic gradient descent-ascent dynamics in \eqref{eq:gdnn-fin}.

\begin{proposition}[Convergence of SGDA to Mean-Field Limit] \label{prop:weak-formal-formal} 
Let $\{\rho_t\}_{t \geq 0}$ be solution to \eqref{eq:pde} with $\rho_0 = \cN(0, I_{D}) \otimes \cN(0, I_{D})$,  $\{\hat \rho_{k} \}_{k \geq 0}$ be solution to \eqref{eq:gdnn-fin} with $\hat \rho_0 = \cN(0, I_{D}) \otimes \cN(0, I_{D})$. Under Assumption \ref{asp:reg} and \ref{asp:compact}, $\widehat{\rho}_{\lfloor t/\epsilon\rfloor}$ converges weakly to $\rho_t$ as $\epsilon\rightarrow 0^+$ and $N\rightarrow \infty$. 
It holds for any Lipschitz continuous, bounded function $ F: \RR^D \times \RR^D \rightarrow \RR$ that
\begin{align*}
\lim_{\epsilon\rightarrow 0^+, N\rightarrow \infty}\int F(\theta, \ow) \rd \widehat{\rho}_{\lfloor t/\epsilon\rfloor}(\theta, \ow) = \int F(\theta, \ow) \rd \rho_{t}(\theta,\ow).
\end{align*}
\end{proposition}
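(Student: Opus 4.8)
The plan is to prove the weak convergence of the empirical measure $\widehat{\rho}_{\lfloor t/\epsilon\rfloor}$ to the PDE solution $\rho_t$ in two stages, following the standard propagation-of-chaos template used in mean-field analyses of neural network training \citep{mei2018mean, mei2019mean}. First I would interpolate the discrete-time stochastic dynamics \eqref{eq:gdnn-fin} into a continuous-time process, and introduce an intermediate object: the \emph{nonlinear McKean--Vlasov process}, i.e., a system of i.i.d. trajectories $(\bar\theta_t^i, \bar\ow_t^i)$ each driven by the mean-field velocity fields $v^f(\cdot\,; \mu_t, \nu_t)$ and $v^g(\cdot\,; \mu_t, \nu_t)$ evaluated along the deterministic PDE flow $(\mu_t, \nu_t)$, with the same Gaussian initialization $(\theta_0^i, \ow_0^i)$ as the discrete dynamics. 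By construction the law of $(\bar\theta_t^i, \bar\ow_t^i)$ is exactly $\rho_t$, and the empirical measure $N^{-1}\sum_i \delta_{(\bar\theta_t^i, \bar\ow_t^i)}$ converges to $\rho_t$ by the law of large numbers for empirical measures (Glivenko--Cantelli / Varadarajan). It then suffices to control the discrepancy between the actual GDA trajectories $(\theta_{\lfloor t/\epsilon\rfloor}^i, \ow_{\lfloor t/\epsilon\rfloor}^i)$ and the coupled McKean--Vlasov trajectories $(\bar\theta_t^i, \bar\ow_t^i)$.

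The core estimate is a Grönwall argument on $\Delta_t := \EE[\,\|\theta_{\lfloor t/\epsilon\rfloor}^i - \bar\theta_t^i\|^2 + \|\ow_{\lfloor t/\epsilon\rfloor}^i - \bar\ow_t^i\|^2\,]$ (this expectation is over the random initialization and the data sampling). I would decompose the one-step difference into three sources: (a) the discretization error from replacing an Euler step of size $\epsilon$ with the continuous integral, which is $O(\epsilon)$ per unit time thanks to the boundedness and Lipschitz regularity of $v^f, v^g$ guaranteed by Assumption~\ref{asp:reg} and Assumption~\ref{asp:compact}; (b) the fluctuation error from replacing the population expectations $\EE_\cD$ in the velocity fields by single-sample stochastic gradients $(x_k, z_k)$, which has mean zero conditional on the current parameters and variance $O(\epsilon)$ per step, hence accumulates to $O(\sqrt{\epsilon})$ over $[0,t]$ by an $L^2$ martingale / Doob's inequality bound; and (c) the mean-field error from replacing the empirical measure $\widehat{\mu}_k$ by $\mu_t$ inside the velocity fields — this is where the $1/\sqrt{N}$ propagation-of-chaos term enters, controlled using the Lipschitz dependence of $v^f(\theta; \mu, \nu)$ on $(\mu, \nu)$ in Wasserstein-1 distance (itself a consequence of the boundedness of $\phi, \psi$ and the integrability bounds on $\delta\Phi/\delta f$, $\delta\Psi/\delta f$ in Assumption~\ref{asp:compact}(ii)) together with the exchangeability of the $N$ particles. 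Collecting (a)--(c), one gets a recursion of the form $\Delta_{(k+1)\epsilon} \le (1 + C\epsilon)\Delta_{k\epsilon} + C\epsilon(\epsilon + 1/N) + (\text{mart. incr.})$, and Grönwall on $[0,T]$ yields $\sup_{t\le T}\Delta_t \le C(T)\,(\epsilon + 1/\sqrt{N}) \to 0$.

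The key technical lemma feeding into all of this is that the mean-field velocity fields are globally Lipschitz in the particle location and Lipschitz-in-$\cW_1$ in the measure arguments, and are uniformly bounded; I would verify this directly from the explicit formulas \eqref{eq:dir} by bounding $|g(Z;\nu)| \le \alpha B_0$, $|\Phi(X, Z; f)|$ uniformly (using compactness of $\cX\times\cZ$ from Assumption~\ref{asp:compact}(i), the affine structure \eqref{eq:fme-form} of $\Phi$, and $\|f\|_\infty \le \alpha B_0$), $\|\nabla_\theta\phi\|\le B_1$, $\|\nabla^2_{\theta\theta}\phi\|_F \le B_2$, and the $L^1$ bounds on the functional derivatives. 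With these bounds in hand, the passage from the $L^2$-Wasserstein control of the coupled trajectories to the claimed weak convergence $\int F\,\rd\widehat{\rho}_{\lfloor t/\epsilon\rfloor} \to \int F\,\rd\rho_t$ for bounded Lipschitz $F$ is immediate: $|\int F\,\rd\widehat{\rho}_{\lfloor t/\epsilon\rfloor} - \int F\,\rd\rho_t| \le \mathrm{Lip}(F)\cdot\cW_1(\widehat{\rho}_{\lfloor t/\epsilon\rfloor}, \rho_t)$, and the right side is bounded in expectation by $\cW_1(\widehat{\rho}_{\lfloor t/\epsilon\rfloor}, N^{-1}\sum\delta_{(\bar\theta,\bar\ow)}) + \cW_1(N^{-1}\sum\delta_{(\bar\theta,\bar\ow)}, \rho_t) \le \sqrt{\Delta_t} + o(1) \to 0$.

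I expect the main obstacle to be step (c), the propagation-of-chaos / mean-field error, specifically establishing the $\cW_1$-Lipschitz continuity of $v^f$ in its measure arguments with a \emph{dimension-free} constant and handling the fact that the dependence of $v^f$ on $\mu$ is nonlinear (it enters through both $f(\cdot;\mu)$ inside $\delta\Phi/\delta f$ and $\delta\Psi/\delta f$, and linearly through the $\EE_\cD$ average). Care is needed because $\delta\Phi/\delta f$ and $\delta\Psi/\delta f$ depend on $f$, so a change in $\mu$ perturbs the velocity field through a composition; the affine structure \eqref{eq:fme-form} and the linearity property \eqref{eq:reg-property2} of $\delta\Psi/\delta f$ are precisely what make this perturbation controllable by $\|f(\cdot;\mu) - f(\cdot;\mu')\|_\infty \le \alpha B_0 \cW_1(\mu,\mu')$. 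A secondary subtlety is that the stochastic-gradient noise (b) must be shown to have finite second moment uniformly — again a consequence of the uniform boundedness established in the key lemma — so that the martingale bound applies; and one must be careful that the two-timescale structure does not appear here since GDA uses a single stepsize $\eta$ for both players, so the $(\mu, \nu)$ system can be treated as a single coupled particle system on $\RR^{2D}$ without additional complications.
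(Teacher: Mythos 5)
Your proposal is essentially the paper's propagation-of-chaos argument: the paper interposes the same ideal-particle (McKean--Vlasov) system and bounds the PDE--SGD gap through exactly the three error sources you identify (time discretization, stochastic-gradient fluctuation, and mean-field/finite-$N$ error), packaged as four explicit interpolating dynamics (IP, CTPGD, PGD, SGD) with one Grönwall estimate per bridge plus a law-of-large-numbers step from the IP empirical measure to $\rho_t$. The only material difference is that you track the $L^2$ coupling distance in expectation while the paper tracks the supremum-over-particles norm $\|\cdot\|_{(N)}$ and obtains a quantitative high-probability rate $\cO\bigl(e^{BT}(\sqrt{\log(N/\delta)/N} + \sqrt{\epsilon(D+\log(N/\delta))})\bigr)$ via McDiarmid, sub-Gaussian concentration, and Azuma--Hoeffding; both routes deliver the claimed weak convergence (and note your final Grönwall bound should read $\cO(\epsilon + 1/N)$, not $\cO(\epsilon + 1/\sqrt{N})$, since the recursion you wrote has increment $C\epsilon(\epsilon + 1/N)$ -- a harmless but slightly pessimistic slip).
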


\begin{proof}
See \S\ref{sec:pf-prop-weak} for a detailed proof.
\end{proof}

\noindent The proof of Proposition \ref{prop:weak-formal-formal} is based on the propagation of chaos \citep{mei2018mean, mei2019mean, araujo2019mean, zhang2020can, sznitman1991topics}. We deferred the detailed proof of Proposition \ref{prop:weak-formal-formal} to Appendix \ref{sec:pf-prop-weak}. Proposition \ref{prop:weak-formal-formal} allows us to convert the discrete-time SGDA dynamics over finite dimensional parameter space to its continuous-time, infinite-dimensional counter-part in Wasserstein space, in which the training is amenable to analysis since our infinitely wide neural network $f(\cdot; \mu)$ and $g(\cdot;\nu))$ in \eqref{eq:nn} is linear in $\mu$ and $\nu$ respectively.

\subsection{Global Optimality and Convergence of the Mean-Field Limit} \label{sec:conv}

In this section, we will introduce our main results that characterize the global optimality and convergence of the mean-field neural networks, parameterized by the parameter distribution $\rho_t = (\mu_t, \nu_t)$. The proof contains two steps. We first show that it is sufficient to find a stationary point of the Wasserstein gradient flow defined in \eqref{eq:pde} in order to solve the minimax optimization problem in \eqref{eq:minimax obj}, then we characterize the convergence of $\rho_t$ to the stationary point. Before presenting the two stages of the proof, we would need to further clarify the notions of stationarity regarding the Wasserstein gradient flow. We introduce the following definition,

\begin{definition}[Stationary point of Wasserstein Gradient Flow]
\label{def:stat-wass}
A distribution pair $(\mu, \nu)$ is called a stationary point of the Wasserstein gradient flow \eqref{eq:pde} if it satisfies
\begin{align*}
    v^{f}(\theta; \mu, \nu) = v^{g}(\ow; \mu, \nu) = 0, \quad \forall \theta, \ow \in \RR^{D}.
\end{align*}
\end{definition}
\noindent From Definition \ref{def:stat-wass}, the stationary point of Wasserstein gradient flow \eqref{eq:pde} is a distribution pair $(\mu, \nu)$, at which the associated vector field $(v^{f}(\cdot; \mu, \nu), v^{g}(\cdot; \mu, \nu))$ is a zero function on the parameter space $\RR^{D} \times \RR^{D}$. Moreover, for the Wasserstein gradient flow following vector field $(v^f, v^g)$  and initial condition $(\mu, \nu)$, the solution to its associated continuity equation $(\mu_t, \nu_t)$ is a constant flow such that for all $t \geq 0$, $\mu_t = \mu, \nu_t = \nu$. Now, we have the following important supporting lemma that characterizes the relation between stationary points of Wasserstein gradient flow \eqref{eq:pde} and saddle points of \eqref{eq:minimax obj}.

\begin{lemma}
\label{lem:stat}
Under Assumptions~\ref{asp:reg} and \ref{asp:reliability}, the following two properties hold.
\begin{itemize}
    \item[(i)] Suppose that $(\mu^*, \nu^*)$ is a stationary point of the Wasserstein gradient flow as is defined in Definition \ref{def:stat-wass}. Then, the corresponding function $(f(\cdot; \mu^*), g(\cdot; \nu^*))$ is the saddle point of the objective function $\cL(f,g)$ defined in \eqref{eq:minimax obj}.
    \item[(ii)] There exists a stationary distribution pair $(\mu^*, \nu^*)$ and constant $\bar D > 0$ such that
    \begin{align*}
        W_2(\mu_0, \mu^*) \le \alpha^{-1} \bar D, \quad W_2(\nu_0, \nu^*) \le \alpha^{-1} \bar D.
    \end{align*}
\end{itemize}
\end{lemma}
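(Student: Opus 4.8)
The plan is to prove part (i) via a variational-calculus argument and part (ii) via an explicit construction of a stationary measure together with a Wasserstein transport bound.

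For part (i), first I would unpack Definition \ref{def:stat-wass}: a stationary point $(\mu^*, \nu^*)$ satisfies $\nabla_\theta \frac{\delta \cL}{\delta \mu}(\theta; \mu^*, \nu^*) = 0$ and $\nabla_\ow \frac{\delta \cL}{\delta \nu}(\ow; \mu^*, \nu^*) = 0$ for every $\theta, \ow \in \RR^D$, using the expressions for $v^f, v^g$ from \eqref{eq:dir}. Since these gradients vanish identically on all of $\RR^D$ (not merely $\mu^*$- or $\nu^*$-a.e.), the functions $\theta \mapsto \frac{\delta \cL}{\delta \mu}(\theta)$ and $\ow \mapsto \frac{\delta \cL}{\delta \nu}(\ow)$ are constant. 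I would then combine this with the structure of the neuron functions: because $\phi(w;\theta) = b(\beta)\sigma(w;\tilde\theta)$ with $b$ odd and the feature map rich enough to satisfy universal approximation (Assumption \ref{asp:reg} and the discussion around \eqref{eq:func-class}), a constant first variation over the whole parameter space forces the inner variation $\frac{\delta \cL}{\delta f}$ (respectively $\frac{\delta \cL}{\delta g}$), which is a continuous function on $\cW$ (respectively $\cZ$) by Assumption \ref{asp:compact}(iv), to be identically zero. Concretely: $\langle \delta \cL/\delta f,\, \nabla_\theta \phi(\cdot;\theta)\rangle$ vanishing for all $\theta$ lets me integrate against $\beta$ and vary $\tilde\theta$ to extract that $\delta \cL/\delta f$ is $L^2$-orthogonal to the span of $\{\sigma(\cdot;\tilde\theta)\}$, which is dense, hence $\delta \cL/\delta f = 0$. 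Then, because $\cL(f,g)$ is convex in $f$ and concave in $g$ (noted right after \eqref{eq:minimax obj}), the first-order conditions $\delta \cL/\delta f = 0$ and $\delta \cL/\delta g = 0$ are exactly the saddle-point conditions, so $(f(\cdot;\mu^*), g(\cdot;\nu^*)) = (f^*, g^*)$.

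For part (ii), the construction exploits Assumption \ref{asp:reliability}: since $f^* \in \cF$, there is a measure $\lambda^* \in \sP_2((-B_0,B_0)\times\RR^{d+1})$ with $f^* = \int \beta' \sigma(w;\tilde\theta)\,\rd\lambda^*(\beta',\tilde\theta)$, and similarly for $g^*$. I want a stationary $\mu^*$ (in the $\theta = (\beta,\tilde\theta)$ parameterization of \eqref{eq:neuro-def}) with $f(\cdot;\mu^*) = \alpha\int b(\beta)\sigma(\cdot;\tilde\theta)\,\rd\mu^*(\beta,\tilde\theta) = f^*$. Using that $b$ is an odd homeomorphism onto $(-B_0,B_0)$, I would define $\mu^*$ as the pushforward of $\lambda^*$ under the map $(\beta',\tilde\theta) \mapsto (b^{-1}(\beta'/\alpha),\tilde\theta)$ — here the scaling $\alpha$ is absorbed so that $\alpha \cdot b(b^{-1}(\beta'/\alpha)) = \beta'$; when $\alpha$ is large, $\beta'/\alpha$ is small, and near zero $b^{-1}$ behaves linearly, so the $\beta$-marginal of $\mu^*$ concentrates near $0$ at scale $\alpha^{-1}$. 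This $\mu^*$ realizes $f^*$, hence (by part (i), run in reverse — or directly by checking $\delta \cL/\delta f|_{f^*} = 0$) the vector field $v^f(\cdot;\mu^*,\nu^*)$ vanishes, so $(\mu^*,\nu^*)$ is stationary. Finally, $W_2(\mu_0,\mu^*) \le W_2(\mu_0, \delta_0\otimes(\text{something}))+\dots$; more cleanly, I would bound $W_2(\mu_0, \mu^*)$ by coupling: the $\tilde\theta$-marginals are compared via a fixed coupling independent of $\alpha$ (bounded by a constant depending on the second moments of $\lambda^*$ and of $\mu_0$), while the $\beta$-coordinate of $\mu^*$ lives in an $O(\alpha^{-1})$-neighborhood of $0$ by the linearization of $b^{-1}$ near $0$, contributing $O(\alpha^{-1})$ plus the fixed $O(1)$ spread of $\mu_0$'s $\beta$-marginal. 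To get the clean $\alpha^{-1}\bar D$ bound rather than $O(1)$, I would instead center the comparison measure appropriately — take $\mu_0 = \cN(0,I_D)$ and recenter, noting the relevant quantity is really $W_2$ between $\mu_0$ and a measure obtained by an $O(\alpha^{-1})$ perturbation of a measure already within $O(1)$ of $\mu_0$; the precise bookkeeping is where I would be careful, and I would likely need an auxiliary assumption or a more refined choice of $\lambda^*$ (e.g. of bounded support) so that the $\tilde\theta$-discrepancy is also controlled. The same argument applied to $g^*$ and $\nu_0$ gives the bound on $W_2(\nu_0,\nu^*)$, with $\bar D$ the max of the two constants.

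The main obstacle I anticipate is part (ii): getting the Wasserstein distance to scale as $\alpha^{-1}$ (and not merely $O(1)$) requires the representing measure for $f^*$ to be chosen so that, after rescaling the output weights by $\alpha^{-1}$ through $b^{-1}$, the resulting $\mu^*$ is genuinely a small perturbation of the Gaussian initialization $\mu_0$ — this hinges on a quantitative inverse-function estimate for $b$ near $0$ and on controlling how far the hidden-layer marginal of the representing measure can be pushed toward that of $\mu_0$; part (i) is comparatively routine once the density of the neural-network feature span and the convex-concave structure are invoked.
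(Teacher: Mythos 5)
Part (i) of your proposal is essentially the paper's argument: stationarity of the Wasserstein flow plus $\phi(\cdot;\mathbf{0})=0$ (oddness of $b$) forces $\frac{\delta\cL}{\delta\mu}(\theta)=0$ for all $\theta$, hence $\langle \delta\cL/\delta f,\phi(\cdot;\theta)\rangle_{L^2}=0$ for every $\theta$; universal approximation (using item (iv) of Assumption \ref{asp:compact} for continuity) then forces $\delta\cL/\delta f=0$, and the convex-concave structure upgrades first-order stationarity of $(f,g)$ to a saddle point. Your outline for this part is fine.

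Part (ii) has a genuine gap, and you correctly sense it but do not resolve it. Your construction --- pushing $\lambda^*$ forward under $(\beta',\tilde\theta)\mapsto(b^{-1}(\beta'/\alpha),\tilde\theta)$ --- does reproduce $f^*$, but it leaves the $\tilde\theta$-marginal of $\mu^*$ equal to that of $\lambda^*$, which has nothing to do with $\mu_0=\cN(0,I_D)$. So $\cW_2(\mu_0,\mu^*)$ carries an $O(1)$ contribution from the hidden-layer marginal that no quantitative estimate on $b^{-1}$ near zero can shrink; the rescaling of the output weight alone cannot make the whole measure close to a Gaussian. No auxiliary smallness assumption on $\lambda^*$ is invoked in the paper, so the route you are attempting cannot close.

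The idea you are missing is a \emph{mixture} (convex combination) construction rather than a pushforward: take $\mu^\dagger,\nu^\dagger$ realizing $f^*,g^*$ in the form $f^*(\cdot)=\int\phi(\cdot;\theta)\,\rd\mu^\dagger(\theta)$ (i.e.\ without the $\alpha$-scaling), and set
\[
\mu^* \;=\; \alpha^{-1}\mu^\dagger + (1-\alpha^{-1})\mu_0,\qquad
\nu^* \;=\; \alpha^{-1}\nu^\dagger + (1-\alpha^{-1})\nu_0.
\]
The decisive observation, which your proposal never exploits, is that $\int\phi(\cdot;\theta)\,\rd\mu_0(\theta)=0$ because $\mu_0$ is symmetric and $b$ is odd; hence the $(1-\alpha^{-1})\mu_0$ component contributes nothing to $f(\cdot;\mu^*)=\alpha\int\phi\,\rd\mu^*$, and the $\alpha^{-1}\mu^\dagger$ weight cancels the $\alpha$ in front, giving $f(\cdot;\mu^*)=f^*$ exactly. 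Stationarity of $(\mu^*,\nu^*)$ then follows immediately because $\delta\cL/\delta f=\delta\cL/\delta g=0$ at the saddle $(f^*,g^*)$, so $v^f,v^g$ vanish. Finally, because $\mu^*$ is a $(1-\alpha^{-1})$-fraction of $\mu_0$ plus an $\alpha^{-1}$ perturbation, the divergence from $\mu_0$ is small at scale $\alpha^{-1}$; Talagrand's inequality together with $D_{\rm KL}\le\tfrac12 D_{\chi^2}$ gives $\cW_2(\mu_0,\mu^*)^2\le D_{\chi^2}(\mu^*\|\mu_0)=\alpha^{-2}D_{\chi^2}(\mu^\dagger\|\mu_0)$, which is exactly the $\cW_2(\mu_0,\mu^*)\le\alpha^{-1}\bar D$ you need. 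This replaces the ``quantitative inverse-function estimate for $b$'' and the ``more refined choice of $\lambda^*$'' you were hoping for; neither is actually needed once you mix with $\mu_0$ and use oddness.
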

\noindent Lemma~\ref{lem:stat} demonstrates that the stationary point of the Wasserstein gradient flow in \eqref{eq:pde} achieves global optimality as a solution to the minimax objective \eqref{eq:minimax obj}. Lemma ~\ref{lem:stat} allows us to bypass the hardness of solving the nonconvex-nonconcave optimization problem \eqref{eq:minimax obj} of finding saddle points in the space of neural network parameters $(\btheta, \bow)$ by searching for a stationary point of the Wasserstein gradient flow instead. Moreover, there exist good pairs of stationary points that are close to the Gaussian initialization $(\mu_0, \nu_0)$, with Wasserstein distance upper bounded by order $\cO(\alpha^{-1})$.

\begin{proof}
See \S\ref{sec:pf-lem-stat} for a detailed proof.
\end{proof}

\noindent We are now ready to show our main results. The following theorem characterizes the global optimality and convergence of the Wasserstein gradient flow $\rho_t$.
\begin{theorem}[Global Convergence to Saddle Point]
\label{th:main} Let $(\mu_t, \nu_t)$ be the solution to the Wasserstein gradient flow \eqref{eq:pde} at time $t$ with $\eta = \alpha^{-2}$ and initial condition $\mu_0 = \nu_0 = \mathcal{N}(0, I_D)$, $(f^*, g^*)$ the saddle point of the minimax objective $\cL(f,g)$ in \eqref{eq:minimax obj}. Under Assumptions~\ref{asp:reg}, \ref{asp:reliability}, and \ref{asp:compact}, it holds that
\begin{align}
\label{eq:th47}
    & \inf_{t \in [0,T]} \EE_{\cD} \Bigl[ \lambda \Psi\big(X,Z; f(\cdot; \mu_t) - f^*(\cdot)\big) + \bigl(g(Z; \nu_t) - g^*(Z)\bigr)^2  \Bigr] \le \cO(T^{-1} + \alpha^{-1}).
\end{align}
\end{theorem}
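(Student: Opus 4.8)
The plan is to run a Lyapunov argument in the Wasserstein-$2$ space, taking as potential the squared distance from $(\mu_t,\nu_t)$ to a favorably located stationary point. By Lemma~\ref{lem:stat}, there is a stationary point $(\mu^*,\nu^*)$ of the flow \eqref{eq:pde} with $(f(\cdot;\mu^*),g(\cdot;\nu^*))=(f^*,g^*)$ and $W_2(\mu_0,\mu^*),W_2(\nu_0,\nu^*)\le\alpha^{-1}\bar D$. Define
\[
\Lambda(t)=\tfrac12 W_2^2(\mu_t,\mu^*)+\tfrac12 W_2^2(\nu_t,\nu^*),\qquad\text{so that}\qquad\Lambda(0)\le\alpha^{-2}\bar D^2=\cO(\alpha^{-2}).
\]
Differentiating $\Lambda$ along the continuity equations \eqref{eq:pde} and using the standard first-variation formula for $\tfrac12 W_2^2$ along a curve with velocity field $\eta v^f$, resp.\ $\eta v^g$ (Otto calculus), yields, with $T^\mu_t,T^\nu_t$ the optimal transport maps from $\mu_t,\nu_t$ onto $\mu^*,\nu^*$,
\[
\tfrac{\rd}{\rd t}\Lambda(t)=\eta\!\int\!\bigl\langle\nabla_\theta\tfrac{\delta\cL(\mu_t,\nu_t)}{\delta\mu},\,T^\mu_t-\mathrm{id}\bigr\rangle\rd\mu_t-\eta\!\int\!\bigl\langle\nabla_\ow\tfrac{\delta\cL(\mu_t,\nu_t)}{\delta\nu},\,T^\nu_t-\mathrm{id}\bigr\rangle\rd\nu_t.
\]

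Next I convert these transport-weighted gradients into a duality-gap quantity. Since $f(\cdot;\mu)$ and $g(\cdot;\nu)$ in \eqref{eq:nn} are linear in $\mu$ and $\nu$, writing $f_t=f(\cdot;\mu_t)$, $g_t=g(\cdot;\nu_t)$ and expanding the neuron map $\theta\mapsto\phi(\cdot;\theta)$ to second order along the displacement interpolation joining $\mu_t$ to $\mu^*$, with remainder controlled by the Hessian bound $B_2$ of Assumption~\ref{asp:reg} and by $W_2^2(\mu_t,\mu^*)$, gives
\[
\int\bigl\langle\nabla_\theta\tfrac{\delta\cL}{\delta\mu},T^\mu_t-\mathrm{id}\bigr\rangle\rd\mu_t=\bigl\langle\tfrac{\delta\cL}{\delta f}(f_t,g_t),\,f^*-f_t\bigr\rangle+E^f_t,\qquad |E^f_t|\le\tfrac{\alpha B_2}{2}\bigl\|\tfrac{\delta\cL}{\delta f}(f_t,g_t)\bigr\|_{L^1}W_2^2(\mu_t,\mu^*),
\]
and analogously for the $\nu$-term with an error $E^g_t$. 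Convexity of $\cL(\cdot,g_t)$ in $f$ and concavity of $\cL(f_t,\cdot)$ in $g$ bound the leading terms by $\cL(f^*,g_t)-\cL(f_t,g_t)$ and $-\bigl(\cL(f_t,g^*)-\cL(f_t,g_t)\bigr)$, so $\tfrac{\rd}{\rd t}\Lambda(t)\le\eta\bigl[\cL(f^*,g_t)-\cL(f_t,g^*)\bigr]+\eta|E^f_t|+\eta|E^g_t|$. Finally, using that $(f^*,g^*)$ is the saddle point with $\delta\cL(f^*,g^*)/\delta f=\delta\cL(f^*,g^*)/\delta g=0$, that $\cL(f^*,\cdot)$ is $1$-strongly concave with respect to $\|h\|_{\cD}=(\EE_\cD[h(Z)^2])^{1/2}$ (the $-\tfrac12 g(Z)^2$ term), and that by \eqref{eq:reg-property1}--\eqref{eq:reg-property2} each $\Psi(x,z;\cdot)$ is a non-negative quadratic functional so that the Bregman divergence of $f\mapsto\lambda\EE_\cD[\Psi(X,Z;f)]$ between $f_t$ and $f^*$ is exactly $\lambda\EE_\cD[\Psi(X,Z;f_t-f^*)]$, one obtains
\[
\cL(f^*,g_t)-\cL(f_t,g^*)\le-Q(t),\qquad Q(t):=\tfrac12\EE_\cD\bigl[(g_t-g^*)^2\bigr]+\lambda\EE_\cD\bigl[\Psi(X,Z;f_t-f^*)\bigr].
\]

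It then remains to control $E^f_t,E^g_t$ and to integrate. Here one exploits Assumption~\ref{asp:compact}(ii)--(iii), boundedness of the networks, the affinity of $\Phi$ in $f$, and --- crucially --- the vanishing of $\delta\cL/\delta f$ and $\delta\cL/\delta g$ at $(f^*,g^*)$: subtracting these zeros, $\delta\cL(f_t,g_t)/\delta f$ is driven by $g_t-g^*$ and $f_t-f^*$, whence $\|\delta\cL(f_t,g_t)/\delta f\|_{L^1}\le C\bigl(\sqrt{Q(t)}+\alpha\sqrt{\Lambda(t)}\bigr)$ and similarly for $\delta\cL/\delta g$ (using $\delta\cL/\delta g=\bar\delta(\cdot;f)-g$). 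Plugging this in, using $\eta=\alpha^{-2}$, and applying Young's inequality to absorb $\tfrac{\eta}{2}Q(t)$ leaves $\tfrac{\rd}{\rd t}\Lambda(t)\le-\tfrac{\eta}{2}Q(t)+C'\bigl(\Lambda(t)^{3/2}+\Lambda(t)^2\bigr)$. A continuity/bootstrap argument seeded by $\Lambda(0)=\cO(\alpha^{-2})$ keeps $\Lambda(t)=\cO(\alpha^{-2})$ for all $t\le c\alpha$ with an absolute constant $c>0$, so that $C'(\Lambda^{3/2}+\Lambda^2)=\cO(\alpha^{-3})$ there; integrating over $[0,\min\{T,c\alpha\}]$, dividing by $\eta$ and then by the interval length gives
\[
\inf_{t\in[0,T]}Q(t)\le\frac1{\min\{T,c\alpha\}}\int_0^{\min\{T,c\alpha\}}\!\!Q(t)\,\rd t\le\frac{2\Lambda(0)}{\eta\min\{T,c\alpha\}}+\cO(\alpha^{-1})=\cO(T^{-1}+\alpha^{-1}),
\]
and since $Q(t)\ge\tfrac12\EE_\cD[\lambda\Psi(X,Z;f_t-f^*)+(g_t-g^*)^2]$ this is \eqref{eq:th47}. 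The main obstacle is precisely the control of $E^f_t,E^g_t$: because the feature map $\theta\mapsto\phi(\cdot;\theta)$ is nonlinear, the Wasserstein gradient does not coincide with the functional ($f$-space) gradient, and the discrepancy must be shown to be genuinely lower order; making this rigorous forces one to maintain the a priori closeness $W_2(\mu_t,\mu^*),W_2(\nu_t,\nu^*)=\cO(\alpha^{-1})$, which is exactly where the over-parameterization scale $\alpha$, the step-size choice $\eta=\alpha^{-2}$, and part~(ii) of Lemma~\ref{lem:stat} enter in an essential way.
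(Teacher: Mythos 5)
Your proposal is correct and reaches the stated bound, but it organizes the key descent estimate (the paper's Lemma~\ref{lem:main}) through a genuinely different decomposition. The paper differentiates $\tfrac12 W_2^2(\rho_t,\rho^*)$ via the first-variation formula, then applies the fundamental theorem of calculus along the geodesic $\beta_s$ from $\rho_t$ to $\rho^*$: the boundary term vanishes because the vector field is zero at the stationary point $\rho^*$, and the remaining integral splits into a ``good'' piece (handled by introducing the potential $\cV$, applying Stokes' formula, and using convexity of $\Psi$ and Jensen's inequality, yielding $-\eta\,\EE_\cD[\lambda\Psi(f_t-f^*)+(g_t-g^*)^2]$) and an ``error'' piece (handled by the Eulerian geodesic representation, Lemma~\ref{lem:euler}, and the Hessian bound $B_2$, giving $\cO(\eta\alpha^{-1})$ via $\sup\|\nabla v\|_F\cdot\|u_s\|^2_{\beta_s}$). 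You instead Taylor-expand the neuron map $\theta\mapsto\phi(\cdot;\theta)$ along the optimal transport, converting the Wasserstein pairing exactly into the functional pairing $\langle\delta\cL/\delta f,\,f^*-f_t\rangle_{L^2}$ (and analogously for $g$) plus an explicit second-order remainder $E^f_t,E^g_t$ controlled by $B_2$ and $W_2^2$, and then obtain the same $-\eta Q(t)$ by convexity in $f$, concavity in $g$, and the Bregman-divergence identity at the saddle. Both routes exploit the same two ingredients --- the $\cO(\alpha^{-1})$ closeness from Lemma~\ref{lem:stat}(ii) and the Hessian bound of Assumption~\ref{asp:reg} --- and both need to maintain that closeness along the flow; your bound $\|\delta\cL/\delta f\|_{L^1}\lesssim\sqrt{Q(t)}+\alpha\sqrt{\Lambda(t)}$, obtained by subtracting the vanishing gradient at $(f^*,g^*)$ and using Assumption~\ref{asp:compact}(ii)--(iii), plus Young's inequality, is a legitimate and somewhat more elementary substitute for the paper's geodesic-smoothness bound. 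Where the two diverge most is in the bootstrap. The paper's argument with the explicit times $t^*$ and $t_*$ is sharper: it observes that as long as the potential exceeds $C_*\alpha^{-1}$, the drift $-\eta Q$ exactly dominates the $C_*\eta\alpha^{-1}$ error, so $W_2(\rho_t,\rho^*)$ is nonincreasing and the closeness never fails for \emph{any} $T$. Your bootstrap discards the $-\tfrac{\eta}{2}Q$ term, keeps $\Lambda(t)=\cO(\alpha^{-2})$ only for $t\lesssim c\alpha$, and then observes $\min\{T,c\alpha\}^{-1}=\cO(T^{-1}+\alpha^{-1})$; this is coarser but suffices for the stated rate. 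If you wanted to make the bootstrap as clean as the paper's, you could introduce the stopping time $t^*=\inf\{\tau:Q(\tau)<C\alpha^{-1}\}$ and check, exactly as they do, that $\Lambda$ is nonincreasing on $[0,t^*]$ rather than merely slowly growing.
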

\begin{proof}
See \S\ref{sec:pf-th-main} for a detailed proof.
\end{proof}

\noindent Theorem \ref{th:main} says that the optimality gap between $(f(\cdot; \mu_t), g(\cdot; \nu_t))$ and $(f^*, g^*)$, quantified by the $\Psi$-induced distance and $L^2$ distance respectively, decays to zero at a sublinear rate in terms of time $T$ up to the error of $\cO(\alpha^{-1})$, where $\alpha > 0$ is the scaling parameter in \eqref{eq:nn-fin} and \eqref{eq:nn}. In order to prove the convergence, we construct a potential $V(\mu,\nu) = \EE_{\cD} \Bigl[ \lambda \Psi\big(X,Z; f(\cdot; \mu) - f^*(\cdot)\big) + \bigl(g(Z; \nu) - g^*(Z)\bigr)^2  \Bigr]$, with $V(\mu, \nu) = 0$ if and only if $(\mu, \nu) = (\mu^{*}, \nu^{*})$. Such a potential characterizes the saddle point of the minimax objective. We show that the Wasserstein gradient flow decreases the potential at a sublinear rate, thus suggesting the convergence of the gradient flow to the saddle point. Moreover, varying $\alpha$ allows a trade-off between the error of order $\cO(\alpha^{-1})$ in the optimality gap and the maximum deviation between $\rho_t$ and the Gaussian initialization $\rho_0$ for all $t$. In the proof of item (ii) of Lemma \ref{lem:stat}, we proved that the deviation of $\rho_t$ from $\rho_0$ quantified by the Wasserstein distance is of order $\cO(\alpha^{-1})$. Regarding representation learning, this suggests that SGDA induces a data-dependent representation that is significantly different from the initialization. Choosing a small $\alpha$ of order $\cO(1)$ will correspond to the mean-field regime \citep{mei2018mean, mei2019mean} that allows $\rho_t$ to move further away from the initialization, with the potential drawback of yielding a large error of order $\cO(\alpha^{-1})$. On the other hand, choosing a large $\alpha$ of order $\cO(\sqrt{N})$ will correspond to the NTK regime \citep{jacot2018neural}, and this causes the Wasserstein flow $\rho_t$ to stay close to the initial distribution $\rho_0$ along the trajectory, inducing a data-independent representation. \vspace{5pt}

\noindent As we have commented before, an important class of regularizer $\Psi(X, Z;f)$ is the $L^2$ regularizer. In this scenario, the left-hand side of \eqref{eq:th47} should be understood as a weighted $L^2$ distance between the gradient flow iterate at time $t$ to the optimal solution $(f^*, g^*)$. As $T$ and $ \alpha$ go to infinity, such a distance will shrink to $0$, thus the gradient flow converges globally in the minimal distance sense to the optimal solution. Due to this observation, in the sequel we will discuss several additional results in the case where the regularizer $\Psi(X, Z;f)$ is strongly convex, in the sense that it's bounded below by a quadratic function. We formalize the additional constraint in this case with the following assumption,

\begin{assumption}[Strong Convexity] \label{asp:strong-convex}
We assume that the regularizer $\Psi(X, Z; f)$ is $c_{\Psi}$-strongly convex, in the sense that there exists a constant $c_{\Psi} > 0$ such that for any $f \in \cF$,
\begin{align*}
\Psi(x,z; f) \geq c_{\Psi} \cdot |f(w)|^{2}, \quad \forall (x,z) \in \cX \times \cZ,
\end{align*}
where $w \in \cW$ is part of the data tuple $(x,z)$.
\end{assumption}
\noindent Assumption \ref{asp:strong-convex} implies that regularizer $\Psi(X, Z;f)$ is equivalent to a quadratic regularizer because $\Psi$ is simultaneously bounded above and below by quadratic functionals. We have the following strengthened version of Theorem \ref{th:main} in such case,
\begin{align}
\label{eq:th-main-strong-convex}
    \inf_{t \in [0,T]} \EE_{\cD} \Bigl[ \lambda c_{\Psi} \cdot \big(f(\cdot; \mu_t) - f^*(\cdot)\big)^{2} + \bigl(g(Z; \nu_t) - g^*(Z)\bigr)^2  \Bigr] \le \cO(T^{-1} + \alpha^{-1}).
\end{align}
Equation \eqref{eq:th-main-strong-convex} shows that the iterates $(f(\cdot; \mu_t), g(\cdot; \nu_t))$ converges to the saddle point solution $(f^*, g^*)$ as a weighted $L^{2}$ distance decays to zero at a sublinear rate up to an error of $\cO(\alpha^{-1})$. With Assumption \ref{asp:reliability}, the saddle point $f^{*}$ is the global optimizer of the primal functional $J(f)$ defined in \eqref{eq:mse}. Therefore, as a direct consequence of Theorem \ref{th:main}, when the regularizer $\Psi$ is strongly convex, $f(\cdot; \mu_{t})$ converges globally to $f^*$ at a sublinear rate in terms of $T$ up to an error of $\cO(\alpha^{-1})$. \vspace{5pt}

\noindent Under Assumption \ref{asp:strong-convex}, we can also quantify the optimality gap between $J(f_t)$ and $J(f^*)$, in terms of the minimal distance $\inf_{t \in [0,T]} J(f_{t}) - J(f^{*})$. The following theorem characterize the global convergence of $J(f_t)$ to $J(f^*)$,

\begin{theorem}[Global Convergence to Primal Solution]
	\label{th:global}
  Let $(\mu_t, \nu_t)$ be the solution to the Wasserstein gradient flow \eqref{eq:pde} at time $t$ with $\eta = \alpha^{-2}$ and initial condition $\mu_0 = \nu_0 = \mathcal{N}(0, I_D)$. Under Assumptions~\ref{asp:reg}, \ref{asp:reliability}, \ref{asp:compact} and \ref{asp:strong-convex}, let $f_{t} = f(\cdot; \mu_{t})$, it holds that
	\begin{align*}
		& \inf_{t \in [0,T]} J(f_{t}) - J(f^{*}) \le \cO(T^{-1/2} + \alpha^{-1/2}),
	\end{align*}
	where $f^{*}$ is the global minimizer of the objective function defined in \eqref{eq:mse}.
\end{theorem}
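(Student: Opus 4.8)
The plan is to deduce Theorem~\ref{th:global} from the weighted-$L^2$ convergence already furnished by Theorem~\ref{th:main}, by controlling the primal suboptimality $J(f_t)-J(f^*)$ through a single first-order (Bregman-type) inequality. Under Assumption~\ref{asp:strong-convex} the strengthened bound \eqref{eq:th-main-strong-convex} holds, so there is a time $t^\star\in[0,T]$ with
\[
\EE_\cD\bigl[\bigl(f(\cdot;\mu_{t^\star})-f^*\bigr)^2\bigr]+\EE_\cD\bigl[\bigl(g(Z;\nu_{t^\star})-g^*(Z)\bigr)^2\bigr]\le \cO(T^{-1}+\alpha^{-1}).
\]
Writing $f_{t^\star}:=f(\cdot;\mu_{t^\star})$ and letting $\rho_\cW$ denote the marginal of $\cD$ on $\cW$ (which by Assumption~\ref{asp:compact}(i) admits a density bounded away from $0$ and $\infty$ on the compact set $\cW$), this reads $\|f_{t^\star}-f^*\|_{L^2(\rho_\cW)}^2\le\cO(T^{-1}+\alpha^{-1})$. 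Since $\inf_{t\in[0,T]}J(f_t)-J(f^*)\le J(f_{t^\star})-J(f^*)$, it suffices to prove $J(f_{t^\star})-J(f^*)\le\cO\bigl(\|f_{t^\star}-f^*\|_{L^2(\rho_\cW)}\bigr)$.

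First I would bound the suboptimality by a first-order term. By the Fenchel-dual identity \eqref{eq:obj cmr1}, $J(f)=\max_g\cL(f,g)$, and by Assumption~\ref{asp:reliability} the inner maximizer at $f^*$ is $g^*$, so $J(f^*)=\cL(f^*,g^*)$ while $f^*$ minimizes $J$. Let $\bar g:=\bar\delta(\cdot;f_{t^\star})$ be the inner maximizer at $f_{t^\star}$, so $J(f_{t^\star})=\cL(f_{t^\star},\bar g)$ and $J(f^*)\ge\cL(f^*,\bar g)$; hence $J(f_{t^\star})-J(f^*)\le\cL(f_{t^\star},\bar g)-\cL(f^*,\bar g)$. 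Since the $-\tfrac12\bar g^2$ term cancels, $\Phi(x,z;\cdot)$ is affine (see \eqref{eq:fme-form}) and $\Psi(x,z;\cdot)$ is convex with a variation that is linear in $f$ (see \eqref{eq:reg-property2}), expanding around $f_{t^\star}$ and invoking the supporting-hyperplane inequality for $\Psi$ gives
\[
\cL(f_{t^\star},\bar g)-\cL(f^*,\bar g)\le\Bigl\langle\tfrac{\delta\cL}{\delta f}(f_{t^\star},\bar g),\,f_{t^\star}-f^*\Bigr\rangle_\cD\le\Bigl\|\tfrac{\delta\cL}{\delta f}(f_{t^\star},\bar g)\Bigr\|_{L^2(\rho_\cW)}\cdot\|f_{t^\star}-f^*\|_{L^2(\rho_\cW)},
\]
where $\tfrac{\delta\cL}{\delta f}(f_{t^\star},\bar g)(w)=\EE_\cD\bigl[\bar g(Z)\tfrac{\delta\Phi(X,Z)}{\delta f}(w)+\lambda\tfrac{\delta\Psi(X,Z;f_{t^\star})}{\delta f}(w)\bigr]$ and $\langle\cdot,\cdot\rangle_\cD$ is the inner product under $\rho_\cW$.

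Next I would show the first factor above is $\cO(1)$, uniformly in $\alpha$ and $T$. The crucial sub-step is an $\alpha$-uniform sup-norm bound along the flow: since the rescaling function $b$ is odd and $\mu_0=\cN(0,I_D)$ is centered, one has $f(\cdot;\mu_0)\equiv 0$; combining this with the $\cO(\alpha^{-1})$ Wasserstein deviation of $\mu_t$ from $\mu_0$ (established en route to Lemma~\ref{lem:stat}(ii)) and the $B_1$-Lipschitzness of $\theta\mapsto\phi(w;\theta)$ from Assumption~\ref{asp:reg}, Kantorovich--Rubinstein duality yields $\sup_{w\in\cW}|f(w;\mu_t)|=\alpha\,\bigl|\int\phi(w;\theta)\,\rd(\mu_t-\mu_0)(\theta)\bigr|\le\alpha B_1\cO(\alpha^{-1})=\cO(1)$ for all $t$. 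Together with the boundedness of $\Phi(x,z;0)$ on the compact $\cX\times\cZ$ and the estimate $|\Phi(x,z;f)-\Phi(x,z;0)|\le C_2\|f\|_\infty$ coming from Assumption~\ref{asp:compact}(ii), this gives $\|\bar g\|_\infty=\|\bar\delta(\cdot;f_{t^\star})\|_\infty=\cO(1)$; feeding this into the displayed formula for $\tfrac{\delta\cL}{\delta f}$, and using Assumption~\ref{asp:compact}(ii),(iv) (continuity, hence boundedness, of the variation on the compact $\cW$) together with its affine/quadratic dependence on $(\bar g,f_{t^\star})$, yields $\|\tfrac{\delta\cL}{\delta f}(f_{t^\star},\bar g)\|_{L^2(\rho_\cW)}=\cO(1)$. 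Plugging back, $J(f_{t^\star})-J(f^*)\le\cO(1)\cdot\cO\bigl(\sqrt{T^{-1}+\alpha^{-1}}\bigr)=\cO(T^{-1/2}+\alpha^{-1/2})$, which is the claim.

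The main obstacle is precisely this last $\alpha$-uniform control of $\|\tfrac{\delta\cL}{\delta f}(f_{t^\star},\bar g)\|$: the crude bounds $|f(\cdot;\mu_t)|\le\alpha B_0$ and $|g(\cdot;\nu_t)|\le\alpha B_0$ from Assumption~\ref{asp:reg} would make the final rate blow up with $\alpha$, and the resolution is that the mean-field flow stays within $\cO(\alpha^{-1})$ in Wasserstein distance of a symmetric initialization whose network output vanishes identically, so the network \emph{output} (rather than its parameter norm) remains $\cO(1)$ despite the large scaling factor. A secondary point is that the convexity expansion must be justified on the neural-network class $\cF$ of \eqref{eq:func-class}, which is convex but not a linear space; this is harmless because the three inequalities used --- linearity of $\Phi$ in $f$, convexity of $\Psi$ in $f$, and $J(f^*)\ge\cL(f^*,\bar g)$ --- only require $f_{t^\star},f^*\in\cF$, which hold by construction and Assumption~\ref{asp:reliability}.
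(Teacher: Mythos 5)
Your strategy matches the paper's: bound $J(f_{t^\star}) - J(f^*)$ by a first-order convexity (Bregman) term, apply Cauchy--Schwarz, show the gradient factor is $\cO(1)$ uniformly in $\alpha$ and $T$, and feed in the weighted-$L^2$ rate from Theorem~\ref{th:main}. The crucial observation --- that the \emph{output} of the network stays $\cO(1)$ along the flow because the flow stays Wasserstein-close to a symmetric initialization, even though the naive parameter-scale bound would give $\cO(\alpha)$ --- is exactly the one the paper relies on, and your routing through the Fenchel dual $J(f)=\max_g\cL(f,g)$ and the inner maximizer $\bar g=\bar\delta(\cdot;f_{t^\star})$ produces the same expansion the paper writes directly for $J$.

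However, there is a genuine gap in how you justify the $\cO(1)$ output bound. You attribute an ``$\cO(\alpha^{-1})$ Wasserstein deviation of $\mu_t$ from $\mu_0$'' to Lemma~\ref{lem:stat}(ii), but that lemma only bounds $\cW_2(\mu_0,\mu^*)$; it says nothing about $\cW_2(\mu_t,\mu_0)$. The latter is controlled by the triangle inequality through $\mu^*$ together with the drift inequality of Lemma~\ref{lem:main}, and that inequality is proved only under the hypothesis $\cW_2(\rho_t,\rho^*)\le 2\cW_2(\rho_0,\rho^*)$, i.e.\ for $t$ before the stopping time $t_*$. Beyond $t_*$ nothing prevents the flow from drifting away, so your claim that $\sup_w|f(w;\mu_t)|=\cO(1)$ ``for all $t$'' is unproven and the step would fail as stated; you also need to ensure that a time nearly achieving the infimum from Theorem~\ref{th:main} lies in $[0,\min\{t_*,T\}]$. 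The paper repairs exactly this with a two-scenario split: if $t_*\le T$, the potential has already dropped below $C_*\alpha^{-1}$ at time $t_*$ (since $t^*\le t_*$), so one evaluates at $t_*$ where the Wasserstein bound --- hence the $\cO(1)$ output bound --- still holds; if $t_*>T$, the Wasserstein bound holds on all of $[0,T]$ and one bounds the infimum by the time average of the potential. With that case split added, your argument coincides with the paper's proof.
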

\begin{proof}
See \S\ref{sec:pf-th-global} for a detailed proof.
\end{proof}

\noindent Theorem \ref{th:global} proves that under the additional strong convexity assumption on the regularizer $\Psi(X, Z;f)$, the primal objective $J(f_t)$ as is defined in \eqref{eq:mse} decays to zero at rate of $T^{-1/2}$ in terms of time horizon $T$, up to an error of $\cO(\alpha^{-1/2})$. Here we use $f^*$ to denote the global minimizer instead of the saddle point. However, this will not create any confusion since for each $f^*$ global minimizer of the primal objective \eqref{eq:mse}, we can find $g^* \in \cF$ such that $(f^*, g^*)$ is a saddle point of \eqref{eq:minimax obj}.


\section{Applications}
In this section, we present the applications of Theorem \ref{th:main} and Theorem \ref{th:global} to several special cases of functional conditional moment equation, such as the problem of policy evaluation, instrumental variables regression, asset pricing, and adversarial Riesz representer estimation. In Section \ref{sec:examples}, we already discussed why these problems are special cases of functional conditional moment equations, thus Theorem \ref{th:main} and Theorem \ref{th:global} are potentially feasible to apply. We will recall the problem settings and examine the technical assumptions for these cases.

\subsection{Application 1: Policy Evaluation}

Let $\cD$ denote the joint distribution of the state-action tuple $(S, A, S')$ under policy $\pi$. In this scenario, the endogenous variable $X = S'$ is the next state while the exogenous variable  $Z = S$ is the current state. Therefore, $\cX = \cS$, $\cZ = \cS$ and $\cW = \cS$. We attempt to estimate the value function $V$, which is defined on $\cW = \cS \rightarrow \RR$. The functional $\Phi$ and regularizer $\Psi$ adopted in this case are,
\begin{align*}
    \Phi(s',s;f) = r + \gamma \cdot f(s') - f(s), \quad \Psi(s',s;f) = f(s')^2.
\end{align*}
Here, the regularizer we adopt is a $L^{2}$ regularizer that penalizes the squared value of the estimator evaluated at the next state $s'$. With these specific choices of functional $\Phi$ and regularizer $\Psi$, the SGDA algorithm identifies with the Gradient Temporal Difference Learning (GTD) algorithm \citep{wai2020provably}. Therefore, the application of our general framework to the problem of policy evaluation contributes to the reinforcement learning literature by providing an analysis of the neural GTD algorithm in the mean-field regime. 
Before presenting the theoretical results, we first verify that  Assumption \ref{asp:compact} and Assumption \ref{asp:strong-convex} hold. \vspace{5pt}

\noindent\textbf{Verify item (i) of Assumption \ref{asp:compact}.}
  For item (i) of Assumption \ref{asp:compact}, it's reasonable to assume that $\|(x,z)\| \leq 1$ since we can always re-scale the state space without changing the nature of the problem, therefore the compactness assumption is inherently satisfied. \vspace{5pt}
  
\noindent \textbf{Verify item (ii) of Assumption \ref{asp:compact}}. For item (ii) of Assumption \ref{asp:compact}, we first compute the variation of the functional $\Phi$ and $\Psi$,
\begin{align*}
    \frac{\delta \Phi(s',s;f)}{\delta f}(w') = \gamma \delta_{s'}(w') - \delta_{s}(w'), \quad \frac{\delta \Psi(s',s;f)}{\delta f}(w') = 2f(s')\delta_{s'}(w').
\end{align*}
Therefore, the desired integrability conditions hold since
\begin{align}
\label{eq:app-policy-1}
\int_{\cW} \Big| \frac{\delta \Phi(s',s;f)}{\delta f}(w') \Big| \rd w' \leq \gamma + 1, \quad \int_{\cW} \Big| \frac{\delta \Psi(s',s;f)}{\delta f}(w') \Big| \rd w' \leq 2 \cdot |f(s')|.
\end{align}
\noindent \textbf{Verify item (iii) of Assumption \ref{asp:compact}}. For item (iii) of Assumption \ref{asp:compact}, we choose $w = s'$, $C_{\Psi} = 2$. The desired condition holds due to \eqref{eq:app-policy-1}. \vspace{5pt}

\noindent \textbf{Verify item (iv) of Assumption \ref{asp:compact}}. For item (iv) of Assumption \ref{asp:compact}, we first compute the variations of $\cL(f,g)$ in explicit forms,
\begin{align*}
& \frac{\delta \cL(f,g)}{\delta f}(w') = \EE_{S|S'}\Big[\gamma \cdot g(S)  \biggiven S' = w'\Big] -g(w')\rho_{s}(w') + 2\lambda \cdot f(w')\rho_{S'}(w'), \quad \forall w' \in \cS, \\
& \frac{\delta \cL(f,g) }{\delta g}(z') = \EE_{S'|S}\Big[r + \gamma \cdot f(s')  \biggiven S = z'\Big] - f(z') - g(z') \rho_{S}(z'), \quad \forall z' \in \cS,
\end{align*}
where $\rho_{S}$, $\rho_{S'}$ denotes the density of the marginal distribution of $\cD$ with respect to the current state $S$ and next state $S'$ respectively. Due to the item (i) of Assumption \ref{asp:compact}, the variations of $\cL$ with respect to $f$ and $g$ are both continuous since the density of the conditional transition $S'  \biggiven S$ and $S  \biggiven S'$ are both smooth and the functions $f,g$ are also continuous by construction. Therefore, item (iv) is satisfied. \vspace{5pt}

\noindent \textbf{Verify Assumption \ref{asp:strong-convex}}. For Assumption \ref{asp:strong-convex}, we choose $c_{\Psi} = 1$ and $w = s'$. The desired condition holds by definition of our choice of regularizer $\Psi(s',s;f) = f(s')^2$. \vspace{5pt}

\noindent We have checked that the technical Assumption \ref{asp:compact} and Assumption \ref{asp:strong-convex} hold for the case of policy evaluation. Assumption \ref{asp:compact} allows us to apply Theorem \ref{th:main}. This implies the global convergence of the estimated value function to the minimizer of the primal objective \eqref{eq:mse} applied in this case. The convergence is quantified in a weighted $L^2$ distance. Additionally, Assumption \ref{asp:strong-convex} enables us to apply Theorem \ref{th:global} and further characterize such convergence using the optimality gap between the value of primal objectives. We summarize the conclusions in the following corollary.

\begin{corollary}[Global Convergence of Mean-field Neural Nets in Policy Evaluation]
\label{cor:policy} Let $f^{*}$ be the minimizer of primal objective $J(f)$ defined in \eqref{eq:obj cmr1} with $\Phi(S',S;f) = r + \gamma \cdot f(S') - f(S)$, $\cR(f) = \EE_{\cD}[f(S')^{2}]$. Let $(\mu_t, \nu_t)$ be the solution to the Wasserstein gradient flow \eqref{eq:pde} at time $t$ with $\eta = \alpha^{-2}$ and initial condition $\mu_0 = \nu_0 = \mathcal{N}(0, I_D)$. Under Assumption ~\ref{asp:reg}, \ref{asp:reliability}, \ref{asp:compact}, and \ref{asp:strong-convex}, it holds that
\begin{align*}
& \inf_{t \in [0,T]} \EE_{\cD} \Bigl[(f(S'; \mu_t) - f^{*}(S'))^{2} \Bigr] \leq \cO(T^{-1} + \alpha^{-1}), \\
& \inf_{t \in [0,T]} J(f(\cdot; \mu_t)) - J(f^{*}(\cdot)) \leq \cO(T^{-1/2} + \alpha^{-1/2}).
\end{align*}
\end{corollary}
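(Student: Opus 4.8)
The plan is to obtain both inequalities as direct specializations of Theorem \ref{th:main} and Theorem \ref{th:global}, since the paragraphs preceding the statement have already done the substantive work: verifying that Assumptions \ref{asp:compact} and \ref{asp:strong-convex} hold for the policy-evaluation instance with $\Phi(s',s;f) = r + \gamma\cdot f(s') - f(s)$ and $\Psi(s',s;f) = f(s')^2$. Assumptions \ref{asp:reg} and \ref{asp:reliability} are hypotheses of the corollary; Assumption \ref{asp:compact} was checked above (items (ii) and (iii) via the explicit variations $\delta\Phi/\delta f = \gamma\delta_{s'} - \delta_s$ and $\delta\Psi/\delta f = 2f(s')\delta_{s'}$, item (iv) via smoothness of the transition densities of $S'\mid S$ and $S\mid S'$ together with continuity of $f,g$); and Assumption \ref{asp:strong-convex} was verified with $c_\Psi = 1$, $w = s'$, since $\Psi(s',s;f) = f(s')^2 = c_\Psi\,|f(s')|^2$. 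Thus all hypotheses of both theorems are in force, and the proof carries essentially no new analytic content.

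For the first inequality I would specialize the potential appearing in Theorem \ref{th:main}, namely $V(\mu_t,\nu_t) = \EE_{\cD}[\lambda\Psi(X,Z;f(\cdot;\mu_t) - f^*) + (g(Z;\nu_t) - g^*(Z))^2]$. With $X = S'$, $Z = S$, $\Psi(s',s;f) = f(s')^2$, and the linearity of $\delta\Psi/\delta f$ in $f$ from \eqref{eq:reg-property2}, this equals $\lambda\,\EE_{\cD}[(f(S';\mu_t) - f^*(S'))^2] + \EE_{\cD}[(g(S;\nu_t) - g^*(S))^2]$. Both summands are nonnegative, so $\lambda\,\EE_{\cD}[(f(S';\mu_t) - f^*(S'))^2] \le V(\mu_t,\nu_t)$ pointwise in $t$; taking the infimum over $t\in[0,T]$ and applying Theorem \ref{th:main} yields $\inf_{t\in[0,T]}\EE_{\cD}[(f(S';\mu_t) - f^*(S'))^2] \le \lambda^{-1}\cdot\cO(T^{-1} + \alpha^{-1}) = \cO(T^{-1} + \alpha^{-1})$ after absorbing the fixed constant $\lambda > 0$, which is the claimed bound.

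For the second inequality I would invoke Theorem \ref{th:global} directly: under Assumptions \ref{asp:reg}, \ref{asp:reliability}, \ref{asp:compact}, and \ref{asp:strong-convex}, all of which hold here, it gives $\inf_{t\in[0,T]} J(f(\cdot;\mu_t)) - J(f^*) \le \cO(T^{-1/2} + \alpha^{-1/2})$, where $J$ is the primal objective in \eqref{eq:obj cmr1}, here instantiated with the above $\Phi$ and $\cR(f) = \EE_{\cD}[\Psi(S',S;f)] = \EE_{\cD}[f(S')^2]$, exactly as in the corollary statement, and $f^*$ is its global minimizer (equivalently the $f$-component of the saddle point of $\cL$ by Assumption \ref{asp:reliability}, so no ambiguity arises from the two uses of the symbol $f^*$). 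This is precisely the second inequality.

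I do not anticipate any genuine obstacle. The only care required is the bookkeeping identification of the general potential $V$ in Theorem \ref{th:main} and the general primal objective $J$ with the weighted-$L^2$ quantities that appear in the corollary; this is immediate from the explicit choice $\Psi(s',s;f) = f(s')^2$ and the linearity of its variation. All the real difficulty has been front-loaded into the verification of the assumptions (carried out above) and into the proofs of Theorems \ref{th:main} and \ref{th:global} themselves, which we are entitled to assume.
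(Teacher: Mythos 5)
Your proposal is correct and coincides with the paper's own proof, which simply cites Theorem~\ref{th:main} and Theorem~\ref{th:global} applied to the policy-evaluation instance after the preceding verification of Assumptions~\ref{asp:compact} and~\ref{asp:strong-convex}; you have merely made the bookkeeping explicit. One very minor remark: for the first bound you invoke ``linearity of $\delta\Psi/\delta f$'' to identify $\Psi(X,Z;f(\cdot;\mu_t)-f^*)$ with $(f(S';\mu_t)-f^*(S'))^2$, but this is just direct evaluation of $\Psi(s',s;h)=h(s')^2$ at $h=f(\cdot;\mu_t)-f^*$; no linearity of the variation is needed there (it is needed elsewhere, in the proof of Theorem~\ref{th:main}).
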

\begin{proof}
We apply Theorem \ref{th:main} and Theorem \ref{th:global} to the setting of policy evaluation. As we have examined above, the Assumption \ref{asp:reg}, \ref{asp:reliability}, \ref{asp:compact}, and \ref{asp:strong-convex} are all satisfied. Thus, by Theorem \ref{th:main} and Theorem \ref{th:global}, the desired results hold directly.
\end{proof}
\noindent Corollary \ref{cor:policy} proves that in the setting of policy evaluation, the $L^2$ distance between the mean-field neural network $f(\cdot; \mu_t)$ at time $t$ and the global minimizer $f^*$ decays to zero at a sub-linear rate, up to an error of order $\cO(\alpha^{-1})$. Moreover, the optimality gap $\inf_{t \in [0, T]} J(f(\cdot; \mu_t)) - J(f^{*}(\cdot))$ in terms of primal objective values decays to zero at the rate of $\cO(T^{-1/2})$, up to an error $\cO(\alpha^{-1/2})$ caused by overparameterization. Corollary \ref{cor:policy} allows us to efficiently and globally solve the policy evaluation problem using overparameterized two-layer neural networks. We also remark that in such a scenario, the primal objective $J(f)$ is also known as the regularized mean-squared Bellman error (MSBE) in the literature of reinforcement learning. As we have commented before, in the setting of policy evaluation, applying the SGDA algorithm within neural network function classes is equivalent to applying the neural GTD algorithm. Therefore, Corollary \ref{cor:policy} states that, in the mean-field regime, the neural GTD algorithm converges globally to the minimizer at a sublinear rate up to an additional overparameterization error $\cO(\alpha^{-1})$. The neural GTD algorithm also reduces regularized MSBE at the rate of $\cO(T^{-1/2})$ up to an additional overparameterization error $\cO(\alpha^{-1/2})$. Moreover, The global convergence of mean-field neural networks also implies the global convergence of the discrete dynamics in \eqref{eq:gdnn-fin} due to the proximity between the discrete dynamics and continuous dynamics, which is proved in Proposition \ref{prop:weak-formal-formal}.

\subsection{Application 2: Nonparametric Instrumental Variables Regression}

Let $\cD$ denote the joint distribution of the endogenous variable $X$, the exogenous variable $Z$, and the observed outcome $Y$. In this scenario, the endogenous variable is defined in space $\cX$, the exogenous variable is defined in space $\cZ$, and $\cW = \cX$. We attempt to estimate the model function $f_0$, which is defined on $\cW = \cX \rightarrow \RR$. The functional $\Phi$ and regularizer $\Psi$ adopted in this case are, 
\begin{align*}
    \Phi(x,z;f) = y - f(x), \quad \Psi(x,z;f) = f(x)^2.
\end{align*}
Here, the regularizer we adopt is a $L^{2}$ regularizer that penalizes the squared value of the estimator of the model function evaluated at the endogenous variable $x$. We examine Assumption \ref{asp:compact} and Assumption \ref{asp:strong-convex} in order to apply results from Section \ref{sec:conv}. \vspace{5pt}

\noindent \textbf{Verify item (i) of Assumption \ref{asp:compact}}. For item (i) of Assumption \ref{asp:compact}, the NPIV problem with compact data space captures a large class of important applications, therefore the scenarios considered are still general while imposing this assumption.  \vspace{5pt}

\noindent \textbf{Verify item (ii) of Assumption \ref{asp:compact}}. For item (ii) of Assumption \ref{asp:compact}, we first compute the variation of the functional $\Phi$ and $\Psi$,
\begin{align*}
    \frac{\delta \Phi(x,z;f)}{\delta f}(w') = -\delta_{x}(w'), \quad \frac{\delta \Psi(x,z;f)}{\delta f}(w') = 2f(x)\delta_{x}(w').
\end{align*}
Therefore, the desired integrability conditions hold since
\begin{align}
\label{eq:app-cond-1}
\int_{\cW} \Big| \frac{\delta \Phi(x,z;f)}{\delta f}(w') \Big| \rd w' \leq 1, \quad \int_{\cW} \Big| \frac{\delta \Psi(x,z;f)}{\delta f}(w') \Big| \rd w' \leq 2 \cdot |f(x)|.
\end{align}
\noindent \textbf{Verify item (iii) of Assumption \ref{asp:compact}}. For item (iii) of Assumption \ref{asp:compact}, we choose $w = x$, $C_{\Psi} = 2$. The desired condition holds due to \eqref{eq:app-cond-1}. \vspace{5pt}

\noindent \textbf{Verify item (iv) of Assumption \ref{asp:compact}}. For item (iv) of Assumption \ref{asp:compact}, we first compute the variations of $\cL(f,g)$ in explicit forms,
\begin{align*}
& \frac{\delta \cL(f,g)}{\delta f}(w') = \EE_{Z|X}\Big[-g(Z)  \biggiven X = w' \Big] + 2\lambda \cdot f(w')\rho_{X}(w'), \quad \forall w' \in \cX,\\
& \frac{\delta \cL(f,g) }{\delta g}(z') = \EE_{X|Z} \Big[Y - f(X)  \biggiven Z = z'\Big] - g(z') \rho_{Z}(z'), \quad \forall z' \in \cZ,
\end{align*}
where $\rho_{X}$, $\rho_{Z}$ denotes the density of the marginal distribution of $\cD$ with respect to the endogenous variable $X$ and the exogenous variable $Z$ respectively. Due to the item (i) of Assumption \ref{asp:compact}, the variations of $\cL$ with respect to $f$ and $g$ are both continuous since the density of the conditional transition $Z \biggiven X$ and $X \biggiven Z$ are both smooth and the functions $f,g$ are also continuous by construction. Therefore, item (iv) is satisfied. \vspace{5pt}

\noindent \textbf{Verify Assumption \ref{asp:strong-convex}}. For Assumption \ref{asp:strong-convex}, we choose $c_{\Psi} = 1$ and $w = s'$. The desired condition holds by definition of our choice of regularizer $\Psi(x,z;f) = f(x)^2$. \vspace{5pt}

\noindent We have checked that the technical Assumption \ref{asp:compact} and Assumption \ref{asp:strong-convex} hold for the case of nonparametric instrumental variables regression. Theorem \ref{th:main} can be applied in this case due to the establishment of Assumption \ref{asp:compact}. This implies the global convergence of the estimated model function to the minimizer of the primal objective. The convergence is quantified in a weighted $L^{2}$ distance. The choice of quadratic regularizer implies the establishment of Assumption \ref{asp:strong-convex}, which further enables us to apply Theorem \ref{th:global} and characterize the convergence in terms of primal objective value. We summarize the conclusions in the following corollary.

\begin{corollary}[Global Convergence of Mean-field Neural Nets in NPIV]
\label{cor:cme} Let $f^{*}$ be the minimizer of primal objective $J(f)$ defined in \eqref{eq:obj cmr1} with $\Phi(X,Z;f) = Y - f(X)$, $\cR(f) = \EE_{\cD}[f(X)^{2}]$. Let $(\mu_t, \nu_t)$ be the solution to the Wasserstein gradient flow \eqref{eq:pde} at time $t$ with $\eta = \alpha^{-2}$ and initial condition $\mu_0 = \nu_0 = \mathcal{N}(0, I_D)$. Under Assumption ~\ref{asp:reg}, \ref{asp:reliability}, \ref{asp:compact}, and \ref{asp:strong-convex}, it holds that
\begin{align*}
& \inf_{t \in [0,T]} \EE_{\cD} \Bigl[(f(X; \mu_t) - f^{*}(X))^{2}\Bigr] \leq \cO(T^{-1} + \alpha^{-1}),\\
& \inf_{t \in [0,T]} J(f(\cdot; \mu_t)) - J(f^{*}(\cdot)) \leq \cO(T^{-1/2} + \alpha^{-1/2}).
\end{align*}
\end{corollary}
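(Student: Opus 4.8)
The plan is to obtain Corollary~\ref{cor:cme} as a direct specialization of Theorem~\ref{th:main} and Theorem~\ref{th:global} to the NPIV instance $\Phi(x,z;f) = y - f(x)$, $\Psi(x,z;f) = f(x)^2$; in fact the substantive work is already done, since the paragraphs preceding the corollary verify items (i)--(iv) of Assumption~\ref{asp:compact} for this $\Phi$ and $\Psi$ and verify Assumption~\ref{asp:strong-convex} with $c_\Psi = 1$ and $w = x$, while Assumptions~\ref{asp:reg} and~\ref{asp:reliability} are standing hypotheses. A preliminary remark I would record is that the ``$f^*$'' in the corollary, defined as the minimizer of the primal objective $J$ in~\eqref{eq:obj cmr1} with $\cR(f) = \EE_{\cD}[f(X)^2]$, agrees with the saddle-point component $f^*$ of~\eqref{eq:minimax obj}: for every primal minimizer there exists $g^* \in \cF$ making $(f^*, g^*)$ a saddle point of $\cL$, and the regularizer $\lambda \cR(f)$ appearing in $J$ is precisely $\lambda\, \EE_{\cD}[\Psi(X,Z;f)]$ for this choice of $\Psi$.

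For the first inequality I would invoke the strengthened form~\eqref{eq:th-main-strong-convex} of Theorem~\ref{th:main}, which is available because Assumption~\ref{asp:strong-convex} holds; substituting $\Psi(x,z;f) = f(x)^2$ (so that $c_\Psi = 1$) turns it into
\begin{align*}
\inf_{t \in [0,T]} \EE_{\cD}\Bigl[\, \lambda\bigl(f(X;\mu_t) - f^*(X)\bigr)^2 + \bigl(g(Z;\nu_t) - g^*(Z)\bigr)^2 \,\Bigr] \le \cO(T^{-1} + \alpha^{-1}).
\end{align*}
Both summands are nonnegative, so for each $t$ the first is dominated by the sum; taking the infimum over $t$ and dividing through by the fixed constant $\lambda > 0$ (which merely rescales the constant hidden in $\cO(\cdot)$) gives $\inf_{t \in [0,T]} \EE_{\cD}[(f(X;\mu_t) - f^*(X))^2] \le \cO(T^{-1} + \alpha^{-1})$.

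For the second inequality I would apply Theorem~\ref{th:global} directly: with the NPIV $\Phi$ and $\cR(f) = \EE_{\cD}[f(X)^2]$, the primal objective $J$ in its statement coincides with the one in the corollary, Assumptions~\ref{asp:reg}--\ref{asp:compact} hold, and Assumption~\ref{asp:strong-convex} supplies the required strong convexity, so the theorem yields $\inf_{t \in [0,T]} J(f(\cdot;\mu_t)) - J(f^*) \le \cO(T^{-1/2} + \alpha^{-1/2})$ with no further argument. The only thing needing care is bookkeeping rather than a genuine obstacle: making sure the functional-derivative computations behind items (ii)--(iii) of Assumption~\ref{asp:compact} are right (namely $\delta\Phi(x,z;f)/\delta f(w') = -\delta_x(w')$ and $\delta\Psi(x,z;f)/\delta f(w') = 2 f(x)\,\delta_x(w')$, giving the $L^1$-in-$w'$ bounds $1$ and $2|f(x)|$ and legitimizing $C_\Psi = 2$, $w = x$), and recording that $\inf_t a_t \le \inf_t(a_t + b_t)$ when $b_t \ge 0$, which is what lets us discard the adversary's term in the first bound.
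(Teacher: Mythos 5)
Your proposal is correct and follows the same route the paper takes: the corollary is a direct specialization of Theorem~\ref{th:main} (in the strong-convexity form~\eqref{eq:th-main-strong-convex}) and Theorem~\ref{th:global}, with the only real work being the verification of Assumptions~\ref{asp:compact} and~\ref{asp:strong-convex} for $\Phi(x,z;f) = y - f(x)$, $\Psi(x,z;f)=f(x)^2$, which is done in the text preceding the corollary. Your explicit bookkeeping — noting that $\Psi(x,z;f(\cdot;\mu_t)-f^*) = (f(x;\mu_t)-f^*(x))^2$ so that after dropping the nonnegative $g$-term and dividing by $\lambda c_\Psi = \lambda$ the first bound falls out, and that Theorem~\ref{th:global} gives the second directly — is exactly what the paper leaves implicit.
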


\begin{proof}
We apply Theorem \ref{th:main} and Theorem \ref{th:global} to the setting of NPIV. As we have examined above, the Assumption \ref{asp:reg}, \ref{asp:reliability}, \ref{asp:compact}, and \ref{asp:strong-convex} are all satisfied. Thus, by Theorem \ref{th:main} and Theorem \ref{th:global}, the desired results hold directly.
\end{proof}
\noindent Corollary \ref{cor:cme} proves that in the setting of NPIV, the $L^2$ distance between the mean-field neural network $f(\cdot; \mu_t)$ at time $t$ and the global minimizer $f^*$ decays to zero at a sub-linear rate, up to an error of order $\cO(\alpha^{-1})$. Moreover, the optimality gap $\inf_{t \in [0,T]} J(f(\cdot; \mu_t)) - J(f^{*}(\cdot))$ decays to zero at the rate of $\cO(T^{-1/2})$, up to an error $\cO(\alpha^{-1/2})$. Corollary \ref{cor:cme} allows us to solve the NPIV problem globally using overparameterized two-layer neural networks. We also want to remark that when the true model function is linear in the input, we recover the setting of instrumental variables regression as an important special instance of NPIV.
Therefore, Corollary \ref{cor:cme} also implies IV regression can be globally solved efficiently by using overparameterized two-layer neural networks.

\subsection{Application 3: Asset Pricing}
Let $\cD$ denote the joint distribution of the growth-return tuple $(c_t, \tilde r_{t+1}, c_{t+1})$. In this scenario, the exogenous variable $Z = c_t$ is the consumption growth at the current time $t$, and the endogenous variable $X = c_{t+1}$ is the consumption growth at the next time $t+1$. Therefore, $\cX = \cZ = \cC$, $\cW = \cC$ where $\cC$ is the space of consumption growth and is also a compact subset of $\mathbb{R}$. Here, we consider the scenario where the modified return $\tilde r_{t+1}$ is also bounded for all $t \geq 0$, i.e., $\|\tilde r_{t+1}\| \leq R$ for some $R > 0$. 
We attempt to estimate the function $f_0$, which is defined on $\cW = \cS \to \mathbb{R}$. The functional $\Phi$ and regularizer $\Psi$ adopted in this case are,
\begin{align*}
    \Phi(c_{t+1}, c_t; f) = \tilde r_{t+1} \cdot f(c_{t+1}) - f(c_t), \quad \Psi(c_{t+1}, c_t;f) = f(c_{t+1})^{2}.
\end{align*}
Here, the regularizer we adopt is a $L^{2}$ regularizer that penalizes the squared value of the estimator evaluated at the consumption growth of the next time $c_{t+1}$. Before presenting the theoretical results, we first verify that Assumption \ref{asp:compact} and Assumption \ref{asp:strong-convex} hold. \vspace{5pt}

\noindent \textbf{Verify item (i) of Assumption \ref{asp:compact}.} For item (i) of Assumption \ref{asp:compact}, since we assume that the space of consumption growth $\cC$ is a compact subset of $\mathbb{R}$, therefore there exists $C_{1} > 0$ such that for all $t \geq 0$, $\|(c_{t+1}, c_t)\| \leq C_{1}$. Moreover, it is reasonable to assume that the consumption growth is bounded since the data often fluctuates within certain regimes in practice. \vspace{5pt}

\noindent \textbf{Verify item (ii) of Assumption \ref{asp:compact}.} For item (ii) of Assumption \ref{asp:compact}, we first compute the variation of the functional $\Phi$ and $\Psi$,
\begin{align*}
\frac{\delta \Phi(c_{t+1}, c_t;f)}{\delta f}(w') = \tilde r_{t+1} \cdot \delta_{c_{t+1}}(w') - \delta_{c_t}(w'), \quad \frac{\delta \Psi(c_{t+1}, c_t;f)}{\delta f}(w') = 2f(c_{t+1})\cdot \delta_{c_{t+1}}(w').
\end{align*}
Therefore, the desired integrability condition holds since,
\begin{align}
\label{eq:app-ccapm-1}
\int_{\cW} \Big| \frac{\delta \Phi(c_{t+1}, c_t;f)}{\delta f}(w') \Big| \rd w' \leq R + 1, \quad \int_{\cW} \Big| \frac{\delta \Psi(c_{t+1}, c_t;f)}{\delta f}(w') \Big| \rd w' \leq 2 \cdot |f(c_{t+1})|.
\end{align}
\noindent \textbf{Verify item (iii) of Assumption \ref{asp:compact}.} For item (iii) of Assumption \ref{asp:compact}, we choose $w = \tilde c$, $C_{\Psi} = 2$. The desired property holds due to \eqref{eq:app-ccapm-1}. \vspace{5pt}

\noindent \textbf{Verify item (iv) of Assumption \ref{asp:compact}.} For item (iv) of Assumption \eqref{asp:compact}, we first compute the variations of $\cL(f,g)$ in explicit forms,
\begin{align*}
& \frac{\delta \cL(f,g)}{\delta f}(w') = \EE_{c_t | c_{t+1}} \Big[\tilde r_{t+1} \cdot g(c_t)  \biggiven \tilde c_t = w' \Big] - g(w') \rho_{c_{t}}(w') + 2\lambda \cdot f(w') \rho_{c_{t+1}}(w'), \quad \forall w' \in \cS, \\
& \frac{\delta \cL(f,g)}{\delta g}(z') = \EE_{c_{t+1} | c_t} \Big[\tilde r_{t+1} \cdot f(c_{t+1})  \biggiven c_t = z' \Big] - f(z') - g(z') \rho_{c_t}(z'), \quad \forall z' \in \cS,
\end{align*}
where $\rho_{c_t}, \rho_{c_{t+1}}$ denotes the density of the marginal distribution of $\cD$ with respect to the current time consumption growth $c_t$ and the next time consumption growth $c_{t+1}$ respectively. The variations of $\cL$ with respect to $f$ and $g$ are both continuous since the density of the conditional transition $c_{t+1} \biggiven c_t$ and $c_t  \biggiven c_{t+1}$ are both smooth, and the function $f,g$ are also continuous by construction. Therefore, item (iv) is satisfied. \vspace{5pt}

\noindent \textbf{Verify Assumption \ref{asp:strong-convex}.} For Assumption \ref{asp:strong-convex}, we choose $c_{\Psi} = 1$ and $w = c_{t+1}$. The desired condition holds by definition of our choice of regularizer $\Psi(c_{t+1}, c_t;f) = f(c_{t+1})^2$. \vspace{5pt}

\noindent We have checked that the technical Assumption \ref{asp:compact} and Assumption \ref{asp:strong-convex} hold for the case of asset pricing with CCAPM model. Theorem \ref{th:main} can be applied in this case due to the establishment of Assumption \ref{asp:compact}. This implies the global convergence of the estimated function to the minimizer of the primal objective. The convergence is quantified in a weighted $L^{2}$ distance. Since Assumption \ref{asp:strong-convex} holds, we can apply Theorem \ref{th:global} and characterize the convergence in terms of primal objective value. We summarize the conclusions in the following corollary.

\begin{corollary}[Global Convergence of Mean-field Neural Nets in Asset Pricing]
\label{cor:ccapm} Let $f^{*}$ be the minimizer of primal objective $J(f)$ defined in \eqref{eq:obj cmr1} with $\Phi(c_{t+1}, c_t; f) = \tilde r_{t+1} \cdot f(c_{t+1}) - f(c_t)$, $\cR(f) = \EE_{\cD}[f(c_{t+1})^{2}]$. Let $(\mu_t, \nu_t)$ be the solution to the Wasserstein gradient flow \eqref{eq:pde} at time $t$ with $\eta = \alpha^{-2}$ and initial condition $\mu_0 = \nu_0 = \mathcal{N}(0, I_D)$. Under Assumption ~\ref{asp:reg}, \ref{asp:reliability}, \ref{asp:compact}, and \ref{asp:strong-convex}, it holds that
\begin{align*}
& \inf_{t \in [0,T]} \EE_{\cD} \Big[(f(c_{t+1}; \mu_t) - f^{*}(c_{t+1}))^{2} \Big] \leq \cO(T^{-1} + \alpha^{-1}), \\
& \inf_{t \in [0,T]} J(f(\cdot; \mu_t)) - J(f^{*}(\cdot)) \leq \cO(T^{-1/2} + \alpha^{-1/2}).
\end{align*}
\end{corollary}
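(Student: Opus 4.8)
The plan is to obtain Corollary~\ref{cor:ccapm} as a direct specialization of Theorem~\ref{th:main} and Theorem~\ref{th:global} to the CCAPM setting, relying on the verification of the technical hypotheses carried out in the paragraphs immediately preceding the statement. First I would fix the identification $\cX = \cZ = \cW = \cC$ with exogenous variable $Z = c_t$ and endogenous variable $X = c_{t+1}$, together with the functional and regularizer
\[
\Phi(c_{t+1}, c_t; f) = \tilde r_{t+1}\cdot f(c_{t+1}) - f(c_t), \qquad \Psi(c_{t+1}, c_t; f) = f(c_{t+1})^2 ,
\]
and check that $\Phi$ is affine in $f$ in the sense of \eqref{eq:fme-form} and that $\Psi$ obeys \eqref{eq:reg-property1}--\eqref{eq:reg-property2}, so that the abstract minimax reformulation \eqref{eq:minimax obj} and the whole machinery of Section~\ref{sec:conv} apply; both facts are immediate from the explicit forms. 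Assumptions~\ref{asp:reg} and \ref{asp:reliability} are postulated directly, and Assumptions~\ref{asp:compact} and \ref{asp:strong-convex} hold here: item~(i) follows from compactness of $\cC$, items~(ii)--(iii) from the boundedness $\|\tilde r_{t+1}\| \le R$ (which produces the bound $R+1$ in \eqref{eq:app-ccapm-1}), item~(iv) from smoothness of the two conditional transition densities $c_{t+1}\mid c_t$ and $c_t\mid c_{t+1}$ together with continuity of $f,g$, and strong convexity with $c_\Psi = 1$ and distinguished coordinate $w = c_{t+1}$.

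Given these, the first bound is the strengthened conclusion \eqref{eq:th-main-strong-convex} of Theorem~\ref{th:main}, which under $\eta = \alpha^{-2}$ and the Gaussian initialization yields
\[
\inf_{t\in[0,T]}\EE_{\cD}\bigl[\lambda c_\Psi\cdot \bigl(f(\cdot;\mu_t) - f^*(\cdot)\bigr)^2 + \bigl(g(Z;\nu_t) - g^*(Z)\bigr)^2\bigr] \le \cO(T^{-1} + \alpha^{-1}).
\]
Since the adversary term is nonnegative it may be dropped, and since $c_\Psi = 1$ while $\lambda$ is a fixed constant it is absorbed into $\cO(\cdot)$; because $w = c_{t+1}$, the surviving term is exactly $\EE_{\cD}\bigl[(f(c_{t+1};\mu_t) - f^*(c_{t+1}))^2\bigr]$, which is the first display of the corollary. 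The second bound is Theorem~\ref{th:global} applied verbatim with the above $J$, $\Phi$, $\Psi$: the four standing assumptions deliver $\inf_{t\in[0,T]} J(f(\cdot;\mu_t)) - J(f^*) \le \cO(T^{-1/2} + \alpha^{-1/2})$, where $f^*$ is the minimizer of the penalized minimum-distance objective \eqref{eq:mse} and coincides with the $f$-component of the saddle point of \eqref{eq:minimax obj}. Combining with Proposition~\ref{prop:weak-formal-formal} would then transfer these guarantees from the mean-field limit to the finite-width discrete GDA iterates \eqref{eq:gdnn-fin}.

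There is no genuine mathematical obstacle, since the corollary is an instantiation; the only points that require care are (a) confirming that the CCAPM moment condition \eqref{eq:fme-ccapm} really fits the one-dimensional, $f$-affine template rather than merely resembling it, (b) tracking where the boundedness of the modified return $\tilde r_{t+1}$ is actually used — namely to make the integrability bounds in items~(ii)--(iii) of Assumption~\ref{asp:compact} uniform over the data tuple — and (c) the passage from the $\Psi$-weighted optimality gap furnished by Theorem~\ref{th:main} to a bona fide weighted $L^2$ gap, which is precisely the role of Assumption~\ref{asp:strong-convex}. Interpreted econometrically, the resulting bounds say that (regularized) neural GDA recovers the nonparametric structural demand function $f_0$ up to the $\cO(\alpha^{-1})$ over-parameterization bias and the penalization bias, at the stated sublinear rates.
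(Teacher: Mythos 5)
Your proposal is correct and mirrors the paper's proof exactly: the paper verifies Assumptions~\ref{asp:compact} and \ref{asp:strong-convex} in the paragraphs preceding the corollary (computing the variations of $\Phi$, $\Psi$, and $\cL$, and using boundedness of $\tilde r_{t+1}$) and then invokes Theorem~\ref{th:main} (in its strengthened form \eqref{eq:th-main-strong-convex}, dropping the nonnegative $g$-term and absorbing $\lambda c_\Psi$ into the $\cO(\cdot)$) and Theorem~\ref{th:global} verbatim. Your added remarks about where boundedness of $\tilde r_{t+1}$ enters and the role of strong convexity in converting the $\Psi$-weighted gap to an $L^2$ gap are accurate and match the paper's reasoning.
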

\begin{proof}
We apply Theorem \ref{th:main} and Theorem \ref{th:global} to the setting of asset pricing. As we have examined above, the Assumption \ref{asp:reg}, \ref{asp:reliability}, \ref{asp:compact}, and \ref{asp:strong-convex} are all satisfied. Thus, by Theorem \ref{th:main} and Theorem \ref{th:global}, the desired results hold directly.
\end{proof}
\noindent Corollary \ref{cor:ccapm} proves that in the setting of asset pricing, the $L^2$ distance between the mean-field neural network $f(\cdot; \mu_t)$ at time $t$ and the global minimizer $f^*$ decays to zero at a sub-linear rate, up to an error of order $\cO(\alpha^{-1})$. Moreover, the optimality gap $\inf_{t \in [0,T]} J(f(\cdot; \mu_t)) - J(f^{*}(\cdot))$ decays to zero at the rate of $\cO(T^{-1/2})$, up to an error $\cO(\alpha^{-1/2})$. Corollary \ref{cor:ccapm} allows us to solve the CCAPM model globally by estimating the nonparametric structural demand function with overparameterized two-layer neural networks. Since the return on investment is linked to the marginal utility of consumption through the CCAPM equation, we can price fairly the assets by considering consumption risk and utilizing the marginal utility information.

\subsection{Application 4: Adversarial Riesz Representer Estimation}
Let $\cD$ denote the joint distribution of the endogenous variable $X$ and the random vector $V$. In this scenario, the exogenous variable $Z$ coincides with the endogenous variable $X$, therefore the problem is essentially unconditional. The endogenous variable is defined in space $\cX$, the exogenous variable is defined on $\cZ = \cX$, and $\cW = \cX$. We attempt to estimate the Riesz representer $f_0$, which is defined on $\cW = \cX \rightarrow \RR$.  The functional $\Phi$ and regularizer $\Psi$ adopted in this case are,
\begin{align*}
\Phi(x,x;f) = f_0(x) - f(x), \quad \Psi(x,x;f) = f(x)^2.
\end{align*}
Here, the regularizer we adopt is a $L^{2}$ regularizer that penalizes the squared value of estimator of the Riez representer evaluated at the variable $x$. We examine Assumption \ref{asp:compact} and Assumption \ref{asp:strong-convex} in order to apply results from Section \ref{sec:conv}.\vspace{5pt}

\noindent \textbf{Verify item (i) of Assumption \ref{asp:compact}}. For item (i) of Assumption \ref{asp:compact}, we restrict our attention to estimating Riesz represented of functionals defined on a compact space. In practice, such an assumption is very general since we often treat data distribution on an unbounded space with exponential decay as a distribution defined on a compact space. \vspace{5pt}

\noindent \textbf{Verify item (ii) of Assumption \ref{asp:compact}}. For item (ii) of Assumption \ref{asp:compact}, we first compute the variation of the functional $\Phi$ and $\Psi$,
\begin{align*}
    \frac{\delta \Phi(x,x;f)}{\delta f}(w') = -\delta_{x}(w'), \quad \frac{\delta \Psi(x,x;f)}{\delta f}(w') = 2f(x)\delta_{x}(w').
\end{align*}
Therefore, the desired integrability conditions hold since
\begin{align}
\label{eq:app-riesz-1}
\int_{\cW} \Big| \frac{\delta \Phi(x,x;f)}{\delta f}(w') \Big| \rd w' \leq 1, \quad \int_{\cW} \Big| \frac{\delta \Psi(x,x;f)}{\delta f}(w') \Big| \rd w' \leq 2 \cdot |f(x)|.
\end{align}

\vspace{5pt}
\noindent \textbf{Verify item (iii) of Assumption \ref{asp:compact}}. For item (iii) of Assumption \ref{asp:compact}, we choose $w = x$, $C_{\Psi} = 2$. The desired condition holds due to \eqref{eq:app-riesz-1}. \vspace{5pt}

\noindent \textbf{Verify item (iv) of Assumption \ref{asp:compact}}. For item (iv) of Assumption \ref{asp:compact}, we first compute the variations of $\cL(f,g)$ in explicit forms,
\begin{align*}
& \frac{\delta \cL(f,g)}{\delta f}(w') = \EE_{Z|X}\Big[-g(Z)  \biggiven X = w' \Big] + 2\lambda \cdot f(w')\rho_{X}(w'), \quad \forall w' \in \cX,\\
& \frac{\delta \cL(f,g) }{\delta g}(z') = \EE_{X|Z} \Big[f_0(X) - f(X)  \biggiven Z = z'\Big] -  g(z') \rho_{Z}(z'), \quad \forall z' \in \cZ,
\end{align*}
where $\rho_{X}$, $\rho_{Z}$ denotes the density of the marginal distribution of $\cD$ with respect to the endogenous variable $X$ and the exogenous variable $Z$ respectively. Due to the item (i) of Assumption \ref{asp:compact}, the variations of $\cL$ with respect to $f$ and $g$ are both continuous since the density of the conditional transition $Z  \biggiven X$ and $X  \biggiven Z$ are both smooth and the functions $f,g$ are also continuous by construction. Therefore, item (iv) is satisfied. \vspace{5pt}

\noindent \textbf{Assumption \ref{asp:strong-convex}}. For Assumption \ref{asp:strong-convex}, we choose $c_{\Psi} = 1$ and $w = s'$. The desired condition holds by definition of our choice of regularizer $\Psi(x,x;f) = f(x)^2$. 

\vspace{5pt}
\noindent We have checked that the technical Assumption \ref{asp:compact} and Assumption \ref{asp:strong-convex} hold for the case of adversarial Riesz representer estimation. Theorem \ref{th:main} can be applied in this case due to the establishment of Assumption \ref{asp:compact}. This implies the global convergence of the estimated Riesz representer to the minimizer of the primal objective. The convergence is quantified in a weighted $L^{2}$ distance. The choice of quadratic regularizer implies the establishment of Assumption \ref{asp:strong-convex}, which further enables us to apply Theorem \ref{th:global} and characterize the convergence in terms of primal objective value. We summarize the conclusions in the following corollary.

\begin{corollary}[Global Convergence of Mean-field Neural Nets in Adversarial Riesz Representer Estimation]
\label{cor:riesz}  Let $f^{*}$ be the minimizer of primal objective $J(f)$ defined in \eqref{eq:obj cmr1} with $\Phi(x,x;f) = f_0(x) - f(x)$, $\cR(f) = \EE_{\cD}[f(x)^{2}]$. Let $(\mu_t, \nu_t)$ be the solution to the Wasserstein gradient flow \eqref{eq:pde} at time $t$ with $\eta = \alpha^{-2}$ and initial condition $\mu_0 = \nu_0 = \mathcal{N}(0, I_D)$. Under Assumption ~\ref{asp:reg}, \ref{asp:reliability}, \ref{asp:compact}, and \ref{asp:strong-convex}, it holds that
\begin{align*}
& \inf_{t \in [0,T]} \EE_{\cD} \Bigl[(f(X; \mu_t) - f^{*}(X))^{2}\Bigr] \leq \cO(T^{-1} + \alpha^{-1}), \\
& \inf_{t \in [0,T]} J(f(\cdot; \mu_t)) - J(f^{*}(\cdot)) \leq \cO(T^{-1/2} + \alpha^{-1/2}).
\end{align*}
\end{corollary}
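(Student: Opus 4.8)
The plan is to obtain Corollary~\ref{cor:riesz} as a direct specialization of Theorem~\ref{th:main} and Theorem~\ref{th:global} to the functional $\Phi(x,x;f) = f_0(x) - f(x)$ and regularizer $\Psi(x,x;f) = f(x)^2$. Before invoking the general theorems one must first confirm that this instance genuinely fits the abstract framework of \S\ref{sec:conditional moment model}. I would check that $\Phi$ is affine in $f$ in the sense of \eqref{eq:fme-form}: here $\tilde\Phi(x,x;f) = \Phi(x,x;f) - \Phi(x,x;0) = -f(x)$, which is linear in $f$. Next, the regularizer $\Psi(x,x;f) = f(x)^2$ satisfies \eqref{eq:reg-property1}--\eqref{eq:reg-property2}: it vanishes at $f = 0$, is non-negative, and its variation $\delta\Psi/\delta f(w') = 2f(x)\,\delta_x(w')$ is linear in $f$. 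Thus the minimax reformulation \eqref{eq:minimax-riesz} is exactly the objective \eqref{eq:minimax obj} with these choices, and its saddle point $(f^*, g^*)$---equivalently, the minimizer $f^*$ of the primal objective $J$ in \eqref{eq:obj cmr1} via the Fenchel duality identity used there---is the target of both theorems.

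The second step is to confirm the hypotheses of the two theorems. Assumption~\ref{asp:reg} constrains only the neuron architecture and is unaffected; Assumption~\ref{asp:reliability} is imposed as a standing assumption that $f^*, g^* \in \cF$ as in \eqref{eq:func-class}, which is reasonable by universal approximation; and items (i)--(iv) of Assumption~\ref{asp:compact} together with Assumption~\ref{asp:strong-convex} have already been verified above for this instance, with $C_2$ controlled by the integrals in \eqref{eq:app-riesz-1}, $C_\Psi = 2$, $w = x$, and $c_\Psi = 1$. A point worth flagging, though it plays no role in the optimization analysis, is that here $\cZ = \cX$ so the conditioning in \eqref{eq:fme-riesz} is trivial; the fact that $f_0(X)$ is never observed directly is a statistical rather than an optimization matter and is absorbed into the term $m(V;g)$ appearing in \eqref{eq:minimax-riesz}.

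The third step extracts the two displayed bounds. Applying Theorem~\ref{th:main} to \eqref{eq:th47} and substituting $\Psi(X,X; f(\cdot;\mu_t) - f^*) = (f(X;\mu_t) - f^*(X))^2$, then dropping the non-negative adversarial term and absorbing the fixed constant $\lambda$ into $\cO(\cdot)$, yields $\inf_{t\in[0,T]} \EE_\cD[(f(X;\mu_t) - f^*(X))^2] \le \cO(T^{-1} + \alpha^{-1})$, which is the first assertion (equivalently one may invoke the strengthened bound \eqref{eq:th-main-strong-convex} with $c_\Psi = 1$). The second assertion is the direct specialization of Theorem~\ref{th:global}: since Assumption~\ref{asp:strong-convex} holds with $c_\Psi = 1$, the theorem gives $\inf_{t\in[0,T]} J(f(\cdot;\mu_t)) - J(f^*) \le \cO(T^{-1/2} + \alpha^{-1/2})$, where $f^*$ is the minimizer of $J$ defined in \eqref{eq:obj cmr1} with the present $\Phi$ and $\cR(f) = \EE_\cD[f(X)^2]$.

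There is no substantive obstacle: once the instance is matched to the abstract framework and the standing assumptions are checked---both of which reduce to the verification already carried out above---the corollary follows by invoking Theorems~\ref{th:main} and~\ref{th:global} verbatim. The only genuine care required is bookkeeping: ensuring the variable $w$ appearing in Assumptions~\ref{asp:compact}(iii) and~\ref{asp:strong-convex} is consistently taken to be $x$ (the single variable that is simultaneously endogenous and exogenous in this example), and keeping $\lambda > 0$ fixed so that it can be hidden inside the $\cO$-notation.
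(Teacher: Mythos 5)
Your proposal is correct and matches the paper's own treatment: the paper proves Corollary~\ref{cor:riesz} (as it does Corollaries~\ref{cor:policy} and~\ref{cor:ccapm}) by first verifying items (i)--(iv) of Assumption~\ref{asp:compact} and Assumption~\ref{asp:strong-convex} in the text preceding the corollary, then invoking Theorem~\ref{th:main} and Theorem~\ref{th:global} directly, which is exactly what you do. Your bookkeeping --- computing $\tilde\Phi(x,x;f) = -f(x)$, identifying $C_\Psi = 2$, $c_\Psi = 1$, $w=x$, substituting $\Psi(X,X;f(\cdot;\mu_t)-f^*) = (f(X;\mu_t)-f^*(X))^2$ into~\eqref{eq:th47}, and dropping the nonnegative $g$-term --- is the same specialization the paper carries out, and your remark that the unobservability of $f_0$ is a statistical rather than optimization issue, handled by the $m(V;g)$ term in~\eqref{eq:minimax-riesz}, mirrors the paper's own discussion in \S\ref{sec:examples}.
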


\noindent Corollary \ref{cor:riesz} proves that in the setting of adversarial Riesz representer estimation, the $L^2$ distance between the mean-field neural network $f(\cdot; \mu_t)$ at time $t$ and the global minimizer $f^*$ decays to zero at a sub-linear rate, up to an error of order $\cO(\alpha^{-1})$. Moreover, the optimality gap $\inf_{t \in [0,T]} J(f(\cdot; \mu_t)) - J(f^{*}(\cdot))$ decays to zero at the rate of $\cO(T^{-1/2})$, up to an error $\cO(\alpha^{-1/2})$. Corollary \ref{cor:riesz} allows us to estimate the Riesz representer of a given functional using overparameterized two-layer neural networks.

\section{Conclusion} 
In this paper, we focus on the minimax optimization problem derived from solving functional conditional moment equations using overparameterized two-layer neural networks. 
For such a problem, we first prove that the stochastic gradient descent-ascent algorithm converges to a mean-field limit as the stepsize goes to zero and the network width goes to infinity. 
In this mean-field limit, the optimization dynamics is characterized by a Wasserstein gradient flow in the space of probability distributions. We further establish the global convergence of the Wasserstein gradient flow, and prove that the feature representation induced by the neural networks is allowed to move by a considerable distance from the initial value. We further apply our general results to policy evaluation with high dimensional state space, nonparametric instrumental variables regression with high dimensional endogenous and exogenous variables, and asset pricing with a nonparametric structural demand function, and general Riesz representer estimation.
Our analysis opens avenues for studying functional minimax optimization problems with more complicated objectives, such as nonlinear functional conditional moment equations. 
We leave the study of the convergence properties of the algorithm in such a general setting to future research. This setting includes nonparametric quantile instrumental variables regression as a leading and important application. 

\newpage

\bibliographystyle{ims}
\bibliography{main}

\newpage
\appendix

\section{Proof of Main Results}\label{sec:pf-main}
In this section, we provide proofs for the main theorems and technical lemmas in our work.
\subsection{Proof of Lemma \ref{lem:stat}}\label{sec:pf-lem-stat}
\vspace{4pt}

\noindent{\bf Proof of (i).}
The proof for Claim (i) will be two-stage. First, we will show that if function pair $(f^*, g^*)$ is a stationary point for $\cL(f,g)$ with respect to $(f,g)$, then it's a saddle point for the same objective as well. Then we will show that the distribution pair $(\mu^*, \nu^*)$ being a stationary point of $\cL(\mu, \nu)$ implies that the corresponding $(f^*, g^*)$ is a stationary point for $\cL(f,g)$, which concludes the claim. We will start with the first part. We define the following functional $\cL_1$ and $\cL_2$,
\begin{align*}
    \cL_{1}(f, g) = \EE_{\cD}\Big[g(Z) \cdot \Phi(X,Z;f)\Big], \quad \cL_{2}(f,g) = \EE_{\cD}\Big[-1/2 \cdot g(Z)^{2} + \lambda \Psi(X, Z;f)\Big].
\end{align*}
We see that the minimax objective in \eqref{eq:minimax obj} is indeed the sum of such two functionals,
\begin{align*}
    \cL(f, g) = \cL_{1}(f,g) + \cL_{2}(f,g).
\end{align*}
For any function pair $(f,g)$, we can verify that the following chain of equalities holds,
\begin{align}
    \label{eq:st1}
\max_{g'}~\cL(f, g') - \min_{f'}~\cL(f', g) &= \max_{g'}~\bigl( \cL(f, g') - \cL(f, g) \bigr) + \max_{f'}~\bigl(\cL(f, g) - \cL(f', g) \big).
\end{align}
We considered the function space $L^{2}(\cW)$ and $L^{2}(\cZ)$ equipped with inner product $\langle \cdot, \cdot \rangle_{L^2}$, which are also Hilbert spaces. Since $\cX \times \cZ$ are compact, continuous function $f$ and $g$ parameterized in the form of \eqref{eq:nn} are square-integrable, thus naturally belong to $L^{2}(\cW)$ and $L^{2}(\cZ)$. \vspace{5pt}

\noindent For a fixed $f$, $\cL_{1}(f,g)$ is a continuous linear functional in $g$ defined on $L^{2}(\cZ)$. Thus, there exists function $h_{f}$ in $L_{2}(\cZ)$ such that $\cL_{1}(f,g) = \inp[\big]{h_{f}}{g}_{L^2}$. Similarly, for a fixed $g$, $\cL_{1}(f,g)$ is a continuous linear functional in $f$, thus there exists function $h_{g}$ in $L_{2}(\cW)$ such that $\cL_{1}(f,g) = \inp[\big]{h_{g}}{f}_{L^2}$. In fact, $h_f$ and $h_g$ matches the variation of $\cL_1$ with respect to $g$ and $f$.
\begin{align*}
    h_f = \frac{\delta \cL_1(f,g)}{\delta g}, \quad h_f = \frac{\delta \cL_2(f,g)}{\delta f}.
\end{align*}
Since $\cL_{2}$ is a concave functional with respect to $g$, we apply Jensen's inequality and it holds that,
\begin{align}\label{eq:st11}
    \cL(f, g') - \cL(f, g) &= \cL_{1}\bigl(f, g' \bigr) - \cL_{1}\bigl(f, g \bigr) + \cL_{2}\bigl(f, g' \bigr) - \cL_{2}\bigl(f, g \bigr) \nonumber\\
    &\le  \inp[\Big]{\frac{\delta \cL_1(f,g)}{\delta g}}{g' - g}_{L^2} + \inp[\Big]{\frac{\delta \cL_2(f,g)}{g}}{g' - g}_{L^2} \nonumber \\
    & = \inp[\Big]{\frac{\delta \cL(f, g)}{\delta g}}{ g' - g}_{L^2}.
\end{align}
Follow a similar reasoning, using the fact that $\cL_{1}$ is a linear functional with respect to $f$ and $\cL_{2}$ is a convex functional with respect to $f$, it holds that
\begin{align}
    \label{eq:st12}
    \cL(f, g) - \cL(f', g) \le \inp[\Big]{\frac{\delta \cL(f, g)}{\delta f}}{ f - f'}_{L^2}.
\end{align}
Plugging \eqref{eq:st11} and \eqref{eq:st12} into \eqref{eq:st1}, we re-write the minimax expression in \eqref{eq:st1} using the variation of $\cL(f,g)$, the following inequality holds,
\begin{align}
\label{eq:st10}
\max_{g'} ~\cL(f, g') - \min_{f'} ~\cL(f', g) \le \max_{f', g'}~ \inp[\Big]{\frac{\delta \cL(f, g)}{\delta g}}{ g' - g}_{L^2} + \inp[\Big]{\frac{\delta \cL(f, g)}{\delta f}}{ f - f'}_{L^2}.
\end{align}
Thus, if $(f^*, g^*)$ is the stationary point, i.e.,
\begin{align}
\label{eq:st101}
    \frac{\delta \cL(f^*, g^*)}{\delta f} = \frac{\delta \cL(f^*, g^*)}{\delta g} = 0, \quad \text{a.s.},
\end{align}
then \eqref{eq:st10} suggests that for such stationary point $(f^*,g^*)$, for any function pair $(f',g')$, the following inequality holds,
\begin{align}
\label{eq:st13}
\max_{g'}~\cL(f^*, g') - \min_{f'}~\cL(f', g^*) \leq 0.
\end{align}
Equation \eqref{eq:st13} proves that $(f^*, g^*)$ is a saddle point for the minimx objective $\cL(f,g)$. Therefore, the stationarity of $(f^*, g^*)$ implies that it's a saddle point for objective $\cL(f,g)$.
\vspace{5pt}

\noindent Now, we proceed to show the second stage of the proof. We now show that if $ (\mu^*, \nu^*) $ is the stationary point of $\cL$, i.e., $v^f(\cdot; \mu^*, \nu^*) = v^g(\cdot; \mu^*, \nu^*) = 0$, the corresponding function pair $(f(\cdot; \mu^*), g(\cdot; \nu^*))$ is the stationary point of $\cL(f,g)$ with respect to $(f,g)$. We recall that the correspondence between $(\mu^*, \nu^*)$ and $(f(\cdot; \mu^*), g(\cdot; \nu^*))$ is through \eqref{eq:nn}. Let $(\mu^*, \nu^*)$ be a stationary point of \eqref{eq:minimax obj}, that is
\begin{align}
    \label{eq:st2}
    \nabla_{\theta} \frac{\delta \cL(\mu^*, \nu^*)}{\delta \mu }(\theta) = \nabla_{\ow} \frac{\delta \cL(\mu^*, \nu^*)}{\delta \nu }(\ow) = 0, \quad \forall \theta, \ow \in \RR^{D}
\end{align}
We can also compute the variation of $\cL(\mu, \nu)$ explicitly.
\begin{align*}
\frac{\delta \cL(\mu^*, \nu^*)}{\delta \mu}(\theta)
&= \EE_{\cD} \Bigl[ \alpha \Big\langle g(Z; \nu^*) \cdot  \frac{\delta \Phi(X,Z; f(\cdot; \mu^*))}{\delta f} + \lambda \cdot \frac{\delta \Psi(X,Z; f(\cdot; \mu^*))}{\delta f}, \phi(\cdot; \theta) \Big\rangle_{L^{2}} \Bigr], \nonumber \\
\frac{\delta \cL(\mu^*, \nu^*)}{\delta \nu}(\ow)  & = \EE_{\cD} \Bigl[ \alpha \big(\Phi(X,Z;f(\cdot, \mu^*)) - g(Z; \nu^*)\big) \cdot \psi(Z; \ow) \Bigr].
\end{align*}
By the oddness of $b$ in Assumption~\ref{asp:reg}, we have that $\phi(\cdot; \boldsymbol{0}) = 0$, This implies that the variation of $\cL(\mu^*, \nu^*)$ with respect to $\mu$ and $\nu$ are $0$ when $\theta = \ow = \boldsymbol{0}$, i.e., 
\begin{align*}
    \frac{\delta \cL(\mu^*, \nu^*)}{\delta \mu }(\boldsymbol{0}) = \frac{\delta \cL(\mu^*, \nu^*)}{\delta \nu } (\boldsymbol{0}) = 0.
\end{align*}
Combined with \eqref{eq:st2}, we deduced that
\begin{align*}
    \frac{\delta \cL(\mu^*, \nu^*)}{\delta \mu }(\theta ) = \frac{\delta \cL(\mu^*, \nu^*)}{\delta \nu } (w) = 0 \quad \forall \theta, \ow \in \RR^{D}.
\end{align*}
Note that we can expand the variation of $\mathcal{L}$ with respect to $\mu$,
\begin{align}\label{eq:st4}
    \alpha \inp[\Big]{\frac{\delta \cL(f(\cdot; \mu^*), g(\cdot; \nu^*))}{\delta f}}{\phi(\cdot; \theta)}_{L^2} = \frac{\delta \cL(\mu^*, \nu^*)}{\delta \mu}(\theta) = 0.
\end{align}
By the universal function approximation theorem (Lemma~\ref{lem:uat}), since $\frac{\cL(f(\cdot;\mu^*),g(\cdot;\nu^*))}{\delta f}$ is in $\sC(\cW)$ as is assumed in item (iv) of Assumption \ref{asp:compact}, there exists $\{\phi_n\}_{n=1}^\infty \in \cG(\phi) $ such that $\phi_n \rightarrow \frac{\cL(f(\cdot;\mu^*),g(\cdot;\nu^*))}{\delta f}$ uniformly. Here, $\cG(\phi)$ denotes the space of functions that are linearly spanned by $\phi(\cdot, \theta)$ By \eqref{eq:st4}, it holds that
\begin{align}
\label{eq:st5}
\inp[\Big]{\frac{\cL(f(\cdot;\mu^*),g(\cdot;\nu^*))}{\delta f}(\cdot)}{\phi_n(\cdot)}_{L^2} = 0.
\end{align}
Following a similar strategy, we can show that there exists $\{\psi_n\}_{n=1}^\infty \in \cG(\psi)$ such that $\psi_n \rightarrow \frac{\cL(f(\cdot;\mu^*),g(\cdot;\nu^*))}{\delta g}$, where for each $\psi_n$, it holds that
\begin{align}
\label{eq:st52}
\inp[\Big]{\frac{\cL(f(\cdot;\mu^*),g(\cdot;\nu^*))}{\delta g}(\cdot)}{\psi_n(\cdot)}_{L^2} = 0.
\end{align} 
We take the limit of \eqref{eq:st5} and \eqref{eq:st52} by passing $n \rightarrow \infty$ and conclude,
\begin{align}
\label{eq:st51}
\frac{\delta \cL(f(\cdot; \mu^*), g(\cdot; \nu^*))}{\delta f} = 0, \quad \frac{\cL(f(\cdot;\mu^*),g(\cdot;\nu^*))}{\delta g}(\cdot) = 0. \quad \text{a.s.}
\end{align}
Equation \eqref{eq:st51} proves that if $(\mu^*, \nu^*)$ is a stationary point of the Wasserstein gradient flow, then the associated function pair $(f(\cdot;\mu^*, g(\cdot; \nu^*)))$ is a stationary point of the minimax objective $\cL(f,g)$, which matches the conditions we conclude in \eqref{eq:st101}. Therefore, we prove that $(f(\cdot; \mu^*), g(\cdot; \nu^*))$ is a saddle point of the minimax objective $\cL(f,g)$. We complete the proof of item (i).

\vspace{4pt}
\noindent{\bf Proof of (ii).} We now show that there exists good solution pair $(\mu^*, \nu^*)$ that is both optimal as well as close to initialization $(\mu_0, \nu_0)$ in Wasserstein distance. 
\vspace{5pt}
By Assumption~\ref{asp:reliability}, there exists distribution $ \mu^\dagger, \nu^\dagger \in \sP_2(\RR^D)$ such that the optimal solution to the optimization problem \eqref{eq:minimax obj}$( f^*, g^*)$ satisfies the following,
\begin{align*}
f^*(w) = \int \phi(w; \theta) \rd \mu^\dagger(\theta), g^*(z) = \int \psi(z; \ow) \rd \nu^\dagger(\ow), \quad \forall w \in \cW, z \in \cZ
\end{align*}
Recall that $\alpha > 0$ is the scaling parameter in neural network parameterization. We can construct $(\mu^*, \nu^*)$ using a convex combination of $(\mu^\dagger, \nu^\dagger)$ and the initialization $(\mu_0, \nu_0)$,
\begin{align}
    \label{eq:rs1}
    \mu^*(\theta) = \alpha^{-1} \mu^\dagger(\theta) + (1- \alpha^{-1}) \mu_0(\theta), \quad \nu^*(w) = \alpha^{-1} \nu^\dagger(\ow) + (1-\alpha^{-1}) \nu_0(\ow).
\end{align}
We claim that $(\mu^*, \nu^*)$ constructed in \eqref{eq:rs1} satisfies all the desired requirements.
Since $ \mu_0, \nu_0 $ are standard Gaussian distribution, the integration of $\phi(\cdot; \theta)$ with respect to $\mu_0$ and $\psi(\cdot; \ow)$ with respect to $\nu_0$ are identically $0$ due to oddness of neuron functions,
\begin{align*}
    \int_{\cW} \phi(w;\theta) \rd \mu_0(\theta) = 0, \quad \int_{\cZ} \psi(z; \ow) \rd \nu_0(\ow) = 0. \quad \forall w \in \cW, z \in \cZ
\end{align*}
Thus, the expressions for $(f^*, g^*)$ are simplified to
\begin{align*}
f^*(w) = \alpha \int \phi(w; \theta) \rd \mu^*(\theta), \quad g^*(z) = \alpha \int \psi(z; \ow) \rd \nu^*(\ow).
\end{align*}
By Talagrand's inequality (Lemma \ref{lem:talagrand}), the following chain of inequalities holds,
\begin{align}
    \label{eq:rs3}
    \cW_2(\mu_0, \mu^*)^2 &\le 2 D_\kl(\mu^* \,\|\, \mu_0) \le D_{\chi^2}(\mu^*\,\|\,\mu_0) \nonumber\\
    & = \int  \biggl( \frac{\mu^*(\theta)}{\mu_0(\theta)} - 1 \biggr)^2 \,\rd \mu_0(\theta) = \int \biggl( \frac{(1-\alpha^{-1}) \cdot \mu_0(\theta) + \alpha^{-1}\cdot  \mu^\dagger(\theta)}{\mu_0(\theta)} - 1\biggr)^2 \,\rd\mu_0(\theta) \nonumber\\
    & = \alpha^{-2} D_{\chi^2}(\mu^\dagger \,\|\, \mu_0).
\end{align}
A similar bound on $\cW_{2}(\nu_0, \nu^*)$ also applies,
\begin{align}
    \label{eq:rs4}
    \cW_2(\nu_0, \nu^*)^2 \le \alpha^{-2} D_{\chi^2}(\nu^\dagger \,\|\, \nu_0).
\end{align}
Let $\bar D = \max\{D_{\chi^2}(\mu^\dagger \,\|\, \mu_0)^{1/2}, D_{\chi^2}(\nu^\dagger \,\|\, \nu_0)^{1/2}\}$, we conclude the proof of item (ii).

\subsection{Proof of Theorem \ref{th:main}} \label{sec:pf-th-main}

By Lemma \ref{lem:stat}, there exists distribution $(\mu^*, \nu^*)$ that is a stationary point of Wasserstein gradient flow \eqref{eq:pde} and simultaneously satisfying the distance bound in item (ii) of Lemma~\ref{lem:stat}. For such $(\mu^*, \nu^*)$, we denote $\rho^*(\theta, \ow) = \mu^*(\theta) \nu^*(\ow)$ as their product measure. Moreover, for any distribution pair $(\mu, \nu)$, we use $\rho(\theta, \ow) = \mu(\theta) \nu(\ow)$ as their product measure for simplicity. To rewrite the Wasserstein gradient flow for $(\mu, \nu)$ into the flow for $\rho$, we define vector the stacked vector field $v$ as,
\begin{align}
    \label{eq:dir2}
    v(\theta, \ow; \mu, \nu) = \bigl( v^f(\theta; \mu, \nu), v^g(\ow; \mu, \nu) \bigr).
\end{align}
  Following from Lemma \ref{lem:w2prod}, \eqref{eq:rs3}, and \eqref{eq:rs4}, it holds that $\cW_2(\rho^*, \rho_0) \le \alpha^{-1} \bar D$, where $\bar D$ is defined in Lemma~\ref{lem:stat}. Note that
\begin{align*}
	& f(w; \mu) = \alpha \int \phi(w; \theta) \mu(\theta) \rd \theta = \alpha \int \phi(w; \theta) \rho(\theta, \ow) \rd (\theta,\ow), \quad \forall w \in \cW, \nonumber\\
	& g(z; \nu) = \alpha \int \psi(z; \ow) \nu(\ow) \rd \ow= \alpha \int \psi(z; \ow) \rho(\theta, \ow) \rd (\theta, \ow), \quad \forall z \in \cZ.
\end{align*}
Thus, we overload the notation to write $ f(\cdot; \rho) = f(\cdot; \mu) $ and $ g(\cdot; \rho) = g(\cdot; \nu) $ for $ \rho \in \sP_{2}(\RR^{D} \times \RR^{D})$. By writing $\rho_t = (\mu_t, \nu_t)$, the update in \eqref{eq:pde} takes the following form
\begin{align*}
	\partial_t \rho_t(\theta, \ow) = -\dive\bigl(\rho_t(\theta, \ow) v(\theta, \ow; \rho_t) \bigr), \quad \rho_{0} = (\mu_0, \nu_0).
\end{align*}
Before we prove Theorem \ref{th:main}, we first show the following important technical lemma.
\begin{lemma}
\label{lem:main}
We assume $\cW_{2}(\rho_{t}, \rho^{*}) \leq 2\cW_{2}(\rho_{0}, \rho^{*})$. Under Assumptions~\ref{asp:reg}, \ref{asp:compact}, \ref{asp:reliability}, it holds that
\begin{align}
\label{eq:main lemma}
\frac{1}{2}\frac{\rd \cW_2(\rho_t, \rho^*)^2}{\rd t}  & \; \le - \eta \cdot \EE_{\cD} \Bigl[ \lambda \Psi\big(X,Z; f(\cdot; \mu_t) - f^*(\cdot)\big) + \bigl(g(Z; \nu_t) - g^*(Z)\bigr)^2  \Bigr]  + C_{*} \cdot \eta\alpha^{-1}.
\end{align}
where $C_{*} >0 $ is a constant depending on $B_0, B_1, B_2$, $\lambda$, and $\bar D$
\end{lemma}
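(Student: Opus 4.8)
The plan is to differentiate $t\mapsto\cW_2(\rho_t,\rho^*)^2$ along the flow~\eqref{eq:pde}, show the derivative is bounded above by the negative of the target potential plus a remainder, and then argue the remainder is $\cO(\alpha^{-1})$. First I would use that $\rho_t=\mu_t\otimes\nu_t$ and $\rho^*=\mu^*\otimes\nu^*$ are product measures, so by Lemma~\ref{lem:w2prod} the optimal transport map $T_t$ from $\rho_t$ to $\rho^*$ factorizes as $T_t=(T_t^\mu,T_t^\nu)$, with $T_t^\mu,T_t^\nu$ the optimal maps between the marginals, and $\cW_2(\rho_t,\rho^*)^2=\cW_2(\mu_t,\mu^*)^2+\cW_2(\nu_t,\nu^*)^2$. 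The classical first-variation formula for the Wasserstein distance along a curve governed by~\eqref{eq:pde} then gives, for a.e.\ $t$,
\begin{align*}
\frac{1}{2}\frac{\rd}{\rd t}\cW_2(\rho_t,\rho^*)^2
= \eta\int\bigl\langle\theta-T_t^\mu(\theta),\,v^f(\theta;\mu_t,\nu_t)\bigr\rangle\,\mu_t(\rd\theta)
+ \eta\int\bigl\langle\ow-T_t^\nu(\ow),\,v^g(\ow;\mu_t,\nu_t)\bigr\rangle\,\nu_t(\rd\ow).
\end{align*}

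Next, writing $v^f=-\nabla_\theta\frac{\delta\cL(\mu_t,\nu_t)}{\delta\mu}$ and $v^g=\nabla_\ow\frac{\delta\cL(\mu_t,\nu_t)}{\delta\nu}$, I would Taylor-expand $\theta\mapsto\frac{\delta\cL(\mu_t,\nu_t)}{\delta\mu}(\theta)$ to second order between $\theta$ and $T_t^\mu(\theta)$ (and likewise for the $\nu$-variation), integrate against $\mu_t$ and $\nu_t$ using $(T_t^\mu)_\sharp\mu_t=\mu^*$, $(T_t^\nu)_\sharp\nu_t=\nu^*$, and bound the second-order remainders by $\frac{L_1}{2}\cW_2(\mu_t,\mu^*)^2$ and $\frac{L_2}{2}\cW_2(\nu_t,\nu^*)^2$, where $L_1,L_2$ bound the Hessians of the two first variations. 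This turns the right-hand side into $\eta\bigl[\int\frac{\delta\cL(\mu_t,\nu_t)}{\delta\mu}(\mu^*-\mu_t)-\int\frac{\delta\cL(\mu_t,\nu_t)}{\delta\nu}(\nu^*-\nu_t)\bigr]$ plus these remainders. I would then convert the two first-variation integrals into exact function-space identities using the chain rule $\frac{\delta\cL(\mu,\nu)}{\delta\mu}(\theta)=\alpha\bigl\langle\frac{\delta\cL(f(\cdot;\mu),g(\cdot;\nu))}{\delta f},\phi(\cdot;\theta)\bigr\rangle$ (and its $\nu$-analogue, the $L^2/\cD$ inner-product discrepancy being harmless), together with the crucial structural facts that $\Psi(x,z;\cdot)$ is a quadratic functional (its first variation is linear in $f$ by~\eqref{eq:reg-property2}, hence its Bregman divergence equals $\EE_\cD[\Psi(X,Z;f_1-f_2)]$), that $\Phi$ is affine in $f$, and that $-\frac{1}{2}g^2$ is purely quadratic in $g$. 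Setting $f_t=f(\cdot;\mu_t)$, $g_t=g(\cdot;\nu_t)$, this yields
\begin{align*}
\int\frac{\delta\cL(\mu_t,\nu_t)}{\delta\mu}(\mu^*-\mu_t)
&=\cL(\mu^*,\nu_t)-\cL(\mu_t,\nu_t)-\lambda\,\EE_\cD\bigl[\Psi(X,Z;f^*-f_t)\bigr],\\
\int\frac{\delta\cL(\mu_t,\nu_t)}{\delta\nu}(\nu^*-\nu_t)
&=\cL(\mu_t,\nu^*)-\cL(\mu_t,\nu_t)+\frac{1}{2}\,\EE_\cD\bigl[(g^*(Z)-g_t(Z))^2\bigr],
\end{align*}
and after subtraction the $\cL(\mu_t,\nu_t)$ terms cancel.

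Then I would invoke the saddle-point property of $(f^*,g^*)$ (Lemma~\ref{lem:stat}(i)): since $\frac{\delta\cL(f^*,g^*)}{\delta f}=0$ a.s., the same quadratic-Bregman computation gives $\cL(\mu_t,\nu^*)-\cL(\mu^*,\nu^*)=\lambda\,\EE_\cD[\Psi(X,Z;f_t-f^*)]$, and, since $g\mapsto\cL(f^*,g)$ is $1$-strongly concave in $L^2(\cD)$ with maximizer $g^*$, $\cL(\mu^*,\nu^*)-\cL(\mu^*,\nu_t)\ge\frac{1}{2}\EE_\cD[(g^*-g_t)^2]$; adding gives $\cL(\mu^*,\nu_t)-\cL(\mu_t,\nu^*)\le-\lambda\EE_\cD[\Psi(f_t-f^*)]-\frac{1}{2}\EE_\cD[(g^*-g_t)^2]$. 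Combining with the two identities above, and using $\Psi(f^*-f_t)=\Psi(f_t-f^*)$ since $\Psi(x,z;\cdot)$ is an even quadratic form, I obtain
\begin{align*}
\frac{1}{2}\frac{\rd}{\rd t}\cW_2(\rho_t,\rho^*)^2
\le -\eta\,\EE_\cD\bigl[\lambda\Psi(X,Z;f_t-f^*)+(g_t-g^*)^2\bigr]
+\eta\Bigl[\frac{L_1}{2}\cW_2(\mu_t,\mu^*)^2+\frac{L_2}{2}\cW_2(\nu_t,\nu^*)^2\Bigr],
\end{align*}
where I have kept only one of the two resulting copies of $\lambda\EE_\cD[\Psi(f_t-f^*)]\ge0$.

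It remains to show the remainder is $\cO(\alpha^{-1})$, which I expect to be the main obstacle. Using the hypothesis $\cW_2(\rho_t,\rho^*)\le2\cW_2(\rho_0,\rho^*)$ and the bound $\cW_2(\rho_0,\rho^*)\le\alpha^{-1}\bar D$ established before the lemma, I would first bound the network outputs: $\|f_t-f^*\|_\infty\le\alpha B_1\cW_1(\mu_t,\mu^*)\le2B_1\bar D$ (and similarly for $g_t$), which together with $\|f^*\|_\infty,\|g^*\|_\infty\le B_0$ (from the construction of $(\mu^*,\nu^*)$ in Lemma~\ref{lem:stat}(ii) and the oddness of $b$) makes $f_t$, $g_t$, and hence $\Phi(X,Z;f_t)$ (via Assumption~\ref{asp:compact}(ii) and affineness of $\Phi$) bounded uniformly in $t$ and $\alpha$. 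With the outputs bounded, Assumption~\ref{asp:reg} (bound $B_2$ on the neuron Hessians) and Assumption~\ref{asp:compact}(ii) ($L^1$-bounds on the variations of $\Phi$ and $\Psi$) give $L_1,L_2\le\alpha\bar L$ for an $\alpha$-independent $\bar L$ depending only on $B_0,B_1,B_2,\lambda$, the surviving factor $\alpha$ being the one that appears explicitly in~\eqref{eq:dir}. Hence $\frac{L_1}{2}\cW_2(\mu_t,\mu^*)^2+\frac{L_2}{2}\cW_2(\nu_t,\nu^*)^2\le\frac{\alpha\bar L}{2}\cW_2(\rho_t,\rho^*)^2\le2\bar L\bar D^{2}\,\alpha^{-1}$, which is of the claimed form $C_*\alpha^{-1}$ and closes the proof. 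The delicate point is precisely this last estimate: the $\alpha^{-1}$ (rather than $\alpha^{-2}$ or $\cO(1)$) scaling hinges on using the a-priori radius bound twice --- once to keep the outputs, and thus the Hessians $L_1,L_2$, at order $\alpha$, and once to make the transport cost $\cW_2(\rho_t,\rho^*)^2$ of order $\alpha^{-2}$ --- combined with the Bregman bookkeeping that isolates exactly the $\Psi$- and $g^2$-curvature of $\cL$.
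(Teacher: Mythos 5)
Your proof is correct but takes a genuinely different route from the paper's. Both start from the first-variation formula for $\cW_2(\rho_t,\rho^*)^2$ along the flow, but they diverge immediately: the paper writes $\langle v(\cdot;\rho_t),u_0\rangle_{\rho_t} = \langle v(\cdot;\rho^*),u_1\rangle_{\rho^*} - \int_0^1\partial_s\langle v(\cdot;\beta_s),u_s\rangle_{\beta_s}\,\rd s$, kills the far endpoint by stationarity of $\rho^*$, and splits the $s$-derivative via the product rule, handling the $\partial_s v$ piece by Stokes plus the convexity of $\Psi$ plus a Jensen step along the geodesic, and the $\partial_s(u_s\beta_s)$ piece by the Eulerian geodesic equation (Lemma~\ref{lem:euler}) plus a sup-norm Hessian bound on $v$. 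You instead work directly with the transport-map representation $\eta\int\langle v,\mathrm{Id}-T\rangle\,\rd\rho_t$, Taylor-expand $\theta\mapsto\frac{\delta\cL(\mu_t,\nu_t)}{\delta\mu}(\theta)$ between $\theta$ and $T^\mu(\theta)$ (with $(\mu_t,\nu_t)$ held fixed so the Hessian is $\theta$-uniformly bounded), push forward via $(T^\mu)_\sharp\mu_t=\mu^*$, convert the first-order integrals $\int\frac{\delta\cL}{\delta\mu}\,\rd(\mu^*-\mu_t)$ and $\int\frac{\delta\cL}{\delta\nu}\,\rd(\nu^*-\nu_t)$ into exact function-space Bregman gaps by exploiting the affine-quadratic structure of $\cL$ in $(f,g)$, and close the algebra with the saddle-point identities $\frac{\delta\cL(f^*,g^*)}{\delta f}=\frac{\delta\cL(f^*,g^*)}{\delta g}=0$. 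Your second-order Taylor remainders play the role of the paper's term~(ii), and the $\cO(\alpha^{-1})$ estimate — Hessian of the linearized potential $\le\alpha\bar L$ times transport cost $\cW_2(\rho_t,\rho^*)^2\le 4\alpha^{-2}\bar D^2$, both resting on the a-priori radius hypothesis of the lemma — is the same in substance as the paper's bound on $\sup\|\nabla v\|_F\cdot\|u_s\|_{\beta_s}^2$. What you gain is elementarity: no Lemma~\ref{lem:euler}, no endpoint-integration trick, just Taylor plus Bregman; the price is that you lean explicitly and heavily on the affine-quadratic structure of $\cL$ and need the optimal map to factor across the product structure (immediate here, since the squared-distance cost is separable and both $\rho_t$ and $\rho^*$ are products). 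Your route also produces the slightly stronger dissipation $-2\lambda\EE_\cD[\Psi(f_t-f^*)]-\EE_\cD[(g_t-g^*)^2]$, of which one $\lambda\EE_\cD[\Psi]$ copy is discarded to match the stated bound, whereas the paper's Jensen step yields exactly $-\lambda\EE_\cD[\Psi(f_t-f^*)]-\EE_\cD[(g_t-g^*)^2]$. One detail worth a sentence in a polished write-up: the transport-map representation $u_0=T-\mathrm{Id}$ tacitly requires $\rho_t$ to remain absolutely continuous along the flow, which holds by the smoothness of $v$ and the Gaussian initialization; the paper's invocation of Lemma~\ref{lem:diff} carries the same implicit assumption.
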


\begin{proof}
Let $ \{\beta_s\}_{s\in [0, 1]}  $ be the geodesic connecting $ \rho_t $ and $ \rho^*$ with $\beta_0 = \rho_t$ and $\beta_1 = \rho^*$. Let $ u $ be the corresponding veclocity field such that $ \partial_s \beta_s = - \dive(\beta_s u_s) $.
By the first variation formula of Wasserstein distance in Lemma \ref{lem:diff}, it holds that
\begin{align}
	\label{eq:main1}
	\frac{1}{2} \frac{\rd \cW_2(\rho_t, \rho^*)^2}{\rd t} &= - \eta \inp[\big]{v(\cdot; \rho_t)}{ u_0}_{\rho_t} = - \eta \inp[\big]{v(\cdot;\rho^*)}{u_1}_{\rho^*} + \eta \int_0^1 \partial_s \inp[\big]{v(\cdot; \beta_s)}{u_s}_{\beta_s} \rd s  \\
	& = \eta \underbrace{\int_0^1 \inp[\big]{\partial_s v(\cdot; \beta_s)}{u_s}_{\beta_s}\rd s}_{\displaystyle\mathrm{(i)}} + \eta\underbrace{\int_{0}^{1} \int \inp[\big]{v(\theta, \ow; \beta_s)}{\partial_s(u_s(\theta, w)\beta_s(\theta, \ow))} \rd (\theta, \ow) \rd s}_{\displaystyle\mathrm{(ii)}} \nonumber .
\end{align}
where the notation $\inp[\big]{h_1}{h_2}_{\rho} = \int h_1 \cdot h_2\rd \rho$ for any distribution $\rho$ and functions $h_1, h_2$. We will provide bounds for term (i) and (ii) separately in the sequel. \vspace{5pt}

\noindent{\bf Upper bounding term (i).} 
For term (i) of \eqref{eq:main1}, by the definitions of \(v\), \(v^f\), and \(v^g\) in \eqref{eq:dir2} and \eqref{eq:dir}, we have that
\begin{align*}
\partial_s v^f(\theta, \ow; \beta_s) & = \alpha \partial_s \EE_{\cD} \Bigl[ -g(Z; \beta_s) \cdot \Big\langle \frac{\delta \Phi(X,Z; f(\cdot; \beta_s))}{\delta f}, \nabla_{\theta} \phi(\cdot; \theta) \Big\rangle_{L^{2}} - \lambda \cdot \Big\langle  
\frac{\delta \Psi(X,Z; f(\cdot; \beta_s))}{\delta f}, \nabla_{\theta} \phi(\cdot; \theta) \Big\rangle_{L^{2}} \Bigr] \nonumber \\
& = \alpha \nabla_{\theta} \EE_{\cD} \Bigl[ -g(Z; \partial_s \beta_s) \cdot \Big\langle \frac{\delta \Phi(X,Z; f(\cdot; \beta_s))}{\delta f}, \phi(\cdot; \theta) \Big\rangle_{L^{2}} - \lambda \cdot \Big\langle  
\frac{\delta \Psi(X,Z; f(\cdot; \partial_s \beta_s))}{\delta f}, \phi(\cdot; \theta) \Big\rangle_{L^{2}} \Bigr].
\end{align*}
where the second inequality holds since $\frac{\delta \Phi(X,Z;f)}{\delta f}$ a constant, $s-$independent function, $\frac{\delta \Psi(X, Z; f)}{\delta f}$ is linear in $f$, and $\partial_s f(\cdot; \beta_s), \partial_s g(\cdot; \beta_s)$ satisfies
\begin{align*}
& \partial_s f(w; \beta_s) = \int \partial_s \big(\phi(w; \theta) \beta_s(\theta, \ow)\big) \rd (\theta, \ow) = \int \phi(w; \theta) \partial_s \beta_s \rd (\theta, \ow) = f(w; \partial_s \beta_s), \quad \forall w \in \cW\\
& \partial_s g(z; \beta_s) = \int \partial_s \big(\psi(z; \ow) \beta_s(\theta, \ow)\big) \rd (\theta, \ow) = \int \psi(z; \ow) \partial_s \beta_s \rd (\theta, \ow) = g(z; \partial_s \beta_s), \quad \forall z \in \cZ
\end{align*}
A similar computation for $\partial_s v^{g}(\theta, \ow; \beta_s)$ gives
\begin{align*}
\partial_s v^{g}(\theta, \ow;\beta_s) = \alpha \nabla_{\ow} \EE_{\cD} \Bigl[ \tilde \Phi(X,Z;f(\cdot, \partial_s \beta_s)) \cdot \phi(Z; \ow)  - g(Z; \partial_s \beta_s) \cdot \phi(Z;\ow) \Bigr]
\end{align*}
We recall that $\tilde \Phi(x,z;f) = \Phi(x,z;f) - \Phi(x,z;\boldsymbol{0})$ is the linear component in $\Phi$. We note that the variation of $\tilde \Phi$ is the same as the variation of $\Phi$ with respect to $f$, $\frac{\delta \Phi(X,Z; f)}{ \delta f} = \frac{\delta \tilde \Phi(X,Z; f)}{ \delta f}.$ \vspace{5pt}

\noindent We define the potential $\cV(\theta, \ow; \partial_s \beta_s)$ as 
\begin{align*}
    \cV(\theta, \ow; \partial_s \beta_s) & = \EE_{\cD} \Bigl[ g(Z; \partial_s \beta_s) \cdot \Big\langle \frac{\delta \Phi(X,Z; f(\cdot; \beta_s))}{\delta f}, \phi(\cdot; \theta) \Big\rangle_{L^{2}} + \lambda \cdot \Big\langle  
\frac{\delta \Psi(X,Z; f(\cdot; \partial_s \beta_s))}{\delta f}, \phi(\cdot; \theta) \Big\rangle_{L^{2}} \Bigr] \\
\quad & - \EE_{\cD}\Bigl[ \tilde \Phi(X,Z;f(\cdot, \partial_s \beta_s)) \cdot \psi(Z; \ow)  - g(Z; \partial_s \beta_s) \cdot \psi(Z;\ow) \Bigr]
\end{align*}
Then, the vector field $\partial_s v(\theta, \ow; \beta_s)$ is the gradient of such potential $\cV(\theta, \ow;\partial_s \beta_s)$
\begin{align*}
\partial_s v(\theta, \ow; \beta_s) = \begin{pmatrix}
    &\partial_s v^f(\theta; \beta_s) \\
    & \partial_s v^g(\ow; \beta_s)
\end{pmatrix}
= -\alpha \nabla \cV(\theta, w; \partial_s \beta_s), 
\end{align*}
where the gradient operator $\nabla = (\nabla_{\theta}, \nabla_{\ow})$. Then, by Stoke's formula and integration by parts, we have
\begin{align*}
\inp[\big]{\partial_s v(\cdot; \beta_s)}{u_s}_{\beta_s} &  = -\int \alpha \nabla \cV(\theta, \ow; \partial_s \beta_s) u_s(\theta, w)\beta_s(\theta, w) \rd (\theta, w) \nonumber \\
& \quad = \int \alpha \cV(\theta, w; \partial_s \beta_s) \dive(u_s\beta_s) \rd (\theta, w) = -\int \alpha \cV(\theta, w; \partial_s \beta_s) \partial_s\beta_s \rd (\theta, w)
\end{align*}
Integrating potential $\cV$ with respect to $\partial_s \beta_s$ simplied the expression to
\begin{align}
\label{eq:main4}
& \int \alpha \cV(\theta, \ow; \partial_s \beta_s) \partial_s\beta_s \rd (\theta, \ow) \nonumber \\ 
& \qquad = \EE_{\cD} \Big[ g(Z; \partial_s \beta_s) \cdot \Big\langle \frac{\delta \Phi(X,Z; f(\cdot; \beta_s))}{\delta f}, \int \alpha \phi(\cdot; \theta) \partial_s \beta_s (\rd \theta) \Big\rangle_{L^{2}} \Big]\nonumber \\
& \qquad \qquad + \EE_{\cD} \Big[ \lambda \cdot \big\langle  
\frac{\delta \Psi(X,Z; f(\cdot; \partial_s \beta_s))}{\delta f}, \int \alpha \phi(\cdot; \theta) \partial_s \beta_s(\rd \theta) \big\rangle_{L^{2}} \Big] \nonumber \\
& \qquad \qquad - \EE_{\cD}\Bigl[ \tilde \Phi(X,Z;f(\cdot, \partial_s \beta_s)) \cdot \int \alpha \psi(Z; \ow) \partial_s \beta_s(\rd \ow) - g(Z; \partial_s \beta_s) \cdot \int \alpha \psi(Z; \ow) \partial_s \beta_s(\rd \ow) \Bigr]  \nonumber \\
& \qquad = \EE_{\cD} \Bigl[\lambda \cdot \Big\langle  
\frac{\delta \Psi(X,Z; f(\cdot; \partial_s \beta_s))}{\delta f}, f(\cdot; \partial_s \beta_s) \Big\rangle_{L^{2}} + g(Z; \partial_s \beta_s)^{2}\Bigr].
\end{align}
By convexity of $\Psi(x, z; f)$ and $\Psi(x, z; \boldsymbol{0}) = 0$ for all $(x,z) \in \cX \times \cZ$, it holds that
\begin{align}
\label{eq:psi-convex}
    \Psi(x, z; f(\cdot; \partial_s \beta_s)) \leq \inp[\Big]{\frac{\delta \Psi(x,z; f(\cdot; \partial_s \beta_s))}{\delta f}}{f(\cdot; \partial_s \beta_s)}_{L^2}, \quad \forall (x,z) \in \cX \times \cZ.
\end{align}
Integrating \eqref{eq:main4} with respect to \(s \in [0, 1]\), we have that
\begin{align} \label{eq:ma1}
	\int_0^1 \inp[\big]{\partial_s v(\cdot; \beta_s)}{u_s}_{\beta_s}\rd s 
	& = -\int_0^1 \EE_{\cD} \Bigl[\lambda \cdot \big\langle  
\frac{\delta \Psi(X,Z; f(\cdot; \partial_s \beta_s))}{\delta f}, f(\cdot; \partial_s \beta_s) \big\rangle_{L^{2}} + g(Z; \partial_s \beta_s)^{2}\Bigr] \rd s \nonumber \\
    & \leq -\EE_{\cD} \Bigl[\lambda \cdot \Psi(X, Z; f(\cdot; \partial_s \beta_s)) + g(Z; \partial_s \beta_s)^{2}\Bigr] \rd s \nonumber \\
    & \leq -\EE_{\cD} \Bigl[\lambda \cdot \Psi(X, Z; f(\cdot; \rho_t) - f(\cdot; \rho^*)) + \Big(g(Z; \rho_t) - g(Z; \rho^*)\Big)^{2}\Bigr] \rd s \nonumber \\
	& = -\EE_{\cD} \Bigl[ \lambda \cdot \Psi(X, Z; f(\cdot; \rho_t) - f^*(\cdot)) + \bigl(g(Z; \rho_t) - g^*(Z) \bigr)^2  \Bigr].
\end{align}
where the first inequality holds due to \eqref{eq:psi-convex}, and the second holds by Jensen's inequality. \vspace{5pt}

\noindent{\bf Upper bounding term (ii).} By Lemma \ref{lem:euler}, for term (ii) in \eqref{eq:main1}, it holds that
\begin{align} \label{eq:ma2}
&\int \inp[\big]{v(\theta, \ow; \beta_s)}{\partial_s(u_s(\theta, \ow)\beta_s(\theta, \ow))} \rd (\theta, \ow) \nonumber \\
& \qquad = \int \inp[\big]{\nabla v(\theta, \ow; \beta_s)}{u_s(\theta, \ow)\otimes u_s(\theta, \ow)\beta_s(\theta, \ow)} \rd (\theta, \ow) \nonumber\\
&\qquad \le \sup_{\theta, \ow} ~ \norm[\bigl]{\nabla v(\theta, \ow; \beta_s)}_{F} \cdot \norm{u_s}_{\beta_s}^2 .
\end{align}
where $ \norm[\bigl]{\cdot}_{F}$ denotes the Frobenius norm. Since $u_s$ is the velocity field corresponding to the geodesic connecting $\rho^* $, by assumptions, it holds that
\begin{align}
	\label{eq:ma21}
	\norm{u_s}_{\beta_s}^2 = \cW_2(\rho_t, \rho^*)^2 \leq 4 \cW_{2}(\rho_{0}, \rho^{*})^{2} = 4 \alpha^{-2} \bar D^{2} = \cO(\alpha^{-2})
\end{align}
On the other hand, by the definition of $ v $ in \eqref{eq:dir2}, we have that
\begin{align}\label{eq:ma22}
	\norm[\bigl]{\nabla v(\theta, \ow; \beta_s)}_{F}^2 = \norm[\bigl]{\nabla_\theta v^f(\theta; \beta_s)}_{F}^2 + \norm[\bigl]{\nabla_\ow v^g(\ow; \beta_s)}_{F}^2
\end{align}
By the definition of $ v^f $ in \eqref{eq:dir}, we have that
\begin{align}
\label{eq:maa1}
\norm[\big]{\nabla_\theta v^f(\theta; \beta_s)}_{F} & \le \alpha \cdot  \EE_{\cD}\Big[ \Big| g(Z; \beta_s) \cdot \int_{\cW} \frac{\delta \Phi(X,Z; f(\cdot; \beta_s))}{\delta f}(w') \rd w' \Big| \Big] \cdot \underset{w \in \cW}{ \sup}\norm[\big]{\nabla^{2}_{\theta,\theta} \phi(w; \theta)}_{F}^2\nonumber \\ 
& \qquad + \alpha \cdot \EE_{\cD} \Big[\lambda \cdot \Big| \int_{\cW} \frac{\delta \Psi(X,Z;f(\cdot; \beta_s))}{\delta f}(w') \rd w' \Big| \Big] \cdot \underset{w \in \cW}{ \sup}\norm[\big]{\nabla^{2}_{\theta,\theta} \phi(w; \theta)}_{F}^2 \nonumber \\
& \le \alpha B_2 \cdot \EE_{\cD} \Bigl[  \lambda C_{\Psi} \bigl| f(W ; \beta_s) \bigr| + C_{2} \big| g(Z; \beta_s) \big| \Bigr].
\end{align}
where the first inequality follows from Assumption \ref{asp:reg}, and second inequality comes from the integrability conditions in Assumption \ref{asp:compact}. Thus, it suffices to upper bound $ f(w; \beta_s)$ and $ g(z; \beta_s) $ for all $(w,z) \in \cW \times \cZ$. For $f(w; \beta_s) $, we have that
\begin{align}
	\label{eq:maa2}
	\bigl| f(w; \beta_s) \bigr| &= \alpha \cdot \Bigl|\int \phi(w \theta) \rd \beta_s(\theta, \ow) \Bigr| = \alpha \cdot \Bigl|\int \phi(w; \theta) ~\rd (\beta_s - \rho_0)(\theta, \ow) \Bigr| \nonumber \\
	& \le \alpha B_{1} \cdot \cW_1(\beta_s, \rho_0) \le \alpha B_{1} \cdot \cW_2(\beta_s, \rho_0).
\end{align}
Moreover, it holds that
\begin{align}
	\label{eq:maa3}
	\cW_2(\beta_s, \rho_0) \le \cW_2(\beta_s, \rho^*) + \cW_2(\rho^*, \rho_0) \le \cW_2(\rho_t, \rho^*) + \cW_2(\rho_0, \rho^*) \le 3 \alpha^{-1} \bar D,
\end{align}
where the second inequality follows from the fact that $\beta_s, s\in [0, 1]$ is the geodesic connecting $\rho_t$ and $\rho^*$ and the last inequality follows from (ii) in Lemma~\ref{lem:stat}.
Plugging \eqref{eq:maa3} into \eqref{eq:maa2}, we have that
\begin{align}
\label{eq:maa31}
\bigl| f(w; \beta_s) \bigr| \le \cO(1), \quad \forall w \in \cW.
\end{align}
Through a similar argument, such an upper bound can also be established for $g(z; \beta_s)$ for all $z \in \cZ$,
\begin{align}
\label{eq:maa32}
\bigl| g(z; \beta_s) \bigr| \le \cO(1), \quad z \in \cZ.
\end{align}
Plugging \eqref{eq:maa31} and \eqref{eq:maa32} into \eqref{eq:maa1}, we establish an upper bound for $\norm[\big]{\nabla_\theta v^f(\theta; \beta_s)}_{F}$,
\begin{align}
\label{eq:maa4}
\norm[\big]{\nabla_\theta v^f(\theta; \beta_s)}_{F} \le \cO(\alpha).
\end{align}
Similarly, by the definition of  $ v^g $ in \eqref{eq:dir} we have that
\begin{align}
\label{eq:maa5}
\norm[\big]{\nabla_\ow v^g(\ow; \beta_s)}_{F} &\le \alpha \cdot  \EE_{\cD}\Bigl[ \big| \Phi(X,Z; f(\cdot; \beta_s))\big| + \big| g(Z; \beta_s)\big| \Bigr] \cdot \underset{z \in \cZ}{ \sup}\norm[\big]{\nabla^{2}_{\ow,\ow} \psi(z; \ow)}_{F}^2 \nonumber \\
& \le \alpha B_2 \cdot \Big( \EE_{\cD} \Bigl[ \big|\Phi(X,Z; \boldsymbol{0})\big| + C_{2} \bigl|f(W; \beta_s) \bigr|  + \big| g(Z; \beta_s)\big| \Big) = \cO(\alpha).
\end{align}
Combining the bound from \eqref{eq:maa4} and \eqref{eq:maa5} and plugging into\eqref{eq:ma22}, it holds that
\begin{align}
	\label{eq:maa6}
	\norm[\Big]{\nabla v(\theta, \ow; \beta_s)}_{F}^{2} =  \norm[\Big]{\nabla_{\theta} v^{f}(\theta; \beta_s)}_{F}^{2} + \norm[\Big]{\nabla_{\ow} v^{g}(\ow; \beta_s)}_{F}^{2}  \le \cO(\alpha^2).
\end{align}
Equation \eqref{eq:ma21} and \eqref{eq:maa6} provide upper bounds on the two terms involved in \eqref{eq:ma2}. Plugging the upper bounds that we have achieved, it holds that
\begin{align} \label{eq:ma20}
	\int \inp[\Big]{v(\theta, w; \beta_s)}{\partial_s(u_s(\theta, \ow)\beta_s(\theta, \ow))} \rd (\theta, \ow) \le \cO(\alpha^{-1}).
\end{align}
Now combining \eqref{eq:ma1} and \eqref{eq:ma20}, we have that
\begin{align*}
\frac{1}{2}\frac{\rd \cW_2(\rho_t, \rho^*)^2}{\rd t}  & \; \le - \eta \cdot \EE_{\cD} \Bigl[ \lambda \Psi(X, Z; f(\cdot, \rho_t) - f(\cdot; \rho^*)) + \bigl(g(Z; \rho_t) - g(Z; \rho^*)\bigr)^2  \Bigr] + C_{*} \cdot \eta\cdot \alpha^{-1}.
\end{align*}
where $C_{*} = C_{*}\bigl(B_0, B_{1}, B_2, C, \lambda, \bar D \bigr) > 0$ is a constant. This completes the proof of Lemma~\ref{lem:main}. 
\end{proof} 
\vspace{0.5cm}
\noindent We are now ready to present the proof of Theorem~\ref{th:main} with the help of Lemma \ref{lem:main}.
\begin{proof}
We define
\begin{align}
\label{eq:ma4}
    t^{*} = \inf \Bigl\{ \tau \in \RR_{+} \bigggiven \EE_{\cD} \bigl[ \lambda \Psi(X, Z; f(\cdot, \rho_\tau) - f(\cdot; \rho^*)) + \bigl(g(Z; \rho_\tau) - g(Z; \rho^*)\bigr)^2 \bigr] < C_{*} \cdot \alpha^{-1} \Bigr\}
\end{align}
Also, we define
\begin{align}
\label{eq:ma5}
t_{*} = \inf \Bigl\{ \tau \in \RR_{+} \bigggiven \cW_{2}(\rho_{\tau},\rho^{*}) > 2 \cW_{2}(\rho_{0}, \rho^{*}) \Bigr\}
\end{align}
In other words, \eqref{eq:main lemma} of Lemma \ref{lem:main} holds for $t \leq t_{*}$, and for $0 \leq t \leq \min\{t_{*}, t^{*}\}$, we have
\begin{align*}
 \frac{1}{2}\frac{\rd \cW_2(\rho_t, \rho^*)^2}{\rd t}  \le - \eta \cdot \EE_{\cD} \Bigl[  \lambda \Psi(X, Z; f(\cdot, \rho_t) - f(\cdot; \rho^*)) + \bigl(g(Z; \rho_t) - g(Z; \rho^*)\bigr)^2   \Bigr] + C_{*} \cdot \eta\alpha^{-1} \leq 0
\end{align*}
We now show that $t_{*} > t^{*}$ by contradiction. By the continuity of $\cW_{2}(\rho_{t}, \rho^{*})^{2}$ with respect to $t$ \cite{ambrosio2008gradient}, since $\cW_{2}(\rho_{0}, \rho^{*}) < 2 \cW_{2}(\rho_{0}, \rho^{*})$, it holds that $t_{*} > 0$. Let's assume $t_{*} \leq t^{*}$, then $t_{*} = \min\{t_{*}, t^{*}\}$. Thus, by \eqref{eq:main lemma}, \eqref{eq:ma4}, \eqref{eq:ma5}, it holds that for $0 \leq t \leq t_{*}$ that
\begin{align*}
\frac{1}{2}\frac{\rd \cW_2(\rho_t, \rho^*)^2}{\rd t} \leq 0
\end{align*}
which further implies that $\cW_{2}(\rho_{t_{*}}, \rho^{*}) \leq \cW_{2}(\rho_{0}, \rho^{*})$. This contradicts the definition of $t_{*}$ in \eqref{eq:ma5}. Thus, it holds that $t_{*} \geq t^{*}$, which implies that \eqref{eq:main lemma} of Lemma~\ref{lem:main} holds for any $0 \leq t \leq t^{*}$.
We now discuss two different situations.  \vspace{5pt}

\noindent \textbf{Scenario (i)} If $t_{*} \leq T$, then it holds that
\begin{align}
\label{eq:ma8}
& \inf_{t \in [0,T]} \EE_{\cD} \Bigl[ \lambda \Psi(X, Z; f(\cdot, \mu_t) - f^*) + \bigl(g(Z; \nu_t) - g^* \bigr)^2  \Bigr] \nonumber \\
& \quad \leq \EE_{\cD} \Bigl[ \lambda \Psi(X, Z; f(\cdot, \mu_{t_*}) - f^*) + \bigl(g(Z; \nu_{t_*}) - g^* \bigr)^2  \Bigr] \nonumber \\
& \quad < C_{*} \alpha^{-1} = \cO(T^{-1} + \alpha ^{-1}).
\end{align}
Therefore, \eqref{eq:ma8} implies Theorem \ref{th:main} in this scenario. \vspace{5pt}

\noindent \textbf{Scenario (ii)} If $t_{*} > T$, then \eqref{eq:main lemma} in Lemma~\ref{lem:main} holds for $0 \leq t \leq T$. Re-arranging the terms, we have the following inequality for all $0 \leq t \leq T$,
\begin{align}
\label{eq:ma9}
\EE_{\cD} \Bigl[ \lambda \Psi(X, Z; f(\cdot, \mu_t) - f^*) + \bigl(g(Z; \nu_t) - g^* \bigr)^2  \Bigr] \leq - \eta^{-1} \cdot \frac{1}{2}\frac{\rd \cW_2(\rho_t, \rho^*)^2}{\rd t} + C_{*} \cdot \alpha^{-1}
\end{align}
This further suggests the following upper bound,
\begin{align}
\label{eq:ma10}
& \inf_{t \in [0,T]} \EE_{\cD} \Bigl[ \lambda \Psi(X, Z; f(\cdot, \mu_t) - f^*) + \bigl(g(Z; \nu_t) - g^* \bigr)^2  \Bigr] \nonumber \\
& \quad \leq T^{-1} \cdot \int_{0}^{T} \EE_{\cD} \Bigl[ \lambda \Psi(X, Z; f(\cdot, \mu_t) - f^*) + \bigl(g(Z; \nu_t) - g^* \bigr)^2 \Bigr] \mathrm{d} t\nonumber \\
& \quad \leq 1/2 \cdot \eta^{-1} \cdot T^{-1} \cdot \cW_{2}(\rho_{0}, \rho^{*})^2 + C_{*} \cdot \alpha^{-1} \nonumber \\
& \quad \leq 1/2 \cdot \alpha^{-2} \cdot \bar D^{2} \cdot \eta^{-1} \cdot T^{-1} + C_{*} \cdot \alpha^{-1} = \cO(T^{-1} + \alpha^{-1}),
\end{align}
where the second inequality comes from integrating \eqref{eq:ma9} in for $t \in [0,T]$, the third inequality comes from (ii) in Lemma~\ref{lem:stat} and last equality comes from setting $\eta$ to $\alpha^{-2}$. Therefore, \eqref{eq:ma10} implies Theorem \ref{th:main} in this scenario. \vspace{5pt}

\noindent Based on the discussion of scenarios (i) and (ii) above, we finish the proof of Theorem \ref{th:main}.
\end{proof}

\subsection{Proof of Theorem \ref{th:global}} \label{sec:pf-th-global}
\begin{proof}
We now prove Theorem \ref{th:global}. For notation simplicity, we denote $f_t = f(\cdot; \mu_t)$ as the estimator at time $t$. Recall the definition of $J(f)$ from \eqref{eq:mse} and $\bar \delta (z;f)$ from \eqref{eq:cmrerr}.
\begin{align*}
J(f) = \EE_{\cD} \bigl[ 1/2 \cdot \bar\delta(Z; f)^2 + \lambda \cdot \Psi(X,Z;f) \bigr], \quad  \bar \delta (z; f) = \EE_{X|Z} \bigl[\Phi(X, Z; f) \biggiven Z = z \bigr].
\end{align*}
Plugging the definition of $J(f)$, it holds that 
\begin{align}
\label{eq:global-diff}
& \inf_{t \in [0,T]} J(f_{t}) - J(f^{*}) \nonumber \\
& \qquad = \inf_{t \in [0,T]} \EE_{\cD} \Big[ 1/2 \cdot \Big(\bar \delta(Z, f_{t})^{2} - \bar \delta (Z, f^{*})^{2}\Big) + \lambda \Big(\Psi(X,Z;f_t) - \Psi(X,Z;f^*) \Big) \Big].
\end{align}
Similar to the proof of Theorem \ref{th:main}, we define $t_{*}$ as,
\begin{align*}
t_{*} = \inf \Bigl\{ \tau \in \RR_{+} \bigggiven \cW_{2}(\rho_{\tau},\rho^{*}) > 2 \cW_{2}(\rho_{0}, \rho^{*}) \Bigr\}.
\end{align*} 
We will upper-bound the term in \eqref{eq:global-diff} separately in two different scenarios, depending on the value of $t_{*}$ compared with $T$.

\noindent \textbf{Scenario (i)} If $t_{*} \leq T$, then we have that
\begin{align}
\label{eq:global-eq1}
\inf_{t \in [0,T]} J(f_{t}) - J(f^{*}) \leq J(f_{t_{*}}) - J(f^{*}).
\end{align}
In order to upper-bound right-hand side of \eqref{eq:global-eq1}, we need to uniformly upper-bound $f_{t_{*}}(w)$ and $f^{*}(w)$ for all $w \in \cW$. For $f_{t_{*}}(w) = f(w; \mu_{t_{*}})$, we have that
\begin{align}
\label{eq:global-eq2}
\underset{w \in \cW}{\sup} |f(w; \mu_{t_{*}})| & \; = \alpha \cdot \underset{w \in \cW}{\sup} \Bigl| \int \phi(w;\theta) \mathrm{d} \mu_{t_{*}}(\theta) \Bigr| = \alpha \cdot \underset{w \in \cW}{\sup} \Bigl| \int \phi(w;\theta) \mathrm{d}(\mu_{t_{*}}-\mu_{0}) (\theta) \Bigr| \nonumber \\
& \; \leq \alpha B_{1} \cdot \cW_{1}(\mu_{t_{*}}, \mu_{0}) \leq \alpha B_{1} \cdot \cW_{2}(\mu_{t_{*}}, \mu_{0})\; \leq \alpha B_{1} \cdot \Bigl( \cW_{2}(\rho_{t_{*}}, \rho^{*}) + \cW_{2}(\rho_{0}, \rho^{*})\Bigr) \nonumber \\ 
& \; \leq 3B_{1} \cdot \bar D = \cO(1).
\end{align}
where the first inequality follows from Lemma \ref{lem:dual-w1}, the second inequality follows from Lemma \ref{lem:w2prod}. The last inequality follows from (ii) in Lemma \eqref{lem:stat} and definition of $t_{*}$. For $f^{*}$, a similar chain of inequalities would apply,
\begin{align}
\label{eq:global-eq3}
\underset{w \in \cW}{\sup} |f(w; \mu^{*})| & \; = \alpha \cdot \underset{w \in \cW}{\sup} \Bigl| \int \phi(w;\theta) \mathrm{d} \mu^{*}(\theta) \Bigr| = \alpha \cdot \underset{w \in \cW}{\sup} \Bigl| \int \phi(w;\theta) \mathrm{d}(\mu^{*} - \mu_{0}) (\theta) \Bigr| \nonumber \\
& \; \leq \alpha B_{1} \cdot \cW_{1}(\mu^{*}, \mu_{0}) \leq \alpha B_1 \cdot \cW_{2}(\mu^{*}, \mu_{0}) \; \leq \alpha B_{1} \cdot \cW_{2}(\rho^{*}, \rho_{0}) \nonumber \\
& \; \leq B_{1} \cdot \bar D = \cO(1).
\end{align}
With uniform bounds on $f_{t_{*}}$ and $f^*$, we are now ready to upper-bound $\inf_{t \in [0,T]} J(f_{t}) - J(f^{*})$ through upper-bounding $J(f_{t_{*}}) - J(f^{*})$,
\begin{align}
\label{eq:global-eq4}
J(f_{t_{*}}) - J(f_{*}) & \; \leq \EE_{\cD} \Big[ \bar \delta(Z; f_{t_{*}}) \cdot \EE_{X|Z} \big[ \tilde \Phi(X,Z; f_{t_{*}} - f^*) | Z \big] + \lambda \cdot \inp[\Big]{\frac{\delta \Psi(X,Z; f_{t_{*}})}{\delta f}}{f_{t_{*}} - f^{*}}_{L^2} \Big] \nonumber \\
& \; \leq \Big(\underset{x,z}{\sup} |\Phi(x,z; 0)| + C_{\Phi} \cdot \underset{w \in \cW}{\sup}|f(w; \mu_{t_{*}})| \Big) \cdot \EE_{\cD}\Big[ C_{\Phi} \cdot |f(W; \mu_{t_{*}}) - f(W; \mu^*)| \Big] \nonumber \\
& \qquad +  \lambda C_{\Psi} \cdot \EE_{\cD}\Big[ |f(W; \mu_{t_{*}}) - f(W; \mu^*)| \Big] \nonumber \\
& \; \leq  B_{*} \cdot \EE_{\cD} \Big[|f(W; \mu_{t_{*}}) - f(W; \mu^*)| \Big] \leq B_{*} \cdot \Big(\EE_{\cD} \Big[\lambda |f(W; \mu_{t_{*}}) - f(W; \mu^*)|^{2} \Big]\Big)^{1/2} \nonumber \\
& \; \leq B_{*} \cdot \Big(\EE_{\cD}\Big[\Psi(X,Z; f_{t_{*}} - f^*)\Big]\Big)^{1/2} \leq B_{*} \cdot \alpha^{-1/2},
\end{align}
where $B_{*} = B_{*}(\Phi, c_{\phi}, C_{\Phi}, C_{\Psi}, \lambda, C, B_{1}, \bar D, C_{*}) > 0 $ is a constant and its values changes from line to line. The second inequality follows from \eqref{eq:global-eq2} and \eqref{eq:global-eq3}. The last inequality follows from \eqref{eq:ma8} in the proof of Theorem \eqref{th:main}. Therefore, in this scenario, we have that
\begin{align}
\label{eq:global-additional}
\inf_{t \in [0,T]} J(f_{t}) - J(f_{*}) \leq J(f_{t_{*}}) - J(f_{*}) \leq \cO(T^{-1/2} + \alpha^{-1/2}).
\end{align}
Equation \eqref{eq:global-additional} concludes the proof of Theorem \ref{th:global} in the scenario of $t_{*} \leq T$. \vspace{5pt}

\noindent \textbf{Scenario (ii)} If $t_{*} > T$, by definition of $t_{*}$, we have that
\begin{align*}
\cW_{2}(\mu_{t}, \mu^{*}) \leq \cW_{2}(\rho_{t}, \rho^{*}) \leq 2\cW_{2}(\rho_{0}, \rho^{*}) = 2\alpha \cdot \bar D, \quad \forall \; t \in [0,T].
\end{align*}
Following the same arguments in \eqref{eq:global-eq2} and \eqref{eq:global-eq3}, we have a uniform upper-bound for $f_{t}$ for all $t \in [0,T]$ and $f^*$ that writes,
\begin{align*}
\underset{w \in \cW}{\sup} \bigl| f(w; \mu_{t})\bigr| + \bigl| f(w; \mu^{*})\bigr| \leq 4 B_{1} \cdot \bar D = \cO(1), \quad \forall \; t \in [0,T].
\end{align*}
Following the same derivation of \eqref{eq:global-eq4}, we have that
\begin{align}
\label{eq:global-eq7}
\inf_{t \in [0,T]} J(f_{t}) - J(f_{*}) & \; \leq B_{*} \cdot \inf_{t \in [0,T]} B_{*} \cdot \Big(\EE_{\cD}\Big[\Psi(X,Z; f_{t} - f^*)\Big]\Big)^{1/2} \nonumber \\
& \; \leq B_{*} \cdot \Big(\inf_{t \in [0,T]} \EE_{\cD}\Big[\Psi(X,Z; f_{t} - f^*)\Big]\Big)^{1/2} \nonumber \\
& \; \leq B_{*} \cdot \Bigl( T^{-1} \cdot \int_{0}^{T} \EE_{\cD}\Big[\Psi(X,Z; f_{t} - f^*)\Big] \mathrm{d} t \Bigr)^{1/2} \nonumber \\
& \; \leq B_{*} \cdot \sqrt{1/2 \cdot \bar D^{2} \cdot T^{-1} + C_{*} \cdot \alpha^{-1}} = \cO(T^{-1/2} + \alpha^{-1/2}),
\end{align}
where the last inequality follows from \eqref{eq:ma9} and \eqref{eq:ma10} in the proof of Theorem \ref{th:main}. Equation \eqref{eq:global-eq7} concludes the proof of Theorem \eqref{th:global} in the scenario of $t_{*} > T$. \vspace{5pt}

\noindent Based on the discussion of scenarios (i) and (ii) above, we finish the proof of Theorem \ref{th:global}.
\end{proof}
\section{Mean Field Limit of Neural Networks} \label{sec:pf-prop-weak}
In this section, we prove Proposition \ref{prop:weak-formal-formal}. The formal version is presented as follows. Let $\rho_{t}(\theta, \ow) = \mu_t(\theta) \otimes \nu_{t}(\ow)$, where $(\mu_{t}, \nu_{t})$ is the PDE solution in \eqref{eq:pde} and $\hat \rho_{k}(\theta, \ow) = N^{-1} \cdot \sum_{i = 1}^{N} \delta_{\theta_{k}^{i}}(\theta) \cdot \delta_{\ow_{k}^{i}}(\ow)$ is the empirical distribution of $(\btheta_{k}, \bow_{k})$. Here we omit the dependence of the empirical distribution $\hat \rho_{k}$ on $N$ and stepsize scale $\epsilon$ for notational simplicity.

\begin{proposition}[Formal Version of Proposition \ref{prop:weak-formal-formal}] \label{prop:weak-formal}
Let $h: \RR^{D} \times \RR^{D} \rightarrow \RR$ by any continuous function such that $\|h\|_{\infty} \leq 1$ and $\operatorname{Lip}(h) \leq 1$. Under Assumption \ref{asp:reg}, \ref{asp:compact}, with probability at least $1 - 5\delta$, it holds that 
\begin{align*}
&\sup _{\substack{k \leq T / \epsilon \\
(k \in \mathbb{N})}}\left|\int h(\theta, w) \mathrm{d} \rho_{k \epsilon}(\theta, w)-\int h(\theta, w) \mathrm{d} \widehat{\rho}_k(\theta, w)\right| \leq B \cdot e^{B T} \cdot\Bigl(\sqrt{\log (N / \delta) / N}+\sqrt{\epsilon \cdot(D+\log (N / \delta))}\Bigr).
\end{align*}
Here $B$ is a constant that depends on $\alpha, \eta, \lambda, B_0, B_1$ and $B_2$.
\end{proposition}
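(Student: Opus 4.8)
The plan is to adapt the quantitative propagation-of-chaos argument of \citep{mei2018mean, mei2019mean, araujo2019mean, zhang2020can} to the coupled two-network GDA dynamics. I would compare three particle systems, all started from the same Gaussian initialization $\{(\theta_0^i,\ow_0^i)\}_{i=1}^N\sim(\mu_0\otimes\nu_0)^{\otimes N}$: (a) the discrete GDA iterates $\{(\theta_k^i,\ow_k^i)\}$ of \eqref{eq:gdnn-fin}; (b) an auxiliary continuous-time system $\{(\tilde\theta_t^i,\tilde\ow_t^i)\}$ whose drift is the mean-field vector field of \eqref{eq:dir} evaluated at the empirical measure $\hat\rho_t^N=N^{-1}\sum_{i=1}^N\delta_{(\tilde\theta_t^i,\tilde\ow_t^i)}$; and (c) the ``ideal'' McKean--Vlasov system $\dot{\bar\theta}_t^i=v^f(\bar\theta_t^i;\mu_t,\nu_t)$, $\dot{\bar\ow}_t^i=v^g(\bar\ow_t^i;\mu_t,\nu_t)$ driven by the PDE solution $(\mu_t,\nu_t)$ of \eqref{eq:pde}, whose particles are i.i.d.\ with marginal law $\rho_t=\mu_t\otimes\nu_t$ by uniqueness of that flow. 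Writing $\langle h,\rho\rangle=\int h\,\rd\rho$ and using $\mathrm{Lip}(h)\le1$, I would split
\[
\bigl|\langle h,\rho_{k\epsilon}\rangle-\langle h,\hat\rho_k\rangle\bigr|\;\le\;\underbrace{\Bigl|\langle h,\rho_{k\epsilon}\rangle-\tfrac1N\textstyle\sum_{i=1}^N h(\bar\theta_{k\epsilon}^i,\bar\ow_{k\epsilon}^i)\Bigr|}_{\mathrm{(I)}}\;+\;\underbrace{\tfrac1N\textstyle\sum_{i=1}^N\bigl\|(\bar\theta_{k\epsilon}^i,\bar\ow_{k\epsilon}^i)-(\theta_k^i,\ow_k^i)\bigr\|}_{\mathrm{(II)}}.
\]

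Term (I) is pure concentration: the points $(\bar\theta_{k\epsilon}^i,\bar\ow_{k\epsilon}^i)$ are i.i.d.\ with law $\rho_{k\epsilon}$ and $\|h\|_\infty\le1$, so McDiarmid's bounded-difference inequality gives $\mathrm{(I)}\le C\sqrt{\log(1/\delta)/N}$ for fixed $k$; since the (c)-trajectories are Lipschitz in $t$ (the drift is bounded by Assumptions~\ref{asp:reg} and \ref{asp:compact}) it suffices to control a $\mathrm{poly}(N,T)$-sized time grid, so a union bound yields the uniform-in-$k$ bound with the extra log factor absorbed into $\log(N/\delta)$. Term (II) is handled by Gronwall in two layers. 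Comparing (b) and (c): their drifts differ only through the measure argument, and the drift at particle $i$ is a fixed, Lipschitz function of $(\theta^i,\ow^i)$, of $f(\cdot;\cdot)$ and of $g(\cdot;\cdot)$; since $f(w;\mu)=\alpha\int\phi(w;\theta)\rd\mu(\theta)$ and $g(z;\nu)=\alpha\int\psi(z;\ow)\rd\nu(\ow)$ are \emph{linear} in the measure with integrands bounded and Lipschitz in $(w,z)$ by Assumption~\ref{asp:reg}, for each fixed $w,z$ the gap $f(w;\hat\mu_t^N)-f(w;\mu_t)$ (and likewise for $g$) is a centered average of bounded i.i.d.\ terms, concentrating at rate $N^{-1/2}$, with uniformity over $\cW,\cZ$ obtained by an $\epsilon$-net using $B_1$ and the compactness in Assumption~\ref{asp:compact}(i); feeding this into Gronwall yields a contribution $\le Ce^{BT}\sqrt{\log(N/\delta)/N}$. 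Comparing (a) and (b): write \eqref{eq:gdnn-fin} as $(\theta_{k+1}^i,\ow_{k+1}^i)=(\theta_k^i,\ow_k^i)+\epsilon\,\xi_k^i$, decompose $\xi_k^i=\bar\xi_k^i+\zeta_k^i$ with $\bar\xi_k^i=\EE[\xi_k^i\mid\mathcal F_k]=\eta^{-1}\bigl(v^f(\theta_k^i;\hat\mu_k,\hat\nu_k),v^g(\ow_k^i;\hat\mu_k,\hat\nu_k)\bigr)$ — which matches an Euler step of (b) up to an $\cO(\epsilon)$ drift-smoothness correction — and $\zeta_k^i$ the mean-zero stochastic-gradient fluctuation from using the sample $(x_k,z_k)$ in place of $\EE_\cD$. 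By Assumptions~\ref{asp:reg} and \ref{asp:compact} the increments $\xi_k^i$ are bounded in norm by $C\epsilon$, so $\{\epsilon\sum_{j<k}\zeta_j^i\}_k$ is a bounded-increment $\RR^D$-valued martingale; a vector Azuma/Freedman inequality plus a union bound over $i\in[N]$ bounds its maximum over $k\le T/\epsilon$ by $C\sqrt{T\epsilon\,(D+\log(N/\delta))}$, the $D$ arising from the ambient dimension in the vector-valued concentration. Summing the $T/\epsilon$ per-step drift errors and applying discrete Gronwall gives $\mathrm{(II)}\le Be^{BT}\bigl(\sqrt{\log(N/\delta)/N}+\sqrt{\epsilon(D+\log(N/\delta))}\bigr)$.

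Adding the bounds on (I) and (II) and taking $\sup_{k\le T/\epsilon}$ yields the proposition, $B$ collecting the boundedness and Lipschitz constants, which depend on $\alpha,\eta,\lambda,B_0,B_1,B_2$ as stated. I expect the main obstacle to be the quantitative regularity estimates feeding Gronwall: showing that $\theta\mapsto v^f(\theta;\mu,\nu)$ and $\ow\mapsto v^g(\ow;\mu,\nu)$ are bounded and Lipschitz, and that $(\mu,\nu)\mapsto(v^f,v^g)$ is Lipschitz when the measures enter through the linear functionals $f(\cdot;\mu),g(\cdot;\nu)$, uniformly over $[0,T]$. This means unpacking \eqref{eq:dir} and bounding, with Lipschitz control, the terms $g(Z;\nu)$, $\langle\delta\Phi/\delta f,\nabla_\theta\phi(\cdot;\theta)\rangle_{L^2}$, $\langle\delta\Psi/\delta f,\nabla_\theta\phi(\cdot;\theta)\rangle_{L^2}$, and $\Phi(X,Z;f(\cdot;\mu))$, using the affine structure of $\Phi$ in \eqref{eq:fme-form}, the linearity of $\delta\Psi/\delta f$ in \eqref{eq:reg-property2}, the integrability bounds in Assumption~\ref{asp:compact}(ii)--(iii), and the derivative bounds $B_0,B_1,B_2$ in Assumption~\ref{asp:reg}. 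A secondary technicality is controlling the growth of $\|\theta_k^i\|,\|\ow_k^i\|,\|\bar\theta_t^i\|,\|\bar\ow_t^i\|$ over $[0,T]$ (they grow at most linearly, hence remain in a ball of radius $\cO(T)$ on the fixed horizon), so that the Lipschitz constants can be taken uniform, and using a maximal inequality rather than a naive union bound to obtain the supremum over the $T/\epsilon$ iterates.
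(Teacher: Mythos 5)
Your proposal is essentially the same propagation-of-chaos argument the paper uses, with the same three core ingredients: McDiarmid concentration to compare the PDE law with an empirical average of i.i.d.\ ideal particles, a Gronwall-plus-concentration estimate (using the linearity of $f(\cdot;\mu)$, $g(\cdot;\nu)$ in the measure) to compare the ideal McKean--Vlasov particles with the continuous-time empirical-measure system, and Azuma--Hoeffding for the stochastic-gradient martingale together with an Euler-discretization error absorbed via discrete Gronwall. The only cosmetic difference is that you interpolate through three particle systems (discrete SGD, continuous-time empirical, ideal), merging the Euler-step and martingale errors into a single comparison, whereas the paper inserts a fourth interpolant (a discrete-time population-gradient system ``PGD'') and treats the discretization error (CTPGD--PGD) and the stochastic-gradient noise (PGD--SGD) as two separate Gronwall steps; this affects bookkeeping but not the substance or the resulting rate.
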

\noindent The proof of Proposition \ref{prop:weak-formal} based heavily on \cite{mei2018mean, mei2019mean, araujo2019mean, zhang2020can}, which make use of the propagation of chaos arguments in \cite{sznitman1991topics}. Recall that $(v^{f}(\cdot; \rho), v^{g}(\cdot;\rho))$ is the a vector field defined as,
\begin{align}
\label{eq:dir-restate}
v^f(\theta; \rho) &= \alpha\EE_{\cD} \Bigl[ -g(Z; \rho) \cdot \Big\langle \frac{\delta \Phi(X,Z; f(\cdot; \rho))}{\delta f}, \nabla_{\theta} \phi(\cdot; \theta) \Big\rangle_{L^{2}} - \lambda \cdot \Big\langle \frac{\delta \Psi(X,Z; f(\cdot; \rho))}{\delta f}, \nabla_{\theta} \phi(\cdot; \theta) \Big\rangle_{L^{2}} \Bigr], \nonumber \\
v^g(w; \rho) & = \alpha \EE_{\cD} \Bigl[ \Phi(X,Z;f(\cdot, \rho)) \cdot \nabla_{\ow} \psi(Z; \ow)  - g(Z; \rho) \cdot \nabla_{\ow} \psi(Z;\ow) \Bigr].
\end{align}
From now on, we equivalently write $\theta_{k}^{i} = \theta_{i}(k)$, $\ow_{k}^{i} = \ow_{i}(k)$ to emphasize the dependence on iterations. For abbreviation, we denote $\theta^{(N)}(k) = \{\theta_{i}(k)\}_{i = 1}^{N}$ and $\ow^{(N)}(k) = \{\ow_{i}(k)\}_{i = 1}^{N}$. We recall the finite-width representation of $f(\cdot; \theta^{(N)})$ and $g(\cdot; \ow^{(N)})$ are,
\begin{align*}
& f(\cdot, \theta^{(N)}) = \frac{\alpha}{N} \cdot \sum_{i = 1}^{N} \phi(\cdot; \theta_{i}), \qquad  g(\cdot, \ow^{(N)}) = \frac{\alpha}{N} \cdot \sum_{i = 1}^{N} \psi(\cdot; \ow_{i}).
\end{align*}
Correspondingly, we defined the finite-width counter-part of $v^{f}$ and $v^{g}$ as following, 
\begin{align}
\label{eq:dir-fin}
\hat v^f(\theta; \theta^{(N)}, \ow^{(N)}) &= \alpha\EE_{\cD} \bigg[ -g(Z; \ow^{(N)}) \cdot \big\langle \frac{\delta \Phi(X,Z; f(\cdot; \theta^{(N)}))}{\delta f}, \nabla_{\theta} \phi(\cdot; \theta) \big\rangle_{L^{2}} \nonumber \\
& \quad \quad - \lambda \cdot \big\langle \frac{\delta \Psi(X,Z; f(\cdot; \theta^{(N)}))}{\delta f}, \nabla_{\theta} \phi(\cdot; \theta) \big\rangle_{L^{2}} \bigg], \nonumber \\
\hat v^g(w; \theta^{(N)}, \ow^{(N)}) & = \alpha \EE_{\cD} \bigg[ \Phi(X,Z;f(\cdot,  \theta^{(N)})) \cdot \nabla_{\ow} \psi(Z; \ow)  - g(Z; \ow^{(N)}) \cdot \nabla_{\ow} \psi(Z;\ow) \bigg].
\end{align}
And we also defined the stochastic counter-part,
\begin{align}
\label{eq:dir-sto}
\hat V_{k}^{f}(\theta; \theta^{(N)}, w^{(N)})  &= \alpha \bigg[ -g(z_k ; \ow^{(N)}) \cdot \Big\langle \frac{\delta \Phi(x_k, z_k; f(\cdot; \theta^{(N)}))}{\delta f}, \nabla_{\theta} \phi(\cdot; \theta) \Big\rangle_{L^{2}} \nonumber \\
& \quad \quad - \lambda \cdot \Big\langle \frac{\delta \Psi(x_k, z_k; f(\cdot; \theta^{(N)}))}{\delta f}, \nabla_{\theta} \phi(\cdot; \theta) \Big\rangle_{L^{2}} \bigg], \nonumber \\
\hat V_{k}^g(\ow; \theta^{(N)}, w^{(N)}) & = \alpha \Big( \Phi(x_k, z_k; f(\cdot;  \theta^{(N)})) \cdot \nabla_{\ow} \psi(z_k; \ow)  - g(z_k; \ow^{(N)}) \cdot \nabla_{\ow} \psi(z_k;\ow) \Big).
\end{align}
where $(x_{k}, z_k) \sim \cD$. Following from \cite{mei2019mean, araujo2019mean}, we consider the following four dynamics.
\begin{itemize}
    \item \textbf{Stochastic Gradient Descent Ascent (SGDA).} We consider the following SGDA dynamics for $\theta^{(N)}(k)$ and $\ow^{(N)}(k)$, where $k \in \mathbb{N}$, with $\theta_i(0) \stackrel{\mathrm{i.i.d.}}{\sim} \mu_0, w_i(0) \stackrel{\mathrm{ i.i.d. }}{\sim} \nu_0 \; (i \in[N])$ as its initialization,
    \begin{align}
    \label{eq:weak-sgd}
    	\theta_{i}(k+1) & = \theta_{i}(k) + \eta \epsilon \cdot \hat V_{k}^{f}(\theta_{i}(k); \theta^{(N)}(k), \ow^{(N)}(k)), \nonumber \\
        \ow_{i}(k+1) & = \ow_{i}(k) + \eta \epsilon \cdot \hat V_{k}^{g}(\ow_{i}(k); \theta^{(N)}(k), \ow^{(N)}(k)).
    \end{align}
    Note that this dynamics is equivalent to \eqref{eq:gdnn-fin}.
    \item \textbf{Population Gradient Descent Ascent (PGDA).} We consider the following population gradient descent ascent dynamics for $\Breve{\theta}^{(N)}(k)$ and $\Breve{\ow}^{(N)}(k)$,  where $k \in \mathbb{N}$, with $\Breve{\theta}_{i}(0) = \theta_{i}(0)$, $\Breve{\ow}_{i}(0) = \ow_{i}(0) \; (i \in[N])$ as its initialization,
    \begin{align}
        \label{eq:weak-pgd}
    	\Breve{\theta}_{i}(k+1) & = \Breve{\theta}_{i}(k) + \eta \epsilon \cdot \hat v^{f}(\Breve{\theta}_{i}(k); \Breve{\theta}^{(N)}(k), \Breve{\ow}^{(N)}(k)), \nonumber \\
        \Breve{\ow}_{i}(k+1) & = w_{i}(k) + \eta \epsilon \cdot \hat v^{g}(\Breve{\ow}_{i}(k); \Breve{\theta}^{(N)}(k), \Breve{\ow}^{(N)}(k)).
    \end{align}
    \item \textbf{Continuous-time Population Gradient Descent Ascent (CTPGDA).} We consider the following continuous time population gradient descent ascent dynamics for $\tilde{\theta}^{(N)}(t)$ and $\tilde{\ow}^{(N)}(t)$, where $t \in \RR_{+}$, with $\tilde{\theta}_{i}(0) = \theta_{i}(0)$, $\tilde{\ow}_{i}(0) = \ow_{i}(0) \; (i \in[N])$ as initialization,
    \begin{align}
        \label{eq:weak-ctpgd}
    	\frac{\mathrm{d}}{\mathrm{d} t} \tilde{\theta}_i(t) & = \eta  \cdot \hat v^{f}(\tilde{\theta}_{i}(t); \tilde{\theta}^{(N)}(t), \tilde{\ow}^{(N)}(t)), \qquad \frac{\mathrm{d}}{\mathrm{d} t} \tilde{\ow}_i(t) & = \eta \cdot \hat v^{g}(\tilde{\ow}_{i}(t); \tilde{\theta}^{(N)}(t), \tilde{\ow}^{(N)}(t)).
    \end{align}
    \item \textbf{Ideal particle (IP). } We consider the following ideal particle dynamics for $\bar \theta^{(N)}(t)$ and $\bar w^{(N)}(t)$, where $t \in \RR_{+}$, with $\bar \theta_{i}(0) = \theta_{i}(0)$, $\bar w_{i}(0) = w_{i}(0) \; (i \in[N])$ as initialization,
    \begin{align}
    \label{eq:weak-ip}
    \frac{\mathrm{d}}{\mathrm{d} t} \bar \theta_i(t)   = \eta  \cdot v^{f}(\bar \theta_{i}(t); \rho_{t}), \qquad 
    \frac{\mathrm{d}}{\mathrm{d} t} \bar \ow_i(t)  = \eta \cdot v^{g}(\bar \ow_{i}(t); \rho_{t}).
    \end{align}
\end{itemize}
We aim to prove that $\hat \rho_{k} = N^{-1} \cdot \sum_{i = 1}^{N} \delta_{\theta_{i}(k)} \cdot \delta_{w_{i}(k)}$ weakly converges to $\rho_{k\epsilon}$. For any continuous function $h$ that satisfies the assumptions of Proposition \ref{prop:weak-formal}, using the IP, CTPGDA, and PGDA dynamics as interpolating dynamics, we have,
\begin{align}
\label{eq:four-term}
& \overbrace{\Bigl| \int h(\theta, \ow) \mathrm{d} \rho_{k \epsilon}(\theta, \ow)-\int h(\theta,\ow ) \mathrm{d} \hat{\rho}_k(\theta, \ow) \Bigr|}^{\mathrm{PDE}-\mathrm{SGDA}} \nonumber \\
&\qquad \leq \underbrace{\left|\int h(\theta, \ow) \mathrm{d} \rho_{k \epsilon}(\theta)-N^{-1} \cdot \sum_{i=1}^N h\left(\bar{\theta}_i(k \epsilon), \bar{\ow}_{i}(k \epsilon) \right)\right|}_{\mathrm{PDE}-\mathrm{IP}}+\underbrace{\left\|(\bar{\theta}, \bar{\ow})^{(N)}(k \epsilon)-(\tilde{\theta}, \tilde{\ow})^{(N)}(k \epsilon)\right\|_{(N)}}_{\mathrm{IP}-\mathrm{CTPGDA}} \nonumber \\
& \qquad +\underbrace{\left\|(\tilde{\theta}, \tilde{\ow})^{(N)}(k \epsilon)-(\Breve{\theta}, \Breve{\ow}) ^{(N)}(k)\right\|_{(N)}}_{\mathrm{CTPGDA}-\mathrm{PGDA}}+\underbrace{\left\|(\Breve{\theta}, \Breve{\ow})^{(N)}(k)-(\theta, \ow)^{(N)}(k)\right\|_{(N)}}_{\mathrm{PGDA} - \mathrm{SGDA}}.
\end{align}
The last inequality follows from the fact that $\operatorname{Lip}(h) \leq 1$. Here the norm $\|\cdot\|_{(N)}$ denotes the supremum norm over the sequence of vectors $(\theta, w)^{(N)} = \{(\theta_{i}, w_{i})\}_{i = 1}^{N}$,
\begin{align}
\label{eq:sup-norm}
\Bigl\|(\theta, \ow)^{(N)}\Bigr\|_{(N)} = \sup_{i \in [N]} ~ \Bigl\|(\theta_{i}, \ow_{i})\Bigr\|.
\end{align}
In what follows, we define $B > 0$ as a constant with its value varying from line to line. We establish the following lemmas as upper-bound of the four terms on right-hand side of \eqref{eq:four-term}.
\begin{lemma}[Upper Bound of $\mathrm{PDE} - \mathrm{IP}$]
\label{lem:pde-ip}
Under Assumption \ref{asp:reg} and \ref{asp:compact}, with probability at least $1 - \delta$, it holds that
\begin{align}
\label{eq:lem-pde-ip}
\sup_{t \in [0,T]} \Bigl| \int h(\theta, \ow) \mathrm{d}\rho_{t}(\theta, \ow) - N^{-1} \sum_{i = 1}^{N} h\bigl(\bar \theta_{i}(t), \bar \ow_{i}(t)\bigr) \Bigr| \leq B \cdot \sqrt{\log (N T / \delta) / N}. 
\end{align}

\end{lemma}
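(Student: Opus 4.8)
The plan is to bound the quantity $|\int h\,\rd\rho_t - N^{-1}\sum_i h(\bar\theta_i(t),\bar\ow_i(t))|$ by exploiting the fact that the ideal-particle dynamics in \eqref{eq:weak-ip} is, by construction, a system of \emph{decoupled} McKean--Vlasov particles: each $(\bar\theta_i(t),\bar\ow_i(t))$ evolves under the vector field $(v^f(\cdot;\rho_t),v^g(\cdot;\rho_t))$ driven by the \emph{deterministic} PDE solution $\rho_t$, and crucially each particle starts from $(\theta_i(0),\ow_i(0))\stackrel{\mathrm{i.i.d.}}{\sim}\mu_0\otimes\nu_0$. First I would observe that, because the flow map $\Theta_t\colon(\theta,\ow)\mapsto(\bar\theta(t),\bar\ow(t))$ solving the ODE $\dot{\bar\theta}=\eta v^f(\bar\theta;\rho_t)$, $\dot{\bar\ow}=\eta v^g(\bar\ow;\rho_t)$ is a single deterministic map applied to each initial particle, the pushforward $(\Theta_t)_\sharp\rho_0$ is exactly $\rho_t$. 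This last identity follows from uniqueness of the continuity equation \eqref{eq:pde}: the law of $\Theta_t$ applied to $\rho_0$ satisfies the same continuity equation with the same initial data, hence equals $\rho_t$. Consequently $N^{-1}\sum_i h(\bar\theta_i(t),\bar\ow_i(t))$ is an empirical average of $N$ i.i.d.\ samples of the random variable $h(\Theta_t(\theta,\ow))$ with $(\theta,\ow)\sim\rho_0$, whose mean is precisely $\int h\,\rd\rho_t$.

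The remaining work is a uniform-in-time concentration argument. For fixed $t$, Hoeffding's inequality gives $|\int h\,\rd\rho_t - N^{-1}\sum_i h(\bar\theta_i(t),\bar\ow_i(t))|\le B\sqrt{\log(1/\delta)/N}$ with probability $1-\delta$, using $\|h\|_\infty\le 1$. To upgrade this to a supremum over $t\in[0,T]$ I would first establish that $t\mapsto \int h\,\rd\rho_t$ and $t\mapsto N^{-1}\sum_i h(\bar\theta_i(t),\bar\ow_i(t))$ are Lipschitz in $t$ with a constant $B$ that is uniform in the randomness; this uses $\operatorname{Lip}(h)\le 1$ together with a uniform bound on the particle velocities $|v^f|,|v^g|\le B$, which itself follows from Assumption~\ref{asp:reg} (boundedness of $\nabla_\theta\phi$, $\nabla_\ow\psi$ by $B_1$) and Assumption~\ref{asp:compact} (the integrability bounds on $\delta\Phi/\delta f$, $\delta\Psi/\delta f$ by $C_2$, together with the a priori boundedness of $f(\cdot;\rho_t)$ and $g(\cdot;\rho_t)$ — the latter obtained as in \eqref{eq:maa2}--\eqref{eq:maa32}, or more simply from $|\phi|,|\psi|\le B_0$). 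Then a standard chaining/net argument over a discretization of $[0,T]$ into $O(NT)$ points, combined with a union bound, converts the pointwise Hoeffding estimate into the uniform bound $\sup_{t\in[0,T]}|\cdots|\le B\sqrt{\log(NT/\delta)/N}$.

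I expect the main obstacle to be the velocity-boundedness step: one must confirm that the vector fields $v^f(\cdot;\rho_t)$ and $v^g(\cdot;\rho_t)$ — and hence the IP trajectories $(\bar\theta_i(t),\bar\ow_i(t))$ — remain bounded uniformly on $[0,T]$, since $\alpha$ appears as a multiplicative factor in \eqref{eq:dir-restate} and $f(\cdot;\rho_t)$, $g(\cdot;\rho_t)$ a priori depend on how far $\rho_t$ has drifted from $\rho_0$. Here the clean route is to use $|\phi|,|\psi|\le B_0$ directly, giving $|f(\cdot;\rho_t)|\le\alpha B_0$ and $|g(\cdot;\rho_t)|\le\alpha B_0$ for \emph{all} $t$ without invoking any Wasserstein control, so that $|v^f|,|v^g|\le B$ with $B$ depending on $\alpha,\eta,\lambda,B_0,B_1,B_2,C_2$; the constant is allowed to depend on $\alpha$ per the statement of Proposition~\ref{prop:weak-formal}. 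With this bound in hand, Grönwall's inequality also yields a uniform bound on $\sup_{t\in[0,T]}\|(\bar\theta_i(t),\bar\ow_i(t))\|$, completing the Lipschitz-in-$t$ estimate and hence the proof of Lemma~\ref{lem:pde-ip}.
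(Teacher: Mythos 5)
Your proposal is essentially the paper's own proof. The paper also uses (a) the fact that the ideal-particle dynamics pushes $\rho_0$ forward to $\rho_t$ (citing Proposition~8.1.8 of \citet{ambrosio2008gradient}, which is exactly your flow-map argument), so the $(\bar\theta_i(t),\bar\ow_i(t))$ are i.i.d.\ with law $\rho_t$; (b) a pointwise concentration bound (the paper uses McDiarmid's bounded-differences inequality rather than Hoeffding, but these are interchangeable here since $\|h\|_\infty\le 1$); and (c) a Lipschitz-in-$t$ estimate — controlling $|\int h\,\rd\rho_t-\int h\,\rd\rho_s|$ via $\cW_1(\rho_t,\rho_s)\le\cW_2(\rho_t,\rho_s)$ and the increments $\|\bar\gamma^{(N)}(t)-\bar\gamma^{(N)}(s)\|_{(N)}$, both of which come from the uniform velocity bound you identify — followed by a union bound over an $N^{-1/2}$-net of $[0,T]$, exactly your plan.
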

\begin{lemma}[Upper Bound of $\mathrm{IP} - \mathrm{CTPGDA}$]
\label{lem:ip-ctpgd}
Under Assumption \ref{asp:reg} and \ref{asp:compact}, with probability at least $1 - 2\delta$, it holds that
\begin{align}
\label{eq:lem-ip-ctpgd}
\sup_{t \in [0,T]} \Bigl\|(\bar \theta, \bar \ow)^{(N)}(t) - (\tilde \theta, \tilde \ow)^{(N)}(t) \Bigr\|_{(N)} \leq B \cdot e^{BT} \cdot \sqrt{\log(N/\delta)/N}.
\end{align}
\end{lemma}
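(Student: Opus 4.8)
The plan is to control
\[
\Delta(t):=\bigl\|(\bar\theta,\bar\ow)^{(N)}(t)-(\tilde\theta,\tilde\ow)^{(N)}(t)\bigr\|_{(N)}
\]
by a Gr\"onwall argument. Since the systems \eqref{eq:weak-ip} and \eqref{eq:weak-ctpgd} share the same initialization, $\Delta(0)=0$, and subtracting the two ODEs and integrating gives, for every $i\in[N]$,
\[
\bar\theta_i(t)-\tilde\theta_i(t)=\eta\int_0^t\Bigl(v^f\bigl(\bar\theta_i(s);\rho_s\bigr)-\hat v^f\bigl(\tilde\theta_i(s);\tilde\theta^{(N)}(s),\tilde\ow^{(N)}(s)\bigr)\Bigr)\,\rd s,
\]
and analogously for the $\ow$-components with $v^g,\hat v^g$. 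I would insert the intermediate term $\hat v^f\bigl(\bar\theta_i(s);\bar\theta^{(N)}(s),\bar\ow^{(N)}(s)\bigr)$ — the finite-width velocity field evaluated along the ideal-particle trajectory — and split the integrand into a \emph{concentration} error $\mathrm{(A)}=v^f(\bar\theta_i(s);\rho_s)-\hat v^f(\bar\theta_i(s);\bar\theta^{(N)}(s),\bar\ow^{(N)}(s))$ and a \emph{stability} error $\mathrm{(B)}=\hat v^f(\bar\theta_i(s);\bar\theta^{(N)}(s),\bar\ow^{(N)}(s))-\hat v^f(\tilde\theta_i(s);\tilde\theta^{(N)}(s),\tilde\ow^{(N)}(s))$.

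For $\mathrm{(A)}$, the crucial point is that $\rho_s=\mu_s\otimes\nu_s$ is deterministic and $\bar\theta_i(0)\sim\mu_0$, $\bar\ow_i(0)\sim\nu_0$ are i.i.d., so the ideal particles $\{\bar\theta_i(s)\}_{i\in[N]}$ are i.i.d.\ with law $\mu_s$ and $\{\bar\ow_i(s)\}_{i\in[N]}$ are i.i.d.\ with law $\nu_s$; hence $f(\cdot;\bar\theta^{(N)}(s))$ and $g(\cdot;\bar\ow^{(N)}(s))$ are empirical averages of $N$ i.i.d.\ bounded terms approximating $f(\cdot;\mu_s)$ and $g(\cdot;\nu_s)$. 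Because $\Phi$ is affine in $f$ (so $\delta\Phi/\delta f$ does not depend on $f$) and $\delta\Psi/\delta f$ is linear in $f$, the velocity field $\hat v^f$ depends on the particle configuration only through these two averages, in a linear and bounded way controlled by the bounds $\|\nabla_\theta\phi\|\le B_1$, $\int_\cW|\delta\Phi/\delta f|\le C_2$, $\int_\cW|\delta\Psi/\delta f|\le C_2$, and $\int_\cW|\delta\Psi/\delta f|\le C_\Psi|f(w)|$ of Assumptions \ref{asp:reg} and \ref{asp:compact}. Thus $\sup_{i,s}\|\mathrm{(A)}\|$ is bounded by a constant times $\sup_{s\le T,\,w\in\cW}|f(w;\bar\theta^{(N)}(s))-f(w;\mu_s)|+\sup_{s\le T,\,z\in\cZ}|g(z;\bar\ow^{(N)}(s))-g(z;\nu_s)|$, and this uniform law of large numbers is exactly of the type established in Lemma \ref{lem:pde-ip}: a covering of the compact sets $\cW,\cZ$ together with a discretization of $[0,T]$ (whose flows are Lipschitz in $t$ since the velocity fields are bounded) and Hoeffding's inequality give, on an event of probability at least $1-2\delta$ (one event for the $\theta$-averages, one for the $\ow$-averages), $\sup_{i,s}\|\mathrm{(A)}\|\le B\sqrt{\log(NT/\delta)/N}$.

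For $\mathrm{(B)}$, I would establish a Lipschitz estimate for $\hat v^f$ and $\hat v^g$: perturbing every particle in a configuration by at most $r$ changes $f(\cdot;\theta^{(N)})=\tfrac{\alpha}{N}\sum_j\phi(\cdot;\theta_j)$ uniformly by at most $\alpha B_1 r$ and likewise $g(\cdot;\ow^{(N)})$, while $\nabla_\theta\phi(\cdot;\theta)$ is $B_2$-Lipschitz in $\theta$; combining these with the integrability bounds of Assumption \ref{asp:compact} and the boundedness of $\Phi(\cdot,\cdot;\boldsymbol 0)$ on the compact data space, one obtains $\|\hat v^f(\theta;\theta^{(N)},\ow^{(N)})-\hat v^f(\theta';\theta'^{(N)},\ow'^{(N)})\|\le B\bigl(\|\theta-\theta'\|+\|\theta^{(N)}-\theta'^{(N)}\|_{(N)}+\|\ow^{(N)}-\ow'^{(N)}\|_{(N)}\bigr)$, and similarly for $\hat v^g$, with $B$ depending only on $\alpha,\eta,\lambda,B_0,B_1,B_2$ and the problem constants. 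Applied to the ideal-particle and CTPGD configurations, term $\mathrm{(B)}$ for particle $i$ is at most $B\,\Delta(s)$. Putting the two bounds together, on the intersection event,
\[
\Delta(t)\le\eta\int_0^t\Bigl(B\sqrt{\log(NT/\delta)/N}+B\,\Delta(s)\Bigr)\rd s\le\eta B T\sqrt{\log(NT/\delta)/N}+\eta B\int_0^t\Delta(s)\,\rd s,
\]
and Gr\"onwall's inequality yields $\Delta(t)\le\eta B T\sqrt{\log(NT/\delta)/N}\,e^{\eta B T}$ for all $t\le T$; absorbing $\eta$, $T$ and the $\log T$ factor into the constant and the exponential gives the claimed $B e^{BT}\sqrt{\log(N/\delta)/N}$.

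The main obstacle is step $\mathrm{(A)}$: making the concentration of the empirical neural-network outputs along the ideal-particle flow uniform over $s\in[0,T]$, $w\in\cW$, $z\in\cZ$ and $i\in[N]$ simultaneously, which is where the i.i.d.\ structure of the ideal particles, the uniform boundedness from Assumption \ref{asp:reg}, and the covering/time-discretization machinery underlying Lemma \ref{lem:pde-ip} all have to be combined carefully. The Lipschitz estimate $\mathrm{(B)}$ and the Gr\"onwall closing step are routine.
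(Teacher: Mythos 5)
Your decomposition is the same as the paper's: insert the intermediate velocity field $\hat v^f(\bar\theta_i(s);\bar\theta^{(N)}(s),\bar\ow^{(N)}(s))$, split into a concentration term $\mathrm{(A)}$ and a Lipschitz stability term $\mathrm{(B)}$, and close with Gr\"onwall. Part $\mathrm{(B)}$ matches \eqref{eq:bd8}--\eqref{eq:bd9} of Lemma \ref{lem:weak-all-bd}, and the Gr\"onwall step is exactly what the paper does, so those pieces are fine.

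Where you depart from the paper — and where a gap opens up — is in how you handle $\mathrm{(A)}$. You first pass from $\|\mathrm{(A)}\|$ to $\sup_{w\in\cW}|f(w;\bar\theta^{(N)}(s))-f(w;\mu_s)|+\sup_{z\in\cZ}|g(z;\bar\ow^{(N)}(s))-g(z;\nu_s)|$ and then propose a uniform LLN via a covering of $\cW,\cZ$ plus Hoeffding. A covering argument over $\cW\times\cZ$ requires Lipschitz continuity of $\phi(\cdot;\theta)$ and $\psi(\cdot;\ow)$ in their \emph{first} argument, uniformly over the parameter. Assumption \ref{asp:reg} only controls $\nabla_\theta\phi$ and $\nabla^2_{\theta\theta}\phi$, not $\nabla_w\phi$; for the concrete neuron $\phi(w;\theta)=b(\beta)\sigma(\tilde\theta^\top(w,1))$ the input-Lipschitz constant grows like $\|\tilde\theta\|$, which under Gaussian initialization is unbounded, so the covering step would need a further high-probability truncation of the parameters and would pick up extra $\operatorname{poly}(D,\log N)$ factors. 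This is avoidable: the quantity $\hat v^f(\bar\theta_i(s);\bar\theta^{(N)}(s),\bar\ow^{(N)}(s))-v^f(\bar\theta_i(s);\rho_s)$ is already an $\RR^D$-valued quantity with the $\EE_\cD$ integral \emph{inside}, so there is nothing to cover. The paper writes it as $\alpha^2 N^{-1}\sum_{j}Z_i^j(s)$, observes that for $j\neq i$ the $Z_i^j(s)$ are bounded and zero-mean conditionally on $\bar\theta_i(s)$ (because $\bar\theta_j(s)\overset{\text{i.i.d.}}{\sim}\mu_s$, $\bar\ow_j(s)\overset{\text{i.i.d.}}{\sim}\nu_s$), applies the vector concentration bound of Lemma \ref{lem:mc-bd}, bounds the $j=i$ term deterministically by $B/N$, and union-bounds over $i\in[N]$ and a $B$-Lipschitz (via Lemma \ref{lem:weak-ct-bd}) time grid of mesh $N^{-1/2}$. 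That yields $\sup_{i,s}\|\mathrm{(A)}\|\le B\sqrt{\log(NT/\delta)/N}$ on an event of probability $\ge 1-\delta$ for the $f$-component and similarly for the $g$-component, giving $1-2\delta$ in total, without ever invoking smoothness of $\phi$ or $\psi$ in the data variable. In short, your sup-over-$\cW,\cZ$ detour is unnecessary and, as written, cannot be justified from the stated assumptions; replacing it with the conditional concentration of the pre-integrated $\RR^D$-valued averages recovers the paper's argument.
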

\begin{lemma}[Upper Bound of $\mathrm{CTPGDA} - \mathrm{PGDA}$]
\label{lem:ctpgd-pgd}
Under Assumption \ref{asp:reg} and \ref{asp:compact}, it holds that
\begin{align}
\label{eq:lem-ctpgd-pgd}
\sup_{k \leq T/\epsilon} \Bigl\|(\tilde \theta, \tilde \ow)^{(N)}(k\epsilon) - (\Breve{\theta}, \Breve{\ow})^{(N)}(k))\Bigr\|_{(N)} \leq B \cdot e^{BT} \cdot \epsilon.
\end{align}
\end{lemma}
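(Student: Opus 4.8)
The plan is to recognize that PGD in \eqref{eq:weak-pgd} is exactly the explicit (forward) Euler discretization, with stepsize $\epsilon$, of the CTPGD ODE system \eqref{eq:weak-ctpgd} — the two share the initialization $\Breve\theta_i(0)=\tilde\theta_i(0)=\theta_i(0)$, $\Breve\ow_i(0)=\tilde\ow_i(0)=\ow_i(0)$ — and then invoke the classical error analysis of the Euler method: if the driving field is bounded and Lipschitz, the accumulated discretization error over $k\le T/\epsilon$ steps is $\cO(\epsilon\, e^{\cO(T)})$, via a discrete Gr\"onwall inequality. The only problem-specific input is to check that the finite-width drift $(\hat v^f,\hat v^g)$ of \eqref{eq:dir-fin} is bounded and Lipschitz with constants \emph{uniform in the width $N$}.

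First I would record two facts about the maps $\Theta\mapsto\hat v^f(\theta;\Theta)$ and $\Theta\mapsto\hat v^g(\ow;\Theta)$, where $\Theta=(\theta^{(N)},\ow^{(N)})$ denotes a particle configuration and $\|\cdot\|_{(N)}$ is the sup-norm of \eqref{eq:sup-norm}. (i) Uniform boundedness: $|\phi|,|\psi|\le B_0$ and $\|\nabla\phi\|,\|\nabla\psi\|\le B_1$ from Assumption~\ref{asp:reg}, together with $\int_{\cW}|\delta\Phi/\delta f|,\int_{\cW}|\delta\Psi/\delta f|\le C_2$ from item~(ii) of Assumption~\ref{asp:compact} and compactness of $\cX\times\cZ$, bound the pairings $\langle\delta\Phi/\delta f,\nabla_\theta\phi(\cdot;\theta)\rangle_{L^2}$ and $\langle\delta\Psi/\delta f,\nabla_\theta\phi(\cdot;\theta)\rangle_{L^2}$ by $C_2B_1$ and bound $|g(Z;\ow^{(N)})|$ by $\alpha B_0$, hence $\|\hat v^f\|,\|\hat v^g\|\le L_1$ for a constant $L_1$ depending only on $\alpha,\eta,\lambda$ and the constants of Assumptions~\ref{asp:reg}--\ref{asp:compact}; in particular the concatenated field on $\RR^{2ND}$ is globally bounded and Lipschitz, so the CTPGD ODE has a unique global solution. (ii) Uniform Lipschitzness: since $\Phi$ is affine in $f$, $\delta\Phi/\delta f$ does not depend on $\Theta$; since $\delta\Psi/\delta f$ is linear in $f$ by \eqref{eq:reg-property2} and $f(\cdot;\theta^{(N)})=\tfrac{\alpha}{N}\sum_j\phi(\cdot;\theta_j)$, the $\Psi$-term and the factor $g(\cdot;\ow^{(N)})$ depend on $\Theta$ only through averages of $\phi,\psi$; combining $\|\nabla^2\phi\|_F,\|\nabla^2\psi\|_F\le B_2$ (which makes $\theta\mapsto\nabla_\theta\phi(\cdot;\theta)$ and $\ow\mapsto\nabla_\ow\psi(\cdot;\ow)$ Lipschitz) with items~(ii)--(iii) of Assumption~\ref{asp:compact} yields a constant $L_2$, again independent of $N$, with
\[
\|\hat v^f(\theta;\Theta)-\hat v^f(\theta';\Theta')\|\le L_2\bigl(\|\theta-\theta'\|+\|\Theta-\Theta'\|_{(N)}\bigr),
\]
and the analogous bound for $\hat v^g$. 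The $N$-independence in (ii) is exactly because the coupling between particles is of mean-field type — an average over $N$ particles — so a uniform perturbation of the configuration by $\delta$ perturbs each particle's drift by $\cO(\delta)$ rather than $\cO(N\delta)$.

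With these estimates, set $\Delta_k:=\|(\tilde\theta,\tilde\ow)^{(N)}(k\epsilon)-(\Breve\theta,\Breve\ow)^{(N)}(k)\|_{(N)}$, so $\Delta_0=0$. Integrating \eqref{eq:weak-ctpgd} over $[k\epsilon,(k+1)\epsilon]$ and subtracting the Euler step \eqref{eq:weak-pgd}, the per-step error splits into (a) a local truncation term: within the interval the CTPGD trajectories move by at most $\eta L_1\epsilon$ in $\|\cdot\|_{(N)}$ by boundedness, so the Lipschitz bound replaces $\hat v^f(\tilde\theta_i(s);\cdot)$ by its value at time $k\epsilon$ up to $\cO(\eta L_1 L_2\epsilon)$, and integrating over the length-$\epsilon$ interval and multiplying by $\eta$ gives a local error $\cO(\eta^2 L_1 L_2\epsilon^2)$; and (b) a propagated-error term $\cO(\eta L_2\epsilon)\cdot\Delta_k$ from replacing $(\tilde\theta,\tilde\ow)(k\epsilon)$ by $(\Breve\theta,\Breve\ow)(k)$ inside the drift. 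Taking the supremum over particles and over the two coordinates yields $\Delta_{k+1}\le(1+c_1\epsilon)\Delta_k+c_2\epsilon^2$ with $c_1,c_2$ of order $\eta L_2$ and $\eta^2 L_1 L_2$. Since $\Delta_0=0$, unrolling gives $\Delta_k\le c_2\epsilon^2\sum_{j=0}^{k-1}(1+c_1\epsilon)^j\le\frac{c_2}{c_1}\epsilon\,(e^{c_1\epsilon k}-1)\le\frac{c_2}{c_1}\epsilon\, e^{c_1 T}$ for all $k\le T/\epsilon$; writing $B$ for a constant absorbing $c_2/c_1$ and $c_1$ gives $\sup_{k\le T/\epsilon}\Delta_k\le B\, e^{BT}\epsilon$, which is the claim.

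I do not anticipate a genuine difficulty here: Lemma~\ref{lem:ctpgd-pgd} is the purely deterministic ``time-discretization only'' estimate — no sampling noise, no concentration is involved — and the forward-Euler/discrete-Gr\"onwall argument of the previous paragraph is textbook. The one place the structure of the problem is used is the $N$-uniform boundedness and Lipschitz continuity of the finite-width drift, which rests on $\Phi$ being affine in $f$, $\delta\Psi/\delta f$ being linear in $f$, and the integrability bounds in items~(ii)--(iii) of Assumption~\ref{asp:compact}; these are precisely the estimates already used to control $\hat v^f,\hat v^g$ in the proof of Lemma~\ref{lem:ip-ctpgd}, so I would reuse them rather than re-derive them, and the rest is mechanical.
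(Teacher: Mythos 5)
Your proposal is correct and takes essentially the same approach as the paper: both view PGD as the forward Euler discretization of CTPGD, both use the $N$-uniform boundedness and Lipschitzness of $(\hat v^f,\hat v^g)$ (Lemma~\ref{lem:weak-all-bd}) together with the Lipschitz-in-$t$ bound on the CTPGD trajectory (Lemma~\ref{lem:weak-ct-bd}) to split the per-step error into a local truncation term of size $\cO(\epsilon^2)$ and a propagated term proportional to the accumulated error, and both close with a discrete Gr\"onwall argument. The only cosmetic difference is that you derive the one-step recursion $\Delta_{k+1}\le(1+c_1\epsilon)\Delta_k+c_2\epsilon^2$ and unroll it directly, whereas the paper first accumulates the error over $\ell\le k-1$ to obtain $\Delta_k\le B T\epsilon+B\sum_{\ell<k}\Delta_\ell$ and then invokes the discrete Gr\"onwall lemma; the two formulations are equivalent.
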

\begin{lemma}[Upper Bound of $\mathrm{PGDA} - \mathrm{SGDA}$]
\label{lem:pgd-sgd}
Under Assumption \ref{asp:reg} and \ref{asp:compact}, with probability at least $1 - 2\delta$, it holds that
\begin{align}
\label{eq:lem-pgd-sgd}
\sup_{k \leq T/\epsilon} \Bigl\|(\Breve{\theta}, \Breve{\ow})^{(N)}(k)) - (\theta, w)^{(N)}(k)\Bigr\|_{(N)} \leq B \cdot e^{BT} \cdot \sqrt{\epsilon \cdot (D + \log (N/\delta)}.
\end{align}
\end{lemma}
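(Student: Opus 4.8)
The statement to prove is Lemma~\ref{lem:pgd-sgd}, bounding the deviation between the population gradient descent (PGD) dynamics in \eqref{eq:weak-pgd} and the stochastic gradient descent (SGD) dynamics in \eqref{eq:weak-sgd}, both run from the same initialization and with the same stepsize.

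\textbf{Overall strategy.} The plan is to couple the two discrete dynamics step by step and track the growth of the error $\Delta_k := \|(\Breve\theta,\Breve\ow)^{(N)}(k) - (\theta,\ow)^{(N)}(k)\|_{(N)}$. At each step the increment of the SGD trajectory is $\eta\epsilon\cdot\hat V_k^{f,g}(\cdot;\theta^{(N)}(k),\ow^{(N)}(k))$, whereas the PGD increment is $\eta\epsilon\cdot\hat v^{f,g}(\cdot;\Breve\theta^{(N)}(k),\Breve\ow^{(N)}(k))$. Writing the difference of increments as a sum of (a) a \emph{bias/Lipschitz term} $\hat v^{f,g}(\cdot;\Breve\theta^{(N)},\Breve\ow^{(N)}) - \hat v^{f,g}(\cdot;\theta^{(N)},\ow^{(N)})$, controlled by the Lipschitz continuity of $\hat v^{f,g}$ in its particle arguments, and (b) a \emph{noise term} $\hat v^{f,g}(\cdot;\theta^{(N)},\ow^{(N)}) - \hat V_k^{f,g}(\cdot;\theta^{(N)},\ow^{(N)})$, which is a mean-zero martingale-difference (conditioned on the current particles, since $(x_k,z_k)\sim\cD$ independently), I expect to obtain a recursion of the form $\Delta_{k+1}\le (1+B\eta\epsilon)\Delta_k + (\text{accumulated noise})$. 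Summing over $k\le T/\epsilon$ and applying Grönwall's inequality (discrete form) yields the $e^{BT}$ prefactor; the noise contributes the $\sqrt{\epsilon(D+\log(N/\delta))}$ term after a maximal-inequality / concentration argument.

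\textbf{Key steps in order.} First, establish uniform boundedness and Lipschitz estimates for the finite-width fields $\hat v^f,\hat v^g$ (and, where needed, $\hat V_k^f,\hat V_k^g$) as functions of both the evaluation point $\theta$ (resp.\ $\ow$) and the particle configuration $(\theta^{(N)},\ow^{(N)})$; these follow from Assumption~\ref{asp:reg} (the bounds $B_0,B_1,B_2$ on $\phi,\psi$ and their derivatives) together with Assumption~\ref{asp:compact} (compact data space giving $|\Phi(x,z;0)|$ bounded, and the integrability bounds $C_2$, $C_\Psi$ controlling the $L^2$-pairings against $\delta\Phi/\delta f$, $\delta\Psi/\delta f$), using that $f(\cdot;\theta^{(N)})=\tfrac{\alpha}{N}\sum\phi(\cdot;\theta_i)$ changes by at most $\alpha B_1\cdot\Delta$ when the particles move by $\Delta$. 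Second, decompose the one-step error as bias plus noise as above, so that $\Delta_{k+1}\le\Delta_k + \eta\epsilon\cdot L\cdot\Delta_k + \eta\epsilon\cdot\|\xi_k\|_{(N)}$, where $\xi_k$ collects the per-particle noise terms $\hat v^{f,g}(\theta_i(k);\cdot)-\hat V_k^{f,g}(\theta_i(k);\cdot)$. Third, control $\sup_{k\le T/\epsilon}\|\sum_{j<k}\eta\epsilon\,\xi_j\|_{(N)}$: conditionally on the filtration, each $\eta\epsilon\,\xi_j$ is a bounded (by $\cO(\eta\epsilon\alpha)$) mean-zero increment, so the partial sums form a martingale in $\RR^D$ per coordinate $i$; apply a vector Azuma/Bernstein-type bound together with a union bound over the $N$ particles (hence $\log(N/\delta)$) and over the $\le T/\epsilon$ time steps, and the $D$ appears from controlling the Euclidean norm in $\RR^D$ (e.g.\ via a union bound over coordinates or a chaining/covering argument). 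The variance of the sum of $T/\epsilon$ increments each of size $\cO(\eta\epsilon)$ scales like $(T/\epsilon)(\eta\epsilon)^2 = \eta^2\epsilon T$, giving a fluctuation of order $\sqrt{\eta^2\epsilon T(D+\log(N/\delta))}\sim\sqrt{\epsilon(D+\log(N/\delta))}$ up to constants absorbed into $B$. Fourth, feed this into the discrete Grönwall inequality over $k\le T/\epsilon$ steps with per-step growth factor $(1+B\eta\epsilon)$, yielding $\sup_{k\le T/\epsilon}\Delta_k\le B e^{BT}\sqrt{\epsilon(D+\log(N/\delta))}$ on the high-probability event; since each of the two concentration events (one for the $f$-noise, one for the $g$-noise, or one handled jointly) can be taken to hold with probability $\ge 1-\delta$, a union bound gives probability $\ge 1-2\delta$ as claimed.

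\textbf{Main obstacle.} The delicate part is the concentration estimate for the accumulated noise with the correct dependence on $D$, $N$, and $\epsilon$ simultaneously, uniformly over all $k\le T/\epsilon$. One must be careful that the noise increments are not independent across $k$ (the particles, hence the conditional laws, evolve), so a martingale formulation rather than a naive i.i.d.\ bound is essential; and the supremum over $k$ requires either Doob's maximal inequality or a union bound over the $T/\epsilon$ time steps, whose logarithm must be checked not to spoil the stated rate (it is absorbed since $\log(T/\epsilon)$ is lower order, or can be folded into $B$ under the implicit scaling conventions of the excerpt). A secondary subtlety is that the Lipschitz constant $L$ of $\hat v^{f,g}$ in the particle arguments itself carries factors of $\alpha$, $\lambda$, $B_1$, $B_2$, $C_\Psi$, $C_2$; one must verify these are all absorbable into the single constant $B$ depending on $\alpha,\eta,\lambda,B_0,B_1,B_2$ as asserted, which is routine but must be done consistently with the analogous bounds used in Lemmas~\ref{lem:pde-ip}--\ref{lem:ctpgd-pgd}. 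This argument is standard propagation-of-chaos bookkeeping in the spirit of \cite{mei2018mean, mei2019mean, araujo2019mean, zhang2020can, sznitman1991topics}, and I would organize it to parallel the proof of Lemma~\ref{lem:ctpgd-pgd} (which handles the deterministic discretization error) with the added martingale concentration layer.
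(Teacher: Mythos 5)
Your proposal matches the paper's proof essentially step for step: decompose the per-step SGD$-$PGD discrepancy into a Lipschitz (bias) term controlled by Lemma~\ref{lem:weak-all-bd} and a martingale noise term, bound the accumulated martingale uniformly over $k\le T/\epsilon$ via the vector Azuma--Hoeffding maximal inequality (Lemma~\ref{lem:ah-bd}) with a union bound over the $N$ particles giving $\sqrt{D}+\sqrt{\log(N/\delta)}$, run this once for the $\theta$-dynamics and once for the $\ow$-dynamics (hence $1-2\delta$), and close with discrete Gr\"onwall to produce the $e^{BT}$ prefactor. The only cosmetic difference is that you float a union bound over time steps as an alternative to a maximal inequality, whereas the paper's Lemma~\ref{lem:ah-bd} already has the $\max_{k\le n}$ built in, so no separate union over $k$ is needed.
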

\noindent With these lemmas, we are now ready to present the proof of Proposition \ref{prop:weak-formal}. \\

\begin{proof}
See \S \ref{pf:pde-ip}, \ref{pf:ip-ctpgd}, \ref{pf:ctpgd-pgd}, \ref{pf:pgd-sgd} for detailed proofs for Lemma \ref{lem:pde-ip} to Lemma \ref{lem:pgd-sgd}. \vspace{5pt}

\noindent Plug in \eqref{eq:lem-pde-ip}, \eqref{eq:lem-ctpgd-pgd}, \eqref{eq:lem-ctpgd-pgd} and \eqref{eq:lem-pgd-sgd} to \eqref{eq:four-term} and condition on the intersection of events in Lemma \ref{lem:pde-ip}, \ref{lem:ip-ctpgd}, \ref{lem:ctpgd-pgd} and \ref{lem:pgd-sgd}, we have that
\begin{align*}
\Bigl| \int h(\theta, \ow) \mathrm{d} \rho_{k \epsilon}(\theta, \ow)-\int h(\theta,\ow ) \mathrm{d} \hat{\rho}_k(\theta, \ow) \Bigr| \leq B \cdot e^{BT} \cdot \Big(\sqrt{\log (N / \delta) / N}+\sqrt{\epsilon \cdot(D+\log (N / \delta))}\Big),
\end{align*}
with probability at least $1 - 5 \delta$. Thus, we complete the proof of Proposition \ref{prop:weak-formal}.
\end{proof}

\subsection{Proofs of Lemmas \ref{lem:pde-ip}-\ref{lem:pgd-sgd}}
In this section, we present the proofs of Lemmas \ref{lem:pde-ip}-\ref{lem:pgd-sgd}, which based heavily on \cite{mei2018mean, mei2019mean, araujo2019mean, zhang2020can}. The required supporting technical lemmas are in \S\ref{sec:pf-supp}. The constant $B$ presented in the proof is a positive constant whose values varies from line to line for notational simplicity.

\subsubsection{Proof of Lemma \ref{lem:pde-ip}} \label{pf:pde-ip}
\begin{proof}
We first consider the ideal particle dynamics in \eqref{eq:weak-ip}. It holds that $\bar \theta_{i}(t) \sim \mu_{t}, \bar \ow_{i}(t) \sim \nu_{t}, \; (i \in [N])$ (Proposition 8.1.8 in \cite{ambrosio2008gradient}). Since the randomness of $\bar \theta_{i}(t)$ and $\bar \ow_{i}(t)$ comes from $\theta_{i}(0)$ and $\ow_{i}(0)$ respectively while $\theta_{i}(0)$ and $\ow_{i}(0) \; (i \in [N])$ are independent, $\bar{\theta}_i(t) \stackrel{\mathrm{i.i.d.}}{\sim} \mu_t$, $\bar{\ow}_i(t) \stackrel{\mathrm{i.i.d.}}{\sim} \nu_t \; (i \in [N])$. Due to independence of $\bar \theta_{i}(t)$ and $\bar \ow_{i}(t)$, we also have $(\bar \theta_{i}(t), \bar \ow_{i}(t)) \stackrel{\mathrm{i.i.d.}}{\sim} \rho_{t} \; (i \in [N])$. This implies the following,
\begin{align*}
    \EE_{\rho_{t}} \Bigl[N^{-1} \cdot \sum_{i = 1}^{N}h(\bar \theta_{i}(t), \bar w_{i}(t))\Bigr] = \int h(\theta,\ow) \mathrm{d} \rho_{t}(\theta, \ow).
\end{align*}
For notational simplicity, we denote $\gamma_{i} = (\theta_{i}, \ow_{i})$, similar notations also generalize to $\bar \gamma_{i}, \tilde \gamma_{i}, \Breve{\gamma}_{i}$. Let $\gamma^{1, (N)} = \{\gamma_{1}, \dots, \gamma_{i}^{1}. \dots, \gamma_{N}\}$ and $\gamma^{2,(N)} = \{\gamma_{1}, \dots, \gamma_{i}^{2}, \dots, \gamma_{N}\}$ be two sets of variables that only differ in the $i$-th element. Then, by the assumption that $\|f\|_{\infty} \leq 1$, we have the following bounded difference property,
\begin{align*}
    \Bigl|N^{-1} \sum_{j = 1}^{N} h(\gamma_{j}^{1}) - N^{-1} \sum_{j = 1}^{N} h(\gamma_{j}^{2})\Bigr| = N^{-1} \cdot |h(\gamma_{i}^{1}) - h(\gamma_{i}^{2})| \leq 2/N.
\end{align*}
Applying McDiarmid's inequality \citep{wainwright2019high}, we have for a fixed $t \in [0,T]$ that 
\begin{align}
\label{eq:mcdiarmid}
\mathbb{P}\left(\left|N^{-1} \sum_{i=1}^N h\left(\bar{\gamma}_i(t)\right)-\int h(\gamma) \mathrm{d} \rho_t(\gamma)\right| \geq p\right) \leq \exp \left(-N p^2 / 4\right).
\end{align}
Moreover, we have for any $s, t \in [0,T]$ that,
\begin{align*}
&\left|\Bigl| N^{-1} \sum_{i=1}^N h\left(\bar{\gamma}_i(t)\right)-\int h(\gamma) \mathrm{d} \rho_t(\gamma)\Bigr|-\Bigl| N^{-1} \sum_{i=1}^N h\left(\bar{\gamma}_i(s)\right)-\int h(\gamma) \mathrm{d} \rho_s(\gamma)\Bigr|\right| \\
& \qquad  \leq \Big|N^{-1} \sum_{i=1}^N h\left(\bar{\gamma}_i(t)\right)-N^{-1} \sum_{i=1}^N h\left(\bar{\gamma}_i(s)\right)\Big|+\Big|\int h(\gamma) \mathrm{d} \rho_t(\gamma)-\int h(\gamma) \mathrm{d} \rho_s(\gamma)\Big| \\
& \qquad \leq\left\|\bar{\gamma}^{(N)}(t)-\bar{\gamma}^{(N)}(s)\right\|_{(N)}+\mathcal{W}_1\left(\rho_t, \rho_s\right) \leq\left\|\bar{\gamma}^{(N)}(t)-\bar{\gamma}^{(N)}(s)\right\|_{(N)}+\mathcal{W}_2\left(\rho_t, \rho_s\right) \\
& \qquad\leq \left\|\bar{\theta}^{(N)}(t)-\bar{\theta}^{(N)}(s)\right\|_{(N)}+ \left\|\bar{w}^{(N)}(t)-\bar{w}^{(N)}(s)\right\|_{(N)} + \mathcal{W}_2\left(\mu_t, \mu_s\right) + \mathcal{W}_2\left(\nu_t, \nu_s\right).
\end{align*}
where the second inequality follows from the fact that $\operatorname{Lip}(h) \leq 1$ and Lemma \ref{lem:dual-w1}. The last inequality follows from the definition of $\gamma^{(N)}$, \eqref{eq:sup-norm} and Lemma \ref{lem:w2prod}. Applying \eqref{eq:ct-bd1}, \eqref{eq:ct-bd5} of Lemma \ref{lem:weak-ct-bd}, we have for any $s, t \in [0,T]$ that
\begin{align*}
\left| \Bigl| N^{-1} \sum_{i = 1}^{N} h(\bar \gamma_{i}(t)) - \int h(\gamma) \mathrm{d} \rho_{t} \Bigl| - \Bigl| N^{-1} \cdot \sum_{i = 1}^{N} h(\bar \gamma_{i}(s)) - \int h(\gamma) \mathrm{d} \rho_{s} \Bigl| \right| \leq B \cdot \Bigl|t-s\Bigr|.
\end{align*}
Apply the union bound to \eqref{eq:mcdiarmid} for $t \in \iota \cdot\{0,1, \ldots,\lfloor T / \iota\rfloor\}$, we have that
\begin{align*}
\mathbb{P}\left(\sup _{t \in[0, T]}\left|N^{-1} \sum_{i=1}^N h\left(\bar{\gamma}_i(t)\right)-\int h(\gamma) \mathrm{d} \rho_t(\gamma)\right| \geq p+B \cdot \iota\right) \leq(T / \iota+1) \cdot \exp \left(-N p^2 / 4\right).
\end{align*}
Setting $\iota = N^{-1/2}$ and $p = B \cdot \sqrt{\log(NT/\delta)/N}$, we have that
\begin{align*}
\sup_{t \in[0, T]}\left|N^{-1} \sum_{i=1}^N h\left(\bar{\theta}_i(t), \bar{\ow}_{i}(t) \right)-\int h(\theta, \ow) \mathrm{d} \rho_t \right| \leq B \cdot \sqrt{\log (N T / \delta) / N}.
\end{align*}
with probability at least $1 - \delta$. Thus, we complete the proof of Lemma \ref{lem:pde-ip}.
\end{proof}

\subsubsection{Proof of Lemma \ref{lem:ip-ctpgd}} \label{pf:ip-ctpgd}
Following from the definition of $\tilde \theta_{i}(t)$, $\tilde w_{i}(t)$ and $\bar \theta_{i}(t)$, $\bar w_{i}(t)$ in \eqref{eq:weak-ctpgd} and \eqref{eq:weak-ip}. We have for any $i \in [N]$ and $t \in [0,T]$ that
\begin{align}
\bigl\|\bar \theta_{i}(t) - \tilde \theta_{i}(t)\bigr\| & \; \leq \int_{0}^{t} \Bigl\|\frac{\mathrm{d} \tilde \theta_{i}(s)}{\mathrm{d}s} - \frac{\mathrm{d} \bar \theta_{i}(s)}{\mathrm{d}s}\Bigr\| \mathrm{d} s \nonumber \\
& \; \leq \eta \cdot \int_{0}^{t} \Bigl\|\hat v^{f}(\tilde \theta_{i}(s); \tilde \theta^{(N)}(s), \tilde\ow^{(N)}(s)) - \hat v^{f}(\bar \theta_{i}(s); \bar \theta^{(N)}(s), \bar \ow^{(N)}(s)) \Bigr\| \mathrm{d}s \nonumber \\ 
& \qquad + \eta \cdot \int_{0}^{t} \Bigl\|\hat v^{f}(\bar \theta_{i}(s); \bar \theta^{(N)}(s), \bar \ow^{(N)}(s)) - v^{f}(\bar \theta_{i}(s); \rho_{s}) \Bigr\| \mathrm{d} s \nonumber \\
& \; \leq B \cdot \int_{0}^{t} \Bigl\|\bar \theta^{(N)}(s) - \tilde \theta^{(N)}(s) \Bigr\|_{(N)} + \Bigl\|\bar \ow^{(N)}(s) - \tilde \ow^{(N)}(s) \Bigr\|_{(N)} \mathrm{d} s \nonumber \\
& \qquad + \eta \cdot \int_{0}^{t} \Bigl\|\hat v^{f}(\bar \theta_{i}(s); \bar \theta^{(N)}(s), \bar \ow^{(N)}(s)) - v^{f}(\bar \theta_{i}(s); \rho_{s}), \Bigr\| \mathrm{d} s \label{eq:ip-ctpgd-bd1}
\end{align}
where the last inequality follows from \eqref{eq:bd8} of Lemma \ref{lem:weak-all-bd}. 
Similarly, we have that
\begin{align}
\bigl\|\bar \ow_{i}(t) - \tilde \ow_{i}(t)\bigr\| & \; \leq B \cdot \int_{0}^{t} \Bigl\|\bar \theta^{(N)}(s) - \tilde \theta^{(N)}(s) \Bigr\|_{(N)} + \Bigl\|\bar \ow^{(N)}(s) - \tilde \ow^{(N)}(s) \Bigr\|_{(N)} \mathrm{d} s \nonumber \\
& \qquad + \eta \cdot \int_{0}^{t} \Bigl\|\hat v^{g}(\bar \ow_{i}(s); \bar \theta^{(N)}(s), \bar \ow^{(N)}(s)) - v^{g}(\bar \ow_{i}(s); \rho_{s}) \Bigr\| \mathrm{d} s, \label{eq:ip-ctpgd-bd2}
\end{align}
where the inequality follows from \eqref{eq:bd9}. We now upper-bound the second term of \eqref{eq:ip-ctpgd-bd1} and \eqref{eq:ip-ctpgd-bd2}. We start with \eqref{eq:ip-ctpgd-bd1}. Following from the definition of $v^{f}$ and $\hat v^{f}$ in \eqref{eq:dir-restate} and \eqref{eq:dir-fin}, we have for any $s \in [0,T]$ and $i \in [N]$ that
\begin{align}
\label{eq:ip-ctpgd-z}
\Bigl\|\hat v^{f}(\bar \theta_{i}(s); \bar \theta^{(N)}(s), \bar w^{(N)}(s)) - v^{f}(\bar \theta_{i}(s); \rho_{s}) \Bigr\| = \alpha^{2} \cdot \Bigl\|N^{-1} \cdot \sum_{j = 1}^{N} Z_{i}^{j}(s) \Bigr\|,
\end{align}
where $Z_{i}^{j}(s)$ is given by,
\begin{align*}
Z_{i}^{j}(s) & = \EE_{\cD} \bigg[ \Big\langle \Big( \int \psi(Z; \ow) \rd \nu_s(\ow) - \psi(Z; \bar \ow_j(s))\Big) \cdot \frac{\delta \Phi(X,Z; f)}{\delta f} , \nabla_{\theta} \phi(\cdot; \bar \theta_i (s)) \Big\rangle_{L^{2}} \nonumber \\
& \qquad \qquad + \lambda \cdot \Big\langle \frac{\delta \Psi(X,Z; \int \phi(\cdot; \theta) \rd \mu_s(\theta))}{\delta f} - \frac{\delta \Psi(X,Z; \phi(\cdot; \bar \theta_j (s)))}{\delta f}, \nabla_{\theta} \phi(\cdot; \bar \theta_i(s)) \Big\rangle_{L^{2}}\bigg].
\end{align*}
Following from Assumption \ref{asp:reg} and \ref{asp:compact}, we have that $\|Z_{i}^{j}(s)\| \leq B$. When $j \neq i$, since $\bar \theta_{j}(s) \stackrel{\mathrm{i.i.d.}}{\sim} \mu_{s}, \bar \ow_{j}(s) \stackrel{\mathrm{i.i.d.}}{\sim} \nu_{s} \; (j \in [N])$, it holds that $\EE[Z_{i}^{j}(s) \biggiven \bar \theta_{i}(s)] = 0$. Following from Lemma \ref{lem:mc-bd}, we have for fixed $s \in [0,T]$ and $i \in [N]$ that
\begin{align}
\mathbb{P}\bigg(\Bigl\|N^{-1} \cdot \sum_{j \neq i} Z_i^j(s)\Bigr\| \geq B \cdot\left(N^{-1 / 2}+p\right)\bigg) &=\mathbb{E}\Bigl[\mathbb{P}\Bigl(\Bigl\|N^{-1} \cdot \sum_{j \neq i} Z_i^j(s)\Bigr\| \geq B \cdot\left(N^{-1 / 2}+p\right) \Big|~ \bar{\theta}_i(s)\Bigr)\Bigr] \nonumber\\
& \leq \exp \left(-N p^2\right). \label{eq:ip-ctpgd-z-bd}
\end{align}
From Lemma \ref{lem:dual-w1} and \eqref{eq:ct-bd5} of Lemma \ref{lem:weak-ct-bd}, we have that
\begin{align*}
\sup_{w \in \cW} \Bigl|\int \phi(w;\theta)\mathrm{d}\mu_{s}(\theta) - \int \phi(w;\theta)\mathrm{d}\mu_{t}(\theta)\Bigr| \leq B \cdot \cW_{1}(\mu_{s}, \mu_{t}) \leq B \cdot \cW_{2}(\mu_{s}, \mu_{t}) \leq B \cdot \bigl| s - t\bigr|.
\end{align*}
Following from Assumption \ref{asp:reg} and \ref{asp:compact}, Lemma \ref{lem:weak-ct-bd}, we have for any $s,t \in [0,T]$ that,
\begin{align*}
\left| \Bigl\| N^{-1} \cdot \sum_{j \neq i} Z_{i}^{j}(s) \Bigr\| - \Bigl\| N^{-1} \cdot \sum_{j \neq i} Z_{i}^{j}(t) \Bigr\|\right| \leq B \cdot \Bigl|t-s\Bigr|.
\end{align*}
Applying the union bound to \eqref{eq:ip-ctpgd-z-bd} for $i \in [N]$ and $t \in \iota \cdot\{0,1, \ldots,\lfloor T / \iota\rfloor\}$, we have that
\begin{align*}
\mathbb{P}\Bigl(\sup _{\substack{i \in[N] \\ s \in[0, T]}}\bigl\|N^{-1} \cdot \sum_{j \neq i} Z_i^j(s)\bigr\| \geq B \cdot\left(N^{-1 / 2}+p\right)+B \iota\Bigr) \leq N \cdot(T / \iota+1) \cdot \exp \left(-N p^2\right).
\end{align*}
Setting $\iota = N^{-1/2}$ and $p = B \cdot \sqrt{\log(NT/\delta)/N}$, we have that
\begin{align}
\label{eq:ip-ctpgd-z-supbd}
\sup_{\substack{i \in[N] \\ s\in [0,T]}} \Bigl\|N^{-1} \cdot \sum_{j \neq i} Z_{i}^{j}(s) \Bigr\| \leq B \cdot \sqrt{\log(NT/\delta)/N}.
\end{align}
with probability at least $1 - \delta$. Following from Assumption \ref{asp:reg}, when $i = j$, $\|N^{-1} Z_{i}^{i}(s) \| \leq B/N$ in \eqref{eq:ip-ctpgd-z}. Plugging \eqref{eq:ip-ctpgd-z-supbd} into \eqref{eq:ip-ctpgd-z}, with probability at least $1- \delta$, we have that
\begin{align}
\label{eq:ip-ctpgd-vfbd}
\sup_{\substack{i \in [N] \\ s \in [0,T]}} \Bigl\|\hat v^{f}(\bar \theta_{i}(s); \bar \theta^{(N)}(s), \bar \ow^{(N)}(s)) - v^{f}(\bar \theta_{i}(s); \rho_{s}) \Bigr\| & \leq  \sup_{i \in [N], s \in [0,T]} \alpha^{2} \cdot \Bigl\|N^{-1} \sum_{j = 1}^{N} Z_{i}^{j}(s) \Bigr\| \nonumber \\
& \leq B \cdot \sqrt{\log(NT/\delta)/N}.
\end{align}
Through similar arguments, with probability at least $1- \delta$, the second term of \eqref{eq:ip-ctpgd-bd2} holds
\begin{align}
\label{eq:ip-ctpgd-vgbd}
\sup_{\substack{i \in [N] \\ s \in [0,T]}} \Bigl\|\hat v^{g}(\bar w_{i}(s); \bar \theta^{(N)}(s), \bar \ow^{(N)}(s)) - v^{g}(\bar \ow_{i}(s); \rho_{s}) \Bigr\| \leq B \cdot \sqrt{\log(NT/\delta)/N}.
\end{align}
Now, conditioning on the intersection of event in \eqref{eq:ip-ctpgd-vfbd} and event in \eqref{eq:ip-ctpgd-vgbd}, the following holds simultaneously for any $t \in [0,T]$
\begin{align}
& \left\|\tilde{\theta}^{(N)}(t)-\bar{\theta}^{(N)}(t)\right\|_{(N)} \leq B \cdot \int_0^t\left\|\tilde{\theta}^{(N)}(s)-\bar{\theta}^{(m)}(s)\right\|_{(N)} \mathrm{d} s+B T \cdot \sqrt{\log (N T / \delta) / N} \label{eq:ip-ctpgd-gw1}\\
& \left\|\tilde{\ow}^{(N)}(t)-\bar{\ow}^{(N)}(t)\right\|_{(N)} \leq B \cdot \int_0^t \left\|\tilde{\ow}^{(N)}(s)-\bar{\ow}^{(N)}(s)\right\|_{(N)} \mathrm{d} s+B T \cdot \sqrt{\log (N T / \delta) / N} \label{eq:ip-ctpgd-gw2}
\end{align}
Summing \eqref{eq:ip-ctpgd-gw1} and \eqref{eq:ip-ctpgd-gw2} and applying Gronwall's Lemma \citep{holte2009discrete}, with probability at least $1 - 2\delta$, for any $t \in [0,T]$, it holds that
\begin{align}
\left\|\tilde{\theta}^{(N)}(t)-\bar{\theta}^{(N)}(t)\right\|_{(N)} + \left\|\tilde{\ow}^{(N)}(t)-\bar{\ow}^{(N)}(t)\right\|_{(N)} \leq & B \cdot e^{Bt} \cdot 2BT \cdot \sqrt{\log(NT/\delta)/N} \nonumber \\
\leq & B \cdot e^{BT} \cdot \sqrt{\log(N/\delta)/N}. \label{eq:ip-ctpgd-finishpf}
\end{align}
The last inequality holds since $B$ as a constant represents values changing from line to line. Therefore, equation \eqref{eq:ip-ctpgd-finishpf} implies \eqref{eq:lem-ip-ctpgd}. Thus, we complete the proof of Lemma \ref{lem:ip-ctpgd}.

\subsubsection{Proof of Lemma \ref{lem:ctpgd-pgd}} \label{pf:ctpgd-pgd}
By the definition of $\hat v^{f}, \hat v^{g}$ in \eqref{eq:dir-fin}, $\Breve{\theta}_{i}(t), \Breve{\ow}_{i}(t)$ in \eqref{eq:weak-pgd}, $\tilde{\theta}_{i}(t), \tilde \ow_{i}(t)$ in \eqref{eq:weak-ctpgd}, it holds that the distances $\left\|\widetilde{\theta}_i(k \epsilon)-\breve{\theta}_i(k)\right\|$ and $\left\|\widetilde{\ow}_i(k \epsilon)-\breve{\ow}_i(k)\right\|$ satisfy
\begin{align}
& \left\|\widetilde{\theta}_i(k \epsilon)-\breve{\theta}_i(k)\right\| \nonumber \\
& \qquad \leq  \eta \cdot \int_0^{k \epsilon}\left\|\widehat{v}^{f} \left(\widetilde{\theta}_i(s) ; \widetilde{\theta}^{(N)}(s), \widetilde{\ow}^{(N)}(s)\right)-\widehat{v}^{f}\left(\widetilde{\theta}_i(\lfloor s / \epsilon\rfloor\epsilon) ; \widetilde{\theta}^{(N)}(\lfloor s / \epsilon\rfloor \epsilon), \widetilde{\ow}^{(N)}(\lfloor s / \epsilon\rfloor \epsilon)\right)\right\| \mathrm{d} s \nonumber \\
& \qquad \qquad +\eta \cdot \sum_{\ell=0}^{k-1}\left\|\widehat{v}^{f}\left(\widetilde{\theta}_i(\ell \epsilon) ; \widetilde{\theta}^{(N)}(\ell \epsilon), \widetilde{\ow}^{(N)}(\ell \epsilon)\right)-\widehat{v}^{f}\left(\breve{\theta}_i(\ell);\breve{\theta}^{(N)}(\ell), 
\breve{\ow}^{(N)}(\ell)\right)\right\| \nonumber \\
& \qquad \leq  B \cdot k \cdot \epsilon^2+B \cdot \sum_{\ell=0}^{k-1}\Big( \left\|\widetilde{\theta}^{(N)}(\ell \epsilon)-\breve{\theta}^{(N)}(\ell)\right\|_{(N)} + \left\|\widetilde{\ow}^{(N)}(\ell \epsilon)-\breve{\ow}^{(N)}(\ell)\right\|_{(N)} \Big). \label{eq:ctpgd-pgd-bd1}
\end{align}
\begin{align}
& \left\|\widetilde{\ow}_i(k \epsilon)-\breve{\ow}_i(k)\right\| \nonumber \\
& \qquad \leq \eta \cdot \int_0^{k \epsilon}\left\|\widehat{v}^{g} \left(\widetilde{\ow}_i(s) ; \widetilde{\theta}^{(N)}(s),\widetilde{\ow}^{(N)}(s)\right)-\widehat{v}^{g}\left(\widetilde{\ow}_i(\lfloor s / \epsilon\rfloor \epsilon) ; \widetilde{\theta}^{(N)}(\lfloor s / \epsilon\rfloor \epsilon), \widetilde{\ow}^{(N)}(\lfloor s / \epsilon\rfloor \epsilon)\right)\right\| \mathrm{d} s \nonumber \\
& \qquad \qquad +\eta \cdot \sum_{\ell=0}^{k-1}\left\|\widehat{v}^{g}\left(\widetilde{w}_i(\ell \epsilon) ; \widetilde{\theta}^{(N)}(\ell \epsilon), \widetilde{\ow}^{(N)}(\ell \epsilon)\right)-\widehat{v}^{g}\left(\breve{\ow}_i(\ell);\breve{\theta}^{(N)}(\ell), \breve{\ow}^{(N)}(\ell)\right)\right\| \nonumber \\
& \qquad \leq B \cdot k \cdot \epsilon^2+B \cdot \sum_{\ell=0}^{k-1}\Big( \left\|\widetilde{\theta}^{(N)}(\ell \epsilon)-\breve{\theta}^{(N)}(\ell)\right\|_{(N)} + \left\|\widetilde{\ow}^{(N)}(\ell \epsilon)-\breve{\ow}^{(N)}(\ell)\right\|_{(N)} \Big). \label{eq:ctpgd-pgd-bd2}
\end{align}
where \eqref{eq:ctpgd-pgd-bd1} follows from \eqref{eq:bd8} of Lemma \ref{lem:weak-all-bd} and \eqref{eq:ct-bd3} of Lemma \ref{lem:weak-ct-bd}, \eqref{eq:ctpgd-pgd-bd2} follows from \eqref{eq:bd9} of Lemma \ref{lem:weak-all-bd} and \eqref{eq:ct-bd3} of Lemma \ref{lem:weak-ct-bd}. Combining the inequalities in  \eqref{eq:ctpgd-pgd-bd1} and \eqref{eq:ctpgd-pgd-bd2}, it holds for any $k \leq T/\epsilon \;(k \in \mathbb{N})$ that
\begin{align}
& \left\|\widetilde{\theta}^{(N)}(k \epsilon)-\breve{\theta}^{(N)}(k)\right\|_{(N)} + \left\|\widetilde{\ow}^{(N)}(k \epsilon)-\breve{\ow}^{(N)}(k)\right\|_{(N)} \nonumber \\
& \qquad \leq 2B T \epsilon +B \cdot \sum_{\ell=0}^{k-1}\left\|\widetilde{\theta}^{(m)}(\ell \epsilon)-\breve{\theta}^{(N)}(\ell)\right\|_{(N)} +  B \cdot \sum_{\ell=0}^{k-1} \left\|\widetilde{\ow}^{(N)}(\ell \epsilon)-\breve{\ow}^{(N)}(\ell)\right\|_{(N)}. \label{eq:ctpgd-pgd-gw}
\end{align}
Applying the discrete Gronwall's lemma \citep{holte2009discrete} to \eqref{eq:ctpgd-pgd-gw} , we have that
\begin{align*}
\sup _{\substack{k \leq T / \epsilon \\(k \in \mathbb{N})}}\left\|\widetilde{\theta}^{(N)}(k \epsilon)-\breve{\theta}^{(N)}(k)\right\|_{(N)} + \left\|\widetilde{\ow}^{(N)}(k \epsilon)-\breve{\ow}^{(N)}(k)\right\|_{(N)} \leq 2B^2 \cdot T \cdot \epsilon \cdot e^{B T} \leq B \cdot e^{B T} \cdot \epsilon,
\end{align*}
where the inequalities hold since we allow the value of $B$ to vary from line to line. Thus, we complete the proof of Lemma \ref{lem:ctpgd-pgd}.
\subsubsection{Proof of Lemma \ref{lem:pgd-sgd}} \label{pf:pgd-sgd}
\begin{proof}
Let $\cG_{k} = \sigma(\theta^{(N)}(0), w^{(N)}(0), u_{0}, \dots, u_{k})$ be the $\sigma-$algebra generated by $\theta^{(N)}(0), w^{(N)}(0)$ and $u_{\ell} = (x_{\ell}, z_{\ell}) \; (\ell \leq k)$. Following from the definition of $\hat V^{f}_{k}, \hat V^{g}_{k}$ and $\hat v^{f}, \hat v^{g}$ in \eqref{eq:dir-fin} and \eqref{eq:dir-sto}, we have for any $i \in [N]$ and $k \in \mathbb{N}_{+}$ that 
\begin{align*}
& \EE \Bigl[\hat V_{k}^{f}(\theta_{i}(k); \theta^{(N)}(k), \ow^{(N)}(k)) \biggiven \cG_{k-1} \Bigr] = \hat v^{f}(\theta_{i}(k); \theta^{(N)}(k), \ow^{(N)}(k)), \\
& \EE \Bigl[\hat V_{k}^{g}(\ow_{i}(k); \theta^{(N)}(k), \ow^{(N)}(k)) \biggiven \cG_{k-1} \Bigr] = \hat v^{g}(\ow_{i}(k); \theta^{(N)}(k), \ow^{(N)}(k)).
\end{align*}
Recall the definition of $\theta^{(N)}, \ow^{(N)}$ and $\Breve{\theta}^{(N)}, \Breve{\ow}^{(N)}$ as the SGDA and PGDA dynamics defined in \eqref{eq:weak-sgd} and \eqref{eq:weak-pgd}. We have for any $i \in [N]$, $k \in \mathbb{N}_{+}$ that
\begin{align}
& \left\|\breve{\theta}_i(k)-\theta_i(k)\right\| \nonumber \\
& \qquad \leq \eta \epsilon \cdot\left\|\sum_{\ell=0}^{k-1} X_i(\ell)\right\|+\eta \epsilon \cdot \sum_{\ell=0}^{k-1}\left\|\widehat{v}^{f}\left(\breve{\theta}_i(\ell) ; \breve{\theta}^{(N)}(\ell),\breve{\ow}^{(N)}(\ell) \right)-\widehat{v}^{f}\left(\theta_i(\ell) ; \theta^{(N)}(\ell), \ow^{(N)}(\ell)\right)\right\| \nonumber \\
& \qquad \leq \eta \epsilon \cdot\left\|A_i(k)\right\|+B \epsilon \cdot  \sum_{\ell=0}^{k-1}\left\|\breve{\theta}^{(m)}(\ell)-\theta^{(m)}(\ell)\right\|_{(N)} + \left\|\breve{\ow}^{(N)}(\ell)-\ow^{(N)}(\ell)\right\|_{(N)}, \label{eq:pgd-sgd-eq3}
\end{align}
where the last inequality follows from \eqref{eq:bd8} of Lemma \ref{lem:weak-all-bd}. $X_{i}(\ell)$ and $A_{i}(k)$ are defined as,
\begin{align*}
&X_i(\ell)=\widehat{V}^{f}_{\ell}\left(\theta_i(\ell) ; \theta^{(N)}(\ell), \ow^{(N)}(\ell)\right)-\mathbb{E}\left[\widehat{V}^{f}_{\ell}\left(\theta_i(\ell) ; \theta^{(N)}(\ell), \ow^{(N)}(\ell)\right) \biggiven \mathcal{G}_{\ell-1}\right] \quad \forall \ell \geq 1, \\
&X_{i}(0) = 0, \quad A_i(k)=\sum_{\ell=0}^{k-1} X_i(\ell).
\end{align*}
Following from \eqref{eq:bd7} of Lemma \ref{lem:weak-all-bd}, it holds that $\|X_{i}(\ell)\| \leq B$, thus the stochastic process $\{A_{i}(k)\}_{k \in \mathbb{N}_{+}}$ is a martingale with $\|A_{i}(k) - A_{i}(k-1)\| \leq B$. Applying the Azuma-Hoeffding bound in Lemma \ref{lem:ah-bd}, we have that
\begin{align}
\label{eq:pgd-sgd-ah-bd1}
\mathbb{P} \Bigl(\underset{\substack{k \leq T / \epsilon \\\left(k \in \mathbb{N}_{+}\right)}}{\max}\left\|A_i(k)\right\| \geq B \cdot \sqrt{T / \epsilon} \cdot(\sqrt{D}+p)\Bigr) \leq \exp \left(-p^2\right) .
\end{align}
Apply the union bound to \eqref{eq:pgd-sgd-ah-bd1} for $i \in [N]$, we have that
\begin{align*}
\mathbb{P} \Bigl(\underset{\substack{i \in [N] \\ k \leq T / \epsilon, \left(k \in \mathbb{N}_{+}\right)}}{\max}\left\|A_i(k)\right\| \geq B \cdot \sqrt{T / \epsilon} \cdot(\sqrt{D}+p)\Bigr) \leq N \cdot \exp \left(-p^2\right).
\end{align*}
Setting $p = \sqrt{\log(N/\delta)}$, with probability at least $1 - \delta$, it holds that
\begin{align}
\label{eq:pgd-sgd-ah-bd3}
\left\|A_i(k)\right\| \leq B \cdot \sqrt{T / \epsilon} \cdot(\sqrt{D}+\sqrt{\log (N / \delta)}), \quad \forall i \in[N], k \leq T / \epsilon\left(k \in \mathbb{N}_{+}\right).
\end{align}
Plug \eqref{eq:pgd-sgd-ah-bd3} into \eqref{eq:pgd-sgd-eq3} and taking supremum norm over $i \in [N]$, we have that
\begin{align}
\Bigl\|\Breve{\theta}^{(N)}(k) - \theta^{(N)}(k) \Bigr\|_{(N)} & \leq B \epsilon \cdot  \sum_{\ell=0}^{k-1}\bigg(\left\|\breve{\theta}^{(m)}(\ell)-\theta^{(m)}(\ell)\right\|_{(N)} + \left\|\breve{\ow}^{(N)}(\ell)-\ow^{(N)}(\ell)\right\|_{(N)} \bigg) \nonumber \\
& \qquad + B \cdot \sqrt{T\epsilon} \cdot (\sqrt{D} + \sqrt{\log(N/\delta)}). \label{eq:pgd-sgd-eq4}
\end{align}
Through similar arguments, for $\Breve{w}_{i}(k)$ and $w_{i}(k)$, with probability at least $1 - \delta$, 
\begin{align}
\Bigl\|\Breve{\ow}^{(N)}(k) - \ow^{(N)}(k) \Bigr\|_{(N)} & \leq B \epsilon \cdot  \sum_{\ell=0}^{k-1}\bigg(\left\|\breve{\theta}^{(m)}(\ell)-\theta^{(m)}(\ell)\right\|_{(N)} + \left\|\breve{\ow}^{(N)}(\ell)-\ow^{(N)}(\ell)\right\|_{(N)}\bigg) \nonumber \\
& \qquad + B \cdot \sqrt{T\epsilon} \cdot (\sqrt{D} + \sqrt{\log(N/\delta)}). \label{eq:pgd-sgd-eq5}
\end{align}
Conditioning on the intersection of event in \eqref{eq:pgd-sgd-eq4} and event in \eqref{eq:pgd-sgd-eq5}, summing \eqref{eq:pgd-sgd-eq4}, \eqref{eq:pgd-sgd-eq5}, and applying the discrete Gronwall's lemma \citep{holte2009discrete}, for any $k \leq T/\epsilon, k \in \mathbb{N}_{+}$, the following inequality holds with probability at least $1 - 2\delta$,
\begin{align*}
\Bigl\|\Breve{\theta}^{(N)}(k) - \theta^{(N)}(k) \Bigr\|_{(N)} +  \Bigl\|\Breve{\ow}^{(N)}(k) - \ow^{(N)}(k) \Bigr\|_{(N)} & \leq B \cdot e^{Bk\epsilon} \cdot B \cdot \sqrt{T\epsilon} \cdot (\sqrt{D} + \sqrt{\log(N/\delta)}) \nonumber \\
& \leq B \cdot e^{BT} \cdot \sqrt{\epsilon \cdot (D + \log(N/\delta))}.
\end{align*}
Here the last inequality holds since we allow the value of $B$ to vary from line to line. Thus, we complete the proof of Lemma \ref{lem:pgd-sgd}.
\end{proof}

\section{Supporting Lemmas} \label{sec:pf-supp}
\subsection{Supporting Lemmas for \S \ref{sec:pf-prop-weak}} \label{sec:pf-sup-mf}
In what follows, we presented the technical lemmas heavily used in $\S$ \ref{sec:pf-prop-weak}. We recall the definition of $v^{f},v^{g}$, $\hat v^{f}, \hat v^{g}$ and $\hat V^{f}_{k}, \hat V^{g}_{k}$ as in \eqref{eq:dir-restate}, \eqref{eq:dir-fin}, and \eqref{eq:dir-sto} respectively. Let $B > 0$ be a constant depending on $\alpha, \eta, B_{0}, B_{1}, B_{2}, C$, whose value varies from line to line. Recall that $f(\cdot;\theta^{(N)})$ and $g(\cdot; \ow^{(N)})$ are the finite width representation with parameters $\theta^{(N)}, \ow^{(N)}$, whose definitions are given by
\begin{align*}
&f(\cdot; \theta^{(N)}) = \frac{\alpha}{N}\cdot \sum_{i = 1}^{N} \phi(\cdot;\theta_{i}),  \quad g(\cdot; \ow^{(N)}) = \frac{\alpha}{N}\cdot \sum_{i = 1}^{N} \psi(\cdot;\ow_{i}).
\end{align*}
\begin{lemma}
\label{lem:weak-all-bd}
Under Assumption \ref{asp:reg} and \ref{asp:compact}, it holds that for any $\theta^{(N)} = \{\theta_{i}\}_{i = 1}^{N}$, $\underline{\theta}^{(N)} = \{\underline{\theta}_{i}\}_{i = 1}^{N}$, $w^{(N)} = \{w_{i}\}_{i = 1}^{N}$, $\underline{w}^{(N)} = \{\underline{w}_{i}\}_{i = 1}^{N}$, that, $f(\cdot; \theta^{(N)})$ and $g(\cdot; \ow^{(N)})$ are uniformly bounded and Lipschitz in $\theta, \ow$ respectively, which is given by the following, 
\begin{align}
& \sup_{w \in \cW} \bigl|f(w;\theta^{(N)})\bigr| + \sup_{z \in \cZ} \bigl|g(z;\ow^{(N)})\bigr| \leq B, \label{eq:bd1}\\
& \sup_{w \in \cW} \bigl|f(w;\theta^{(N)}) - f(w; \underline{\theta}^{(N)})\bigr| \leq B \cdot \bigl\|\theta^{(N)} - \underline{\theta}^{(N)}\bigr\|_{(N)}, \label{eq:bd2} \\
& \sup_{z \in \cZ} \bigl|g(z;\ow^{(N)}) - g(z; \underline{\ow}^{(N)})\bigr| \leq B \cdot \bigl\|\ow^{(N)} - \underline{\ow}^{(N)}\bigr\|_{(N)}. \label{eq:bd3}
\end{align}
Recall the definition of $\hat v^{f}, \hat v^{g}$ and $\hat V^{f}_{k}, \hat V^{g}_{k}$ in \eqref{eq:dir-fin}, \eqref{eq:dir-sto}, the finite width representation of the velocity field and its stochastic counter-part, when evaluated at arbitrary $\theta_i, \ow_i$, are also uniformly bounded and lipschitz in $\theta, \ow$ respectively. This means for $\hat V^{f}_{k}, \hat V^{g}_{k}$, the following inequalities hold,
\begin{align}
& \bigl\|\hat V^{f}_{k}(\theta_{i}; \theta^{(N)}, \ow^{(N)}) \bigr\| + \bigl\|\hat V^{g}_{k}(\ow_{i}; \theta^{(N)}, w^{(N)}) \bigr\| \leq B, \label{eq:bd4} \\
& \bigl\|\hat V^{f}_{k}(\theta_{i}; \theta^{(N)}, \ow^{(N)}) - \hat V^{f}_{k}(\underline{\theta}_{i}; \underline{\theta}^{(N)}, \underline{\ow}^{(N)})  \bigr\| \leq B \cdot \Bigl( \bigl\|\theta^{(N)} - \underline{\theta}^{(N)} \bigr\|_{(N)} + \bigl\| \ow^{(N)} - \underline{\ow}^{(N)} \bigr\|_{(N)} \Bigr), \label{eq:bd5} \\
& \bigl\|\hat V^{g}_{k}(\ow_{i}; \theta^{(N)}, w^{(N)}) - \hat V^{g}_{k}(\underline{\ow}_{i}; \underline{\theta}^{(N)}, \underline{\ow}^{(N)})  \bigr\| \leq B \cdot \Bigl( \bigl\|\theta^{(N)} - \underline{\theta}^{(N)} \bigr\|_{(N)} + \bigl\| \ow^{(N)} - \underline{\ow}^{(N)} \bigr\|_{(N)} \Bigr). \label{eq:bd6}
\end{align}
A similar series of inequalities also hold for $\hat v^{f}, \hat v^{g}$,
\begin{align}
& \bigl\|\hat v^{f}(\theta_{i}; \theta^{(N)}, \ow^{(N)}) \bigr\| + \bigl\|\hat v^{g}(\ow_{i}; \theta^{(N)}, \ow^{(N)}) \bigr\| \leq B, \label{eq:bd7} \\
& \bigl\|\hat v^{f}(\theta_{i}; \theta^{(N)}, \ow^{(N)}) - \hat v^{f}_{k}(\underline{\theta}_{i}; \underline{\theta}^{(N)}, \underline{\ow}^{(N)})  \bigr\| \leq B \cdot \Bigl( \bigl\|\theta^{(N)} - \underline{\theta}^{(N)} \bigr\|_{(N)} + \bigl\| \ow^{(N)} - \underline{\ow}^{(N)} \bigr\|_{(N)} \Bigr), \label{eq:bd8} \\
& \bigl\|\hat v^{g}(\ow_{i}; \theta^{(N)}, \ow^{(N)}) - \hat v^{g}(\underline{\ow}_{i}; \underline{\theta}^{(N)}, \underline{\ow}^{(N)})  \bigr\| \leq B \cdot \Bigl( \bigl\|\theta^{(N)} - \underline{\theta}^{(N)} \bigr\|_{(N)} + \bigl\| \ow^{(N)} - \underline{\ow}^{(N)} \bigr\|_{(N)} \Bigr). \label{eq:bd9}
\end{align}
As a corollary of the inequalities stated above, the uniform bounds in fact hold for any $f,g \in \cF$, which says,
\begin{align}
\label{eq:bd10}
\sup_{w\in \cW} \bigl |f(w)\bigr | + \sup_{z \in \cZ} \bigl |g(z)\bigr| \leq B.
\end{align}
Similarly, the uniform bounds also hold for the velocity field $v^{f}, v^{g}$, such that for any $\rho \in \sP_{2}(\RR^{D} \times \RR^{D})$, it holds that
\begin{align}
\label{eq:bd11}
\bigl\| v^{f}(\theta; \rho) \bigr\| + \bigl\|v^{g}(\ow; \rho) \bigr\| \leq B.
\end{align}
\end{lemma}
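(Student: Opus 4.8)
All eleven estimates reduce, after the triangle inequality and the mean value theorem, to the neuron-level bounds of Assumption~\ref{asp:reg} and the integrability bounds of Assumption~\ref{asp:compact}, so the plan is to establish them in the order \eqref{eq:bd1}--\eqref{eq:bd3}, then \eqref{eq:bd4}--\eqref{eq:bd6}, then deduce \eqref{eq:bd7}--\eqref{eq:bd9} by taking $\EE_\cD$, and finally \eqref{eq:bd10}--\eqref{eq:bd11} by repeating the same computation with empirical averages replaced by arbitrary probability measures. Since $f(w;\theta^{(N)}) = (\alpha/N)\sum_{i=1}^{N}\phi(w;\theta_i)$ and $|\phi|\le B_0$, the triangle inequality gives $\sup_w|f(w;\theta^{(N)})|\le\alpha B_0$, and symmetrically $\sup_z|g(z;\ow^{(N)})|\le\alpha B_0$, which is \eqref{eq:bd1}. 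For \eqref{eq:bd2} I would write each neuron difference as the line integral of $\nabla_\theta\phi$ along the segment between $\theta_i$ and $\underline\theta_i$ and use $\|\nabla_\theta\phi\|\le B_1$, obtaining $\sup_w|f(w;\theta^{(N)})-f(w;\underline\theta^{(N)})|\le\alpha B_1\max_i\|\theta_i-\underline\theta_i\|=\alpha B_1\|\theta^{(N)}-\underline\theta^{(N)}\|_{(N)}$; the identical computation with $\|\nabla_\ow\psi\|\le B_1$ gives \eqref{eq:bd3}.

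The bulk of the work is \eqref{eq:bd4}--\eqref{eq:bd6}. Expanding $\hat V_k^f$ from \eqref{eq:dir-sto}, each vector-valued inner product $\langle\delta\Phi/\delta f,\nabla_\theta\phi(\cdot;\theta_i)\rangle_{L^2}$ and $\langle\delta\Psi(x_k,z_k;f(\cdot;\theta^{(N)}))/\delta f,\nabla_\theta\phi(\cdot;\theta_i)\rangle_{L^2}$ has norm at most $B_1\int_\cW|\delta\Phi/\delta f|\le B_1 C_2$ by Assumption~\ref{asp:compact}(ii); multiplying by $|g(z_k;\ow^{(N)})|\le\alpha B_0$ (from \eqref{eq:bd1}), resp.\ by $\lambda$, and by $\alpha$ gives the bound on $\|\hat V_k^f\|$. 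For $\|\hat V_k^g\|$ I additionally bound $|\Phi(x_k,z_k;f(\cdot;\theta^{(N)}))|$: since $\Phi$ is affine, $\Phi=\Phi(\cdot,\cdot;0)+\tilde\Phi(\cdot,\cdot;f)$ with $|\tilde\Phi(x,z;f)|=|\langle\delta\Phi/\delta f,f\rangle_{L^2}|\le\|f\|_\infty C_2\le\alpha B_0 C_2$, and $\sup_{\cX\times\cZ}|\Phi(\cdot,\cdot;0)|$ is finite on the compact domain (a fact already used in the proofs of Theorems~\ref{th:main} and~\ref{th:global}); together with $\|\nabla_\ow\psi\|\le B_1$ this yields \eqref{eq:bd4}. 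For the Lipschitz bounds \eqref{eq:bd5}--\eqref{eq:bd6} I would apply the ``difference of products'' expansion factorwise, using (i) $\|\nabla_\theta\phi(w;\theta_i)-\nabla_\theta\phi(w;\underline\theta_i)\|\le B_2\|\theta_i-\underline\theta_i\|$ from $\|\nabla^2_{\theta\theta}\phi\|_F\le B_2$, (ii) the Lipschitz estimates \eqref{eq:bd2}--\eqref{eq:bd3} for $f,g$ in the parameters, and (iii) the linearity \eqref{eq:reg-property2} of $\delta\Psi/\delta f$ in $f$, which gives $\delta\Psi(x_k,z_k;f(\cdot;\theta^{(N)}))/\delta f-\delta\Psi(x_k,z_k;f(\cdot;\underline\theta^{(N)}))/\delta f=\delta\Psi(x_k,z_k;f(\cdot;\theta^{(N)})-f(\cdot;\underline\theta^{(N)}))/\delta f$, whose $L^1(\cW)$-norm is at most $C_\Psi|f(w;\theta^{(N)})-f(w;\underline\theta^{(N)})|\le C_\Psi\alpha B_1\|\theta^{(N)}-\underline\theta^{(N)}\|_{(N)}$ by Assumption~\ref{asp:compact}(iii) and \eqref{eq:bd2}. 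Collecting the resulting terms — each proportional to $\|\theta^{(N)}-\underline\theta^{(N)}\|_{(N)}$ or $\|\ow^{(N)}-\underline\ow^{(N)}\|_{(N)}$ — and absorbing constants into $B$ gives \eqref{eq:bd5}; the same scheme for $\hat V_k^g$, using in addition $|\Phi(x_k,z_k;f(\cdot;\theta^{(N)}))-\Phi(x_k,z_k;f(\cdot;\underline\theta^{(N)}))|=|\tilde\Phi(x_k,z_k;f(\cdot;\theta^{(N)})-f(\cdot;\underline\theta^{(N)}))|\le\alpha B_1 C_2\|\theta^{(N)}-\underline\theta^{(N)}\|_{(N)}$, gives \eqref{eq:bd6}.

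Finally, \eqref{eq:bd7}--\eqref{eq:bd9} follow from \eqref{eq:bd4}--\eqref{eq:bd6} by Jensen's inequality, because $\hat v^f,\hat v^g$ in \eqref{eq:dir-fin} are exactly $\EE_\cD[\hat V_k^f],\EE_\cD[\hat V_k^g]$ and the bounds just established are uniform in the data point $(x_k,z_k)$. For \eqref{eq:bd10}, any $f\in\cF$ of \eqref{eq:func-class} equals $\int\beta'\sigma(w;\tilde\theta)\,\rd\mu$ with $\mu$ a probability measure and $|\beta'\sigma(w;\tilde\theta)|=|\phi(w;(\beta,\tilde\theta))|\le B_0$, hence $\sup_w|f(w)|\le B_0$ (resp.\ $\alpha B_0$ for the $\alpha$-scaled parameterization of \eqref{eq:nn}), and likewise for $g\in\cF$. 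For \eqref{eq:bd11}, $v^f(\theta;\rho)$ and $v^g(\ow;\rho)$ have the same form as $\hat v^f,\hat v^g$ but with $f(\cdot;\rho)=\alpha\int\phi(\cdot;\theta)\,\rd\mu$ and $g(\cdot;\rho)=\alpha\int\psi(\cdot;\ow)\,\rd\nu$ in place of the empirical averages; these are bounded by $\alpha B_0$ uniformly in $\rho$ (the marginals $\mu,\nu$ have unit mass), and since the integrability constants $C_2,C_\Psi$ are independent of $f$ and $\rho$, the argument for \eqref{eq:bd4} reproduces \eqref{eq:bd11} without change. The one place requiring genuine care — and the step I would isolate as a short sub-claim — is the Lipschitz control of the velocity fields: rather than bound increments of $\delta\Psi/\delta f$ directly, one must use its linearity \eqref{eq:reg-property2} to reduce them to $|f(w;\theta^{(N)})-f(w;\underline\theta^{(N)})|$ via Assumption~\ref{asp:compact}(iii), and one must record that the affine part $\Phi(\cdot,\cdot;0)$ is bounded on $\cX\times\cZ$ so that $\hat V_k^g$, $\hat v^g$, and $v^g$ are bounded; everything else is triangle-inequality and mean-value bookkeeping.
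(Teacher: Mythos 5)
Your proposal is correct and follows essentially the same route as the paper's proof: uniform neuron bounds from Assumption~\ref{asp:reg} give \eqref{eq:bd1}--\eqref{eq:bd3}; a factor-by-factor triangle inequality combined with the integrability estimates of Assumption~\ref{asp:compact}(ii)--(iii), the Hessian bound $B_2$, the affine decomposition $\Phi=\Phi(\cdot,\cdot;0)+\tilde\Phi$, and the linearity of $\delta\Psi/\delta f$ gives \eqref{eq:bd4}--\eqref{eq:bd6}; and then \eqref{eq:bd7}--\eqref{eq:bd11} follow by taking $\EE_\cD$ (Jensen) and by replacing empirical averages with general probability measures. The paper organizes the Lipschitz step for \eqref{eq:bd5}--\eqref{eq:bd6} by introducing auxiliary scalar factors $u^f(\theta^{(N)},\ow^{(N)})$, $u^g(\theta^{(N)},\ow^{(N)})$ and proving they are Lipschitz, which is precisely your ``difference of products, factorwise'' computation written with explicit names. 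One observation you make that the paper leaves implicit is that $\sup_{(x,z)\in\cX\times\cZ}|\Phi(x,z;0)|<\infty$ is needed to bound $\hat V_k^g$, $\hat v^g$, $v^g$; the paper uses this same quantity in the proof of Theorem~\ref{th:global} without listing it in Assumption~\ref{asp:compact}, so your flagging of it is careful rather than a deviation.
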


\begin{proof} 
We will prove these results separately. \vspace{5pt}

\noindent \textbf{(i) Proof of \eqref{eq:bd1}, \eqref{eq:bd2}, and \eqref{eq:bd3}} \vspace{5pt}

\noindent For \eqref{eq:bd1} of Lemma \ref{lem:weak-all-bd}, since $\phi$, $\psi$ are bounded as is assumed in Assumption \ref{asp:reg}, we have for any $w \in \cW, z \in \cZ$, any $\theta^{(N)}$ and $\ow^{(N)}$ that
\begin{align*}
\bigl|f(w;\theta^{(N)}) \bigr| + \bigl|g(z;\ow^{(N)})\bigr| \leq \alpha \cdot N^{-1} \sum_{i = 1}^{N} \bigl|\phi(w;\theta_{i})\bigr| + \bigl|\psi(z;\ow_{i})\bigr|  \leq B.
\end{align*}
For \eqref{eq:bd2}, and \eqref{eq:bd3} of Lemma \ref{lem:weak-all-bd}, since for any $w \in \cW$, $z \in \cZ$, $\phi(w; \theta)$ has a bounded gradient in $\theta$, $\psi(z;\ow)$ has a bounded gradient in $\ow$. The uniform upper bound of the gradient controls the Lipschitz constant of the function, thus it holds for any $w \in \cW, z \in \cZ$, any $\theta^{(N)}, \underline{\theta}^{(N)}$ and $\ow^{(N)}, \underline{\ow}^{(N)}$ that
\begin{align*}
    & \bigl|f(w;\theta^{(N)}) - f(w; \underline{\theta}^{(N)}) \bigr| \leq \alpha N^{-1} \cdot B_1 \sum_{i = 1}^{N} \bigl| \theta_{i} - \underline{\theta}_{i} \bigr| \leq B \cdot \bigl\| \theta^{(N)} - \underline{\theta}^{(N)} \bigr\|_{(N)}, \\
& \bigl|g(z;\ow^{(N)}) - g(z; \underline{\ow}^{(N)})\bigr| \leq \alpha N^{-1} \cdot B_1 \sum_{i = 1}^{N} \bigl| \ow_{i} -  \underline{\ow}_{i} \bigr| \leq B \cdot \bigl\| \ow^{(N)} - \underline{\ow}^{(N)} \bigr\|_{(N)}.
\end{align*}

\noindent \textbf{(ii) Proof of \eqref{eq:bd4}, \eqref{eq:bd5} and \eqref{eq:bd6}} \vspace{5pt}

\noindent For \eqref{eq:bd4} of Lemma \ref{lem:weak-all-bd}, recall the definition of $\hat V^{f}_{k}$, $\hat V^{g}_{k}$ in \eqref{eq:dir-sto}, for any $\theta^{(N)}$ and $\ow^{(N)}$, 
\begin{align*}
\Bigl\| \hat V^{f}_{k}(\theta_{i}; \theta^{(N)}, \ow^{(N)}) \Bigr\| & \; \leq \alpha \cdot \sup_{w \in \cW} \big\|\nabla_{\theta} \phi(w; \theta_i) \big\| \cdot \sup_{z \in \cZ} \big|g(z; \ow^{(N)})\big| \cdot \int_{\cW} \Big|\frac{\delta \Phi(x_k, z_k, f(\cdot; \theta^{(N)}))}{\delta f}(w')\Big| \rd w' \\ 
& \qquad + \alpha \cdot \sup_{w \in \cW} \big\|\nabla_{\theta} \phi(w; \theta_i) \big\| \cdot  \lambda \cdot \int_{\cW} \Big|\frac{\delta \Psi(x_k, z_k, f(\cdot; \theta^{(N)}))}{\delta f}(w')\Big| \rd w' \leq B, \\
\Bigl\| \hat V^{g}_{k}(\ow_i; \theta^{(N)}, \ow^{(N)}) \Bigr\| & \; \leq \alpha \cdot \Big( \big|\Phi(x_k, z_k; f(\cdot; \theta^{(N)}))\big| + \sup_{z \in \cZ} \big|g(z;\ow^{(N)})\big| \Big) \cdot \sup_{z \in \cZ} \big\| \nabla_{\ow} \psi(z; \ow_i) \big\| \leq B.
\end{align*}
For notational simplicity, we further define
\begin{align*}
& u^{f}(\theta^{(N)}, w^{(N)}) = -\alpha g(z_k ; \ow^{(N)}) \cdot \frac{\delta \Phi(x_k, z_k; f(\cdot; \theta^{(N)}))}{\delta f} -  \alpha \lambda \cdot \frac{\delta \Psi(x_k, z_k; f(\cdot; \theta^{(N)}))}{\delta f}, \\
& u^{g}(\theta^{(N)}, w^{(N)}) = \alpha \Phi(x_k, z_k; f(\cdot;  \theta^{(N)}))- \alpha g(z_k; \ow^{(N)}).
\end{align*}
For \eqref{eq:bd5} of Lemma \ref{lem:weak-all-bd}, following from Assumption \ref{asp:compact} and the definition of $\hat V^{f}_{k}$ in \eqref{eq:dir-sto}, we have for any $\theta^{(N)}, \underline{\theta}^{(N)}$ and $\ow^{(N)}, \underline{\ow}^{(N)}$ that
\begin{align*}
& \Bigl\| \hat V^{f}_{k}(\theta_{i}; \theta^{(N)}, \ow^{(N)}) - \hat V^{f}_{k}(\underline{\theta}_{i}; \underline{\theta}^{(N)}, \underline{\ow}^{(N)})\Bigr\| \\
& \qquad \leq \Bigl\| \hat V^{f}_{k}(\theta_{i}; \theta^{(N)}, \ow^{(N)}) - \hat V^{f}_{k}(\theta_{i}; \underline{\theta}^{(N)}, \underline{\ow}^{(N)})\Bigr\| + \Bigl\| \hat V^{f}_{k}(\theta_{i}; \underline{\theta}^{(N)}, \underline{\ow}^{(N)}) - \hat V^{f}_{k}(\underline{\theta}_{i}; \underline{\theta}^{(N)}, \underline{\ow}^{(N)})\Bigr\| \\
& \qquad \leq |u^{f}(\theta^{(N)}, \ow^{(N)}) - u^{f}(\underline{\theta}^{(N)}, \underline{\ow}^{(N)})| \cdot \sup_{w \in \cW} \|\nabla_{\theta} \phi(w; \theta_i) \| + \Bigl\|\inp[\Big]{u^{f}(\underline{\theta}^{(N)}, \underline{\ow}^{(N)})}{\nabla_{\theta}\phi(\cdot; \theta_i) - \nabla_{\theta} \phi(\cdot; \underline{\theta}_i)}_{L^{2}}\Bigr\|.
\end{align*}
Moreover, $u^{f}(\theta^{(N)}, \ow^{(N)})$ is also Lipschitz in $(\theta^{(N)}, \ow^{(N)})$ since
\begin{align*}
|u^{f}(\theta^{(N)}, \ow^{(N)}) - u^{f}(\underline{\theta}^{(N)}, \underline{\ow}^{(N)})| & \; \leq  B \cdot  |f(w_k; \theta^{(N)}) - f(w_k; \underline{\theta}^{(N)})| + B \cdot |g(z_k; \ow^{(N)}) - g(z_{k}; \underline{\ow}^{(N)})| \\
& \; \leq B \cdot \Bigl( \bigl\|\theta^{(N)} - \underline{\theta}^{(N)} \bigr\|_{(N)} + \bigl\| \ow^{(N)} - \underline{\ow}^{(N)} \bigr\|_{(N)} \Bigr),
\end{align*}
where the second inequality is achieved by applying \eqref{eq:bd2}, \eqref{eq:bd3}. Therefore, the fact that $\hat V^{f}_{k}(\theta_{i}; \theta^{(N)}, \ow^{(N)})$ is Lipschitz in $(\theta^{(N)}, \ow^{(N)})$ is due to $\|\nabla_{\theta} \phi(w; \theta_i)\|$ and $\big|\int u^{f}(\theta^{(N)}, \ow^{(N)})(w') \rd w' \big|$ is uniformly bounded. \vspace{5pt}

\noindent For \eqref{eq:bd6} of Lemma \ref{lem:weak-all-bd}, following from Assumption \ref{asp:compact} and the definition of $\hat V^{g}_{k}$ in \eqref{eq:dir-sto}, through a similar argument as is in the proof of \eqref{eq:bd5}, we have for any $\theta^{(N)}, \underline{\theta}^{(N)}$ and $\ow^{(N)}, \underline{\ow}^{(N)}$ that
\begin{align*}
&\Bigl\| \hat V^{g}_{k}(\ow_{i}; \theta^{(N)}, \ow^{(N)}) - \hat V^{g}_{k}(\underline{\ow}_{i}; \underline{\theta}^{(N)}, \underline{\ow}^{(N)})\Bigr\| \\
& \qquad \leq \Bigl\| \hat V^{g}_{k}(\ow_{i}; \theta^{(N)}, \ow^{(N)}) - \hat V^{g}_{k}(\ow_{i}; \underline{\theta}^{(N)}, \underline{\ow}^{(N)})\Bigr\| + \Bigl\| \hat V^{g}_{k}(\ow_{i}; \underline{\theta}^{(N)}, \underline{\ow}^{(N)}) - \hat V^{g}_{k}(\underline{\ow}_{i}; \underline{\theta}^{(N)}, \underline{\ow}^{(N)})\Bigr\| \\
& \qquad \leq |u^{g}(\theta^{(N)}, \ow^{(N)}) - u^{f}(\underline{\theta}^{(N)}, \underline{\ow}^{(N)})| \cdot \sup_{z \in \cZ} \|\nabla_{\ow} \psi(z; \ow_i) \| + \Bigl\|\inp[\Big]{u^{g}(\underline{\theta}^{(N)}, \underline{\ow}^{(N)})}{\nabla_{\ow}\psi(\cdot; \ow_i) - \nabla_{\ow} \psi(\cdot; \underline{\ow}_i)}_{L^{2}}\Bigr\|.
\end{align*}
Again, $u^{g}(\theta^{(N)}, \ow^{(N)})$ is Lipschitz in $(\theta^{(N)}, \ow^{(N)})$ since
\begin{align*}
|u^{g}(\theta^{(N)}, \ow^{(N)}) - u^{g}(\underline{\theta}^{(N)}, \underline{\ow}^{(N)})| & \; \leq B \cdot  |f(w_k; \theta^{(N)}) - f(w_k; \underline{\theta}^{(N)})| +  B \cdot |g(z_k; \ow^{(N)}) - g(z_{k}; \underline{\ow}^{(N)})| \\
& \; \leq B \cdot \Bigl( \bigl\|\theta^{(N)} - \underline{\theta}^{(N)} \bigr\|_{(N)} + \bigl\| \ow^{(N)} - \underline{\ow}^{(N)} \bigr\|_{(N)} \Bigr).
\end{align*}
Therefore, the Lipschtizness of $\hat V^{g}_{k}(\ow_{i}; \theta^{(N)}, \ow^{(N)})$ in $(\theta^{(N)}, \ow^{(N)})$ comes from $\|\nabla_{\ow} \psi(z; \ow_i) \|$ and $\big|\int u^{g}(\theta^{(N)}, \ow^{(N)})(z') \rd z'\big|$ is uniformly bounded. \vspace{5pt}

\noindent \textbf{(iii) Proof of \eqref{eq:bd7}, \eqref{eq:bd8}, and \eqref{eq:bd9}} \vspace{5pt}

\noindent Equations \eqref{eq:bd7}, \eqref{eq:bd8}, \eqref{eq:bd9} of Lemma \ref{lem:weak-all-bd} for $\hat v^{f}$ and $\hat v^{g}$ follow from the fact that
\begin{align*}
\hat v^{f}(\theta_{i}; \theta^{(N)}, \ow^{(N)}) = \EE_{\cD} \Big[ \hat V_{k}^{f}(\theta_{i}; \theta^{(N)}, \ow^{(N)})\Big], \quad  
\hat v^{g}(\ow_{i}; \theta^{(N)}, w^{(N)}) = \EE_{\cD} \Big[ \hat V_{k}^{g}(\ow_{i}; \theta^{(N)}, \ow^{(N)})\Big].
\end{align*}
Therefore, \eqref{eq:bd7} follows from \eqref{eq:bd4} and triangle inequality,
\begin{align*}
\bigl\|\hat v^{f}(\theta_{i}; \theta^{(N)}, \ow^{(N)}) \bigr\| + \bigl\|\hat v^{g}(\ow_{i}; \theta^{(N)}, \ow^{(N)}) \bigr\|& \; \leq \EE_{\cD} \Bigl[ \bigl\| \hat V_{k}^{f}(\theta_{i}; \theta^{(N}), \ow^{(N)})\bigl\| \Bigr] + \EE_{\cD} \Bigl[ \bigl\| \hat V_{k}^{g}(\ow_{i}; \theta^{(N)}, \ow^{(N)})\bigl\| \Bigr] \leq B.
\end{align*}
Equations \eqref{eq:bd8} and \eqref{eq:bd9} follows from \eqref{eq:bd5}, \eqref{eq:bd6} and triangle inequality,
\begin{align*}
\bigl\|\hat v^{f}(\theta_{i}; \theta^{(N)}, \ow^{(N)}) - \hat v^{f}_{k}(\underline{\theta}_{i}; \underline{\theta}^{(N)}, \underline{\ow}^{(N)})  \bigr\| & \; \leq \EE_{\cD} \Big[ \bigl\|\hat V^{f}_{k}(\theta_{i}; \theta^{(N)}, \ow^{(N)}) - \hat V^{f}_{k}(\underline{\theta}_{i}; \underline{\theta}^{(N)}, \underline{\ow}^{(N)})  \bigr\| \Big] \\
& \; \leq B \cdot \Bigl( \bigl\|\theta^{(N)} - \underline{\theta}^{(N)} \bigr\|_{(N)} + \bigl\| \ow^{(N)} - \underline{\ow}^{(N)} \bigr\|_{(N)} \Bigr), \\
\bigl\|\hat v^{g}(\ow_{i}; \theta^{(N)}, \ow^{(N)}) - \hat v^{g}_{k}(\underline{\ow}_{i}; \underline{\theta}^{(N)}, \underline{\ow}^{(N)})  \bigr\| & \; \leq \EE_{\cD} \Big[ \bigl\|\hat V^{g}_{k}(\ow_{i}; \theta^{(N)}, \ow^{(N)}) - \hat V^{g}_{k}(\underline{\ow}_{i}; \underline{\theta}^{(N)}, \underline{\ow}^{(N)})  \bigr\| \Big] \\
& \; \leq B \cdot \Bigl( \bigl\|\theta^{(N)} - \underline{\theta}^{(N)} \bigr\|_{(N)} + \bigl\| \ow^{(N)} - \underline{\ow}^{(N)} \bigr\|_{(N)} \Bigr).
\end{align*}
\noindent \textbf{(iv) Proof of \eqref{eq:bd10}, and \eqref{eq:bd11}} \vspace{5pt}

\noindent Equation \eqref{eq:bd10} follows from the definition of $\cF$ in \eqref{eq:func-class} and the uniform bounds of neuron functions $\phi$ and $\psi$. For any $f, g \in \cF$, there exists probability measures $\hat \mu, \hat \nu$ over the parameter space such that
\begin{align*}
    f(w) = \int \phi(w; \theta) \hat \mu( \rd \theta), \quad g(z) = \int \psi(z; \ow) \hat \nu (\rd \ow), \quad \forall w \in \cW, z \in \cZ.
\end{align*}
We apply the triangle inequality and achieve,
\begin{align*}
    \sup_{w \in \cW} |f(w)| + \sup_{z \in \cZ} |g(z)| \leq \int \sup_{w \in \cW} |\phi(w; \theta)| \hat \mu(\rd \theta) + \int \sup_{z \in \cZ} |g(z)| |\psi(z; \ow)| \hat \nu(\rd \ow) \leq B.
\end{align*}
\noindent Equation \eqref{eq:bd11} follows from the definition of $v^{f}, v^{g}$ in \eqref{eq:dir-restate} and the proof of \eqref{eq:bd4} and \eqref{eq:bd7}. Proof of \eqref{eq:bd11} is the same as the proof for \eqref{eq:bd4} and \eqref{eq:bd7}, except for the fact that a uniform bound is needed for the infinite width representation of $f$ and $g$, which is proved in \eqref{eq:bd10}. \vspace{7pt}

\noindent Based on proofs for items (i), (ii), (iii), and (iv) above, we finish the proof of Lemma \eqref{lem:weak-all-bd}.
\end{proof} 
\vspace{7pt}

\noindent Now, recall $\rho_{t}$ is the PDE solution to \eqref{eq:pde}, $\bar \theta^{(N)}(t), \bar w^{(N)}(t)$ is the IP dynamics defined in \eqref{eq:weak-ip}, $\tilde \theta^{(N)}(t), \tilde w^{(N)}(t)$ is the CTPGDA dynamics defined in \eqref{eq:weak-ctpgd}. We have the following lemma that also bound the difference of iterates for IP, CTPGDA dynamics between time $s$ and $t$.
\begin{lemma}
\label{lem:weak-ct-bd}
Under Assumption \ref{asp:reg} and \ref{asp:compact}, it holds for any $s, t \in [0,T]$ that,
\begin{align}
& \bigl\| \bar \theta^{(N)}(t) - \bar \theta^{(N)}(s) \bigr\|_{(N)} + \bigl\| \bar \ow^{(N)}(t) - \bar \ow^{(N)}(s) \bigr\|_{(N)}  \leq B \cdot \bigl| t - s\bigr|, \label{eq:ct-bd1} \\
& \bigl\| \tilde \theta^{(N)}(t) - \tilde \theta^{(N)}(s) \bigr\|_{(N)} + \bigl\| \tilde \ow^{(N)}(t) - \tilde \ow^{(N)}(s) \bigr\|_{(N)} \leq B \cdot \bigl| t - s\bigr|, \label{eq:ct-bd3} \\
& \cW_{2}(\mu_{t}, \mu_{s})) + \cW_{2}(\nu_{t}, \nu_{s})) \leq B \cdot \bigl| t - s\bigr|. \label{eq:ct-bd5}
\end{align}
\end{lemma}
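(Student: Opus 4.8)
The plan is to obtain all three estimates as immediate consequences of the uniform boundedness of the velocity fields established in Lemma~\ref{lem:weak-all-bd}.

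For \eqref{eq:ct-bd1}, I would integrate the IP dynamics \eqref{eq:weak-ip}: for each $i\in[N]$ and $s,t\in[0,T]$,
\[
\bar\theta_i(t)-\bar\theta_i(s) = \eta\int_s^t v^f\bigl(\bar\theta_i(r);\rho_r\bigr)\,\rd r, \qquad \bar\ow_i(t)-\bar\ow_i(s) = \eta\int_s^t v^g\bigl(\bar\ow_i(r);\rho_r\bigr)\,\rd r.
\]
By the triangle inequality and the uniform bound \eqref{eq:bd11} of Lemma~\ref{lem:weak-all-bd} (which holds for every $\rho\in\sP_2(\RR^D\times\RR^D)$, in particular for $\rho_r$), we get $\|\bar\theta_i(t)-\bar\theta_i(s)\|\le \eta B\,|t-s|$ and likewise for $\bar\ow_i$, uniformly in $i$. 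Taking the supremum over $i\in[N]$ and summing the two contributions gives \eqref{eq:ct-bd1}. The proof of \eqref{eq:ct-bd3} is verbatim the same, with the CTPGD dynamics \eqref{eq:weak-ctpgd} in place of \eqref{eq:weak-ip} and the uniform bound \eqref{eq:bd7} for $\hat v^f,\hat v^g$ in place of \eqref{eq:bd11}.

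For \eqref{eq:ct-bd5}, I would use a coupling built from the characteristics of the continuity equation \eqref{eq:pde}. Let $\Theta_r$ solve $\dot\Theta_r=\eta\,v^f(\Theta_r;\rho_r)$ with $\Theta_0\sim\mu_0$; then $\Theta_r\sim\mu_r$ for all $r$ by the characteristic/continuity-equation correspondence already invoked in the proof of Lemma~\ref{lem:pde-ip} (Proposition~8.1.8 of \cite{ambrosio2008gradient}). Hence $(\Theta_s,\Theta_t)$ is an admissible coupling of $(\mu_s,\mu_t)$, and by the definition \eqref{eq:w-p} of the Wasserstein distance together with \eqref{eq:bd11},
\[
\cW_2(\mu_s,\mu_t) \;\le\; \bigl(\EE\|\Theta_t-\Theta_s\|^2\bigr)^{1/2} \;=\; \Bigl(\EE\Bigl\|\eta\!\int_s^t v^f(\Theta_r;\rho_r)\,\rd r\Bigr\|^2\Bigr)^{1/2} \;\le\; \eta B\,|t-s|.
\]
Running the identical argument for the $\nu$-marginal (with $v^g$ and $\nu_0$) yields $\cW_2(\nu_s,\nu_t)\le \eta B\,|t-s|$, and adding the two bounds gives \eqref{eq:ct-bd5}. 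Equivalently, one may note that the metric derivative of the absolutely continuous curve $r\mapsto\mu_r$ solving \eqref{eq:pde} is bounded by $\|\eta v^f(\cdot;\rho_r)\|_{L^2(\mu_r)}\le\eta B$ and integrate.

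Since each step reduces to the already-proven uniform velocity bounds, there is no genuine obstacle; the only point needing a touch of care is the invocation of the characteristics-to-continuity-equation correspondence to produce the coupling in \eqref{eq:ct-bd5}, which is standard and was used earlier in the paper. Finally, absorbing $\eta$ and the other absolute constants into $B$ (whose value is allowed to change from line to line) yields the stated form.
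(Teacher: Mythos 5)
Your proof is correct and follows essentially the same route as the paper: integrating the IP and CTPGD dynamics against the uniform velocity bounds \eqref{eq:bd11} and \eqref{eq:bd7}, and coupling $(\bar\theta_i(s),\bar\theta_i(t))$ (your $\Theta_s,\Theta_t$) via the characteristics-to-continuity-equation correspondence for the Wasserstein estimate. The only cosmetic difference is that the paper reuses the already-introduced IP particles $\bar\theta_i,\bar\ow_i$ for \eqref{eq:ct-bd5} rather than naming a fresh process.
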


\begin{proof}
For \eqref{eq:ct-bd1} of Lemma \ref{lem:weak-ct-bd}, by the definition of $\bar \theta_{i}(t)$ and $\bar \ow_{i}(t)$ in \eqref{eq:weak-ip} and \eqref{eq:bd11} of Lemma \ref{lem:weak-all-bd}, we have for any $s, t \in [0,T]$ and $i \in [N]$ that
\begin{align*}
& \bigl \| \bar \theta_{i}(t) - \bar \theta_{i}(s) \bigr\| \leq \eta \cdot \int_{s}^{t} \bigl\| v^{f}(\bar \theta_{i}(\tau); \rho_{\tau}) \bigr\| \mathrm{d} \tau \leq B \cdot \bigl| t -s \bigr| \\
& \bigl \| \bar \ow_{i}(t) - \bar \ow_{i}(s) \bigr\| \leq \eta \cdot \int_{s}^{t} \bigl\| v^{g}(\bar \ow_{i}(\tau); \rho_{\tau}) \bigr\| \mathrm{d} \tau \leq B \cdot \bigl| t -s \bigr|
\end{align*}
Similarly, for \eqref{eq:ct-bd3} of Lemma \ref{lem:weak-ct-bd}, by the definition of $\tilde \theta_{i}(t)$ and $\tilde \ow_{i}(t)$ in \eqref{eq:weak-ctpgd}, and \eqref{eq:bd7} of Lemma \ref{lem:weak-all-bd}, we have for any $s,t \in [0,T]$ and $i \in [N]$,
\begin{align*}
\big\|\tilde \theta_{i}(t) - \tilde \theta_{i}(s) \big\| \leq B \cdot \big|t -s\big|, \quad \big\|\tilde \ow_{i}(t) - \tilde \ow_{i}(s) \big\| \leq B \cdot \big|t -s\big|.
\end{align*}

\noindent For \eqref{eq:ct-bd5} of Lemma \ref{lem:weak-ct-bd}, following from the fact that $\bar{\theta}_i(t) \stackrel{\mathrm{i.i.d.}}{\sim} \mu_t$, $\bar{\ow}_i(t) \stackrel{\mathrm{i.i.d.}}{\sim} \nu_t$ and the definition of $\cW_{2}$ in \eqref{eq:w-p}, it holds that for any $s,t \in [0,T]$ that
\begin{align*}
& \cW_{2}(\mu_{t}, \mu_{s}) \leq \EE \Bigl[\bigl\| \bar \theta_{i}(t) - \bar \theta_{i}(s) \bigr\|^{2}\Bigr]^{1/2} \leq B \cdot |t -s|\\
& \cW_{2}(\nu_{t}, \nu_{s}) \leq \EE \Bigl[\bigl\| \bar \ow_{i}(t) - \bar \ow_{i}(s) \bigr\|^{2}\Bigr]^{1/2} \leq B \cdot |t -s|
\end{align*}
Therefore, we complete the proof of Lemma \ref{lem:weak-ct-bd}.
\end{proof}

\begin{lemma} \label{lem:mc-bd}
Let $\{X_{i}\}_{i = 1}^{N}$ be i.i.d. random variables with $\|X_{i}\| \leq \xi$ and $\EE[X_{i}] = 0.$ Then it holds for any $p > 0$, there exists $C>0$ being an absolute constant that
\begin{align*}
\mathbb{P}\left(\left\|N^{-1} \cdot \sum_{i=1}^N X_i\right\| \geq C \xi \cdot\left(N^{-1 / 2}+p\right)\right) \leq \exp \left(-N p^2\right),
\end{align*}

\end{lemma}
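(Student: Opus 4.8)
The plan is to combine a second-moment bound on $\EE\bigl[\|N^{-1}\sum_{i=1}^N X_i\|\bigr]$ with a bounded-differences (McDiarmid) concentration inequality applied to the function $(x_1,\dots,x_N)\mapsto \|N^{-1}\sum_{i=1}^N x_i\|$. Throughout, write $\bar S = N^{-1}\sum_{i=1}^N X_i$.

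First I would bound $\EE[\|\bar S\|]$. By Jensen's inequality, $\EE[\|\bar S\|] \le \bigl(\EE[\|\bar S\|^2]\bigr)^{1/2}$. Expanding the square and using that the $X_i$ are independent with $\EE[X_i]=0$, all cross terms $\EE[\inp{X_i}{X_j}]$ with $i\neq j$ vanish, so $\EE[\|\bar S\|^2] = N^{-2}\sum_{i=1}^N \EE[\|X_i\|^2] \le N^{-2}\cdot N \xi^2 = \xi^2/N$. Hence $\EE[\|\bar S\|] \le \xi N^{-1/2}$.

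Next I would verify the bounded-difference property of the map $g(x_1,\dots,x_N) = \|N^{-1}\sum_{j} x_j\|$. Fixing $i$ and replacing $x_i$ by any $x_i'$ with $\|x_i'\|\le\xi$, the triangle inequality gives that the value of $g$ changes by at most $N^{-1}\|x_i - x_i'\| \le N^{-1}(\|x_i\|+\|x_i'\|) \le 2\xi/N$. McDiarmid's inequality \citep{wainwright2019high} then yields, for any $t>0$,
\begin{align*}
\mathbb{P}\bigl(\|\bar S\| \ge \EE[\|\bar S\|] + t\bigr) \le \exp\Bigl(-\frac{2t^2}{\sum_{i=1}^N (2\xi/N)^2}\Bigr) = \exp\Bigl(-\frac{N t^2}{2\xi^2}\Bigr).
\end{align*}
Choosing $t = \sqrt{2}\,\xi p$ makes the right-hand side exactly $\exp(-N p^2)$, while $\EE[\|\bar S\|] + t \le \xi N^{-1/2} + \sqrt{2}\,\xi p \le C\xi(N^{-1/2}+p)$ with $C = \sqrt{2}$. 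Combining the two displays gives the claimed bound.

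There is no genuine obstacle here: the statement is a standard vector-valued concentration result, and the only point requiring a little care is the bookkeeping of the bounded-difference constant $2\xi/N$ (rather than $\xi/N$), which merely affects the absolute constant $C$ and not the exponent. One could equally derive the same conclusion from a vector Bernstein or Azuma-type inequality; the McDiarmid route above is simply the most self-contained.
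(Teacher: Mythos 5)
Your proof is correct. The paper itself does not give a proof of this lemma; it simply cites Lemma~30 of \cite{mei2019mean}, so there is no "paper proof" to compare against in the usual sense. Your argument is the standard one and, as far as I can tell, matches in spirit what the cited reference does: control the mean of $\|\bar S\|$ by a second-moment calculation exploiting independence and zero mean, then apply a bounded-differences (McDiarmid) inequality to the $1$-Lipschitz-in-each-coordinate map $(x_1,\dots,x_N)\mapsto\|N^{-1}\sum_j x_j\|$. All the bookkeeping is right: the cross terms $\EE\inp{X_i}{X_j}$ vanish for $i\neq j$, giving $\EE\|\bar S\|\le \xi N^{-1/2}$; the bounded-difference constant is $2\xi/N$ (you are right that this is $2\xi/N$, not $\xi/N$, since $\|x_i-x_i'\|\le 2\xi$), so McDiarmid gives $\exp(-Nt^2/(2\xi^2))$, and the choice $t=\sqrt{2}\,\xi p$, $C=\sqrt{2}$, closes the argument. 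This is a clean, self-contained derivation that could replace the external citation without loss.
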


\begin{proof}
See Lemma 30 in \cite{mei2019mean}
\end{proof}

\begin{lemma}[Azuma-Hoeffding bound] \label{lem:ah-bd}
Let $X_{k} \in \RR^{D}$ be a martingale with respect to the filtration $\cG_{k} \; (k \geq 0)$ with $X_{0} = 0$. We assume for $\xi > 0$ and any $\lambda \in \RR^{D}$ that,
\begin{align*}
\mathbb{E}\left[\exp \left(\left\langle\lambda, X_k-X_{k-1}\right\rangle\right) \mid \mathcal{G}_{k-1}\right] \leq \exp \left(\xi^2 \cdot\|\lambda\|^2 / 2\right)
\end{align*}
Then it holds that, with $C > 0$ being an absolute constant.
\begin{align*}
\mathbb{P}\left(\max _{\substack{k \leq n \\(k \in \mathbb{N})}}\left\|X_k\right\| \geq C \xi \cdot \sqrt{n} \cdot(\sqrt{D}+p)\right) \leq \exp \left(-p^2\right)    
\end{align*}
\end{lemma}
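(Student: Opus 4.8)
The plan is to reduce the vector-valued bound to a one-dimensional sub-Gaussian maximal inequality, obtained via the classical exponential-supermartingale device, and then recover control of the Euclidean norm $\|X_k\|$ by a covering argument on the unit sphere $S^{D-1}$. No independence is used anywhere — only the stated conditional moment-generating-function bound and the tower property.

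\emph{Step 1 (one fixed direction).} For a fixed $\lambda\in\RR^{D}$ I would introduce $M_{k}^{\lambda}=\exp\bigl(\langle\lambda,X_{k}\rangle-k\xi^{2}\|\lambda\|^{2}/2\bigr)$, so $M_{0}^{\lambda}=1$. Using the hypothesis $\mathbb{E}\bigl[\exp(\langle\lambda,X_{k}-X_{k-1}\rangle)\mid\cG_{k-1}\bigr]\le\exp(\xi^{2}\|\lambda\|^{2}/2)$ and $\cG_{k-1}$-measurability of $X_{k-1}$, one checks $\mathbb{E}[M_{k}^{\lambda}\mid\cG_{k-1}]\le M_{k-1}^{\lambda}$, so $(M_{k}^{\lambda})_{k\ge0}$ is a nonnegative supermartingale. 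Ville's maximal inequality for nonnegative supermartingales gives $\mathbb{P}\bigl(\sup_{k\ge0}M_{k}^{\lambda}\ge e^{s}\bigr)\le e^{-s}$ for all $s\ge0$. Since $k\xi^{2}\|\lambda\|^{2}/2\le n\xi^{2}\|\lambda\|^{2}/2$ whenever $k\le n$, the event $\{\max_{k\le n}\langle\lambda,X_{k}\rangle\ge v\}$ is contained in $\{\sup_{k}M_{k}^{\lambda}\ge\exp(v-n\xi^{2}\|\lambda\|^{2}/2)\}$, hence $\mathbb{P}(\max_{k\le n}\langle\lambda,X_{k}\rangle\ge v)\le\exp(-v+n\xi^{2}\|\lambda\|^{2}/2)$. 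Writing $\lambda=\beta\hat\lambda$ for a unit vector $\hat\lambda$ and optimizing over $\beta>0$ — the minimum of $-\beta v+n\xi^{2}\beta^{2}/2$ is $-v^{2}/(2n\xi^{2})$, attained at $\beta=v/(n\xi^{2})$ — yields, for every $\hat\lambda\in S^{D-1}$,
\[
\mathbb{P}\Bigl(\max_{k\le n}\langle\hat\lambda,X_{k}\rangle\ge v\Bigr)\le\exp\bigl(-v^{2}/(2n\xi^{2})\bigr).
\]

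\emph{Step 2 (from directions to the norm, via a net).} Let $\cN\subseteq S^{D-1}$ be a $(1/2)$-net of the unit sphere with $|\cN|\le 5^{D}$ (the standard volumetric bound). A routine estimate gives $\|x\|\le 2\max_{\hat\lambda\in\cN}\langle\hat\lambda,x\rangle$ for all $x\in\RR^{D}$, whence $\max_{k\le n}\|X_{k}\|\le 2\max_{\hat\lambda\in\cN}\max_{k\le n}\langle\hat\lambda,X_{k}\rangle$. Applying the Step~1 bound to each of the at most $5^{D}$ net directions and taking a union bound gives $\mathbb{P}(\max_{k\le n}\|X_{k}\|\ge 2v)\le 5^{D}\exp(-v^{2}/(2n\xi^{2}))$. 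Choosing $v=\xi\sqrt{2n}\,(\sqrt{D\log5}+p)$ makes $v^{2}/(2n\xi^{2})\ge D\log5+p^{2}$, so the right-hand side is at most $\exp(-p^{2})$, while $2v\le C\xi\sqrt{n}\,(\sqrt{D}+p)$ with the absolute constant $C=2\sqrt{2\log5}$. This is precisely the claimed inequality, with $C$ independent of $n,D,\xi,p$.

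\emph{Main obstacle.} There is no deep difficulty: the only care needed is (i) verifying the supermartingale property cleanly from the conditional MGF assumption alone, and (ii) bookkeeping the net cardinality so that the factor $5^{D}$ is absorbed by $\exp(-D\log5)$, which forces the $\sqrt{D}$ term inside the bound. The structural point — that this is a vector-valued rather than scalar statement — is handled entirely by the $(1/2)$-net at the cost of the harmless dimensional factor; nothing about the neural-network or Wasserstein-flow setting is involved, so this lemma could equally be cited from the concentration literature.
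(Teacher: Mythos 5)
Your proof is correct. The paper does not actually prove this lemma --- it only cites Lemma~31 of \cite{mei2019mean} and Lemma~A.3 of \cite{araujo2019mean} --- so there is no internal argument to compare against; your derivation is a complete, self-contained proof of exactly the standard form found in that literature. The two ingredients are both sound: the supermartingale property of $M_k^{\lambda}$ follows directly from the conditional MGF hypothesis and the tower property, Ville's inequality plus the Chernoff optimization in $\beta$ gives the correct directional maximal bound $\exp(-v^2/(2n\xi^2))$ (trivially true when the exponent is nonnegative, so no case analysis is needed), and the $(1/2)$-net with $|\cN|\le 5^{D}$ together with $\|x\|\le 2\max_{\hat\lambda\in\cN}\langle\hat\lambda,x\rangle$ converts this to the Euclidean norm at the cost of the $5^{D}$ factor, which your choice $v=\xi\sqrt{2n}(\sqrt{D\log 5}+p)$ absorbs, yielding the claimed threshold with $C=2\sqrt{2\log 5}$. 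The covering-net route is in fact the right one here: a naive coordinate-wise reduction would incur an extra $\sqrt{\log D}$ factor, whereas your argument recovers the clean $\sqrt{D}+p$ dependence stated in the lemma.
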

\begin{proof}
See Lemma 31 in \cite{mei2019mean} and Lemma A.3 in \cite{araujo2019mean}.
\end{proof}

\section{Technical Results}\label{sec:tech}
\subsection{Universal Function Approximation Theorem} \label{sec:uat}
In what follows, we introduce the universal function approximation theorem \citep{pinkus1999approximation}. For any given activation function $\sigma: \RR\rightarrow \RR$, we consider the following function class,
\begin{align*}
	\cG(\sigma) = \Bigl\{ \sum_{i=1}^{r} c_i \sigma(x^\top w^i + \theta_i) \Biggiven c_i, \theta_i \in \RR, w^i \in \RR^d  \Bigr\}.
\end{align*}
We denote by $\sC(\RR^d)$ the class of continuous functions over $\RR^d$. Then, the following theorem holds.

\begin{lemma}[Universal Function Approximation Theorem, Theorem~3.1 in \cite{pinkus1999approximation}]
	\label{lem:uat}
	If the activation function $\sigma \in \sC(\RR)$ is not a polynomial, the function class $\cG(\sigma)$ is dense in $\sC(\RR^d)$ in the topology of uniform convergence on a compact set.
\end{lemma}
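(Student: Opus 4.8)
The plan is to follow the classical argument of Leshno--Lin--Pinkus--Schocken as presented in \cite{pinkus1999approximation}, organized in three stages: (a) reduce the multivariate density claim to a univariate statement about ridge functions; (b) prove the univariate statement for smooth activations; (c) remove smoothness by mollification. For stage (a), it suffices to show that every polynomial on $\RR^d$ lies in the closure $\overline{\cG(\sigma)}$ under uniform convergence on compacta, since polynomials are dense in $\sC(\RR^d)$ on any compact set by the multivariate Weierstrass theorem. To obtain polynomials, I would first show each ridge monomial $x \mapsto (w^\top x)^k$, with $w \in \RR^d$ and $k \ge 0$, lies in $\overline{\cG(\sigma)}$: granting the univariate fact that $\mathrm{span}\{\sigma(\lambda t + \theta) : \lambda, \theta \in \RR\}$ is dense in $\sC(\RR)$ on compacta, one approximates $t^k$ on a sufficiently large interval by $\sum_i c_i \sigma(\lambda_i t + \theta_i)$ and substitutes $t = w^\top x$, yielding $\sum_i c_i \sigma((\lambda_i w)^\top x + \theta_i) \in \cG(\sigma)$. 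A standard linear-algebra fact --- that for fixed $k$ the functions $\{(w^\top x)^k : w \in \RR^d\}$ span all homogeneous degree-$k$ polynomials in $d$ variables, via polarization or a Vandermonde argument --- then upgrades ridge monomials to all monomials, hence to all polynomials.

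For stage (b), assume additionally $\sigma \in C^\infty(\RR)$. The key point is that $\overline{\mathrm{span}}\{\sigma(\lambda t + \theta)\}$ is closed under differentiation in the parameter $\lambda$: the difference quotients $h^{-1}\bigl[\sigma((\lambda+h)t+\theta) - \sigma(\lambda t+\theta)\bigr]$ lie in the span and, letting $h \to 0$ with control uniform on a compact $t$-interval, give $t^k \sigma^{(k)}(\lambda t + \theta) \in \overline{\mathrm{span}}$ for every $k$. Evaluating at $\lambda = 0$ yields $t^k \sigma^{(k)}(\theta) \in \overline{\mathrm{span}}$. Since $\sigma$ is not a polynomial, for every $k$ there is a $\theta_k$ with $\sigma^{(k)}(\theta_k) \neq 0$ --- otherwise $\sigma^{(k)} \equiv 0$ for some $k$, forcing $\sigma$ to be a polynomial of degree below $k$ --- so $t^k \in \overline{\mathrm{span}}$ for all $k$, and the univariate Weierstrass theorem closes this case.

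For stage (c), given only $\sigma \in \sC(\RR)$, I would mollify: for $\varphi \in C_c^\infty(\RR)$, the convolution $\sigma * \varphi$ is $C^\infty$, and writing $(\sigma*\varphi)(\lambda t + \theta) = \int \sigma(\lambda t + \theta - y)\,\varphi(y)\,\mathrm{d}y$ and approximating the integral by Riemann sums uniformly on the relevant compact $t$-interval shows $(\sigma*\varphi)(\lambda t + \theta) \in \overline{\mathrm{span}}\{\sigma(\lambda t + \theta)\}$ for all $\lambda, \theta$. One then needs that $\sigma$ not a polynomial implies $\sigma * \varphi$ is not a polynomial for some choice of $\varphi$ (if every $\sigma*\varphi$ with $\varphi \in C_c^\infty$ were a polynomial, a uniform degree bound together with a limiting argument forces $\sigma$ itself to be a polynomial). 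Fixing such a $\varphi$ and applying stage (b) to $\eta := \sigma * \varphi$ shows that all polynomials lie in $\overline{\mathrm{span}}\{\eta(\lambda t + \theta)\} \subseteq \overline{\mathrm{span}}\{\sigma(\lambda t + \theta)\}$, which establishes the univariate claim used in stage (a) and hence the theorem.

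The main obstacle I anticipate is stage (c): making the convolution--Riemann-sum approximation rigorous with uniform control on compacta, and proving the preservation statement that some mollification of a non-polynomial $\sigma$ is itself non-polynomial, which requires a compactness/limiting argument in the finite-dimensional space of polynomials of bounded degree. Stages (a) and (b) are essentially elementary algebra combined with the classical Weierstrass theorem. Throughout I would keep in mind that only the direction ``$\sigma$ not a polynomial $\Rightarrow$ $\cG(\sigma)$ dense'' is claimed here, which is precisely what this argument delivers.
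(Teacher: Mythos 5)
The paper does not prove this lemma at all: it is imported verbatim as Theorem~3.1 of \cite{pinkus1999approximation}, so there is no internal proof to compare against. Your outline is a correct reconstruction of the standard Leshno--Lin--Pinkus--Schocken argument given in that reference --- reduction to polynomials via ridge monomials, the $\partial^k/\partial\lambda^k$ difference-quotient trick for smooth $\sigma$, and mollification plus the fact that some $\sigma * \varphi$ remains non-polynomial (which, as you correctly flag, is the one genuinely delicate step and is handled by a Baire-category/uniform-degree-bound argument exactly as you sketch). No gaps.
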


\subsection{Wasserstein Space}
We use the definition of absolutely continuous curves in $\sP_2(\RR^D)$ in \cite{ambrosio2008gradient} and introduce the following lemmas.
\begin{lemma}
	\label{lem:w2prod}
	For any probability measures $ \mu, \nu, \mu', \nu' \in \sP_2(\RR^D)$, it holds that
	\begin{align*}
		\cW_2(\mu \otimes\nu, \mu'\otimes \nu')^2 \le \cW_2(\mu, \mu')^2 +\cW_2(\nu, \nu')^2. 
	\end{align*} 
\end{lemma}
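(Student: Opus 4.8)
The plan is to test the definition of $\cW_2$ on the two product measures with a coupling assembled out of optimal couplings of the factors. First I would invoke existence of optimal couplings: since $\mu,\mu'\in\sP_2(\RR^D)$, there is $\gamma_1\in\sP_2(\RR^D\times\RR^D)$ with $x_\sharp\gamma_1=\mu$, $y_\sharp\gamma_1=\mu'$ and $\int\|x-x'\|^2\,\rd\gamma_1(x,x')=\cW_2(\mu,\mu')^2$, and likewise $\gamma_2\in\sP_2(\RR^D\times\RR^D)$ with marginals $\nu,\nu'$ realizing $\cW_2(\nu,\nu')$; the existence of such minimizers is standard (see \citealp{villani2008optimal}), using lower semicontinuity of the quadratic cost and tightness of the set of couplings. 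If one prefers not to invoke this, one can instead take, for each $\varepsilon>0$, couplings $\gamma_1^\varepsilon,\gamma_2^\varepsilon$ whose costs are within $\varepsilon$ of $\cW_2(\mu,\mu')^2$ and $\cW_2(\nu,\nu')^2$, run the argument below, and let $\varepsilon\to0$.

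Next I would form $\gamma:=\gamma_1\otimes\gamma_2$ and, after reordering coordinates to identify $(\RR^D\times\RR^D)\times(\RR^D\times\RR^D)$ with $\RR^{2D}\times\RR^{2D}$ so that the first block carries $(x,y)$ and the second $(x',y')$, check that $\gamma$ is an admissible coupling of $\mu\otimes\nu$ and $\mu'\otimes\nu'$: the pushforward under $(x,y)$ is $\mu\otimes\nu$ and under $(x',y')$ is $\mu'\otimes\nu'$. Then, using $\|(x,y)-(x',y')\|^2=\|x-x'\|^2+\|y-y'\|^2$ on $\RR^{2D}$ together with Tonelli's theorem,
\begin{align*}
\int\|(x,y)-(x',y')\|^2\,\rd\gamma
=\int\|x-x'\|^2\,\rd\gamma_1+\int\|y-y'\|^2\,\rd\gamma_2
=\cW_2(\mu,\mu')^2+\cW_2(\nu,\nu')^2.
\end{align*}
Since $\cW_2(\mu\otimes\nu,\mu'\otimes\nu')^2$ is the infimum of the quadratic transport cost over all couplings of $\mu\otimes\nu$ and $\mu'\otimes\nu'$, and $\gamma$ is one such coupling, the claimed inequality follows at once. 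I would also note that $\mu\otimes\nu\in\sP_2(\RR^{2D})$ because $\int(\|x\|^2+\|y\|^2)\,\rd(\mu\otimes\nu)=\int\|x\|^2\,\rd\mu+\int\|y\|^2\,\rd\nu<\infty$, so every quantity above is finite.

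There is no serious obstacle here; the only point requiring care is the bookkeeping that identifies $\RR^D\times\RR^D$ with $\RR^{2D}$ and confirms the additive decomposition of the squared Euclidean norm under that identification, after which the conclusion is a single application of Tonelli's theorem to the product coupling.
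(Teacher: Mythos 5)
Your proof is correct, and it runs through the same basic idea as the paper's (take a product of optimal transport objects for the two factors and use that the squared Euclidean norm on $\RR^{2D}$ splits orthogonally), but with one meaningful and in fact advantageous difference in implementation. The paper works in the Monge formulation: it asserts that $\cW_2^2(\rho,\rho') = \inf_T \int \|w - T(w)\|^2 \rd\rho(w)$ over maps $T$ with $T_\sharp\rho=\rho'$ and then takes optimal maps $T_\mu, T_\nu$, pushing $\mu\otimes\nu$ forward under $T_\mu\times T_\nu$. This step is not justified for arbitrary $\mu,\mu' \in \sP_2(\RR^D)$: a transport map from $\mu$ to $\mu'$ may fail to exist at all (e.g.\ $\mu$ a Dirac mass, $\mu'$ nonatomic), so the Monge infimum can be $+\infty$ and need not equal the Kantorovich value, and even when equality holds the infimum may not be attained. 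Brenier-type theorems that supply optimal maps require extra regularity (e.g.\ $\mu$ absolutely continuous). You instead stay in the Kantorovich formulation, take optimal couplings $\gamma_1,\gamma_2$ (which always exist by standard compactness and lower semicontinuity, and for which you also offer the $\varepsilon$-approximation fallback), and form the product coupling $\gamma_1\otimes\gamma_2$; the marginal check and Tonelli computation then close the argument cleanly. So your route is the same in spirit but strictly more general and technically sound where the paper's is not; the only thing the Monge version "buys" is a slightly shorter display, at the cost of a hidden hypothesis that the lemma does not actually assume.
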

\begin{lemma}[First Variation Formula, Theorem 8.4.7 in \cite{ambrosio2008gradient}]
	\label{lem:diff}
	Given $\nu \in \sP_2(\RR^D)$ and an absolutely continuous curve $\mu: [0, T] \rightarrow \sP_2(\RR^D)$, let $\beta: [0,1] \rightarrow \sP_2(\RR^D)$ be the geodesic connecting $\mu_t$ and $\nu$. It holds that 
	\begin{align*}
		\frac{\rd }{\rd t}\frac{ \cW_2(\mu_t, \nu)^2}{2} = -\inp{\dot{\mu}_t}{\dot{\beta}_0}_{\mu_t}.
	\end{align*}
where $\dot{\mu_t} = \partial_t \mu_t$, $\dot{\beta}_0 = \partial_s \beta_s |_{s = 0}$.
\end{lemma}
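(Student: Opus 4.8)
This is the classical first variation formula for the quadratic Wasserstein distance (Theorem 8.4.7 in \cite{ambrosio2008gradient}), so the plan is to recall its proof in the present notation. Write $g(t) = \tfrac12 \cW_2(\mu_t,\nu)^2$ and let $v_t \in \mathrm{Tan}_{\mu_t}(\sP_2(\RR^D))$ be the tangent vector field of the absolutely continuous curve $\mu_t$, i.e. $\partial_t \mu_t + \dive(v_t\mu_t) = 0$, so that $\dot\mu_t$ is represented by $v_t$ in the sense of \eqref{eq:inner}. Fix $t_0 \in (0,T)$ and let $\mathbf{t}$ be an optimal transport map from $\mu_{t_0}$ to $\nu$ for the quadratic cost (if none exists, take an optimal plan $\gamma \in \Gamma_o(\mu_{t_0},\nu)$ and its barycentric projection). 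The constant-speed geodesic connecting $\mu_{t_0}$ and $\nu$ is $\beta_s = \bigl((1-s)\,\mathrm{id} + s\,\mathbf{t}\bigr)_\sharp \mu_{t_0}$, whose tangent field at $s=0$ is $w_0 = \mathbf{t} - \mathrm{id}$; hence $-\inp{\dot\mu_{t_0}}{\dot\beta_0}_{\mu_{t_0}} = -\int \langle v_{t_0}(x), \mathbf{t}(x) - x\rangle \,\rd\mu_{t_0}(x)$, and it suffices to prove $g'(t_0) = \int \langle v_{t_0}(x), x - \mathbf{t}(x)\rangle \,\rd\mu_{t_0}(x)$.

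For the lower bound, let $(\varphi, \varphi^c)$ be optimal Kantorovich potentials for $(\mu_{t_0},\nu)$, so that $g(t) \ge \int \varphi \,\rd\mu_t + \int \varphi^c \,\rd\nu =: \ell(t)$ for every $t$, with equality at $t = t_0$. Since $\mu_t$ is absolutely continuous and $\varphi$ is locally Lipschitz and semiconcave with $\nabla\varphi = \mathrm{id} - \mathbf{t}$ $\mu_{t_0}$-a.e., the continuity equation (tested against a mollification of $\varphi$ and passed to the limit) yields $\ell'(t_0) = \int \langle \nabla\varphi, v_{t_0}\rangle \,\rd\mu_{t_0} = \int \langle x - \mathbf{t}(x), v_{t_0}(x)\rangle \,\rd\mu_{t_0}(x)$. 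From $g \ge \ell$ with equality at $t_0$ we obtain $\liminf_{h\downarrow 0} \tfrac{g(t_0+h)-g(t_0)}{h} \ge \ell'(t_0)$ and $\limsup_{h\uparrow 0} \tfrac{g(t_0+h)-g(t_0)}{h} \le \ell'(t_0)$.

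For the upper bound, let $X_{t_0,t}$ be the flow maps of the velocity field $v$, so that $\mu_t = (X_{t_0,t})_\sharp \mu_{t_0}$ and $X_{t_0,t_0+h} = \mathrm{id} + h\,v_{t_0} + o(h)$ in $L^2(\mu_{t_0})$. Then $(X_{t_0,t_0+h}, \mathbf{t})_\sharp \mu_{t_0}$ is an admissible plan between $\mu_{t_0+h}$ and $\nu$, so
\begin{align*}
\cW_2(\mu_{t_0+h},\nu)^2 \;\le\; \int \bigl|X_{t_0,t_0+h}(x) - \mathbf{t}(x)\bigr|^2 \,\rd\mu_{t_0}(x) \;=\; \cW_2(\mu_{t_0},\nu)^2 + 2h \int \langle v_{t_0}(x), x - \mathbf{t}(x)\rangle \,\rd\mu_{t_0}(x) + o(h),
\end{align*}
where we used $\int |\mathrm{id} - \mathbf{t}|^2 \,\rd\mu_{t_0} = \cW_2(\mu_{t_0},\nu)^2$. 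Dividing by $h$ and letting $h \downarrow 0$ (resp.\ $h \uparrow 0$, which flips the inequality) gives $\limsup_{h\downarrow 0} \tfrac{g(t_0+h)-g(t_0)}{h} \le \ell'(t_0)$ and $\liminf_{h\uparrow 0} \tfrac{g(t_0+h)-g(t_0)}{h} \ge \ell'(t_0)$. Combining with the lower bound, both one-sided derivatives of $g$ at $t_0$ exist and equal $\ell'(t_0)$, hence $g'(t_0) = \int \langle v_{t_0}(x), x - \mathbf{t}(x)\rangle \,\rd\mu_{t_0}(x) = -\inp{\dot\mu_{t_0}}{\dot\beta_0}_{\mu_{t_0}}$, which is the asserted identity.

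The main obstacle is rigor in the two ``soft'' ingredients: the first-order $L^2(\mu_{t_0})$ expansion $X_{t_0,t_0+h} = \mathrm{id} + h\,v_{t_0} + o(h)$, which relies on the representation of absolutely continuous curves via flows of their tangent vector fields, and the differentiation of $t \mapsto \int \varphi \,\rd\mu_t$ for the merely locally Lipschitz/semiconcave Kantorovich potential, handled by mollification together with the Brenier relation $\nabla\varphi = \mathrm{id} - \mathbf{t}$; one must also treat the case of non-existence of an optimal map by working with optimal plans and barycentric projections. All of these details are carried out in Theorem 8.4.7 of \cite{ambrosio2008gradient}, which we may invoke directly.
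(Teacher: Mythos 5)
The paper states this lemma purely as an imported result (Theorem~8.4.7 of \cite{ambrosio2008gradient}) and supplies no proof of its own, and your reconstruction is exactly the standard argument from that reference: superdifferentiability of $t\mapsto\tfrac12\cW_2(\mu_t,\nu)^2$ via the explicit coupling $(X_{t_0,t_0+h},\mathbf{t})_\sharp\mu_{t_0}$, subdifferentiability via the Kantorovich potential, with the signs consistent with $\dot\beta_0=\mathbf{t}-\mathrm{id}$. The only caveat worth recording is that the identity holds for a.e.\ $t$ (where the tangent velocity field of the absolutely continuous curve exists), which is how the cited theorem is stated and is implicitly how the lemma is used in the paper.
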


\begin{lemma}[Benamou-Brenier formula, Proposition 2.30 in \cite{ambrosio2013user}]
	Let $\mu^0, \mu^1 \in \sP_2(\RR^D)$. Then, it holds that
	\begin{align*}
	\cW_2(\mu^0, \mu^1) = \inf \biggl\{ \int_0^1 \norm{\dot{\mu}_t}_{\mu_t} \,\rd t \bigggiven \mu:[0,1] \rightarrow \sP_2(\RR^D), \mu_0 = \mu^0, \mu_1 = \mu^1 \biggr\}.
	\end{align*}
\end{lemma}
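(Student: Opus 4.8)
The plan is to establish the two inequalities $\cW_2(\mu^0,\mu^1)\le\inf(\cdots)$ and $\cW_2(\mu^0,\mu^1)\ge\inf(\cdots)$ separately, after first recording the identification of the Riemannian speed with the metric speed. Recall that for an absolutely continuous curve $\mu\colon[0,1]\to\sP_2(\RR^D)$ in the sense of Definition~\ref{def:ac-curve}, the continuity equation $\partial_t\mu_t+\dive(v_t\mu_t)=0$ admits a Borel velocity field $v_t\in\mathrm{Tan}_{\mu_t}(\sP_2(\RR^D))$ of minimal $L^2(\mu_t)$-norm, and by the definition of the Riemannian metric in \eqref{eq:inner} one has $\|\dot\mu_t\|_{\mu_t}=\|v_t\|_{L^2(\mu_t)}$; moreover this quantity agrees for a.e.\ $t$ with the metric derivative $|\mu'|(t):=\lim_{h\to0}\cW_2(\mu_{t+h},\mu_t)/|h|$ (Theorem~8.3.1 in \cite{ambrosio2008gradient}). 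I will use this identification without further comment.

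For the inequality $\cW_2(\mu^0,\mu^1)\le\inf(\cdots)$, fix any admissible curve $\mu$ with $\mu_0=\mu^0$, $\mu_1=\mu^1$, and let $v_t$ be the associated minimal velocity field. The idea is to turn $v_t$ into a coupling of the endpoints: let $X_\cdot$ be the flow of the nonautonomous ODE $\dot X_t=v_t(X_t)$ started from $X_0\sim\mu^0$, so that $\mathrm{law}(X_t)=\mu_t$ for every $t$ by the continuity equation, in particular $\mathrm{law}(X_1)=\mu^1$. Then $(X_0,X_1)$ is a coupling of $(\mu^0,\mu^1)$, whence by \eqref{eq:w-p}
\[
\cW_2(\mu^0,\mu^1)\le\Bigl(\EE\,\| X_1-X_0\|^2\Bigr)^{1/2}=\Bigl(\EE\,\bigl\|\textstyle\int_0^1 v_t(X_t)\,\rd t\bigr\|^2\Bigr)^{1/2}\le\int_0^1\Bigl(\EE\,\| v_t(X_t)\|^2\Bigr)^{1/2}\rd t=\int_0^1\| v_t\|_{L^2(\mu_t)}\,\rd t,
\]
where the last inequality is Minkowski's integral inequality. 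Taking the infimum over admissible curves gives one direction. Here is the main obstacle: $v_t$ need not be Lipschitz, so the flow $X_\cdot$ cannot be built by classical ODE theory; one must either invoke the superposition principle (which produces a probability measure on the path space whose time marginals are the $\mu_t$ and which is concentrated on integral curves of $v_t$; see \cite{ambrosio2008gradient}) or regularize $v_t$ by mollification, run the smooth flow, and pass to the limit using the stability of the continuity equation. This measurability/approximation step is where essentially all the work lies; everything else is elementary.

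For the reverse inequality it suffices to exhibit one admissible curve whose action equals $\cW_2(\mu^0,\mu^1)$, and the displacement (McCann) interpolation does the job. Let $\gamma\in\sP_2(\RR^D\times\RR^D)$ be an optimal coupling attaining the infimum in \eqref{eq:w-p} — it exists because the set of couplings with fixed marginals is tight, hence weakly compact, and the transport cost is weakly lower semicontinuous — and define $\mu_t:=(e_t)_\#\gamma$ with $e_t(x,y):=(1-t)x+ty$, so that $\mu_0=\mu^0$ and $\mu_1=\mu^1$. Using the gluing lemma together with the optimality of $\gamma$, one checks the constant-speed property $\cW_2(\mu_s,\mu_t)=|t-s|\,\cW_2(\mu^0,\mu^1)$ for all $s,t\in[0,1]$; in particular $\mu$ is absolutely continuous and its metric derivative equals the constant $\cW_2(\mu^0,\mu^1)$ for a.e.\ $t$. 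By the identification of the first paragraph, $\int_0^1\|\dot\mu_t\|_{\mu_t}\,\rd t=\int_0^1|\mu'|(t)\,\rd t=\cW_2(\mu^0,\mu^1)$, so the infimum is $\le\cW_2(\mu^0,\mu^1)$ and is in fact attained along the displacement interpolation. Combining the two bounds proves the Benamou--Brenier formula.
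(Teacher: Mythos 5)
The paper does not actually prove this lemma; it is imported verbatim as Proposition~2.30 of \cite{ambrosio2013user}, so there is no internal proof to compare against. Your argument is the standard proof of the Benamou--Brenier formula and is correct: the lower bound on the action via the superposition principle (or mollification) applied to the minimal velocity field followed by Minkowski's integral inequality, and the upper bound via the constant-speed property of the displacement interpolation $\mu_t=(e_t)_{\sharp}\gamma$ of an optimal plan $\gamma$. Three small remarks. First, once you invoke Theorem~8.3.1 of \cite{ambrosio2008gradient} to identify $\|\dot\mu_t\|_{\mu_t}$ with the metric derivative $|\mu'|(t)$ for a.e.\ $t$, the inequality $\cW_2(\mu^0,\mu^1)\le\int_0^1|\mu'|(t)\,\rd t$ is immediate from the definition of the metric derivative of an absolutely continuous curve (partition $[0,1]$, apply the triangle inequality, pass to the limit), so the entire flow/superposition construction in your second paragraph is redundant given what you have already granted yourself; it is the honest route only if you want to avoid citing that identification, in which case you should also not cite it in paragraph one. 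Second, the infimum in the statement must be read as ranging over absolutely continuous curves (with the action set to $+\infty$ otherwise), since $\|\dot\mu_t\|_{\mu_t}$ is only defined through the continuity equation; you use this implicitly and it is worth saying. Third, for the constant-speed property of the displacement interpolation you do not need the gluing lemma: the coupling $(e_s,e_t)_{\sharp}\gamma$ gives $\cW_2(\mu_s,\mu_t)\le|t-s|\,\cW_2(\mu^0,\mu^1)$, and the reverse inequality is just the triangle inequality combined with optimality of $\gamma$ at the endpoints. None of these points is a gap.
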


\begin{lemma}[Talagrand's Inequality, Corollary 2.1 in \cite{otto2000generalization}]
	\label{lem:talagrand}
	Let $\nu$ be $N(0, \kappa \cdot I_D)$. It holds for any $\mu \in \sP_2(\RR^D)$ that
	\begin{align*}
		\cW_2(\mu, \nu)^2 \le 2 D_{\rm KL}(\mu \,\|\, \nu) / \kappa.
	\end{align*}
\end{lemma}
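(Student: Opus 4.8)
The statement is Talagrand's transportation–entropy inequality for the Gaussian, and the plan is to give the optimal-transport proof via the Brenier map, after a preliminary rescaling. First I would reduce to $\kappa = 1$: the linear map $x \mapsto \sqrt{\kappa}\,x$ pushes $N(0,\kappa I_D)$ forward to $N(0, I_D)$, leaves $D_{\rm KL}$ unchanged (it is a measurable bijection), and multiplies $\cW_2(\cdot,\cdot)^2$ by $\kappa$, so the bound $\cW_2(\mu,\nu)^2 \le 2 D_{\rm KL}(\mu\,\|\,\nu)/\kappa$ follows once it is established for $\nu = N(0,I_D)$. We may also assume $\mu \ll \nu$, since otherwise $D_{\rm KL}(\mu\,\|\,\nu) = +\infty$ and there is nothing to prove; write $\rd\mu = p\,\rd x$ and $\rd\nu = \varphi\,\rd x$ with $\varphi(x) = (2\pi)^{-D/2} e^{-\|x\|^2/2}$.

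The key step invokes Brenier's theorem: there is a convex $\psi$ such that $T := \nabla\psi$ satisfies $T_\sharp\nu = \mu$ and $\cW_2(\mu,\nu)^2 = \int \|x - T(x)\|^2\,\rd\nu(x)$, and $T$ solves the Monge--Ampère equation $p(T(x))\det D^2\psi(x) = \varphi(x)$ a.e. Rewriting $D_{\rm KL}(\mu\,\|\,\nu) = \int \log(p/\varphi)\,\rd\mu$ via the change of variables $y = T(x)$, substituting $p(T(x)) = \varphi(x)/\det D^2\psi(x)$, and using the Gaussian identity $\log\varphi(x) - \log\varphi(T(x)) = \tfrac12(\|T(x)\|^2 - \|x\|^2)$, one obtains
\begin{align*}
D_{\rm KL}(\mu\,\|\,\nu) = \int \Bigl[ \tfrac12\bigl(\|T(x)\|^2 - \|x\|^2\bigr) - \log\det D^2\psi(x) \Bigr]\,\rd\nu(x).
\end{align*}
Since $\psi$ is convex, $D^2\psi$ is positive semidefinite, so $\log\det A \le \mathrm{tr}(A) - D$ for symmetric positive semidefinite $A$ (apply $\log\lambda \le \lambda - 1$ to each eigenvalue) gives $-\log\det D^2\psi \ge D - \Delta\psi$. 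Integrating by parts against $\nu$ and using $\nabla\varphi = -x\,\varphi$ together with $\nabla\psi = T$ yields $\int \Delta\psi\,\rd\nu = \int x\cdot T(x)\,\rd\nu(x)$. Plugging these in and using $\tfrac12\|T\|^2 - x\cdot T = \tfrac12\|T - x\|^2 - \tfrac12\|x\|^2$, the bound becomes $D_{\rm KL}(\mu\,\|\,\nu) \ge \int \tfrac12\|T(x) - x\|^2\,\rd\nu(x) - \int \|x\|^2\,\rd\nu(x) + D = \tfrac12\cW_2(\mu,\nu)^2$, where the last equality uses $\int \|x\|^2\,\rd\nu = D$. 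This is exactly the claimed inequality.

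The main obstacle is rigor at the points where $\psi$ is only convex: the Monge--Ampère equation and the identity $\int \Delta\psi\,\rd\nu = \int x\cdot T\,\rd\nu$ hold only in a weak sense, because the Hessian of a convex function is a matrix-valued measure with a nonnegative singular part and $\det D^2\psi$ must be read as the absolutely continuous part; one must also check no integral is of the form $\infty - \infty$. I would resolve this along the standard lines: first prove the inequality for $\mu$ with smooth densities that are compact perturbations of $\nu$, where all computations are classical, then pass to the general case by approximation using lower semicontinuity of $D_{\rm KL}(\cdot\,\|\,\nu)$ and of $\cW_2(\cdot,\nu)^2$ under weak convergence; alternatively, work directly with the Alexandrov Hessian and note that the singular part of $\Delta\psi$, being a nonnegative measure, only helps the inequality $-\log\det D^2\psi \ge D - \Delta\psi$ once integrated against the positive measure $\nu$. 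See Villani's \emph{Topics in Optimal Transportation}, Chapter~9, or the original argument in \cite{otto2000generalization}. The remaining estimates — finiteness and the vanishing of boundary terms, which rests on the rapid decay of $\varphi$ — are routine.
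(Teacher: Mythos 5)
The paper does not actually prove this lemma; it only cites Corollary~2.1 of \cite{otto2000generalization}, so your Brenier-map argument is a genuinely self-contained alternative. For $\kappa=1$ your computation is the standard optimal-transport proof and is correct: the Monge--Amp\`ere substitution, the bound $-\log\det D^2\psi \ge D - \Delta\psi$ from $\log\lambda\le\lambda-1$, the integration by parts giving $\int \Delta\psi\,\rd\nu = \int x\cdot T(x)\,\rd\nu(x)$, and the completion of the square all check out, and your discussion of the regularity issues (Alexandrov Hessian, the sign of the singular part, approximation plus lower semicontinuity of both sides) is the right way to make it rigorous.

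The gap is in the rescaling step. First, the map $x\mapsto\sqrt{\kappa}\,x$ pushes $N(0,I_D)$ forward to $N(0,\kappa I_D)$, not the reverse; the map you want is $x\mapsto x/\sqrt{\kappa}$, which leaves $D_{\rm KL}$ invariant and multiplies $\cW_2^2$ by $1/\kappa$. Second, and more seriously, carrying out the correct rescaling yields $\kappa^{-1}\cW_2(\mu,\nu)^2\le 2D_{\rm KL}(\mu\,\|\,\nu)$, i.e.\ $\cW_2(\mu,\nu)^2\le 2\kappa\, D_{\rm KL}(\mu\,\|\,\nu)$: the constant is $2\kappa$, not $2/\kappa$. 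The inequality as printed in the lemma is in fact false for $\kappa>1$: with $D=1$, $\mu=N(m,\kappa)$, $\nu=N(0,\kappa)$ one has $\cW_2(\mu,\nu)^2=m^2$ while $2D_{\rm KL}(\mu\,\|\,\nu)/\kappa=m^2/\kappa^2<m^2$. So the general-$\kappa$ statement cannot be deduced from the $\kappa=1$ case by your (or any) rescaling; what your argument actually establishes is the version with constant $2\kappa$, which is the correct form (the log-Sobolev constant of $N(0,\kappa I_D)$ is $1/\kappa$). Since the paper only ever invokes the lemma with $\kappa=1$ (the initialization is $N(0,I_D)$), nothing downstream is affected, but you should fix the direction of the pushforward and either correct the constant or restrict the claim to $\kappa=1$.
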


\begin{lemma}[Eulerian Representation of Geodesics, Proposition 5.38 in \cite{villani2003topics}]
	\label{lem:euler}
	Let $\beta: [0, 1] \rightarrow \sP_2(\RR^D)$ be a geodesic and $u$ be the corresponding vector field such that $\partial_t \beta_t = - \dive(\beta_t \cdot u_t)$. It holds that
	\begin{align*}
		\partial_t(\beta_t \cdot u_t) = - \dive(\beta_t \cdot u_t\otimes u_t).
	\end{align*}
where $\otimes$ is the outer product of two vectors. 
\end{lemma}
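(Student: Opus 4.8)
The plan is to realize the geodesic in its Lagrangian (displacement–interpolation) form and then translate to the Eulerian identity by testing against smooth vector fields. By the optimal–transport characterization of $\cW_2$–geodesics \citep{villani2008optimal, ambrosio2008gradient}, there is a random variable $X_0\sim\beta_0$ together with an optimal map (or plan) $T$, $T_\sharp\beta_0=\beta_1$, such that the curve is the displacement interpolation $X_t=(1-t)X_0+tT(X_0)$, i.e. $\beta_t=\mathrm{Law}(X_t)$. The structural point is that each particle moves along a straight line at constant velocity: $\dot X_t=T(X_0)-X_0$ does not depend on $t$, hence $\ddot X_t\equiv 0$. Moreover, by the non-crossing property of optimal transport rays on $(0,1)$ (McCann interpolation; see \citep[Ch.~7]{ambrosio2008gradient}), the map $X_t\mapsto\dot X_t$ is $\beta_t$–a.e.\ single valued, and it coincides with the velocity field $u_t$ in the continuity equation $\partial_t\beta_t+\dive(\beta_t u_t)=0$; that is, $u_t(X_t)=\dot X_t$ almost surely.

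First I would fix an arbitrary test vector field $\xi\in C_c^\infty(\RR^D;\RR^D)$ and consider $t\mapsto\int\langle\xi,u_t\rangle\,\rd\beta_t=\EE\bigl[\langle\xi(X_t),\dot X_t\rangle\bigr]$. Since $\|\dot X_t\|=\|T(X_0)-X_0\|$ is square integrable ($\beta_0,\beta_1\in\sP_2(\RR^D)$) and $X_t$ has second moments bounded uniformly on $[0,1]$, one may differentiate under the expectation, and using $\ddot X_t=0$,
\[
\frac{\rd}{\rd t}\int\langle\xi,u_t\rangle\,\rd\beta_t
=\EE\bigl[\langle D\xi(X_t)\dot X_t,\dot X_t\rangle\bigr]
=\int\sum_{i,j}\partial_j\xi_i\,(u_t)_i(u_t)_j\,\rd\beta_t .
\]
The right-hand side is exactly the distributional action of $-\dive(\beta_t\,u_t\otimes u_t)$ on $\xi$: integrating by parts, $\int\sum_{i,j}(\partial_j\xi_i)\,(u_t)_i(u_t)_j\,\rd\beta_t=-\int\langle\xi,\dive(\beta_t\,u_t\otimes u_t)\rangle$. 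As $\xi$ is arbitrary this yields precisely $\partial_t(\beta_t u_t)=-\dive(\beta_t\,u_t\otimes u_t)$ in the sense of distributions.

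An equivalent, computation-only route is available if one works pointwise on the support of $\beta_t$. There the continuity equation $\partial_t\beta_t=-\dive(\beta_t u_t)$ holds together with the pressureless Euler (inviscid Burgers) equation $\partial_t u_t+(u_t\cdot\nabla)u_t=0$, the latter being the Eulerian form of $\ddot X_t=0$. Writing $m_t=\beta_t u_t$ and using the identity $\dive(\beta_t\,u_t\otimes u_t)=u_t\,\dive(\beta_t u_t)+\beta_t\,(u_t\cdot\nabla)u_t$, one obtains $\partial_t m_t=(\partial_t\beta_t)u_t+\beta_t\,\partial_t u_t=-\dive(\beta_t u_t)\,u_t-\beta_t\,(u_t\cdot\nabla)u_t=-\dive(\beta_t\,u_t\otimes u_t)$, which is the claim.

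The main obstacle is not the algebra but the two regularity facts underpinning the Lagrangian picture: that $\cW_2$–geodesics are displacement interpolations generated by a single optimal map/plan, and that optimal trajectories do not cross for $t\in(0,1)$, so that $u_t$ is a genuine single-valued field and $X_t\mapsto\dot X_t$ realizes the continuity-equation velocity. Both are classical \citep{villani2008optimal, ambrosio2008gradient, villani2003topics}, so in the write-up I would invoke them directly rather than reprove them; the remainder — differentiating under the integral, one integration by parts, and the tensor identity — is routine.
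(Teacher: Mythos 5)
This lemma is not proved in the paper at all: it is quoted verbatim as Proposition~5.38 of \cite{villani2003topics} and used as an imported technical fact, so there is no in-paper argument to compare yours against. Your proposal is the standard proof behind that citation and is correct: you realize the geodesic as a McCann displacement interpolation, use that particles travel in straight lines ($\ddot X_t=0$) together with the non-crossing property to identify $u_t(X_t)=\dot X_t$, and then obtain the momentum equation distributionally by testing against $\xi\in C_c^\infty$ and integrating by parts; the regularity facts you invoke without reproof (existence of the interpolating optimal plan, single-valuedness of the velocity on $(0,1)$) are exactly the ones that should be cited rather than rederived. The only cosmetic caveat is that when $\beta_0$ is not absolutely continuous the interpolation must be written through an optimal plan $(X,Y)\sim\gamma$ rather than a map $T$, which you already acknowledge parenthetically.
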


\begin{lemma}[Dual Representation of the first order Wasserstein Distance, \cite{villani2008optimal}]
	\label{lem:dual-w1}
The first order Wasserstein distance has the following dual representation form
\begin{align*}
	\cW_1(\mu, \nu) = \sup\biggl\{\int f(x) \rd (\mu - \nu)(x) \bigggiven f:\RR^D \rightarrow \RR \text{ that is 1-Lipschitz continuous}\biggr\}
\end{align*}
for any two probability measures \(\mu, \nu \in \sP_1(\RR^D)\).
\end{lemma}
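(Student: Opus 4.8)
The plan is to prove the two inequalities separately: the ``easy'' bound $\cW_1(\mu,\nu) \ge \sup_{f} \int f \rd(\mu-\nu)$ by a direct coupling estimate, and the ``hard'' bound $\cW_1(\mu,\nu) \le \sup_{f} \int f \rd(\mu-\nu)$ by invoking general Kantorovich duality and then reducing the dual potentials to a single $1$-Lipschitz function. Throughout, $f$ ranges over $1$-Lipschitz functions $\RR^D\to\RR$, and since $\mu,\nu\in\sP_1(\RR^D)$ all the integrals below are finite (any $1$-Lipschitz $f$ has at most linear growth).

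For the easy direction, fix any coupling $\gamma\in\sP_1(\RR^D\times\RR^D)$ with marginals $\mu$ and $\nu$, and any $1$-Lipschitz $f$. Using $f(x)-f(y)\le\|x-y\|$ pointwise and integrating against $\gamma$,
\[
\int f\rd\mu - \int f\rd\nu = \int \bigl(f(x)-f(y)\bigr)\rd\gamma(x,y) \le \int \|x-y\|\rd\gamma(x,y).
\]
Taking the infimum over all such couplings $\gamma$ on the right gives $\int f\rd(\mu-\nu)\le\cW_1(\mu,\nu)$, and taking the supremum over $f$ on the left completes this inequality.

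For the reverse inequality, I would invoke the general Kantorovich duality theorem for the lower semicontinuous cost $c(x,y)=\|x-y\|$, which yields
\[
\cW_1(\mu,\nu) = \inf_{\gamma\in\Pi(\mu,\nu)}\int c\rd\gamma = \sup\Bigl\{\textstyle\int\varphi\rd\mu + \int\psi\rd\nu \ \Bigm|\ \varphi(x)+\psi(y)\le c(x,y)\ \forall x,y\Bigr\},
\]
the supremum being over integrable pairs $(\varphi,\psi)$. The crucial step is to improve an arbitrary admissible pair to one of the form $(f,-f)$ with $f$ $1$-Lipschitz. Given admissible $(\varphi,\psi)$, replace $\psi$ by the $c$-transform $\varphi^c(y):=\inf_x\bigl(\|x-y\|-\varphi(x)\bigr)$, which keeps the pair admissible and can only increase $\int\psi\rd\nu$; then replace $\varphi$ by $\varphi^{cc}(x):=\inf_y\bigl(\|x-y\|-\varphi^c(y)\bigr)$, which can only increase $\int\varphi\rd\mu$. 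Because $c$ is a metric, $\varphi^c$ is an infimum of $1$-Lipschitz functions hence $1$-Lipschitz, and one checks directly that $\varphi^{cc}=-\varphi^c$ (so $\varphi^{cc}$ is $1$-Lipschitz) while $\varphi^{cc}\ge\varphi$. Setting $f:=\varphi^{cc}$ gives $\int f\rd\mu-\int f\rd\nu\ge\int\varphi\rd\mu+\int\psi\rd\nu$, so the dual supremum is already attained within the class $\{(f,-f):f\ 1\text{-Lipschitz}\}$, which is exactly $\cW_1(\mu,\nu)\le\sup_f\int f\rd(\mu-\nu)$.

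The main obstacle lies entirely in the second direction. One piece is justifying the general Kantorovich duality over the unbounded space $\RR^D$ for measures only in $\sP_1$: this requires the lower semicontinuity of $c$, tightness of $\Pi(\mu,\nu)$, and enough integrability bookkeeping that the dual value is finite — all standard, but needing care. The second piece is the $c$-transform manipulation, i.e., verifying that iterated $c$-transforms for the metric cost produce a genuine $1$-Lipschitz potential with $\varphi^c=-\varphi^{cc}$ and $\varphi^{cc}\ge\varphi$. An alternative route avoiding the first piece would be a minimax/Hahn--Banach argument on the space of signed measures with the total-variation/weak topology, but it runs into the same convex-duality technicalities; I would therefore cite the Kantorovich duality theorem as a known result (as the paper already does) and devote the written proof to the easy direction and the $c$-transform reduction.
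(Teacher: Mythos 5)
Your proposal is correct: the easy direction by integrating $f(x)-f(y)\le\|x-y\|$ against a coupling, and the hard direction via Kantorovich duality followed by the $c$-transform reduction (using that for a metric cost $\varphi^c$ is $1$-Lipschitz and $\varphi^{cc}=-\varphi^c\ge\varphi$), is exactly the standard Kantorovich--Rubinstein argument. The paper does not prove this lemma at all — it is stated as a known technical result with a citation to \cite{villani2008optimal} — and your sketch faithfully reproduces the proof found there, with the integrability caveats you flag being the only (routine) details left to fill in.
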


\end{document}